\documentclass{article}
\usepackage{iclr2026_conference,times}

\usepackage{amsmath,amssymb,amsthm}
\usepackage{aliascnt}
\usepackage{graphicx}
\usepackage{tikz}
\usetikzlibrary{arrows.meta}
\usepackage{booktabs}
\usepackage{enumitem}
\setlist{leftmargin=7mm}
\definecolor{cupViolet}{HTML}{7B61A8}
\definecolor{cupTerracotta}{HTML}{D66B55}
\definecolor{cupTeal}{HTML}{2A8D8C}
\definecolor{cupInk}{HTML}{263349}
\definecolor{cupMuted}{HTML}{7D8794}
\usepackage{tabularx}
\usepackage{colortbl}

\usepackage{xcolor}
\usepackage{hyperref}
\usepackage{url}
\usepackage{cleveref}
\usetikzlibrary{calc}
\usetikzlibrary{arrows.meta,calc,shapes.geometric}
\usepackage{wrapfig}
\usepackage{multirow}
\usepackage{xfrac}
\usetikzlibrary{arrows.meta}
\definecolor{srInk}{HTML}{17243A}
\definecolor{srBlue}{HTML}{176B87}
\definecolor{srGreen}{HTML}{168B62}
\definecolor{srGold}{HTML}{AA731E}
\definecolor{srPurple}{HTML}{7A5AA6}
\definecolor{srGray}{HTML}{788392}

\definecolor{ink}{HTML}{17243A}
\definecolor{acc}{HTML}{176B87}
\definecolor{good}{HTML}{168B62}
\definecolor{bad}{HTML}{CF3C4F}
\definecolor{mut}{HTML}{8A93A3}
\usetikzlibrary{arrows.meta,calc}

\usepackage{amsmath,amsfonts,bm}

\def\eqref#1{equation~\ref{#1}}
\def\Eqref#1{Equation~\ref{#1}}

\def\1{\bm{1}}

\def\vtheta{{\bm{\theta}}}
\def\va{{\bm{a}}}
\def\vb{{\bm{b}}}
\def\vc{{\bm{c}}}

\def\ve{{\bm{e}}}

\def\vh{{\bm{h}}}

\def\vk{{\bm{k}}}

\def\vo{{\bm{o}}}
\def\vp{{\bm{p}}}
\def\vq{{\bm{q}}}
\def\vr{{\bm{r}}}
\def\vs{{\bm{s}}}

\def\vu{{\bm{u}}}
\def\vv{{\bm{v}}}
\def\vw{{\bm{w}}}
\def\vx{{\bm{x}}}
\def\vy{{\bm{y}}}
\def\vz{{\bm{z}}}

\def\mA{{\bm{A}}}
\def\mB{{\bm{B}}}
\def\mC{{\bm{C}}}
\def\mD{{\bm{D}}}
\def\mE{{\bm{E}}}

\def\mG{{\bm{G}}}
\def\mH{{\bm{H}}}
\def\mI{{\bm{I}}}
\def\mJ{{\bm{J}}}
\def\mK{{\bm{K}}}

\def\mM{{\bm{M}}}

\def\mP{{\bm{P}}}
\def\mQ{{\bm{Q}}}
\def\mR{{\bm{R}}}
\def\mS{{\bm{S}}}
\def\mT{{\bm{T}}}
\def\mU{{\bm{U}}}
\def\mV{{\bm{V}}}
\def\mW{{\bm{W}}}
\def\mX{{\bm{X}}}
\def\mY{{\bm{Y}}}

\def\mLambda{{\bm{\Lambda}}}
\def\mSigma{{\bm{\Sigma}}}

\DeclareMathAlphabet{\mathsfit}{\encodingdefault}{\sfdefault}{m}{sl}
\SetMathAlphabet{\mathsfit}{bold}{\encodingdefault}{\sfdefault}{bx}{n}

\newcommand{\R}{\mathbb{R}}

\newcommand{\C}{\mathbb{C}}
\newcommand{\norm}[1]{\left\lVert#1\right\rVert}
\newcommand{\abs}[1]{\left\lvert#1\right\rvert}
\newcommand{\spec}{\sigma}
\def\valpha{{\bm{\alpha}}}
\def\vsigma{{\bm{\sigma}}}
\def\mLambda{{\bm{\Lambda}}}

\DeclareMathOperator{\rank}{rank}
\definecolor{gr}{HTML}{8A8A8A}

\theoremstyle{plain}
\newtheorem{theorem}{Theorem}
\newaliascnt{proposition}{theorem}
\newtheorem{proposition}[proposition]{Proposition}
\aliascntresetthe{proposition}
\newaliascnt{lemma}{theorem}
\newtheorem{lemma}[lemma]{Lemma}
\aliascntresetthe{lemma}
\newaliascnt{corollary}{theorem}
\newtheorem{corollary}[corollary]{Corollary}
\aliascntresetthe{corollary}
\theoremstyle{remark}
\newaliascnt{remark}{theorem}

\aliascntresetthe{remark}
\theoremstyle{definition}
\newtheorem{definition}{Definition}[section]
\newcommand{\SH}{\mathcal{SH}}

\iclrfinalcopy
\usepackage{fontawesome5}
\usepackage{adjustbox}

\newcommand{\Diag}{\operatorname{Diag}}

\title{Complex KDA: Understanding and Enhancing\\ the Expressivity of Kimi Delta Attention}
\author{Julien Siems$^{*\heartsuit}$,~
Riccardo Grazzi$^{*\diamondsuit}$,~Korbinian Pöppel$^{*\text{\faLightbulb[regular]}\bigcirc}$,~
Jaisidh Singh$^{\text{\faUniversity}\spadesuit\square}$,
\\
\textbf{Arber Zela}$^{\triangle}$,~
\textbf{Timur Carstensen}$^{\text{\faLightbulb[regular]}\heartsuit\bigcirc}$,~
\textbf{Jenia Jitsev}$^{\dagger\clubsuit\Psi\bigcirc}$,~
\textbf{Frank Hutter}$^{\heartsuit\text{\faLightbulb[regular]}\text{\faTable}}$,\\
\textbf{Volkan Cevher}$^{\triangle}$,~
\textbf{Antonio Orvieto}$^{\text{\faLightbulb[regular]}\square}$,~
\textbf{Aaron Klein}$^{\text{\faLightbulb[regular]}\bigcirc}$\\
\footnotesize $^*$Equal contribution,~
$^{\heartsuit}$University of Freiburg,~
$^{\diamondsuit}$Microsoft Research,~
$^{\text{\faUniversity}}$University of Tübingen,\\
\footnotesize $^{\spadesuit}$Zuse School ELIZA,~
$^{\square}$MPI-IS Tübingen,~
$^{\text{\faLightbulb[regular]}}$ELLIS Institute Tübingen,~
$^{\triangle}$EPFL,~$^{\bigcirc}$OpenEuroLLM,\\
\footnotesize $^{\dagger}$Jülich Supercomputing Center (JSC),~
$^{\clubsuit}$LAION,~
$^{\Psi}$Open-$\Psi$ (Open-Sci) Collective,~
$^{\text{\faTable}}$PriorLabs\\
{{\small \texttt{juliensiems@gmail.com}~~~~\small \texttt{riccardograzzi4@gmail.com}~~~~\small\texttt{korbi@korbi.ai}} \qquad}
}
\usepackage[disable]{todonotes}
\begin{document}
\maketitle

 \begin{abstract}
Linear RNNs based on the delta-rule enable efficient sequence modeling, but their linear updates with a low-rank correction constrain their expressivity. 
Prior work has shown that composing two delta-rule transitions in a single recurrent update can model a 2D rotation, but this increases the rank and the cost of the updates compared to a single transition.
We show that Kimi Delta Attention (KDA) can realize 2D rotations by combining a single delta-rule transformation with a second reflection supplied by its channel-wise gate. This requires extending the parameter ranges of KDA by 
combining two existing range extensions: 
allowing gates in $[-1,1]$ and the delta-rule coefficient $\beta$ in $[0,2]$. We call the resulting model Complex KDA (CKDA). It preserves KDA’s stability and efficiency, with transitions that remain diagonal-plus-rank-one and non-expansive, while reaching the state-tracking expressivity of DeltaProduct$_2$.
We characterize the expressivity of CKDA and prove that every orthogonal diagonal-plus-rank-one matrix is exactly a CKDA transition matrix. 
A single CKDA layer can track every finite group isomorphic to a subgroup of $\mathrm{SO}(3)$, and many state-tracking results use one fewer layer for CKDA compared to other diagonal-plus-rank-one Linear RNNs. Empirically, combining both extensions yields the strongest length extrapolation among tested KDA range settings on $S_3$, $S_4$, and periodic audio continuation. 
In language modeling, CKDA outperforms Transformers and other linear RNNs, obtains similar results to a KDA baseline, and shows promising scaling behavior. 
Our code is \href{https://github.com/OpenEuroLLM/ComplexKDA}{open-source} as are our \href{https://huggingface.co/collections/openeurollm/complexkda}{models}.
\end{abstract}

\begin{figure}[!t]
    \centering
\resizebox{0.95\linewidth}{!}{%
\begin{tikzpicture}[
 x=1cm,y=1cm,font=\fontsize{8}{9}\selectfont,
 text=srInk,>=Latex,line cap=round,line join=round,
 move/.style={-{Latex[length=1.3mm]},line width=.8pt},
 connector/.style={-{Latex[length=1.1mm]},line width=.65pt},
 labelstate/.style={inner sep=3pt},
 equation/.style={anchor=base}
]
\path[use as bounding box] (0,.87) rectangle (14,3.50);

\fill[srGreen!7,rounded corners=3pt]
 (10.50,.90) rectangle (13.30,3.45);

\node[equation,text=srBlue] at (1.35,1.10)
 {$\vp=(1,0)^\top$};
\node[equation,text=srGold] at (4.65,1.10)
 {$\mD=\Diag(-1,1)$};
\node[equation,text=srPurple] at (8.00,1.10)
 {$\mH=\mI-2\vk\vk^\top$};
\node[equation,text=srGreen] at (11.90,1.10)
 {$\mR_{\pi/2}=\mH\mD$};

\foreach \cx in {1.35,4.65,8.00,11.90} {
 \begin{scope}[shift={(\cx,2.05)}]
  \draw[srGray!42,line width=.3pt] (-.94,0)--(.94,0);
  \draw[srGray!42,line width=.3pt] (0,-.55)--(0,.94);
 \end{scope}
}

\begin{scope}[shift={(1.35,2.05)}]
 \fill[srBlue] (.68,0) circle (2pt);
 \node[above,labelstate,text=srBlue] at (.68,.06) {$\vp$};
\end{scope}

\begin{scope}[shift={(4.65,2.05)}]
 \draw[srGold,dashed,line width=.75pt] (0,-.55)--(0,.94);
 \node[text=srGold,anchor=west,inner sep=2pt]
  at (.06,.82) {$x=0$};

 \draw[move,srGold] (.59,0)--(-.59,0);
 \draw[srBlue,fill=white,line width=.8pt]
  (.68,0) circle (2pt);
 \fill[srGold] (-.68,0) circle (2pt);

 \node[below,labelstate,text=srBlue] at (.68,-.06) {$\vp$};
 \node[below,labelstate,text=srGold] at (-.68,-.06) {$\mD\vp$};
\end{scope}

\begin{scope}[shift={(8.00,2.05)}]
 \draw[srPurple,dashed,line width=.75pt]
  (-.77,.77)--(.55,-.55);
 \draw[move,srPurple] (-.616,.064)--(-.064,.616);

 \draw[srGold,fill=white,line width=.8pt]
  (-.68,0) circle (2pt);
 \fill[srPurple] (0,.68) circle (2pt);
 \node[below,labelstate,text=srGold]
  at (-.68,-.06) {$\mD\vp$};
 \node[right,labelstate,text=srPurple]
  at (.10,.68) {$\mH\mD\vp$};

 \draw[-{Latex[length=1.2mm]},srPurple,line width=.8pt]
  (0,0)--(.480833,.480833);
 \node[text=srPurple,anchor=north west,inner sep=2pt]
  at (.51,.43) {$\vk$};
 \draw[srPurple!65,line width=.4pt]
  (-.0725,.0725)--(0,.145)--(.0725,.0725);
\end{scope}

\draw[connector,srGray!80] (2.70,2.05)--(3.14,2.05);
\draw[connector,srGray!80] (5.97,2.05)--(6.41,2.05);
\node[text=srInk,font=\fontsize{14}{16}\selectfont]
 at (9.85,2.05) {$=$};

\begin{scope}[shift={(11.90,2.05)}]
 \draw[srBlue,fill=srGreen!7,line width=.8pt]
  (.68,0) circle (2pt);
 \fill[srGreen] (0,.68) circle (2.2pt);

 \node[below,labelstate,text=srBlue]
  at (.68,-.06) {$\vp$};
 \node[above,labelstate,text=srGreen]
  at (0,.74) {$\mR_{\pi/2}\vp$};

 \draw[-{Latex[length=1.5mm]},srGreen,line width=1pt]
  (8:.68) arc[start angle=8,end angle=82,radius=.68];
 \node[text=srGreen,anchor=west,inner sep=2pt]
  at (.59,.58) {$+90^\circ$};
\end{scope}

\end{tikzpicture}%
}\vspace{-4mm}
\caption{Visualization of CKDA applying a rotation to a vector $\vp$ within a single recurrent update $(\mI - \beta\vk\vk^\top)\Diag(-1,1)$ by combining a signed diagonal gate with a Householder reflection ($\beta=2$). $\vk=(1,1)^\top/\sqrt2$ allows a $90^\circ$ rotation, while varying $\vk$ allows any planar rotation angle.}    \label{fig:signed-householder-rotation}
\par\medskip
    \includegraphics[width=0.95\linewidth]{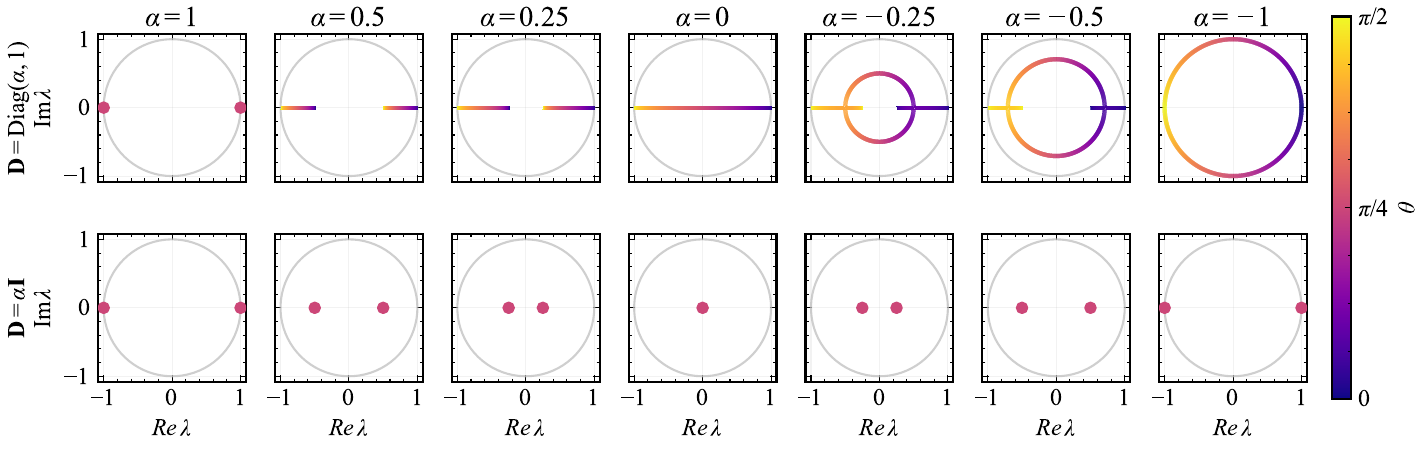}
    \vspace{-5mm}
    \caption{Spectral view of the two-dimensional construction in \Eqref{eq:kda_alpha_rotation}. For $\mA=(\mI-2\vk\vk^\top)\mD$ with $\vk=(\cos\theta,\sin\theta)^\top$ and $\theta\in[0,\pi/2]$. For the channel-wise gate $\mD=\Diag(\alpha,1)$, negative $\alpha$ allows a complex-conjugate pair, which moves toward the unit circle as $\alpha$ approaches $-1$ and recovers a rotation. For the scalar gate $\mD=\alpha\mI$, the spectrum remains real (see also~\Cref{fig:beta_alpha_sweep}).}
    \label{fig:alpha_sweep_2d}
\end{figure}

\section{Introduction}
Linear recurrent neural networks (RNNs) offer efficient sequence modeling with linear scaling in sequence length and a fixed-size recurrent state. Their efficiency and expressivity depend on the structure of their state-transition matrices: diagonal transitions, as used in Mamba-1/2~\citep{gu2023mamba,dao2024transformers}, GLA~\citep{gla}, and mLSTM~\citep{beck2024xlstm}, support fast computation, while non-diagonal transitions using the delta-rule~\citep{schlag2021linear} introduce a rank-one correction that mixes information across state coordinates~\citep{yang2024deltanet,peng2025rwkv7,hatamizadeh2026gdn2}. Gated delta-rule models have become recurrent backbones of large language models, with two variants differing in how they parameterize the gate in the state-transition $\mA_t=(\mI-\beta_t\vk_t\vk_t^\top)\Diag(\valpha_t)$. Gated DeltaNet (GDN)~\citep{yang2025gdn} uses a scalar gate, $\valpha_t=\alpha_t\mathbf{1}$, and is adopted in several recent language models~\citep{team2026qwen3,qwen3,merrill2026olmo}. Kimi Delta Attention (KDA)~\citep{kimi2025} allows a separate gate value per channel and has likewise been adopted in recent models~\citep{upstage2026,zai2026glm53flash,inclusionai2026ling3,team2026kimi}. Both retain diagonal-plus-rank-one transitions, motivating us to understand what the shift from a scalar to a full diagonal adds to their expressivity.

We study this expressivity question through state tracking, which requires composing input-dependent updates over time, as in parity, modular addition or general permutation composition. 
Equipped with increasingly complex orthogonal transitions, Linear RNNs can solve harder state-tracking problems with one layer: 1D reflections for parity, 2D rotations for modular addition and higher-dimensional orthogonal representations for permutation composition. DeltaProduct$_k$~\citep{siems2025} controls this complexity by composing $k$ delta-rule transitions per token, each an identity-plus-rank-one matrix. In particular, one layer can solve modular addition via 2D rotations when $k=2$, using two delta-rule updates per token but increasing the computational cost relative to DeltaProduct$_1$.

We show that KDA's channel-wise gate provides the structure needed to break the symmetry of the Householder--diagonal transition, making complex eigenvalues possible, whereas GDN's scalar gate cannot break this symmetry. However, KDA's standard nonnegative gate still restricts the transition to a real spectrum. We introduce \emph{Complex KDA (CKDA)} by combining two existing range extensions: gate entries in $[-1,1]$~\citep{sarrof2024} and $\beta_t\in[0,2]$~\citep{grazzi2025}.
As a consequence, CKDA can realize any 2D rotation while maintaining non-expansiveness and efficient recurrent computation. CKDA realizes planar rotations using gate entries of opposite sign to supply a coordinate reflection, which combines with the freely oriented Householder reflection at $\beta_t=2$ (Figures~\ref{fig:signed-householder-rotation} and~\ref{fig:alpha_sweep_2d}). Our main contributions are:
\vspace{-3mm}
\begin{itemize}[leftmargin=*]
    \item We characterize the spectrum of CKDA's transitions and prove that every orthogonal diagonal-plus-rank-one matrix is exactly a CKDA transition. We also study the expressivity of products of CKDA matrices (\Cref{sec:dplr}).
    \item One CKDA layer tracks every finite subgroup of $\mathrm{SO}(3)$, including $S_3$, $S_4$, and $A_5$ (\Cref{thm:main-positive}). Three layers solve arbitrary finite-group word problems and, with $\beta > 2$, simulate weighted finite automata in polynomial precision (\Cref{thm:multilayer}). These bounds match DeltaProduct$_2$ and use one layer fewer than (Gated) DeltaNet.
    \item We rule out one-layer $S_5$ tracking for CKDA and DeltaProduct$_k$ ($k\leq3$) with non-expansive transitions and finite reachability (\Cref{thm:s5-impossibility}). Finite reachability (the set of RNN states is finite) holds in most group-tracking constructions considered in the literature and simplifies the analysis.
    \item Combining both extensions yields the strongest length extrapolation among the tested KDA range settings on $S_3$, $S_4$, and periodic waveform continuation. At 1.3B parameters and 100B tokens in language modeling, CKDA performs on par with KDA, while outperforming Transformers and other linear RNN architectures. Our implementation is a minor modification of KDA recurrence kernels from FLA~\citep{yang2024fla} and retains $96$--$97\%$ of standard KDA's throughput.
\end{itemize}

\section{Background \& Related Work}\label{sec:bg}
\vspace{-3mm}
\textbf{Linear RNNs.} Linear recurrent neural networks process sequences through stacked layers with affine state updates. For one head, following~\citet{grazzi2025}, we write
\begin{equation}\label{eq:linrnn}
\mH_i = \mA(\vx_i)\mH_{i-1}+\mB(\vx_i),\qquad
\hat{\vy}_i = \mathrm{dec}(\mH_i,\vx_i), \qquad i=1,\dots,t.
\end{equation}
Here $\mH_i\in\R^{n\times d_v}$ is the recurrent state; the learned maps $\mA$, $\mB$, and $\mathrm{dec}$ specify the transition applied to its columns, additive update, and output. Architectures differ mainly in the structure imposed on $\mA$. GLA~\citep{gla} uses diagonal transitions, while Mamba-2~\citep{dao2024transformers} and mLSTM~\citep{beck2024xlstm} use scalar-times-identity transitions within each head. Diagonal-plus-low-rank (DPLR) transitions permit channel mixing. DeltaNet~\citep{schlag2021linear,yang2024deltanet} uses $\mI-\beta_i\vk_i\vk_i^\top$: for unit keys, $\beta_i=0,1,2$ give identity, projection, and Householder reflection~\citep{householder1958unitary,grazzi2025}, respectively. Gated DeltaNet~\citep{yang2025gdn} adds scalar decay, whereas KDA~\citep{kimi2025} uses a channel-wise positive gate, $\mA_i=(\mI-\beta_i\vk_i\vk_i^\top)\Diag(\valpha_i)$; both retain diagonal-plus-rank-one transitions and efficient WY-based chunk-wise implementations. Related delta architectures introduce channel-wise learning rates and separate erase/write gates~\citep{hatamizadeh2026gdn2}, or asymmetric rank-one corrections~\citep{peng2025rwkv7}. 

\textbf{Complex and rotation-based recurrences.} S4~\citep{gu2022efficiently} and LRU~\citep{orvieto2023} use complex DPLR and complex-diagonal representations, respectively. Earlier RNNs combine nonlinear activations with unitary~\citep{arjovsky2016unitary} or Householder-parameterized~\citep{mhammedi2017efficient} orthogonal recurrent matrices, unlike the affine updates we consider. RotRNN~\citep{biegun2024rotrnn} uses rotation-based linear recurrences, while \citet{orvieto2024universality} analyze the benefits of complex eigenvalues for finite-window memory reconstruction. DeltaProduct~\citep{siems2025} realizes planar rotations using two Householder factors per token, requiring an additional rank-one update. Separately, higher-order and block-diagonal LRUs~\citep{dubinin2026} enrich state mixing outside the delta-rule family. Selective RoPE~\citep{movahedi2025srope} adds input-dependent rotations to gated attention, Mamba-3~\citep{lahoti2026mamba} uses complex dynamics, and Adaptive Unitary SSMs~\citep{karuvally2025bridging} use input-dependent unitary transitions sharing a diagonalizing basis. MDN~\citep{huang2026} adds an auxiliary momentum state, yielding second-order dynamics with complex-conjugate eigenvalues. Semidirect Fourier Delta Attention~\citep{zhang2026sfda} combines explicit phase/decay gates with delta updates and chunk-WY algorithms; its phases admit real $2\times2$ rotation implementations and realize cyclic counters even without delta updates. CKDA instead retains KDA's first-order, real diagonal-plus-rank-one recurrence: signed gates $\valpha_i\in[-1,1]^n$~\citep{sarrof2024} and $\beta_i\in[0,2]$~\citep{grazzi2025} let coordinate and delta-rule reflections compose into noncommuting rotation families, without auxiliary recurrent states or explicit phase gates. See~\Cref{tab:recurrence-spectrum} for a comparison between CKDA, MDN, and SFDA.

\textbf{Formal languages and recurrent expressivity.} Finite monoids recognize exactly the regular languages; non-commutative group problems require order-sensitive composition, with $S_5$ linked to the computational complexity class $\mathsf{NC}^1$ through permutation branching programs~\citep{barrington1986bounded}. We consider every group element as an input, not only generators. Under the finite-precision assumptions of \citet{shakerinava2026the}, single-layer input-dependent complex-diagonal SSMs track exactly the finite abelian groups, excluding non-abelian tracking despite complex eigenvalues. Alternative transitions use bilinear interactions~\citep{ebrahimi2026revisiting}, selective SSM parameterizations for automaton emulation~\citep{terzic2025expressiveness}, or fixed-point iterations~\citep{movahedi2026fixed}. Our multilayer results adapt finite-state and rational weighted-automaton constructions~\citep{siems2025,peng2025rwkv7,merrill_why_2026}, replacing the two-layer DeltaNet clock with one CKDA layer. Circuit-complexity bounds depend on precision and evaluation assumptions~\citep{merrill2024,merrill_why_2026}; algebraic analyses show that discretization and evaluation order can change expressivity~\citep{nowak2026algebraic}. Recent work highlights a gap between the expressive capacity of linear RNNs and their robustness in long-horizon state tracking, emphasizing limitations in error correction that can allow perturbations to accumulate and compromise state representations~\citep{dankowiakowski2026metric,chung2026rethinking}.

\textbf{Beyond abstract group-word benchmarks.} \citet{siems2026learning,merrill2026olmo} study permutation tracking in next-token-style code traces; \citet{Shin2026CanVW} report improved extrapolation when permitting negative transition eigenvalues in their action-conditioned video Shell Game. 
\section{Motivation: From Symmetry to Rotation}
\label{sec:free}
We begin by isolating the mechanism through which channel-wise signed gating changes the transition geometry: We show that scalar gates commute with every matrix, whereas signed diagonal gates can combine with the Householder update to produce rotations.

\textbf{Symmetry.}
A KDA state-transition matrix $\mA=\mH_{\vk}\mD$, with $\mH_{\vk}:=\mI-\beta\vk\vk^\top$ and $\mD:=\Diag(\valpha)$, is the product of two symmetric matrices. $\mA$ is symmetric if and only if the factors commute, since $\mA^\top=\mD\mH_{\vk}$. Entrywise, the commutation condition $\mH_{\vk}\mD = \mD\mH_{\vk}$ becomes
\[
    (\mH_{\vk}\mD-\mD\mH_{\vk})_{ij}
    =
    \beta k_i k_j(\alpha_i-\alpha_j)
    =
    0
    \qquad \text{for all } i,j.
\]

\textbf{Scalar Gate.}
For Gated DeltaNet, $\alpha_i=\alpha$ for every coordinate, so the commutation condition above is always true. Hence $\mA=\alpha(\mI-\beta\vk\vk^\top)$ is symmetric and has a real spectrum. Moreover, over several recurrent steps the scalar gates factor entirely out: let $\mA_t=\alpha_t(\mI-\beta_t\vk_t\vk_t^\top)$, then
\[
    \mA_T\cdots\mA_1
    =
    \left(\prod_{t=1}^T\alpha_t\right)
    \bigl(\mI-\beta_T\vk_T\vk_T^\top\bigr)
    \cdots
    \bigl(\mI-\beta_1\vk_1\vk_1^\top\bigr).
\]
Thus extending the scalar gate to $[-1,1]$ can change the overall scale and introduce a global sign, but it cannot contribute an additional independently oriented transformation (\Cref{fig:alpha_sweep_2d} bottom row).

\textbf{Diagonal Gate.}
For a channel-wise gate, differences $\alpha_i-\alpha_j$ need not vanish. Whenever $\beta\ne0$ and the key has nonzero components on two coordinates with different gate values, the two symmetric factors do not commute and the resulting transition is nonsymmetric. Noncommutation alone, however, is not sufficient to produce a non-real spectrum. For strictly positive gates, $\mH_{\vk}\mD$ is similar to the symmetric matrix $\mD^{1/2}\mH_{\vk}\mD^{1/2}$, so all its eigenvalues are real. The same conclusion holds when some gates are zero, by continuity. Hence standard nonnegative KDA still has a real spectrum. As we show next, allowing the diagonal gate to change sign removes this restriction.

We see the resulting geometry in a simple 2D case. Consider the KDA transition with $\beta{=}2$ given by
\[
    \mA_{\alpha,\theta}
    =
    \mH_{\vk}\mD_\alpha,
    \qquad
    \mH_{\vk}:=\mI-2\vk\vk^\top,
    \qquad
    \vk=(\cos\theta,\sin\theta)^\top,
    \qquad
    \mD_\alpha:=\Diag(\alpha,1).
\]
Expanding the product gives
\begin{equation}
    \mA_{\alpha,\theta}
    =
    \begin{pmatrix}
        -\alpha\cos 2\theta & -\sin 2\theta\\
        -\alpha\sin 2\theta & \cos 2\theta
    \end{pmatrix}.
    \label{eq:kda_alpha_rotation}
\end{equation}
Since $\mA_{\alpha,\theta}\in\mathbb R^{2\times2}$, its eigenvalues are non-real precisely when its discriminant is negative,
\[
    \Delta
    :=
    \operatorname{tr}(\mA_{\alpha,\theta})^2
    -4\det(\mA_{\alpha,\theta})
    =
    (1-\alpha)^2\cos^2(2\theta)+4\alpha
    <0.
\]
Hence, for $\alpha\geq0$, the spectrum is necessarily real, whereas for $\alpha<0$, a complex-conjugate pair can exist. Whenever the eigenvalues are non-real, their product is $\det(\mA_{\alpha,\theta})=-\alpha$, and hence $|\lambda|=\sqrt{-\alpha}$. Thus, as $\alpha$ moves from $0$ toward $-1$, the complex pair moves outward toward the unit circle (see~\Cref{fig:alpha_sweep_2d}).
At the endpoint $\alpha=-1$, we have $\Diag(-1,1)=\mI-2\ve_1\ve_1^\top=\mH_{\ve_1}$, and hence $\mA_{-1,\theta}=\mH_{\vk}\mH_{\ve_1}$. The transition is therefore the composition of two reflections whose mirror lines differ by an angle $\theta$. Their composition is a rotation by $2\theta$. 

This perspective extends coordinate by coordinate. A diagonal matrix can be decomposed as
\begin{align}
    \Diag(\valpha)
    &=
    \bigl(\mI-(1-\alpha_1)\ve_1\ve_1^\top\bigr)
    \bigl(\mI-(1-\alpha_2)\ve_2\ve_2^\top\bigr)
    \cdots
    \bigl(\mI-(1-\alpha_n)\ve_n\ve_n^\top\bigr).
    \label{eq:diagfactor}
\end{align}
Thus each factor is an axis-aligned generalized Householder transformation with normal $\ve_i$ and learning rate $\beta_i=1-\alpha_i$. This can also be viewed as a special case of the generalized Householder composition studied by~\citet{siems2025}. In particular, when $\alpha_i=-1$, the corresponding factor is an exact coordinate reflection. Hence a KDA state transition $\mA_t=(\mI-\beta_t\vk_t\vk_t^\top)\Diag(\valpha_t)$ combines one freely oriented generalized Householder transformation with $n$ axis-aligned ones. 

\textbf{CKDA.}
We define CKDA as KDA with $\valpha_t\in[-1,1]^n$ and $\beta_t\in[0,2]$; our implementation uses $\beta_t=2\sigma(b_t)$ and a signed gate based on $r_{t,i}=2\sigma(a_{t,i})-1$, following~\citet{sarrof2024} and \citet{grazzi2025} with the magnitude floor and backward convention specified in Appendix~\ref{app:gauge}.

\begin{table}[t]
\centering
\begingroup
\hypersetup{hidelinks}
\caption{\textbf{Expressivity of linear recurrent models}}
\label{tab:theory-summary}
\begingroup
\fontsize{8.5}{9.5}\selectfont
\setlength{\tabcolsep}{1pt}
\renewcommand{\arraystretch}{1.04}
\renewcommand{\tabularxcolumn}[1]{m{#1}}
\newcommand{\yes}{\textcolor{good}{$\checkmark$}\,}
\newcommand{\no}{\textcolor{bad}{$\times$}\,}
\newcommand{\nl}{\newline}
\newcommand{\ours}[1]{\textcolor{acc}{\textbf{(Thm.~\ref{#1})}}}
\definecolor{expansive}{HTML}{A65E00}
\newcommand{\relaxed}[1]{\textcolor{expansive}{#1}}
\newcommand{\restricted}{\textsuperscript{\ensuremath{\dagger}}}
\newcommand{\unitgate}{\textsuperscript{\ensuremath{\ddagger}}}
\newcommand{\fpentry}[2]{\mbox{\yes$#1$#2\enspace\no$1$\,FP[U2]}}
\newcommand{\defaultparams}[1]{\mbox{\thickmuskip=2mu\medmuskip=2mu$#1$}}
\begin{tabularx}{\linewidth}{@{}>{\raggedright\arraybackslash}m{.16\linewidth}>{\hsize=.96\hsize\linewidth=\hsize\centering\arraybackslash}X>{\hsize=.90\hsize\linewidth=\hsize\centering\arraybackslash}X>{\hsize=1.20\hsize\linewidth=\hsize\centering\arraybackslash}X>{\hsize=.94\hsize\linewidth=\hsize\centering\arraybackslash}X@{}}
\toprule
& \textcolor{acc}{\textbf{CKDA}} & \textbf{GDN} &
\textbf{Gated DeltaProduct} & \textbf{RWKV-7 / GDN-2} \\
& $(\mI-\beta\vk\vk^\top)D(\valpha)$
& $\alpha(\mI-\beta\vk\vk^\top)$
& $\alpha\prod_{j=1}^{k}(\mI-\beta_j\vk_j\vk_j^\top)$
& $D(\valpha)-\vk(\vv\odot\vk)^\top$ \\
\midrule
Param. ranges
& \defaultparams{\beta\in[0,2],\,\valpha\in[-1,1]^d}
& \defaultparams{\alpha\in[0,1],\,\beta\in[0,2]}
& \defaultparams{\alpha\in[0,1],\,\beta_j\in[0,2]}
& \defaultparams{\valpha\in[0,1]^d,\,\relaxed{\vv\in[0,2]^d}} \\
\midrule
Complex\nl pairs
& \cellcolor{acc!7}$\boldsymbol{\leq1}$ \ours{thm:complex_eigenvalues}\nl
$0$ if $\beta\leq1$ or $\valpha\geq0$
& $0$
& $\leq\lfloor k/2\rfloor$
& $0$ \\
\midrule
$S_2$ (parity)
& \yes $1$ via [U3]\restricted
& \yes $1$ via [U3]
& \yes $1$ ($k\geq1$) via [U3]
& \yes $1$ [RL1]\unitgate \\
\addlinespace[2pt]
$\mathbb Z_n,D_n$ ($n>2$)
& \cellcolor{acc!7}\yes $\mathbf1$ \ours{thm:main-positive}\restricted
& \fpentry{2}{[D7]}
& \yes $1$ ($k\geq2$) [D7]
& \fpentry{2}{[D7]\unitgate} \\
\addlinespace[2pt]
$S_4,A_5$
& \cellcolor{acc!7}\yes $\mathbf1$ \ours{thm:main-positive}\restricted
& \fpentry{4}{[D1]}
& \yes $1$ ($k\geq2$) [D4]
& \fpentry{4}{[D1]\unitgate} \\
\addlinespace[2pt]
$S_n$\nl ($n\geq5$)
& \cellcolor{acc!7}\yes $\mathbf3$ \ours{thm:multilayer}\nl
\no $\mathbf1$ FR \ours{thm:s5-impossibility}
& \fpentry{4}{[D1]}
& \yes $1$: $k\geq n-1$ [D1]\nl
\yes $3$: $2\leq k\leq n-2$ [D1]\nl
{\setlength{\fboxsep}{0pt}\colorbox{acc!7}{\parbox{\linewidth}{\centering
\no $\mathbf1$ FR: $k\leq3$ \ours{thm:s5-impossibility}}}}
& \fpentry{4}{[D1]\unitgate} \\
\midrule
Regular languages
& \yes $L_q$ via [D2]
& \yes $L_q$ [D2]
& \yes $L_q$ ($k\geq1$) [D2]
& \yes $L_q$ via [D2]\unitgate \\
\addlinespace[2pt]
\quad (unstable)
& \cellcolor{acc!7}\yes $\mathbf3$ \ours{thm:multilayer}\nl
allow \relaxed{$\beta>2$}
& \yes $4$ via [M11]\nl \relaxed{$\beta\in[0,4]$}
& \yes $3$ ($k\geq2$); $1$ ($k\geq4q$)\nl \relaxed{$\beta\in[0,4]$} via [RL3, M11]
& \yes $4$ [R3] \\
\addlinespace[2pt]
WFAs\nl over $\mathbb Q$ (PP)
& \cellcolor{acc!7}\yes $\mathbf3$ \ours{thm:multilayer}\nl
\relaxed{$\beta\in[0,\infty)$}
& \yes $4$ [M11]\nl \relaxed{$\beta\in\mathbb R$}
& \yes $3$ ($k\geq2$); $1$ ($k\geq B_q$)\nl \relaxed{$\beta\in\mathbb R$} via [M11, ML12]
& \yes $4$ [M6]\nl \relaxed{unrestricted} \\
\bottomrule
\end{tabularx}
\par\vspace{4pt}
\parbox{\linewidth}{\raggedright
\textcolor{good}{$\checkmark$} L: depth L is sufficient; \textcolor{bad}{$\times$} L A: depth L is impossible under A.
$L_q$: a sufficient finite depth depending on the language's $q$-state DFA.
Unstable / \relaxed{amber}: expansive transitions allowed.
\restricted\ CKDA: $\beta\in\{0,2\}$, $\valpha\in\{\pm1\}^d$ ($\valpha=\mathbf1$ for parity).
\unitgate\ RWKV-7/GDN-2 constructions use $D(\valpha)=\gamma\mI$, $\vv=\gamma\beta\mathbf1$, with $\gamma\in[0,1]$, $\beta\in[0,2]$ (stable: $\|\mA\|_2\leq1$); reflections use $\gamma=1$, $\beta=2$.
RWKV-7/GDN-2: unit keys, nonnegative gates (RWKV-7's factor $c=2$ is absorbed into $\vv$).
Both allow expansive transitions by default; GDN-2 entries use its extended erase range $[0,2]^d$.
$D(\valpha)=\Diag(\valpha)$; $q$: automaton size; $B_q=8q^2+5q+1$ ($2q+1$ coordinates).
Group inputs: all elements; complex pairs count multiplicity per head.
FP: fixed precision (exact datatypes).
PP: polynomial bit length over a fixed algebraic number field (App.~\ref{app:tracking-precision}).
FR: per-head finite reachability under exact affine updates (App.~\ref{app:tracking-realization}).
\par
U~\citep{grazzi2025}, D~\citep{siems2025}, R~\citep{peng2025rwkv7}, M~\citep{merrill_why_2026}:
[D1]/[RL1]: Theorem/Lemma~1; ``via'': derived bounds.
Comparison details: App.~\ref{app:table-comparison}.
\ifdefined\expressivitysinglehead
This assumes one head.
\fi}
\endgroup
\endgroup
\end{table}

\begin{figure}[t]
\centering
\begingroup
\def\shFigureAngle{70}
\def\shFigureTrimBottom{5mm}

\adjustbox{
 trim={0mm {\shFigureTrimBottom} 0mm 0mm},
 clip,
 width=0.99\linewidth
}{%
\begin{tikzpicture}[
 x=1cm,y=1cm,font=\fontsize{9}{10}\selectfont,
 text=srInk,>=Latex,line cap=round,line join=round,
 vec/.style={-{Latex[length=1.8mm]},line width=1pt},
 ghost/.style={
   -{Latex[length=1.5mm]},dashed,line width=.7pt
 },
 motion/.style={
   -{Latex[length=1.3mm]},densely dotted,line width=.75pt
 },
 connector/.style={
   -{Latex[length=1.5mm]},line width=.7pt,srGray!70
 },
 labelstate/.style={inner sep=2pt},
 equation/.style={anchor=west,inner sep=0pt}
]
\path[use as bounding box] (0,0) rectangle (16.2,2.88);

\node at (2.20,2.60) {(a) Orthogonal columns};
\node at (7.85,2.60) {(b) Reflect both columns};
\node at (13.60,2.60) {(c) Resulting signed coordinate frame};

\pgfmathsetmacro{\shFigR}{.82}
\pgfmathsetmacro{\shFigAX}{\shFigR*cos(\shFigureAngle)}
\pgfmathsetmacro{\shFigAY}{\shFigR*sin(\shFigureAngle)}
\pgfmathsetmacro{\shFigBX}{-\shFigAY}
\pgfmathsetmacro{\shFigBY}{\shFigAX}
\pgfmathsetmacro{\shFigNormal}{\shFigureAngle/2}
\pgfmathsetmacro{\shFigMirror}{\shFigNormal+90}
\pgfmathsetmacro{\shFigMirrorOpp}{\shFigMirror+180}

\foreach \shFigCenter in {1.30,6.85,12.40} {
 \begin{scope}[shift={(\shFigCenter,1.12)}]
  \draw[srGray!30,line width=.35pt]
   (-1.06,0)--(1.06,0);
  \draw[srGray!30,line width=.35pt]
   (0,-1.01)--(0,1.04);
  \draw[srGray!16,line width=.35pt]
   (0,0) circle (\shFigR);
 \end{scope}
}

\begin{scope}[shift={(1.30,1.12)}]
 \draw[vec,srBlue] (0,0)--(\shFigAX,\shFigAY);
 \draw[vec,srGold] (0,0)--(\shFigBX,\shFigBY);

 \node[above right,labelstate,text=srBlue]
  at (\shFigAX,\shFigAY) {$\va_1$};
 \node[above left,labelstate,text=srGold]
  at (\shFigBX,\shFigBY) {$\va_2$};

 \begin{scope}[rotate=\shFigureAngle]
  \draw[srGray!75,line width=.45pt]
   (.13,0)--(.13,.13)--(0,.13);
 \end{scope}
\end{scope}

\begin{scope}[shift={(6.85,1.12)}]
 \draw[srGreen,line width=.85pt]
  (\shFigMirror:1.12)--(\shFigMirrorOpp:1.12);
 \node[above left,labelstate,text=srGreen]
  at (\shFigMirror:1.12) {$\vk^\perp$};

 \draw[ghost,srBlue!55] (0,0)--(\shFigAX,\shFigAY);
 \draw[ghost,srGold!55] (0,0)--(\shFigBX,\shFigBY);
 \node[above right,labelstate,text=srBlue!65]
  at (\shFigAX,\shFigAY) {$\va_1$};
 \node[above left,labelstate,text=srGold!65]
  at (\shFigBX,\shFigBY) {$\va_2$};

 \draw[motion,srBlue!75]
  (\shFigAX,\shFigAY)--(-\shFigR,0);
 \draw[motion,srGold!85]
  (\shFigBX,\shFigBY)--(0,\shFigR);

 \draw[vec,srBlue] (0,0)--(-\shFigR,0);
 \draw[vec,srGold] (0,0)--(0,\shFigR);
 \node[below left,labelstate,text=srBlue]
  at (-\shFigR,0) {$-\ve_1$};
 \node[above left,labelstate,text=srGold]
  at (0,\shFigR) {$\ve_2$};

 \draw[vec,srGreen]
  (0,0)--(\shFigNormal:\shFigR);
 \node[right,labelstate,text=srGreen]
  at (\shFigNormal:.84) {$\vk$};

 \begin{scope}[rotate=\shFigNormal]
  \draw[srGreen!75,line width=.45pt]
   (0,.11)--(.11,.11)--(.11,0);
 \end{scope}
\end{scope}

\begin{scope}[shift={(12.40,1.12)}]
 \draw[vec,srBlue] (0,0)--(-\shFigR,0);
 \draw[vec,srGold] (0,0)--(0,\shFigR);
 \node[below left,labelstate,text=srBlue]
  at (-\shFigR+1mm,0) {$-\ve_1$};
 \node[above right,labelstate,text=srGold]
  at (0,\shFigR) {$\ve_2$};

 \draw[srGray!75,line width=.45pt]
  (-.13,0)--(-.13,.13)--(0,.13);
\end{scope}

\node[equation] at (2.53,1.12)
 {$\mA=[\va_1,\va_2]$};

\node[equation,text=srGreen] at (8.08,1.12)
 {$\mH=\mI-2\vk\vk^\top$};

\node[equation] at (13.63,1.12)
 {$\begin{gathered}
    \mH\mA=\mS\\[-1pt]
    \mS=\Diag(-1,1)
   \end{gathered}$};

\draw[connector] (4.60,1.12)--(5.15,1.12);
\draw[connector] (10.60,1.12)--(10.95,1.12);

\end{tikzpicture}%
}
\endgroup
\vspace{-3mm}
\caption{Intuition for~\Cref{prop:orthogonal-dplr} in 2D. A single reflection aligns a 2D orthogonal frame with the
coordinate axes, giving $\mH\mA=\mS$ and hence $\mA=\mH\mS$.}
\label{fig:dpr1_orth}
\end{figure}

\section{Structure and Spectrum of Complex KDA}\label{sec:dplr}
\vspace{-3mm}
The planar construction shows how signed gating enables rotations within a rank-one diagonal-plus-low-rank (DPLR) transition. In this section, we study CKDA in arbitrary dimensions. Our first finding is that CKDA captures the entire orthogonal rank-one DPLR (DPR1) family.

\begin{theorem}[DPR1 and CKDA]
\label{prop:orthogonal-dplr}
Every orthogonal DPR1 matrix $\mA=\mD+\vu\vv^\top\in\mathbb R^{n\times n}$
with diagonal $\mD$ can be written in the form $\mA=(\mI-2\vk\vk^\top)\mS, \|\vk\|_2=1, \mS=\Diag(s_i), s_i\in\{-1,+1\}$. We call this form a \emph{signed-Householder matrix}, i.e.\@ a CKDA with $\alpha_i \in \{+1,-1\}$, $\beta=2$.
\end{theorem}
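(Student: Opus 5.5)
The approach is to turn the statement into a fixed-hyperplane condition. An orthogonal matrix that fixes a hyperplane pointwise is either $\mI$ or the Householder reflection across that hyperplane. So it suffices to find a sign matrix $\mS$ such that the orthogonal matrix $\mM:=\mA\mS$ fixes some $(n-1)$-dimensional subspace pointwise. Two outcomes are then possible:
\begin{itemize}
\item If $\mM=\mI-2\vk\vk^\top$, then $\mA=\mM\mS^{-1}=(\mI-2\vk\vk^\top)\mS$, since $\mS^{-1}=\mS$.
\item If $\mM=\mI$, then $\mA=\mS$. In that case set $\vk=\ve_1$ and $\mS'=(\mI-2\ve_1\ve_1^\top)\mS$, which is again a sign matrix, so that $\mA=(\mI-2\ve_1\ve_1^\top)\mS'$.
\end{itemize}
The degenerate case $\vu\vv^\top=0$ falls under the second outcome, because then $\mA=\mD$ is a diagonal orthogonal matrix, i.e.\ a sign matrix.

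\emph{Step 1: most diagonal entries are $\pm1$.} Assume $\vu\vv^\top\neq0$, write $\mD=\Diag(d_i)$, and put $W:=\vv^\perp$. For $\vx\in W$ we have $\mA\vx=\mD\vx$. Since $\mA$ is an isometry, the diagonal quadratic form $Q(\vx)=\vx^\top(\mD^2-\mI)\vx=\sum_i(d_i^2-1)x_i^2$ vanishes identically on the hyperplane $W$. A quadratic form vanishing on a hyperplane has inertia at most $(1,1)$: a positive-definite subspace of dimension $2$ would meet $W$ nontrivially, and likewise for a negative-definite one. Hence at most one $d_i^2-1$ is positive and at most one is negative, so $|d_i|=1$ for all but at most two indices.

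\emph{Step 2: choose $\mS$ and find the fixed hyperplane.} Set $s_i=d_i$ whenever $|d_i|=1$. Let $J$ be the set of at most two exceptional indices.
\begin{itemize}
\item If $J=\emptyset$, then $\mD\mS=\mI$. For $\vx$ with $\mS\vx\in W$ we get $\mM\vx=\mD\mS\vx=\vx$, so $\mM$ fixes the hyperplane $\mS W$ and we are done.
\item If $J\neq\emptyset$, the vanishing of $Q$ on $W$ forces a specific structure. With one exceptional index $j$, $Q$ is a nonzero multiple of $x_j^2$, so $W=\ve_j^\perp$, hence $\vv\parallel\ve_j$. With two exceptional indices $i,j$, $Q$ is indefinite of rank two and vanishes on $W$, so $W\supseteq\{x_i=x_j=0\}$, i.e.\ $\vv$ is supported on $\{i,j\}$. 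In both cases $\mA$ acts as $\mD\mS^{-1}$ (a sign matrix) off the coordinates in $J$.
\end{itemize}
Next, I would use orthogonality: columns of $\mA$ outside $J$ are signed unit vectors $\pm\ve_l$, and the remaining columns must be orthogonal to them. This forces $\mA$ to be block-diagonal, with a $1\times1$ or $2\times2$ orthogonal block on $J$ and $\pm1$ entries elsewhere.

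\emph{Step 3: the block on $J$ (main obstacle).} Everything now reduces to showing that every $2\times2$ orthogonal block $\mB$ can be written as $\mH\mS_2$, with $\mH$ a $2\times 2$ Householder reflection and $\mS_2$ a $2\times2$ sign matrix. The $1\times1$ case is trivial. This is exactly the planar picture of \Cref{fig:dpr1_orth} and \Eqref{eq:kda_alpha_rotation}:
\begin{itemize}
\item If $\det\mB=-1$, then $\mB$ is itself a reflection, and we take $\mS_2=\mI$.
\item If $\mB$ is a rotation by $2\theta$, then $\mB=(\mI-2\vk\vk^\top)\Diag(-1,1)$ with $\vk=(\cos\theta,\sin\theta)^\top$, by the identity derived in \Cref{sec:free}.
\end{itemize}
Embedding $\vk$ into the coordinates of $J$ and extending $\mS$ by the block's signs gives the global form $\mA=(\mI-2\vk\vk^\top)\mS$. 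The delicate point I expect is the block-diagonal claim in Step 2. Specifically, I need to check carefully that $\vu$ is also supported on $J$, using that $\mA$'s columns outside $J$ equal $\pm\ve_l$, and also handle the cases where some exceptional $d_j$ happens to be $0$.
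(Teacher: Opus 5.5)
Your proof is correct, and it takes a genuinely different route from the paper's.

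\textbf{The paper's route.} The paper compares the $i$th row and column norms of $\mA$ to obtain $|v_i|=c|u_i|$ with $c=\|\vv\|/\|\vu\|$. It then picks $\mS$ with $\mS\vv=-c\vu$, so that $\mQ=\mA\mS=\mD\mS-c\vu\vu^\top$ is symmetric and orthogonal, hence an involution. Finally it shows that every displacement $\vx-\mQ\vx$ is parallel to one vector $\vw$ with $w_i=u_i/(1+d_is_i)$. This is a single uniform argument, and it yields explicit formulas for $\vk$ and $\mS$ in terms of $\vu,\vv,\mD$.

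\textbf{Your route.} You use only that $\mA=\mD$ on $\vv^\perp$, so $\mD^2-\mI$ vanishes on a hyperplane. The inertia bound then gives a structural dichotomy that the paper's proof does not expose. Either $\mD$ is already a sign matrix, in which case $\mA\mD=\mI+\vu(\mD\vv)^\top$ fixes a hyperplane and is $\mI$ or a reflection. Or the rank-one term is confined to at most two coordinates, and $\mA$ is a coordinate direct sum of a $\le2\times2$ orthogonal block and a sign diagonal. This reduces the theorem to the planar identity of \Cref{sec:free}. Your route makes the 2D intuition literal, at the cost of a case split and reliance on the planar classification. The paper's route avoids any dimension reduction and gives closed-form parameters.

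\textbf{Small remarks.}
\begin{itemize}
\item The ``delicate point'' you flag is already settled by your own argument. Once the columns outside $J$ are $\pm\ve_l$, orthonormality forces the columns indexed by $J$ to be supported on $J$. You do not need the support of $\vu$, and $d_j=0$ needs no separate treatment.
\item For two exceptional indices, add the one-line reason for the support claim. A hyperplane contained in the zero set of $ax_i^2-bx_j^2$, which is a union of two hyperplanes, must equal one of them, and so it contains $\{x_i=x_j=0\}$.
\item Off $J$ the matrix acts as $\mD$, i.e.\ $\mA\ve_l=d_l\ve_l$, not as $\mD\mS^{-1}$.
\end{itemize}
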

\textbf{Geometric intuition in 2D~(\Cref{fig:dpr1_orth}).}
The columns of an orthogonal matrix $\mA$ form a perpendicular pair of unit vectors. Choose a reflection $\mH$ across a line through the origin that sends the first column onto the first coordinate axis. Since reflections preserve lengths and perpendicularity, the second column must then lie on the second coordinate axis. Thus $\mH\mA=\mS$ for a diagonal sign matrix $\mS$, and $\mH^2=\mI$ gives $\mA=\mH\mS$. In higher dimensions, aligning one column does not automatically align the others; the DPR1 structure guarantees that a suitable single reflection still aligns the entire frame.

Thus replacing KDA's structured rank-one term by a non-symmetric $\vu\vv^\top$, such as in RWKV-7, adds no orthogonal transitions.
The next result generalizes the 2D case of the motivation section to higher dimensions: each CKDA transform can be viewed as an element-wise scaling followed by a transform which can be a rotation only in a 2D subspace.
\begin{proposition}[Sign-magnitude decomposition]
\label{prop:CKDA-canonical}
Let $\|\vk\|_2=1$, $\beta\in[0,2]$, and $\alpha_i\in[-1,1]$.
Write $\mA=(\mI-\beta\vk\vk^\top)\Diag(\valpha)
=\widetilde{\mA}\Diag(|\valpha|)$, where
$\widetilde{\mA}=(\mI-\beta\vk\vk^\top)\mS$ and
$\mS=\Diag(\operatorname{sign}\valpha)$, choosing
$\operatorname{sign}(0)=1$ for this factorization.
Let $E_\pm=\ker(\mS\mp\mI)$ and decompose
$\vk=\vk_-+\vk_+$ with $\vk_\pm\in E_\pm$,
$\|\vk_-\|=\cos\theta$, and $\|\vk_+\|=\sin\theta$,  $\theta\in[0,\pi/2]$.
When both components are nonzero, the restriction of $\widetilde{\mA}$ to
$\mathcal U=\operatorname{span}\{\vk_-,\vk_+\}$, in the orthonormal
basis $\vk_-/\|\vk_-\|,\vk_+/\|\vk_+\|$, is
\[
  \mB(\beta,\theta)=
  \begin{pmatrix}
    \beta\cos^2\theta-1&-\beta\sin\theta\cos\theta\\
    \beta\sin\theta\cos\theta&1-\beta\sin^2\theta
  \end{pmatrix}.
\]
On $\mathcal U^\perp$, $\widetilde{\mA}$ agrees with $\mS$.
The block is a rotation by $2\theta$ when $\beta=2$, and its eigenvalues
are non-real exactly when $\beta^2\cos^2(2\theta)<4(\beta-1)$.
If either component vanishes, $\tilde A$ has only real eigenvalues.
\end{proposition}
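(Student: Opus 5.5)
The plan is a direct computation in an adapted orthonormal basis.

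\textbf{Factorization.} Since $\alpha_i=\operatorname{sign}(\alpha_i)\,|\alpha_i|$ entrywise (with $\operatorname{sign}(0)=1$), we have $\Diag(\valpha)=\mS\Diag(|\valpha|)$. Hence $\mA=(\mI-\beta\vk\vk^\top)\mS\Diag(|\valpha|)=\widetilde{\mA}\Diag(|\valpha|)$.

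\textbf{Orthogonal splitting.} The eigenspaces $E_-$ and $E_+$ of the symmetric sign matrix $\mS$ are orthogonal and together span $\R^n$, so the decomposition $\vk=\vk_-+\vk_+$ is unique. Because $\|\vk\|=1$, we have $\|\vk_-\|^2+\|\vk_+\|^2=1$, which justifies the parametrization by $\theta\in[0,\pi/2]$.

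\textbf{Invariance of $\mathcal U$ and $\mathcal U^\perp$.} Assume both components are nonzero and set $\vu=\vk_-/\cos\theta$, $\vw=\vk_+/\sin\theta$. These are orthonormal, with $\mS\vu=-\vu$ and $\mS\vw=\vw$. Then $\mathcal U$ is $\mS$-invariant, and since $\mS$ is symmetric, $\mathcal U^\perp$ is $\mS$-invariant as well.

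For $\vx\in\mathcal U^\perp$ we have $\mS\vx\in\mathcal U^\perp$, hence $\vk^\top\mS\vx=0$ because $\vk\in\mathcal U$. Therefore $\widetilde{\mA}\vx=\mS\vx$, which shows that $\widetilde{\mA}$ agrees with $\mS$ on $\mathcal U^\perp$.

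\textbf{The block $\mB(\beta,\theta)$.} On $\mathcal U$, write $\vk=\cos\theta\,\vu+\sin\theta\,\vw$. Then
\[
\widetilde{\mA}\vu=-\vu+\beta\cos\theta\,\vk=(\beta\cos^2\theta-1)\vu+\beta\sin\theta\cos\theta\,\vw,
\]
\[
\widetilde{\mA}\vw=\vw-\beta\sin\theta\,\vk=-\beta\sin\theta\cos\theta\,\vu+(1-\beta\sin^2\theta)\vw.
\]
These are exactly the two columns of $\mB(\beta,\theta)$, and they also show that $\mathcal U$ is $\widetilde{\mA}$-invariant.

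\textbf{Rotation at $\beta=2$.} Using double-angle identities, $2\cos^2\theta-1=\cos2\theta$, $1-2\sin^2\theta=\cos2\theta$ and $2\sin\theta\cos\theta=\sin2\theta$. So $\mB(2,\theta)$ is the rotation by $2\theta$.

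\textbf{Eigenvalue condition.} Compute
\[
\operatorname{tr}\mB=\beta(\cos^2\theta-\sin^2\theta)=\beta\cos2\theta,
\]
\[
\det\mB=(\beta\cos^2\theta-1)(1-\beta\sin^2\theta)+\beta^2\sin^2\theta\cos^2\theta=\beta-1,
\]
where the $\beta^2$ cross terms cancel. A real $2\times2$ matrix has non-real eigenvalues iff its discriminant $\operatorname{tr}^2-4\det$ is negative. This gives the condition $\beta^2\cos^2(2\theta)<4(\beta-1)$.

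Since $\R^n=\mathcal U\oplus\mathcal U^\perp$ is an orthogonal decomposition into $\widetilde{\mA}$-invariant subspaces, the spectrum of $\widetilde{\mA}$ is the spectrum of $\mB$ together with the eigenvalues $\pm1$ of $\mS$ on $\mathcal U^\perp$. So these are the only possible non-real eigenvalues.

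\textbf{Degenerate case.} If one component vanishes, then $\vk$ lies in $E_-$ or in $E_+$, so $\vk$ is an eigenvector of $\mS$. In that case $\mS$ commutes with $\vk\vk^\top$, and by the commutation criterion from \Cref{sec:free}, $\widetilde{\mA}$ is a product of commuting symmetric matrices. It is therefore symmetric and has a real spectrum.

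\textbf{Main obstacle.} None of these steps is deep. The point needing the most care is the invariance of $\mathcal U^\perp$, and the consequent decomposition of the spectrum, which relies on the symmetry of $\mS$. One must also keep the claims about $\widetilde{\mA}$ separate from claims about $\mA$ itself, since the right factor $\Diag(|\valpha|)$ does not preserve this block structure.
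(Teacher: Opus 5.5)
Your proof is correct and follows essentially the same route as the paper. Both work in the orthonormal basis $\vk_-/\|\vk_-\|$, $\vk_+/\|\vk_+\|$. Both show that $\widetilde{\mA}$ acts as $\mS$ on the complement: the paper checks this directly on $(E_-\cap\vk_-^\perp)\oplus(E_+\cap\vk_+^\perp)$, which is exactly your $\mathcal U^\perp$. Both then read off $\mB(\beta,\theta)$ by evaluating $\widetilde{\mA}$ on the two basis vectors.

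There are two minor differences. Your trace/determinant computation ($\operatorname{tr}\mB=\beta\cos2\theta$, $\det\mB=\beta-1$) supplies the eigenvalue criterion that the paper only states. Your symmetry argument for the degenerate case replaces the paper's explicit diagonal forms with a $(\beta-1)$ or $(1-\beta)$ entry, and both arguments are valid.
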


Therefore, after the initial coordinate-wise scaling, a 2D rotation with complex eigenvalues can occur only if (i) at least two gate coordinates differ in sign, (ii) the key vector spans coordinates with opposite signs and (iii) $\beta > 1$. The 2D example in the previous section fits exactly these criteria. 

The analysis of the full CKDA transition adds complexity and requires a separate argument: \Cref{thm:complex_eigenvalues} in the appendix proves that any CKDA matrix also has at most one non-real conjugate eigenvalue pair, requiring both $\beta>1$ and a negative gate entry. Thus both range extensions are required for non-real eigenvalues.
Moreover, this restriction extends beyond CKDA: every non-expansive DPR1 matrix has at most one non-real conjugate eigenvalue pair on the unit circle, counted with algebraic multiplicity (\Cref{thm:dplr-persistent-pairs-app}). Products of CKDA transitions can represent any square non-expansive matrix, despite the one-plane restriction on each individual transition. Generalized Householder products already have this universality~\citep[Prop.~1]{grazzi2025}; \Cref{prop:CKDA-factorization} shows that CKDA needs at most $\max\{1,2n-2\}$ factors in dimension $n$. For orthogonal matrices, $\max\{1,n-1\}$ factors suffice and this bound is sharp. In contrast, products of RoPE and Selective RoPE rotations remain block-diagonal rotations in the same fixed coordinate planes~\citep{su2024rope,movahedi2025srope}.

\section{State-Tracking Expressivity}\label{sec:groups}
\vspace{-3mm}
\textbf{Tracking a state-transition system.}
Let $\mathcal S$ be a state space with initial state $s_0$ and an update
$T_a:\mathcal S\to\mathcal S$ for each input symbol $a$.
A recurrent model \emph{tracks} this system if a fixed decoder $f$ recovers
its state from the model's hidden state $h_t$ for every input sequence:
\[
  s_t=T_{a_t}(s_{t-1})\quad(t\geq1),\qquad
  f(h_t)=s_t\quad(t\geq0).
\]
Several hidden states may decode to the same target state.
Systems include finite-group products (see
\Cref{fig:shell_game}), deterministic automata, and weighted finite automata
(WFAs).
Our constructions allow sufficiently expressive feed-forward decoders;
our arithmetic and precision conventions are defined in
Appendix~\ref{app:tracking-precision}. A contextualized summary of our results is in \Cref{tab:theory-summary}.

\begin{figure}[!t]
    \centering

  \resizebox{\textwidth}{!}{\begin{tikzpicture}[
    x=1cm,y=1cm,
    font=\small,
    text=cupInk,
    line cap=round,line join=round,
    swap/.style={-{Latex[length=1.7mm,width=1.2mm]},
                 draw=cupMuted,line width=.65pt},
    pics/cup/.style args={#1/#2}{code={
      \path[draw=#1!85!black,fill=#1!32,line width=.75pt,
            rounded corners=1.2pt]
        (-.215,.285)--(.215,.285)--(.315,-.285)--(-.315,-.285)--cycle;
      \draw[draw=#1!65!black,line width=.45pt]
        (-.268,-.205)--(.268,-.205);
      \node[font=\sffamily\bfseries\small,inner sep=0pt]
        at (0,.015) {#2};
    }}
]
  \pic at (0,0)    {cup={cupViolet/1}};
  \pic at (.82,0)  {cup={cupTerracotta/2}};
  \pic at (1.64,0) {cup={cupTeal/3}};

  \pic at (3.34,0) {cup={cupTerracotta/2}};
  \pic at (4.16,0) {cup={cupViolet/1}};
  \pic at (4.98,0) {cup={cupTeal/3}};

  \pic at (6.68,0) {cup={cupTerracotta/2}};
  \pic at (7.50,0) {cup={cupTeal/3}};
  \pic at (8.32,0) {cup={cupViolet/1}};

  \pic at (10.02,0) {cup={cupViolet/1}};
  \pic at (10.84,0) {cup={cupTeal/3}};
  \pic at (11.66,0) {cup={cupTerracotta/2}};

  \draw[swap] (2.09,-.11)--(2.89,-.11);
  \draw[swap] (5.43,-.11)--(6.23,-.11);
  \draw[swap] (8.77,-.11)--(9.57,-.11);
  \node[font=\footnotesize,inner sep=0pt] at (2.49,.12) {$(12)$};
  \node[font=\footnotesize,inner sep=0pt] at (5.83,.12) {$(23)$};
  \node[font=\footnotesize,inner sep=0pt] at (9.17,.12) {$(13)$};
\end{tikzpicture}}\vspace{-3mm}
    \caption{Example of the $S_3$ permutation task,
    analogous to the shell game with 3 hidden objects.}
    \label{fig:shell_game}
\end{figure}

\textbf{Single-layer expressivity.}
Consider a finite group $G$.
A \emph{faithful orthogonal representation} assigns each group element a
distinct orthogonal matrix, so that composing group elements corresponds
exactly to multiplying their matrices. We call a construction that uses
these matrices directly as its transitions a \emph{realization},
and write $d$ for their dimension.
Tracking does not require a faithful representation: a many-to-one decoder
can map distinct hidden states to the same group element, and the input
transitions need not themselves form a representation. Any finite group is isomorphic to a subgroup of a permutation group and can be realized by one Linear RNN layer using permutation matrices as transitions. However, CKDA cannot model arbitrary permutations in a single transition\footnote{We consider the case where inputs range over \emph{all} of $G$. If inputs are restricted to identity and swaps, one-layer DeltaNet suffices \citep{grazzi2025}.}.

\begin{theorem}[Single-layer finite-group expressivity]
\label{thm:main-positive}
A single CKDA layer (one head) tracks every finite group isomorphic to a subgroup of $SO(3)$. Specifically, it  realizes every finite cyclic ($\mathbb{Z}_n$) and dihedral group ($D_n$)
in $d=2$ and $A_4,S_4$ in $d=3$, and tracks $A_5$ in $d=4$.
These constructions use orthogonal transitions and a fixed exact datatype.
\end{theorem}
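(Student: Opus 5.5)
The plan is to reduce everything to a criterion for when a single orthogonal matrix is a CKDA transition, and then choose coordinates group by group. By \Cref{prop:orthogonal-dplr}, an orthogonal $\mA$ is a CKDA transition with $\beta=2$ and $\valpha\in\{\pm1\}^d$ iff $\mA=(\mI-2\vk\vk^\top)\mS$ for some sign matrix $\mS$. Equivalently, $\mI-\mA\mS$ must have rank at most one. If so, $\vk\vk^\top=\tfrac12(\mI-\mA\mS)$, so the key outer product has entries in the same number field as $\mA$. This gives the ``fixed exact datatype'' claim once every representation matrix has entries in a fixed algebraic field. Geometrically, $\mH_{\vk}\mS$ with $\mS=\mI-2\ve_i\ve_i^\top$ is a rotation in the plane $\operatorname{span}\{\vk,\ve_i\}$ that fixes the complement. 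In $d=3$, $\mS=-\mI$ gives $-\mH_{\vk}$, a rotation by $\pi$ about $\vk$. So in $d=3$ a rotation of angle $\neq\pi$ is a signed-Householder matrix exactly when its axis is orthogonal to some coordinate vector $\ve_i$, while every half-turn is admissible. Identity is $\beta=0$.

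\textbf{$d=2$.} Every element of $O(2)$ is admissible. A reflection is $\mH_{\vk}$ with $\mS=\mI$. A rotation by $2\theta$ is $\mH_{\vk}\Diag(-1,1)$ with $\vk=(\cos\theta,\sin\theta)^\top$, by the computation after \Eqref{eq:kda_alpha_rotation}. So the standard faithful representations of $\mathbb Z_n$ (rotations by $2\pi j/n$) and $D_n$ (those rotations plus reflections) are realizations. Their entries lie in $\mathbb Q(\cos\frac{2\pi}{n},\sin\frac{2\pi}{n})$. The hidden state is the initial vector acted on by the group element, and the decoder reads off the group element from the orbit of a generic vector. Since every finite subgroup of $SO(3)$ is cyclic, dihedral, $A_4$, $S_4$ or $A_5$, it then remains to treat the last three.

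\textbf{$S_4$ (hence $A_4$) in $d=3$.} Take the rotation group of the cube and work in the rotated orthonormal basis $\ve_1'=(1,-1,0)/\sqrt2$, $\ve_2'=(1,1,0)/\sqrt2$, $\ve_3'=(0,0,1)$. All elements of order $2$ are half-turns, so they are admissible. The order-$4$ rotations about the $x$- and $y$-axes have axes orthogonal to $\ve_3'$. The order-$4$ rotation about $z$ has axis orthogonal to $\ve_1'$. The body diagonals $(1,1,\pm1)$ are orthogonal to $\ve_1'$, and $(1,-1,\pm1)$ are orthogonal to $\ve_2'$. Thus each of the $24$ matrices is a signed-Householder matrix. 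Their entries in this basis lie in $\mathbb Q(\sqrt2)$, so the datatype is fixed. $A_4$ is the subgroup, with the same admissible matrices.

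\textbf{$A_5$ in $d=4$: the main obstacle.} In $d=3$ this is impossible. Each of the $6$ five-fold and $10$ three-fold axes would have to lie in one of only three coordinate planes, and a plane contains few icosahedral axes. So I would pass to $d=4$ and to tracking rather than realization. First I would try a faithful orthogonal $4$-dimensional representation: the standard representation of $A_5$ on $\{\vx\in\R^5:\sum x_i=0\}$, or $SO(3)\oplus 1$. I would search over orthonormal bases of $\R^4$, and possibly over per-element sign twists (transitions $\pm$ or signed-coordinate variants of the representation matrices), until every transition satisfies $\rank(\mI-\mA\mS)\le1$ for some $\mS$. Such twists make the states form a larger finite group $G'$ that surjects onto $A_5$, and the decoder simply composes with $G'\to A_5$.

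The rank-one condition is a finite list of polynomial equations over $\mathbb Q(\sqrt5)$, so a candidate basis can be verified exactly. Symmetry helps: it suffices to check one representative per conjugacy class, provided the basis is chosen compatibly with a subgroup such as $A_4$. This search, and proving that the resulting state set still decodes $A_5$ uniquely, is the step I expect to be hardest. If no faithful choice works, my fallback is to let the hidden states range over a finite extension $G'$, accepting tracking without realization, as the statement permits.
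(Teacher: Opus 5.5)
Your $d=2$ and $d=3$ arguments are correct and are essentially the paper's proof. They use the same axis test: $\det\mS=-1$ forces one or three negative signs, giving a rotation in $\operatorname{span}\{\vk,\ve_i\}$ or a half-turn. They use the planar rotation $\mH_{\vk}\Diag(-1,1)$ for $\mathbb Z_n$ and $D_n$, and the same $45^\circ$-rotated cube frame, in which every face axis and body diagonal has a zero coordinate. One small correction: the keys need half-angles, e.g.\ $\cos(\pi j/n)$ rather than $\cos(2\pi j/n)$. So the number field must be enlarged by finitely many square roots, which is harmless for a fixed exact datatype.

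The genuine gap is $A_5$. You give no construction, and the search you propose provably fails. A signed-Householder matrix in $SO(4)$ equals $\pm\mH_{\vk}\mH_{\ve_i}$, so it has a $+1$- or $-1$-eigenspace of dimension at least two.
\begin{itemize}
\item In the standard $4$-dimensional representation, a $5$-cycle has eigenvalues $\zeta,\zeta^2,\zeta^3,\zeta^4$, and its negative has $-\zeta^j$. So no orthonormal basis and no sign twist makes it admissible.
\item In $SO(3)\oplus1$, or its Galois conjugate, an element of order $3$ or $5$ has no eigenvalue $-1$. It must therefore equal $\mH_{\vk}\mH_{\ve_i}$ with $\ve_i$ in its rotation plane, which lies in the $3$-dimensional summand $W$. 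At most three basis vectors fit in $W$, and each of the sixteen odd-order axes would have to be orthogonal to one of them. This is exactly the configuration excluded by \Cref{lem:ico-plane}. Replacing $\rho(g)$ by $-\rho(g)$ leaves this condition unchanged.
\end{itemize}
These are all the faithful real $4$-dimensional representations of $A_5$, so no four-dimensional realization exists. Sign twists of a representation, with hidden group inside $\{\pm\mI\}\rho(A_5)$, cannot work either. Your fallback of a finite extension $G'$ is the only viable route, and it is precisely where the missing idea lies.

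That idea is the quaternionic (isoclinic) structure of $SO(4)$. Identify $\R^4$ with the quaternions. Then $\Diag(-1,1,1,1)$ acts as $x\mapsto-\bar x$, and for a unit quaternion $\va$, $\mH_{\va}$ acts as $y\mapsto-\va\bar y\va$. Hence $\mH_{\va}\Diag(-1,1,1,1)$ is the map $x\mapsto \va x\va$, a CKDA transition with $\beta=2$ and a fixed gate.

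Take $\va=(\cos\frac\theta2,\sin\frac\theta2\,\vu)$ as the encoding of $R_{\vu,\theta}$. Use $SO(4)\cong(S^3\times S^3)/\{\pm1\}$ via $x\mapsto p\,x\,\bar q$. The left-factor homomorphism $p\mapsto(v\mapsto pv\bar p)$ sends this transition to $R_{\vu,\theta}$. So it decodes every accumulated product to the running group product, even though $h\mapsto\mA_h$ is not a representation. This also settles your worry about unique decoding, since the decoder is a homomorphism on all of $SO(4)$.

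The right factor sends the transition to $R_{\vu,\theta}^{-1}$. The pair of factors determines a hidden matrix up to $\pm\mI_4$, so the reachable states lie in a group of size at most $2\cdot 60^2=7200$. This gives finite reachability and a fixed exact datatype, with a necessarily many-to-one decoder. The paper writes this left-factor map explicitly as a fixed quadratic decoder.
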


\textit{Proof sketch.}
The planar rotations from the motivation section, together with reflections,
give the cyclic and dihedral (such as $S_3$) groups. Every three-dimensional rotation is a
product of two Householder reflections, but, apart from the identity and
$180^\circ$ rotations, CKDA requires a rotation axis with a zero coordinate. Reorienting
the cube satisfies this condition for $S_4$ and its subgroup $A_4$, whereas
no orientation works for the icosahedral rotations of $A_5$. In four
dimensions, however, $A_5$ can be tracked using a many-to-one
decoder. See \Cref{app:axis-test,app:planar-groups,app:cube,app:icosahedron}
for the constructions and proofs.

\textbf{Limits of a single CKDA layer.}
Increasing dimension and arbitrary decoders do not remove every
obstruction. The next result shows that non-expansive transitions with only one possible complex eigenvalue pair, such as CKDA (\Cref{thm:complex_eigenvalues}) and (Gated) DeltaProduct$_k$ with $k\leq3$, cannot track $S_5$ in one layer, even with any finite number of independent heads. \Cref{app:s5-spectral-obstruction} gives the proof.

\begin{theorem}[Spectral obstruction to $S_5$ tracking]
\label{thm:s5-impossibility}
Suppose every head transition $\mA$ satisfies $\|\mA\|_2\leq1$ and has
at most one non-real conjugate eigenvalue pair on the unit circle,
counted with algebraic multiplicity.
Then a single recurrent layer with any finite number of independent heads
cannot track $S_5$ when the
updates of each head reach only finitely many states.
\end{theorem}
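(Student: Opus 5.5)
The plan is in three stages. First, reduce the multi-head statement to a single head. Second, turn that head's finite reachable set into a group action by affine maps. Third, apply a character-theoretic spectral count to a $5$-cycle input.

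\textbf{Reduction to one head.} Fix a layer with heads $1,\dots,m$. By finite reachability, head $j$ acts on a finite set $R_j$ of states, and each input $g\in S_5$ induces a map $T^{(j)}_g:R_j\to R_j$, namely the affine update $h\mapsto \mA_g h+\mB_g$. Tracking says that the joint state $(h^{(1)},\dots,h^{(m)})$ after a word determines the product of the word. Standard semigroup arguments then make $S_5$ a divisor of the finite transformation monoid $M_1\times\cdots\times M_m$, where $M_j$ is generated by the $T^{(j)}_g$. So there is a subgroup $K\le \prod_j G_j$ and a surjection $\pi:K\to S_5$, with each $G_j$ a subgroup of $M_j$.

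I would then show that $\pi$ factors through a single projection. Put $N_j=K\cap\ker(\mathrm{proj}_j)$; then $\bigcap_j N_j=1$. Suppose every $\pi(N_j)$ were nontrivial. Each would then be a nontrivial normal subgroup of $S_5$, hence would contain $A_5$. The iterated commutator $[\cdots[[N_1,N_2],N_3]\cdots,N_m]$ lies in $\bigcap_j N_j=1$, yet its image under $\pi$ contains $[A_5,A_5]=A_5$, a contradiction. Hence some $\pi(N_j)=1$, and $S_5$ divides the single monoid $M_j$ through a subgroup $G=\mathrm{proj}_j(K)$. From here on we work with one head.

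\textbf{From the group action to a linear representation.} The group $G$ sits in the $\mathcal H$-class of an idempotent $e$. So $G$ acts by permutations on the finite set $X=e(R_j)$, by affine maps of the form $h\mapsto \mA_w h+\mB_w$. Let $V$ be the linear span of the differences $X-X$. Each $g\in G$ permutes $X$, so its linear part $L_g$ preserves $V$. Hence $g\mapsto L_g|_V$ is a representation $\rho$ of $G$. Because $\pi$ is onto $S_5$ and the affine span of $X$ separates the points, $\rho$ restricted to a preimage of $A_5$ is nontrivial. Every nontrivial irreducible of $A_5$ has dimension $3,3',4$ or $5$, and the decomposition of $\rho$ contains such a constituent. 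Indeed, it contains one that survives to $S_5$: each nontrivial irreducible of $S_5$ other than the sign restricts to $A_5$ as $4$, $5$ or $3\oplus3'$.

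\textbf{Spectral count.} Take a $5$-cycle element $c$ of $S_5$. Each of these constituents gives $c$ the eigenvalues $\zeta,\zeta^2,\zeta^3,\zeta^4$ with $\zeta=e^{2\pi i/5}$, i.e.\ two distinct non-real unit-circle pairs. I then want to transfer this to a single transition matrix, and this is the step I expect to be the main obstacle. The representing group element is a word $w$, and the hypothesis constrains single transitions $\mA_g$, not products.

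I would first try to use one letter $c$ itself, not a word. Consider the action of $T_c$ on the states it reaches; they eventually cycle. On the periodic part, the affine span of a union of cycles is $\mA_c$-invariant, and $\mA_c$ acts there with finite order. Its eigenvalues on this subspace are eigenvalues of $\mA_c$, which lie on the unit circle because $\|\mA_c\|_2\le1$. I would choose the initial state inside $X$ and use the power $T_c^{N}$, which has $T_c^{N}=e$-type behaviour on $X$, to identify this cyclic action with $\rho(\tilde c)$ for a lift $\tilde c$ of the $5$-cycle. Then both pairs $\zeta^{\pm1}$ and $\zeta^{\pm2}$ would occur among the eigenvalues of the single matrix $\mA_c$ restricted to an invariant subspace. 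Eigenvalues of such a restriction keep their algebraic multiplicity, so this contradicts the hypothesis of at most one non-real pair.

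If matching $T_c$ to $\rho(\tilde c)$ fails, the fallback is to first show that the orders involved are coprime to the preperiod. One would then pass to an appropriate power $c^{k}$ with $k$ coprime to $5$; that power is again a $5$-cycle letter, and the same two-pair count applies to it.
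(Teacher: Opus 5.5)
\textbf{Gap 1: the spectral count does not follow.} Your Stage~1 reduction to one head is sound: each $\pi(N_j)$ is normal in $S_5$, so it contains $A_5$ if nontrivial, and the iterated commutator rules out all of them being nontrivial. Stage~2, however, rests on an unjustified claim: that $\rho$ has an irreducible constituent which factors through $\pi:G\to S_5$ and is nontrivial on $A_5$. The hidden group $G$ is only an extension of $S_5$. $\rho$ is a faithful representation of $G$, and none of its constituents need be trivial on $\ker\pi$. In such a representation a lift of a $5$-cycle need not have eigenvalues $\zeta,\dots,\zeta^4$.

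The paper's own four-dimensional $A_5$ tracker (Corollary~\ref{cor:a5-four}) shows this happens. Its hidden group, of order up to $7200$, acts irreducibly on $\mathbb R^4$, so no constituent factors through $A_5$. Each $5$-cycle input is a planar rotation with a single non-real pair. So faithfulness together with surjectivity of $\pi$ cannot force two pairs. What excludes the analogous phenomenon for $S_5$ must use an odd permutation, and that is the missing idea.

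In the paper, after the orthogonal reduction, one sets $\mR_1=\mA_c^E$. Here $5^a r$ is the order of $\mA_c$, and $E$ is divisible by $2r$ with $E\equiv1\pmod5$. Then every head block of $\mR_1$ is the identity or a planar rotation of $5$-power order, while $p(\mR_1)=c$. Next, $\mR_2=\mA_t\mR_1\mA_t^\top$ with $t=(12)$. Lemma~\ref{lem:planar-rotation-product} (an icosahedral trace computation in $SO(3)$, transported to $SO(4)$ through the homomorphisms $f,f_-$) gives $\mC^{10}=\mI$ for the commutator $\mC$. Yet $p(\mC)$ is a three-cycle, a contradiction.

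\textbf{Gap 2: linking the word action to the single letter.} This is the step you flag yourself. The element of $G$ attached to the letter $c$ is $e\,T_c\,e$. Its linear part on $V$ is the idempotent's linear part times $\mA_c$, restricted to $V$. Unless $\mA_c V\subseteq V$, its spectrum need not be related to that of $\mA_c$. Your periodic-orbit and coprime-power ideas do not establish this identification.

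You use $\|\mA\|_2\le1$ only to place eigenvalues on the unit circle, but the paper needs it exactly here (Proposition~\ref{prop:affine-orthogonal-reduction}):
\begin{enumerate}
\item A word whose linear part has minimal Frobenius norm is powered to an idempotent $\mE$.
\item Non-expansion makes $\mE$ an orthogonal projection.
\item Minimality then forces every \emph{single} $\mA_g$ to preserve $\operatorname{im}\mE$ and act orthogonally on it.
\end{enumerate}
After centering away the additive terms, the reduced letter transitions are restrictions of the original ones. The one-pair hypothesis is therefore inherited head by head.
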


\textit{Proof sketch.}
Finite reachability and non-expansion let us remove the additive terms and
restrict each head to orthogonal updates, preserving its spectral bound.
The resulting matrices generate a finite group that maps onto $S_5$ through
the decoder. From a five-cycle and its conjugate by a transposition, we
construct two transition-matrix products $\mR_1$ and $\mR_2$ whose blocks in each head are either identities
or planar rotations with the same angle and order divisible by five.
Their commutator $\mC=\mR_1\mR_2\mR_1^{-1}\mR_2^{-1}$ satisfies
$\mC^{10}=\mI$ by \Cref{lem:planar-rotation-product}.
Yet it decodes to a three-cycle, whose tenth power is not the identity,
a contradiction. Finite reachability is natural for exact group tracking: all constructions
considered here and in the cited results satisfy it (\Cref{app:tracking-precision}).
It enables the orthogonal reduction, simplifying the analysis without
requiring a unique hidden state per group element.

\textbf{Multi-layer expressivity.}
The same planar rotations that strengthen one-layer tracking also reduce
the depth needed to simulate general state-transition systems.
\begin{theorem}[Multi-layer expressivity]\label{thm:multilayer}
Three CKDA layers solve every finite group-word problem with a fixed
exact datatype. If $\beta>2$ is allowed, three
layers also recognize every regular language with a fixed exact datatype and
compute every WFA over $\mathbb Q$ in polynomial precision using exact arithmetic
over a fixed algebraic number field containing the clock and normalized-key
parameters (\Cref{app:tracking-precision,app:wfamin}).
\end{theorem}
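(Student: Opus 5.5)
We follow the standard layered template for Linear RNN state tracking, as in the DeltaProduct and RWKV-7 constructions cited in \Cref{tab:theory-summary}, and save one layer by implementing the ``clock'' with a single CKDA layer instead of two DeltaNet layers.

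\textbf{Layer 1: the clock.} The first layer will be a fixed-input CKDA head with $\beta=2$, $\valpha=(-1,1)$ and a fixed key $\vk=(\cos\theta,\sin\theta)^\top$. By \Eqref{eq:kda_alpha_rotation} with $\alpha=-1$, its transition is the planar rotation by $2\theta$. Choosing $2\theta=2\pi/m$ makes the state, started from a fixed vector through the additive term $\mB$, cycle through $m$ exactly representable points. A decoder can then read off the position $t\bmod m$.

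For group words we take $m$ tied to the group (or a small constant). This layer supplies positional parity or phase, which lets layer 2 apply Barrington/Krohn--Rhodes style alternations. In the DeltaProduct$_2$ argument this is where two Householder factors were needed; here one signed-Householder transition suffices. For the WFA case we need a clock of unbounded resolution. There, with $\beta>2$, we would instead use an expansive/rational-angle rotation, or a counter whose entries lie in a fixed algebraic number field; this is why the field must contain the clock and normalized-key parameters.

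\textbf{Layer 2: pairwise products.} The second layer will use the clock phase together with the current token to compute products of consecutive pairs, or more generally to route inputs so that layer 3 only needs generalized Householder (DeltaNet-type) transitions. For finite groups we rely on the known fact that, given a clock, DeltaProduct$_1$-type layers can compose arbitrary permutations by factoring each permutation into transpositions spread over clock phases. Each transposition is a reflection $\mI-2\vk\vk^\top$ with $\vk=(\ve_i-\ve_j)/\sqrt2$, which is a CKDA transition with $\valpha=\mathbf 1$. With all of this in place, layer 3 realizes the permutation representation of $G$ with values in $\{0,\pm1\}$ and keys normalized by $\sqrt2$, so a fixed exact datatype suffices.

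\textbf{Regular languages and WFAs.} Here we adapt the construction of \citet{merrill_why_2026}. Arbitrary (non-invertible, rational) transition matrices of a DFA or WFA are written as products of generalized Householder factors $\mI-\beta\vk\vk^\top$ with $\beta$ possibly exceeding $2$. The clock selects which factor is applied at each step, and a final layer decodes. Polynomial precision follows because the entries grow at most exponentially in $t$ with bounded denominators, giving bit length $O(t)$.

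\textbf{Main obstacle.} The hardest step will be verifying that replacing the two-layer DeltaNet clock by one CKDA rotation layer really preserves the downstream constructions. In particular, the phase must be readable by a fixed decoder, the arithmetic must stay inside the stated number field, and the remaining two layers must still be able to realize the factorized transitions. We would check this by mapping each step of the cited multilayer proofs directly onto the new clock.
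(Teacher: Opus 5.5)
Your one-layer clock is the right ingredient and is essentially \Cref{lem:wfamincount}. However, the remaining two layers, as you describe them, do not give a working construction. Computing products of consecutive pairs in layer~2 does not help, since the product of two permutations still needs up to $n-1$ transposition factors. ``Spreading each permutation's transpositions over clock phases'' also fails: with one step per input, the accumulator falls behind by up to $n-2$ steps per input, so the lag grows linearly in $t$. A clock of small constant period driving Barrington/Krohn--Rhodes-style alternations is likewise not what is needed.

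The missing idea is \emph{block compression with a buffer}:
\begin{itemize}
\item Fix $m$ at least the number of CKDA factors needed for any block transition: $\max(1,n-1)$ for permutations, $\max(1,2n-2)$ for DFAs, and $7n^2+3n$ for rational matrices in dimension $2n+1$.
\item Give the clock period $2m$.
\item Make layer~2 a DeltaNet-type buffer (CKDA with $\valpha=\mathbf 1$) that stores the last $2m$ tokens in clock-indexed slots.
\item At the $j$th step of each block, let a feed-forward lookup emit the $j$th factor of the product of the \emph{previous} block's $m$ inputs.
\end{itemize}
That block product is again a single permutation, DFA transition, or rational matrix, so the accumulator in layer~3 lags by only a bounded number of inputs. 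A completion readout then applies the missing factors and the current block's prefix. For groups and DFAs this is a finite lookup on buffer contents and phase; for WFAs it is a linear functional of the accumulator with lookup-selected coefficients. The first block is handled by lookup directly.

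Second, you misplace where $\beta>2$ and the number field enter. The WFA case needs no unbounded-resolution or expansive clock: the same $2m$-periodic rotation serves all three cases. Instead, $\beta>2$ is needed in the accumulator. DFA and WFA transitions can be expansive, so they cannot be written exactly, without rescaling, as products of non-expansive CKDA factors. For example, the column copy $\mI-\ve_d\ve_d^\top+\ve_s\ve_d^\top$ has norm $\sqrt2$. The paper writes it as $(\mI-3\ve_s\ve_s^\top)(\mI-\ve_d\ve_d^\top)\cdot\bigl(\mI-\tfrac16(3\ve_s+\ve_d)(3\ve_s+\ve_d)^\top\bigr)$, with effective rates $3$ and $5/3$.

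You also cannot import the factorization of \citet{merrill_why_2026} verbatim, because it uses negative learning rates to scale a coordinate by $a>1$. CKDA keeps $\beta\geq0$ and uses the signed gate instead: $(\mI-(1+a)\ve_j\ve_j^\top)(\mI-2\ve_j\ve_j^\top)$. These explicit CKDA factorizations and counts (\Cref{prop:automata-factor-counts}) are, besides the clock, the real content of the proof, and your proposal only asserts them.

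Finally, the field is needed because normalized keys such as $(3\ve_s+\ve_d)/\sqrt{10}$ and clock entries like $\cos(\pi/m)$ are irrational. Polynomial precision does not come from bounded denominators, since the denominators are not bounded. It comes from two facts: at block boundaries the accumulator holds the exact rational WFA state (scratch coordinates zeroed), of bit length $O(t)$, and within a block only boundedly many fixed factors are applied.
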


We adapt the existing clock--buffer--accumulator construction
\citep{peng2025rwkv7,siems2025,merrill_why_2026}.
A clock tracks position modulo a fixed period to schedule updates.
The buffer factors each block transition; the accumulator
applies one factor per step, and a completion readout corrects the delay.
CKDA implements the clock with a planar rotation in one layer,
saving one layer relative to DeltaNet/GDN which instead uses 4 layers for the same result.

\section{Experiments}\label{sec:exp}
\vspace{-3mm}
We evaluate whether CKDA's added expressivity improves state tracking and periodic waveform extrapolation, and assess its language-modeling performance and computational efficiency.

\begin{wrapfigure}[14]{r}{0.365\textwidth}
      \centering
      \vspace{-16pt}
      \includegraphics[width=\linewidth]{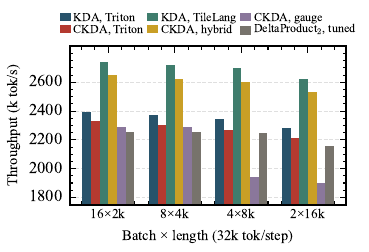}
      \vspace{-22pt}
      \caption{Forward--backward kernel throughput on an H100 in BF16, with 16 heads, $d_k=d_v=128$, and 32k tokens per step. DeltaProduct$_2$ uses no forget gate. Implementation and timing details are in \Cref{app:gauge}.}
      \label{fig:kernel-throughput_recurrence}
\end{wrapfigure}
\textbf{Implementation.}
KDA stores gate magnitudes in log-space, so signed gates require separating signs from magnitudes. We implement the same sign transformation in three ways: (1) compiled PyTorch operations that absorb cumulative signs into keys and queries, without changing kernels (gauge); (2) Triton~\citep{tillet2019triton} kernels that fuse these signs into normalization and its backward pass; and (3) a TileLang-backed~\citep{wang2026tilelang} hybrid that combines these Triton changes with TileLang backward kernels. The kernel implementations retain approximately $96$--$97\%$ of KDA throughput (\Cref{fig:kernel-throughput_recurrence}); \Cref{app:gauge} gives the derivation, implementation details, and benchmark protocol. The ungated DeltaProduct$_2$ baseline achieves competitive throughput while performing two delta-rule updates per token, each with its own value: its measured throughput includes processing an additional value input relative to CKDA. This is due to its identity-plus-rank-one factors, which avoid CKDA's channel-wise gating computations; the gated variant adds only scalar decay. 
We therefore view CKDA as an alternative way to obtain the expressivity similar to DeltaProduct$_2$, without claiming inherent efficiency gains over it. We also use \emph{spread} gate initialization, with approximately equal proportions of positive and negative entries, alongside the original, positive-only \emph{standard} initialization (In Appendix~\ref{app:initialization}, we also propose a variant for $\beta$). Finally, the SiLU activation inherited from DeltaNet~\citep{yang2024deltanet} on key and query projections can bias keys toward positive entries (\Cref{app:silu}). We remove SiLU for KDA state-tracking experiments and ablate this choice for language modeling.

\begin{figure}[t]
\centering
\includegraphics[width=1.0\linewidth]{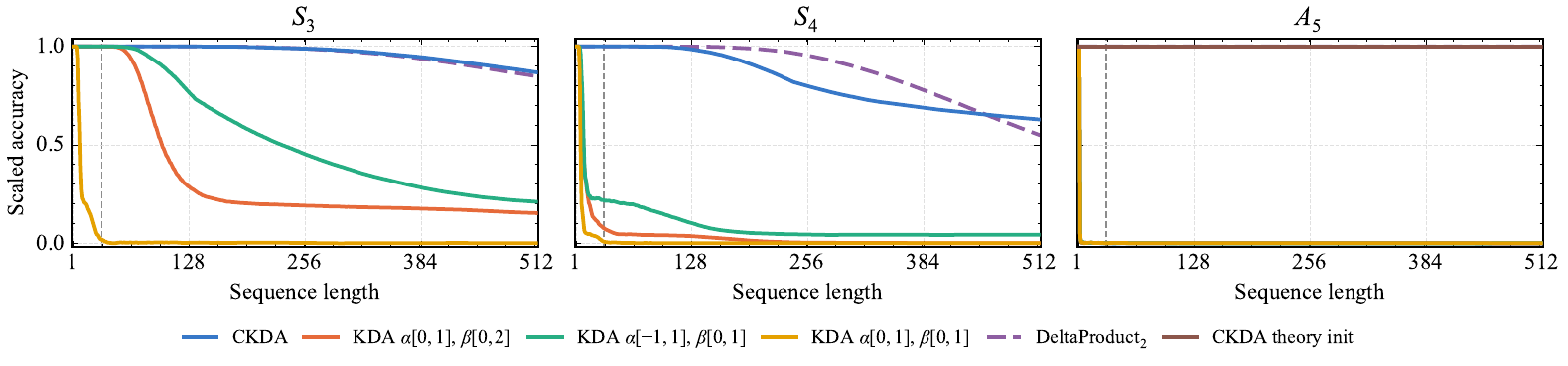}
\vspace{-9mm}
\caption{One-layer KDA variants on the $S_3$, $S_4$, and $A_5$ word problems, by the eigenvalue range allowed
for the diagonal and for the Householder component. Combining a signed gate with a reflection-enabled Householder update gives the strongest extrapolation on $S_3$ and $S_4$ among the tested KDA range settings; $S_4$ accuracy still declines at long lengths. The $A_5$ result uses the separate theory-initialized setup described in the text (see~\Cref{fig:group_word_problems_baselines} for baseline results).}
\label{fig:group_word_problems}
\end{figure}

\begin{figure}[t]
    \centering
    \includegraphics[width=1.0\linewidth]{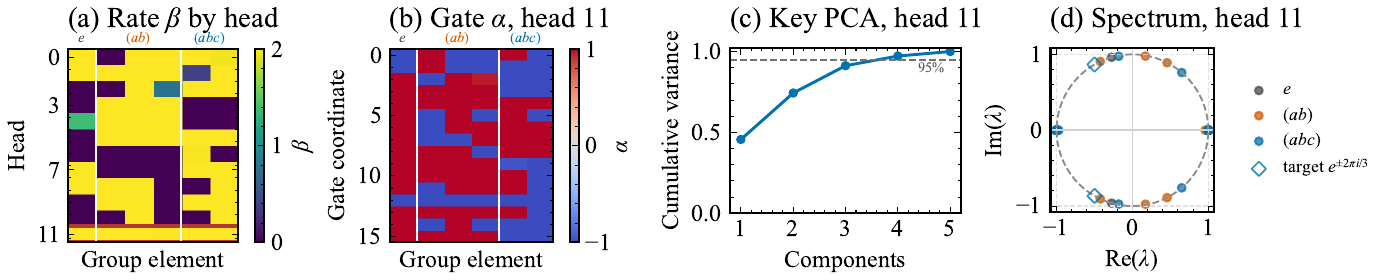}
    \vspace{-8mm}
\caption{
Learned CKDA transitions on the $S_3$ word problem. Here $e$, $(ab)$, and $(abc)$ denote the identity, transpositions, and $3$-cycles, respectively. (a) Learned update strength $\beta$ by head; head~11 approaches the reflection limit $\beta=2$. (b) Its channel-wise gate becomes nearly sign-valued, with coordinates close to $\pm1$. (c) PCA of the keys: the first three principal components explain approximately 95\% of the variance. (d) The resulting transition spectra show complex eigenvalues. The learned model recovers the mechanism predicted by our theory.}
    \label{fig:s3_interpretability}
\end{figure}
\textbf{1. State-Tracking.} We train one-layer models on $S_3$, $S_4$, and $A_5$ group word problems~\citep{merrill2024,terzic2026structured,movahedi2026fixed}, predicting cumulative products from inputs spanning each group. Training lengths reach 32; we report the best of three seeds and scale accuracy from chance (0) to perfect (1). Training details are in Appendix~\ref{app:exp_state_tracking}. CKDA ($\valpha\in[-1,1]^n$, $\beta\in[0,2]$) extrapolates well on $S_3$ and $S_4$ while other settings of KDA fail (\Cref{fig:group_word_problems}; \Cref{fig:group_word_problems_baselines} in Appendix~\ref{appsec:exp_res}), consistent with~\Cref{thm:main-positive}. Extending only the gate or $\beta$ yields long-length $S_3$ scaled accuracy near $0.2$, matching parity-only discrimination of the two $A_3$ cosets. A successful CKDA head learns $\beta\approx2$, nearly sign-valued gates, and complex-conjugate eigenvalues near the unit circle (\Cref{fig:s3_interpretability}), recovering the mechanism of~\Cref{sec:free,prop:CKDA-canonical}. Standard training fails to learn $A_5$, whereas a smaller, fully trainable model initialized near our quaternion construction (Appendix~\ref{app:A5const}) achieves length extrapolation. Training retains ordinary next-state cross-entropy and a standard MLP readout, consistent with representation discovery being an optimization obstacle. The architecture and training schedule also differ, so this comparison does not isolate the effect of initialization.

\textbf{2. Audio continuation.}
To test whether complex eigenvalues help preserve phase beyond symbolic state tracking, we train single-layer models to continue a synthetic periodic groove after a half-bar cue, followed by zero inputs. Whereas the autoregressive audio experiments of Mamba-1~\citep{gu2023mamba} and SaShiMi~\citep{goel2022s} use prediction feedback that can induce nonlinear state dynamics, we compute the continuation in a single forward pass without output feedback. The task requires inferring phase from the cue and maintaining a periodic continuation after the cue ends. Among the four KDA range settings, only CKDA with both range extensions accurately continues the waveform (\Cref{fig:waveform_continuation}). At length 264, beyond the maximum training length of 136, it retains $38.1$ dB SNR, while the causal Transformer falls to $2.8$ dB. Audio samples are \href{https://www.dropbox.com/scl/fo/otpiegwjo6t2w9yonm0uc/ABtNxwvazCXP35_dH6FV8wM?rlkey=zf4igaux3crkcbgm6qztfbfs8&st=v1yp3tcw&dl=0}{available online}. See~\Cref{app:waveform_continuation_details} for experimental details. These results connect learned rotational dynamics to periodic extrapolation, although the GRU~\citep{cho2014properties}, a nonlinear RNN, remains more accurate on this task.

\begin{figure}
    \centering
    \includegraphics[width=0.9\linewidth]{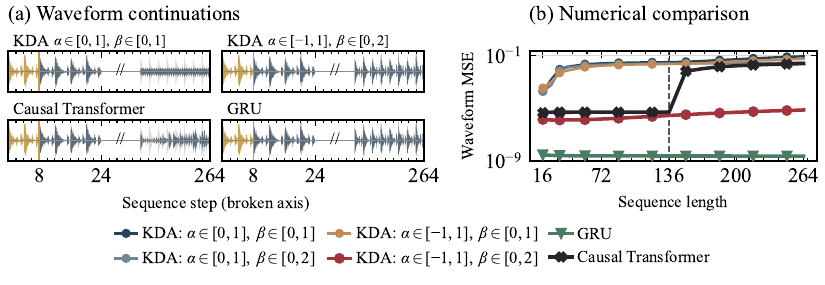}
    \vspace{-7mm}
    \caption{Periodic waveform continuation with single-layer models. (a) Predictions shortly after the half-bar cue and far beyond the training horizon; gold marks the cue, grey the target, and dark blue the prediction. (b) Waveform MSE versus sequence length; the dashed line marks the maximum training length (136). CKDA extrapolates accurately where the other KDA variants and causal Transformer fail, while the GRU achieves the lowest error.}
    \label{fig:waveform_continuation}
\end{figure}

\begin{table*}[t]
  \centering
  \footnotesize
  \setlength{\tabcolsep}{3.5pt}
  \caption{Language modeling and zero-shot common-sense reasoning at 1.3B parameters on 100B tokens of FineWeb-Edu in comparison to values from \citet{hatamizadeh2026gdn2}.}
  \label{tab:fwedu-1p3b-gdn2}
  \resizebox{\textwidth}{!}{
  \begin{tabular}{lrrrrrrrrrrrrrrr}
    \toprule
    & \multicolumn{2}{c}{ppl $\downarrow$} & \multicolumn{10}{c}{accuracy (\%) $\uparrow$} & \multicolumn{3}{c}{recall (\%) $\uparrow$} \\
    \cmidrule(lr){2-3} \cmidrule(lr){4-12} \cmidrule(lr){13-13} \cmidrule(lr){14-16}
    Model & Wiki. & LMB. & LMB. & PIQA & Hella. & Wino. & ARC-e & ARC-c & OBQA & SIQA & BoolQ & Avg. & SQuAD & SWDE & FDA \\
    \midrule
    \multicolumn{16}{l}{\textit{Recurrent models}} \\
    Mamba-2 & 16.79 & 12.38 & 45.24 & 72.58 & 55.51 & 55.33 & 70.68 & 35.26 & 31.00 & 40.63 & 60.19 & 51.82 & -- & -- & -- \\
    Gated DeltaNet & 16.40 & 11.89 & 49.62 & 72.31 & 56.50 & 56.75 & 68.81 & 35.15 & 30.20 & 40.53 & 58.78 & 52.07 & -- & -- & -- \\
    KDA & 16.81 & 11.68 & 48.13 & 72.09 & 55.75 & 55.72 & 70.83 & 35.92 & 30.40 & 40.99 & 60.67 & 52.28 & -- & -- & -- \\
    Mamba-3 (SISO) & 16.30 & 12.99 & 45.06 & 72.31 & 55.58 & 56.20 & 70.45 & 34.56 & 31.00 & \textbf{41.76} & 55.90 & 51.42 & -- & -- & --  \\
    Mamba-3 (MIMO) & 16.45 & 11.66 & 47.82 & 72.36 & 56.49 & 55.78 & 72.38 & 38.07 & 30.00 & 40.89 & 57.74 & 52.39 & -- & -- & --  \\
    Gated DeltaNet-2 & 15.90 & 11.41 & 48.09 & 72.80 & 56.84 & 57.85 & 72.43 & 38.23 & \textbf{31.60} & 40.58 & 59.54 & 53.11 & -- & -- & -- \\
    KDA, bounded gate (ours) & \textbf{15.73} & 10.53 & 50.46 & 72.85 & \textbf{59.32} & \textbf{61.33} & 73.23 & 38.40 & 28.80 & 41.35 & \textbf{61.10} & \textbf{54.09} & 38.17 & \textbf{49.59} & 30.40  \\
    CKDA (ours) & 15.78 & \textbf{10.08} & \textbf{51.66} & \textbf{73.12} & 59.23 & 58.88 & \textbf{73.99} & \textbf{39.08} & 28.60 & 41.61 & 60.34 & 54.06 & \textbf{39.68} & 48.42 & \textbf{31.49}  \\
    \midrule
    \multicolumn{16}{l}{\textit{Attention or hybrid models}} \\
    Transformer (SWA-hybrid) & 19.22 & 13.72 & 48.32 & 70.21 & 56.12 & 55.85 & 69.23 & 33.84 & 25.00 & 39.74 & 59.42 & 50.86 & -- & -- & -- \\
    Mamba-2 (hybrid) & 17.46 & 11.29 & 48.05 & 71.47 & 57.52 & 56.17 & 70.50 & 34.73 & 29.80 & 40.35 & 59.31 & 51.99 & -- & -- & -- \\
    Gated DeltaNet (hybrid) & 16.00 & 10.82 & 48.71 & 70.06 & 57.50 & 56.83 & 70.41 & 35.15 & 30.60 & 40.97 & 60.00 & 52.25 & -- & -- & -- \\
    KDA (hybrid) & 16.01 & 10.66 & 49.21 & 71.06 & 56.89 & 57.77 & 71.59 & 35.07 & 30.00 & 40.53 & 62.03 & 52.68 & -- & -- & --\\
    Mamba-3 (SISO, hybrid) & 15.54 & 10.65 & 49.19 & 71.01 & 58.75 & 57.30 & 70.54 & 36.35 & 32.00 & 41.20 & 57.86 & 52.69 & -- & -- & -- \\
    Mamba-3 (MIMO, hybrid) & 15.81 & 10.92 & 49.82 & 71.98 & 58.19 & 57.06 & 70.54 & \textbf{38.48} & 29.40 & 40.99 & 57.98 & 52.72 & -- & -- & -- \\
    Gated DeltaNet-2 (hybrid) & 15.62 & 10.43 & 50.90 & 72.20 & 58.46 & 58.56 & 71.89 & 36.69 & \textbf{33.00} & 41.50 & 62.57 & 53.97 & -- & -- & -- \\
    KDA + attn 3:1 (ours) & \textbf{15.04} & 9.93 & 51.60 & \textbf{73.88} & 59.57 & \textbf{60.62} & \textbf{72.69} & 37.20 & 27.00 & 42.43 & 60.31 & 53.92 & \textbf{42.69} & \textbf{64.81} & \textbf{64.79}  \\
    CKDA + attn 3:1 (ours) & 15.34 & \textbf{9.90} & \textbf{52.57} & 73.56 & \textbf{59.68} & 60.14 & 72.56 & 36.86 & 28.20 & \textbf{42.58} & \textbf{63.15} & \textbf{54.37} & 37.77 & 63.91 & 58.89 \\
    \bottomrule
  \end{tabular}
  }
\end{table*}

\begin{figure}
\centering
\includegraphics[width=1.0\linewidth, trim={3.5 3 0 0}, clip=true]{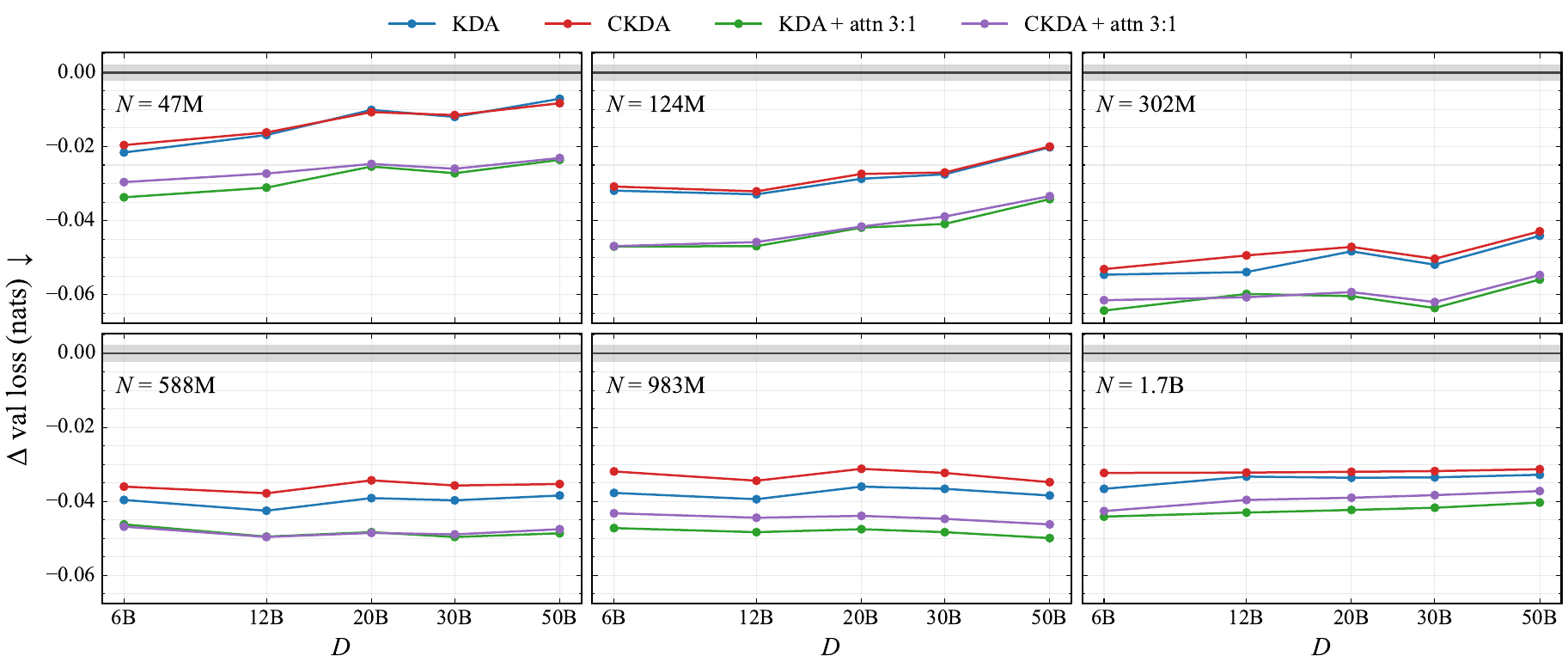}
\vspace{-9mm}
\caption{Advantage of CKDA recurrent and hybrid variants against the Transformer baseline (including QK-Norm, similar to \citet{ajroldi_deriving_2026}) in nats across scales. Grey area is the approximate noise floor. See Appendix~\ref{app:scaling-results} for a detailed scaling analysis of these numbers.}
\label{fig:scaling-advantage}
\end{figure}

\textbf{3. Language modeling.} We first train small language models of 340 million parameters on 15 billion tokens from the Nemotron-CC~\citep{su2025nemotron} dataset, using CKDA as the transformer token mixer along with softmax/dense attention, DeltaProduct, GDN, and KDA as baselines. For architectures that introduce more projections, we reduce the latent dimension of the SwiGLU MLP by 256 to ensure comparable parameter counts. Detailed configurations are provided in Table~\ref{tab:arch_tab}. In the Nemotron-CC block of Table~\ref{tab:lm_res}, CKDA with $\valpha\in[-1,1]$, $\beta\in[0,2]$, and spread initialization achieves the highest observed average downstream accuracy: $52.30\%$, versus $51.32\%$ for standard KDA.
It does not have the lowest validation perplexity. In the small-model FineWeb ablation at 45B tokens, CKDA with gate and $\beta$ spread achieves the highest observed average downstream accuracy of $52.21\%$, compared with $51.85\%$ for KDA without SiLU on the keys (\Cref{tab:lm_res}); the differences are modest and configuration-dependent. Validation curves are provided in Figure~\ref{fig:val_curves}. Separately, we train 1.3B-parameter models on 100B tokens of FineWeb-Edu~\citep{lozhkov2024fineweb-edu}, following the training recipe of \citet{hatamizadeh2026gdn2}. Our recurrent and hybrid CKDA models achieve similar average downstream accuracy to the corresponding KDA controls and the reported Mamba-3~\citep{lahoti2026mamba} and GDN-2~\citep{hatamizadeh2026gdn2} baselines (Table~\ref{tab:fwedu-1p3b-gdn2}). The hybrid recipes differ: ours use full gated attention~\citep{qiu_gated_2025} without RoPE~\citep{kazemnejad_impact_2023} at a $3:1$ recurrent-to-attention ratio, following \citet{team2026kimi}, whereas the reported hybrid baselines use a $1:1$ ratio with sliding-window attention~\citep{fg2gdn2026}. Our KDA controls also use the safe sigmoid gate of \citet{team2026kimi}, which may contribute to the differences from the KDA results as reported in~\citet{hatamizadeh2026gdn2}.
Comparing architecture scaling laws trained on Nemotron-CC, CKDA (and KDA) outperform a Transformer baseline across scales~\citep{ajroldi_deriving_2026}, see Figure~\ref{fig:scaling-advantage}. We don't see a crossing point favoring Transformers going to larger over-training ratios (data to model parameter ratio) as observed for xLSTM in~\citet{beck_xlstm_2026}, although small model behavior indicates a slight advantage reduction going to such regimes.

\begin{figure}
    \centering
    \includegraphics[width=0.75\linewidth]{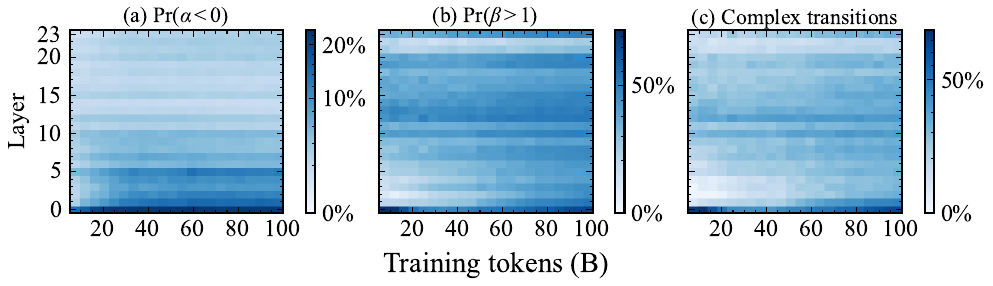}
    \vspace{-4mm}
    \caption{Extended-range use in non-hybrid CKDA 1.3B with standard initialization: per-layer fractions of negative gates (a), rates $\beta>1$ (b), and transitions with complex eigenvalues (c). All three emerge during training. Negative gates occur most frequently in the early layers while $\beta>1$ occurs in every layer, but particularly in the first and second. Colors use square-root scaling. Hybrid CKDA results in~\Cref{fig:signed_kda_interpretability_llm_hybrid}, statistics over the final checkpoint across all layers in~\Cref{fig:final_layer_stats_llm}.}
    \label{fig:signed_kda_interpretability_llm}
\end{figure}
To test whether the mechanism of~\Cref{sec:free} also emerges in language modeling, we track the fraction of negative gate entries, fraction of $\beta>1$, and of complex transitions in every layer of CKDA during training of the 1.3B parameter models (\Cref{fig:signed_kda_interpretability_llm}). The results are on our validation split of FineWeb-Edu at sequence length 4096. We find that despite the standard initialization keeping the gates close to 1, the model learns to use the negative gates early on during training primarily in the first layers. Similarly, the model  learns to use the extended $\beta$ range, first in the initial layers of the model but then also in deeper layers. The combination leads to complex eigenvalue pairs in the state-transition primarily in the first two layers of the trained model. We leave a mechanistic analysis of how the gate and $\beta$ are being used in the model for future work.

\section{Conclusion}
\vspace{-3mm}
We showed that signed channel-wise gates and $\beta\in[0,2]$ enable planar rotations within a single non-expansive diagonal-plus-rank-one KDA transition. CKDA captures every orthogonal matrix of this form and strengthens state-tracking expressivity. Among tested KDA settings, it extrapolates best on $S_3$, $S_4$, and periodic waveforms, with comparable downstream accuracy at 1.3B parameters and near-baseline throughput. Learned transitions exhibit the predicted mechanism in both state tracking and language modeling.

\textbf{Limitations.}
CKDA remains constrained by its rank-one transition structure: a non-expansive transition with a rank-one DPLR correction supports at most one persistent complex-conjugate eigenvalue pair. Our expressivity results also do not imply learnability. Notably $A_5$ is representable but was not learned from random initialization under our standard setup. In addition, the general WFA construction requires $\beta > 2$, sacrificing guaranteed non-expansiveness, and uses exact arithmetic over a fixed algebraic number field. CKDA offers an alternative route to the expressivity capabilities established for DeltaProduct$_2$ through signed channel-wise gating and a single rank-one correction. This structural distinction does not establish an inherent computational advantage over (Gated) DeltaProduct$_2$, and the DeltaProduct$_2$ baseline achieves competitive throughput in our kernel benchmarks.

\textbf{Future Work.} CKDA preserves the structure of the additive updates, leaving the structure of the input to the recurrence untouched. Future work could combine CKDA with the separate gates of GDN-2~\citep{hatamizadeh2026gdn2} that control the additive term and the forgetting of the previous state. While we show that the extended gate and $\beta$ ranges are learned to be used during training, their functional roles remain unclear.

\section*{AI Use Statement}
The central idea of extending KDA's gate to signed values and combining it with an extended range for the Householder coefficient originated with the authors independently of AI assistance after reading the Kimi K3 report. This included the hypothesis that the combination could produce complex eigenvalues, which was subsequently developed with assistance from Claude Opus 4.8. The change of variables described in Appendix~\ref{app:kernels}, which supports signed gates without modifying the existing KDA kernels, was proposed by Claude Opus 5. The authors contextualized and implemented this idea in relation to similar changes of variables used by other models. The integration into the chunk-wise parallel form was proposed by the authors and implemented by the same model. The TileLang kernels were modified using ChatGPT Astra (medium) and then checked through consistency with the naive Python recurrence and the state-tracking experiments by the authors. The orthogonal DPLR characterization (\Cref{prop:orthogonal-dplr}) and norm-constrained DPLR result (\Cref{thm:dplr-persistent-pairs-app}) were developed with assistance from ChatGPT 5.6 Sol (High). ChatGPT 5.6 Sol and 6 Astra produced the first complete versions of the proofs for the single-layer group-word-problem expressivity results (\Cref{thm:main-positive,thm:s5-impossibility}; proofs in Appendix~\ref{app:finite-group-expressivity}). The authors subsequently checked, corrected, and substantially edited these proofs. The adaptation in Appendix~\ref{app:wfamin} showing that three layers suffice to recognize general weighted finite automata, based on~\citep{merrill_why_2026}, was proposed by ChatGPT 5.6 Sol and edited with assistance from Claude Opus 5 and ChatGPT 6 Astra. Claude Opus 5 was used to assist in implementation of language modeling adaptations, implementations, and experimentation. The authors reviewed all AI-assisted work and take responsibility for the final content.

\section*{Acknowledgments}
We would like to thank Alicia Curth for extensive feedback over the whole duration of the project on the manuscript and figures. We are also grateful to Jan Tönshoff, Simon Schrodi, Baohe Zhang, and Adrian Barfuß who provided constructive feedback throughout this project.  Frank Hutter acknowledges financial support by the Hector Foundation. The authors acknowledge support from ELLIS and MPI-IS Tübingen. 
Jaisidh Singh is supported by the Konrad Zuse School of Excellence in Learning and Intelligent Systems (\href{https://eliza.school/}{ELIZA}) through the DAAD programme Konrad Zuse Schools of Excellence in Artificial Intelligence, sponsored by the Federal Ministry of Education and Research. Arber Zela and Volkan Cevher were funded by the Swiss National Science Foundation (SNSF) under grant number 2000-1-240094.
This research was partially supported by the European Commission under the grant No. 101195233 (OpenEuroLLM).
We acknowledge EuroHPC Joint Undertaking for awarding us access to Leonardo at CINECA, Italy, JUWELS and JUPITER at JSC, Germany, Deucalion at MACC, Portugal, MareNostrum5 at BSC, Spain, and the DLC2 Cluster at University of Freiburg, Germany. Views and opinions expressed are however those of the author(s) only and do not necessarily reflect those of the European Union or the ERC. Neither the European Union nor the ERC can be held responsible for them.
\begin{figure}[h]
\begin{center}
\includegraphics[width=0.16\textwidth,trim={10 0 5 0}, clip=true]{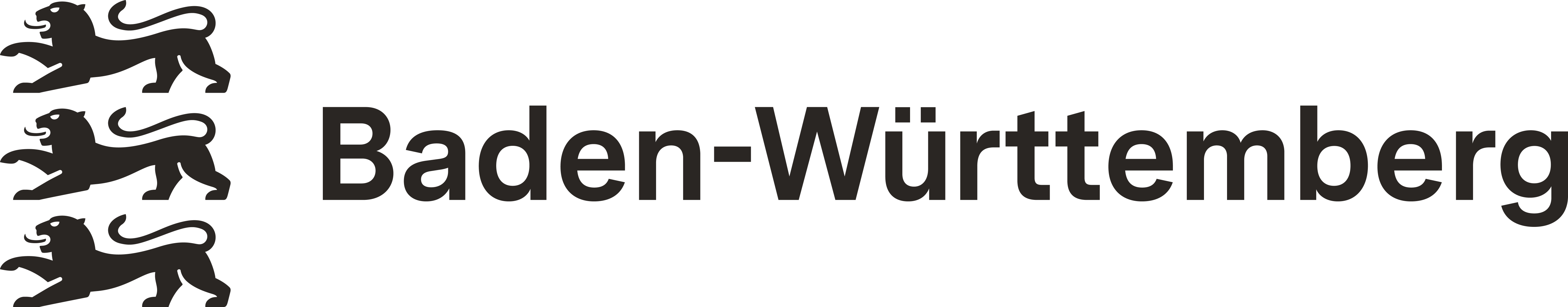} ~~~ \includegraphics[width=0.16\textwidth,trim={10 0 4 0}, clip=true]{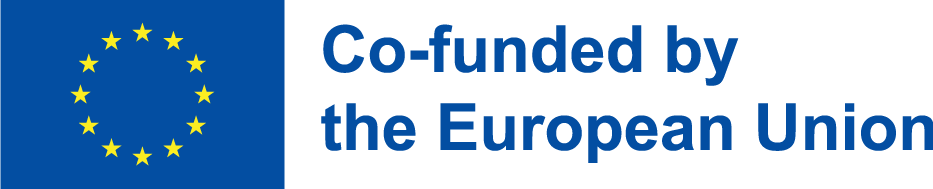} 
\end{center}
\end{figure}

\section*{Author Contributions}
Julien Siems initiated and led the project, proposed combining signed channel-wise gating with an extended Householder range to enable complex eigenvalues, implemented CKDA, developed the initial state-tracking experiments, conducted preliminary language modeling experiments, iterated on kernel changes using Codex, and coordinated the collaboration. Julien Siems and Riccardo Grazzi developed the core theoretical intuition connecting signed gating, Householder transformations, rotations, and DPLR dynamics. Riccardo Grazzi led the theoretical contributions, identified the $A_5$ construction, and verified most theoretical results. Riccardo Grazzi and Korbinian Pöppel led the formalization, verification, and revision of the finite-group and multilayer expressivity results. Arber Zela conducted the state-tracking experiments, building on Julien Siems's initial versions, and contributed to the empirical evaluation and ablations. Jaisidh Singh conducted the small-scale language-modeling experiments. Timur Carstensen conducted the interpretability and additional language-modeling experiments and extended the chunk-wise parallel Triton kernels to include the CKDA gate. Korbinian Pöppel proposed the audio experiments, which Julien Siems conceptualized and carried out. Korbinian Pöppel conducted all large-scale experiments and scaling-law analyses. Aaron Klein enabled the large-scale language-modeling experiments, provided project supervision and crucial early support for the broader empirical study which catalyzed the start of the project. Antonio Orvieto contributed to the theoretical framing and characterization of the transition dynamics, suggested language model evaluations, and provided extensive manuscript feedback. Jenia Jitsev provided feedback on the scaling law analysis and manuscript feedback. Volkan Cevher and Frank Hutter supervised the project and reviewed the manuscript.

\bibliography{refs}
\bibliographystyle{iclr2026_conference}

\appendix
\newpage
\newpage
\definecolor{paperblue}{RGB}{200,220,255}
\tikzset{muted/.style={gray,opacity=0.5}}

\section*{\textbf{Supplementary Material}}
The supplementary material is structured as follows:
\begin{itemize}
    \item Section~\ref{app:canonical-skda} characterizes orthogonal DPLR matrices, analyzes the spectrum of CKDA, bounds persistent rotations, and studies products of CKDA transitions.
    \item Section~\ref{app:state-tracking-preliminaries} defines the state-tracking tasks, recurrent model, realization and tracking conventions, and exact algebraic datatypes and precision assumptions.
    \item Section~\ref{app:finite-group-expressivity} proves the single-layer finite-group results: the planar and cube realizations, the three-dimensional obstruction and four-dimensional tracker for $A_5$, the affine and independent-head reductions, and the $S_5$ spectral obstruction.
    \item Section~\ref{app:wfamin} explains the clock--buffer--accumulator mechanism, proves the three-layer automata result, and gives the factor counts and comparison-table details.
    \item Section~\ref{app:gauge} derives the sign transformation that enables reuse of existing KDA kernels and reports the fused implementation's throughput.
    \item Section~\ref{app:silu} discusses how the SiLU key activation affects learning group-word problems.
    \item Section~\ref{app:exp} gives experimental details for state tracking, periodic waveform continuation, and language modeling.
    \item Section~\ref{app:scaling-results} describes the scaling-law methodology and results.
    \item Section~\ref{appsec:exp_res} presents additional state-tracking baselines and language-modeling validation curves.
\end{itemize}

Our code is open-source on GitHub \url{https://github.com/OpenEuroLLM/ComplexKDA} and is based on Flash-Linear-Attention~\citep{yang2024fla}.

\textbf{Notation.}
Matrices and vectors are denoted by bold uppercase and lowercase letters,
respectively. We write $\mathbb Q$, $\R$, and $\C$ for the rational, real,
and complex numbers. The matrix $\mI_d$ is the $d\times d$ identity,
with the subscript omitted when its size is clear, and $\ve_i$ is the
$i$th standard coordinate vector. For a scalar $z$, $|z|$ denotes its
absolute value or complex modulus; for a finite set $X$, $|X|$ denotes
its cardinality. We use $\overline z$ or $z^*$ for complex conjugation,
$\mA^\top$ for transpose, and $\mA^*$ for conjugate transpose.

For a vector $\vv$, $\Diag(\vv)$ is the diagonal matrix with entries
$\vv$, and $\odot$ denotes the element-wise (Hadamard) product.
The direct sum $\mA\oplus\mB$ places $\mA$ and $\mB$ on the diagonal
of a block-diagonal matrix. We write $\ker\mA$, $\operatorname{im}\mA$,
and $\rank\mA$ for the kernel, image, and rank of $\mA$.
The notation $\operatorname{span}\{\vv_1,\ldots,\vv_k\}$ denotes the
linear span of the indicated vectors, and $V^\perp$ denotes the orthogonal
complement of a subspace $V$. For $\vv\in\R^n$, we similarly write
$\vv^\perp:=\{\vx\in\R^n:\vv^\top\vx=0\}$; when $\vv\ne0$,
this is a hyperplane through the origin. For a subspace $V$, $\mA|_V$
denotes the restriction of $\mA$ to vectors in $V$. If $V$ is invariant
under $\mA$, meaning $\mA V\subseteq V$, this restriction is a linear
map from $V$ to itself.

The vector norm $\|\vv\|=\|\vv\|_2$ is Euclidean, while
$\|\mA\|=\|\mA\|_2$ is the induced operator norm. The Frobenius norm
is $\|\mA\|_F=(\sum_{i,j}|A_{ij}|^2)^{1/2}$.
For a real symmetric matrix, $\mA\succeq0$ and $\mA\succ0$ mean
positive semidefinite and positive definite, respectively.
We write $\spec(\mA)$ for the spectrum (the set of eigenvalues),
$\rho(\mA)$ for the spectral radius, and $\operatorname{tr}(\mA)$
and $\det(\mA)$ for the trace and determinant.

For an alphabet $\Sigma$, $\Sigma^*$ is the set of all finite words,
including the empty word. Transitions act on column states, so the
product $\mA_t\cdots\mA_1$ applies $\mA_1$ first.

\begingroup

\newcommand{\spectyes}{\textcolor{good}{$\checkmark$}\,}
\newcommand{\spectno}{\textcolor{bad}{$\times$}\,}
\newcommand{\spectours}[1]{\textcolor{acc}{\textbf{(Thm.~\ref{#1})}}}
\newcommand{\spectboundary}{\textsuperscript{\ensuremath{\dagger}}}
\newcommand{\spectrowrule}{\specialrule{0.35pt}{1.5pt}{1.5pt}}
\providecommand{\mLambda}{\boldsymbol{\Lambda}}

\begin{table*}[t]
\centering
\hypersetup{hidelinks}

\caption{\textbf{Spectral structure of complex-capable delta recurrences}}
\label{tab:recurrence-spectrum}

\fontsize{8.2}{9.2}\selectfont
\setlength{\tabcolsep}{1.5pt}
\renewcommand{\arraystretch}{1.10}
\renewcommand{\tabularxcolumn}[1]{m{#1}}

\begin{tabularx}{\linewidth}{@{}
>{\raggedright\arraybackslash}m{.20\linewidth}
>{\hsize=.96\hsize\linewidth=\hsize\centering\arraybackslash}X
>{\hsize=1.02\hsize\linewidth=\hsize\centering\arraybackslash}X
>{\hsize=1.02\hsize\linewidth=\hsize\centering\arraybackslash}X@{}}

\toprule
& \textcolor{acc}{\textbf{CKDA}}
& \textbf{MDN}\newline (quadrant-constrained)
& \textbf{SFDA}\newline (default complex)
\\

\spectrowrule

Transition
& $(\mI-\beta\vk\vk^\top)D(\valpha)$
& $\left[\begin{smallmatrix}
\alpha(\mI-\beta\eta\vk\vk^\top)&-\beta\mu\mI\\
\alpha\eta\vk\vk^\top&\mu\mI
\end{smallmatrix}\right]$
& $(\mI-\beta\vk\vk^*)\mLambda$
\\

\midrule

Recurrence order
& First order
& First order in $(\mS,\mM)$;\newline
  generically second order in $\mS$
& First order over $\mathbb C$
\\

\spectrowrule

State per\newline value column
& $d$ real
& $2d$ real
& $m$ complex\newline $\simeq 2m$ real
\\

\spectrowrule

Ranges\newline compared
& $\beta\in[0,2]$\newline
  $\valpha\in[-1,1]^d$
& $\alpha\in(0,1)$, $\mu\in[e^{-1},1)$\newline
  $0\leq\beta\leq1-\alpha$\newline
  $\eta\in(0,2)$
& $\beta\in[0,1]$\newline
  $\valpha\in[0,1]^m$\newline
  $\vtheta\in[-\theta_{\max},\theta_{\max}]^m$
\\

\midrule

Non-real\newline conjugate pairs
& \cellcolor{acc!7}
  $\boldsymbol{\leq1}$\newline
  \spectours{thm:complex_eigenvalues}
& $\leq1$
& Up to $m$\newline after realification
\\

\spectrowrule

Origin of\newline complex modes
& \cellcolor{acc!7}
  Signed gate--delta\newline interaction
& Memory--momentum\newline coupling
& Explicit phases\newline plus delta correction
\\

\spectrowrule

Non-real\newline $|\lambda|=1$ modes\spectboundary
& \cellcolor{acc!7}
  \spectyes Any angle in a\newline two-dimensional block
& \spectno None under\newline the stated constraints
& \spectyes Undamped\newline phase modes
\\

\spectrowrule

Unit-circle\newline support\spectboundary
& \cellcolor{acc!7}
  \textbf{Full circle}
& Only $\{+1\}$
& $\{e^{\pm i\theta}:|\theta|\leq\theta_{\max}\}$;\newline
  full circle if $\theta_{\max}\geq\pi$
\\

\midrule

Whole-state\newline isometries\spectboundary
& \cellcolor{acc!7}
  Orthogonal DPR1;\newline need not commute
& None in the interior;\newline
  only $\mI$ in the closure
& Diagonal unit phases\newline
  ($\beta=0$, $\valpha=\mathbf1$);\newline
  commute
\\

\spectrowrule

Non-expansive\newline by construction
& \spectyes $\|\mA\|_2\leq1$
& \spectno Not guaranteed in the\newline
  augmented Euclidean norm
& \spectyes $\|\mA\|_2\leq1$
\\

\bottomrule
\end{tabularx}

\par\vspace{4pt}

\parbox{\linewidth}{\raggedright
Time subscripts are suppressed; keys have unit norm, $d\geq2$,
$m\geq1$, and $\theta_{\max}>0$.
$D(\valpha)=\Diag(\valpha)$ and
$\mLambda=\Diag(\alpha_j e^{i\theta_j})$.
Dimensions and pair counts concern the transition acting on one value
column, not the vectorized whole memory.
\spectboundary\ Unit-circle support and isometry entries include the
boundaries/closures of the stated parameter families; sigmoid/tanh
parameterizations need not attain these endpoints at finite logits.
Isometry means $\mA^*\mA=\mI$ on the entire state space.
MDN uses the constraint family in its Sec.~3.2; its experiments also
report other momentum floors.
SFDA refers specifically to its complex Eq.~(13): its exact realification
has rotation--decay blocks plus a correction of real rank at most two.
Its isometry statement does not apply to an extension with $\beta=2$ or
to enlarged reflection/control families, and does not assert that general
SFDA transitions commute.
All spectral statements are single-step.
}

\end{table*}
\endgroup

\section{Characterization of a CKDA transition}\label{app:canonical-skda}
\subsection{Orthogonal DPLR matrices reduce to signed Householders}\label{app:orthogonal_dplr}
Rank-one perturbations of isometries and low-rank representations of orthogonal transformations have a classical literature; see, e.g.,~\citet{scherk1950decomposition,nakamura1986one,schreiber1988block,sun1995basis}. The following result gives a more specific rigidity statement for diagonal-plus-rank-one matrices: orthogonality forces the diagonal factor to reduce, up to the rank-one correction, to a sign matrix.

\begin{theorem}[Characterization of orthogonal DPLR matrices; restatement of Theorem~\ref{prop:orthogonal-dplr}]
\label{prop:orthogonal-dplr-app}
Let $\mA=\mD+\vu\vv^\top\in\mathbb R^{n\times n}$, where $\mD=\Diag(d_1,\ldots,d_n)$, and suppose $\mA^\top\mA=\mI$. Then there exist a unit vector $\vk\in\mathbb R^n$ and a diagonal sign matrix $\mS=\Diag(s_1,\ldots,s_n)$, with $s_i\in\{-1,+1\}$, such that
\[
    \boxed{\mA=(\mI-2\vk\vk^\top)\mS.}
\]
Consequently, $\mA$ has at most one non-real conjugate pair of eigenvalues.
\end{theorem}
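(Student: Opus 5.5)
The plan is to reduce the statement to one fact. If we can find a sign matrix $\mS$ with $\rank(\mA-\mS)\leq 1$, then $\mM:=\mA\mS$ is orthogonal and $\mM-\mI=(\mA-\mS)\mS$ has rank at most one. An orthogonal identity-plus-rank-one matrix $\mI+\va\vb^\top$ fixes the hyperplane $\vb^\perp$ pointwise. By orthogonality it then maps the orthogonal line $\operatorname{span}\{\vb\}$ to itself with eigenvalue $\pm1$, so $\mM$ is either $\mI$ or $\mI-2\vk\vk^\top$ with $\|\vk\|=1$.

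In the reflection case $\mA=\mM\mS=(\mI-2\vk\vk^\top)\mS$, since $\mS^{-1}=\mS$. In the case $\mM=\mI$ we have $\mA=\mS$, and we rewrite $\mS=(\mI-2\ve_1\ve_1^\top)\mS'$, where $\mS'$ is $\mS$ with its first sign flipped. So everything hinges on finding such an $\mS$.

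To find $\mS$, we first extract information on $\mD$ from orthogonality. For $\vx\perp\vv$ we have $\mA\vx=\mD\vx$, hence $\|\mD\vx\|=\|\vx\|$. So the quadratic form of the diagonal matrix $\mD^2-\mI$ vanishes on the hyperplane $\vv^\perp$ (and, using $\mA^\top=\mD+\vv\vu^\top$, also on $\vu^\perp$). A symmetric form vanishing on a hyperplane has the shape $\vv\vw^\top+\vw\vv^\top$, so it has rank at most two and inertia at most $(1,1)$. Since $\mD^2-\mI$ is diagonal with this shape, the off-diagonal conditions $v_iw_j+w_iv_j=0$ force it to be supported on at most two indices $i,j$. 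There are two possibilities: either $d_i^2>1>d_j^2$, or a single exceptional index $i$ with $\vv\parallel\ve_i$; the symmetric argument with $\vu$ gives $\vu\parallel\ve_i$ as well in the latter case. Hence $d_\ell=s_\ell\in\{\pm1\}$ for all other $\ell$.

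If $\mD^2=\mI$, take $\mS=\mD$ and we are done. In the single-index case, $\mA$ agrees with the sign matrix off row and column $i$, and its $(i,i)$ entry must be $\pm1$, so $\mA$ itself is a sign matrix. The main obstacle is the two-index case. There $\mA-\mS=\vu\vv^\top+(d_i-s_i)\ve_i\ve_i^\top+(d_j-s_j)\ve_j\ve_j^\top$ for any choice of signs $s_i,s_j$, and we must show that some choice makes this rank one. I would work with the Gram conditions on columns $i$ and $j$ of $\mA$ against all other columns, which read $d_\ell v_m u_\ell+d_m v_\ell u_m+v_\ell v_m\|\vu\|^2=0$ for $\ell\neq m$, together with the unit-norm conditions. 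From these I expect to derive that $(d_i-s_i)\ve_i$ and $(d_j-s_j)\ve_j$ are absorbed into a single outer product $\tilde\vu\tilde\vv^\top$ for the right signs. The intended picture is the 2D rotation of \Cref{fig:dpr1_orth}: the $\{i,j\}$ block is a rotation composed with a coordinate flip, and the rest is diagonal signs. If the algebra gets heavy, a fallback is to prove directly that $\mA\mS$ is symmetric, via $\vu\propto\mS\vv$ up to the diagonal defect, and then argue as in the first paragraph.

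Finally, the eigenvalue consequence follows from $\mA-\mS=-2\vk\vk^\top\mS$ having rank at most one. Each eigenspace $E_\pm$ of $\mS$, intersected with $(\mS\vk)^\perp$, consists of eigenvectors of $\mA$ with real eigenvalue $\pm1$, and these intersections have total dimension at least $n-2$. This leaves room for at most one non-real conjugate pair.
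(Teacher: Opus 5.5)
The reduction in your first two paragraphs is correct: a sign matrix $\mS$ with $\rank(\mA-\mS)\le1$ suffices, because an orthogonal identity-plus-rank-one matrix is $\mI$ or a reflection. Also correct are the identity $\mD^2-\mI=\vv\vw^\top+\vw\vv^\top$ with at most two nonzero diagonal entries, the cases $\mD^2=\mI$ and $\vu,\vv\parallel\ve_i$, and the final eigenvalue count.

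The gap is the two-index case $d_i^2>1>d_j^2$, which you leave as an expectation (``I expect to derive \ldots''). This case genuinely occurs. The rotation $\bigl(\begin{smallmatrix}\cos\theta&-\sin\theta\\ \sin\theta&\cos\theta\end{smallmatrix}\bigr)$ equals $\Diag(\cos\theta+t,\ \cos\theta-\sin^2\theta/t)$ plus a rank-one matrix for every $t\neq0$, so $|d_1|>1$ arises for any nontrivial rotation. Nothing in the proposal shows that some choice of $s_i,s_j$ makes $\vu\vv^\top+(d_i-s_i)\ve_i\ve_i^\top+(d_j-s_j)\ve_j\ve_j^\top$ rank one. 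Your fallback does not close it either. Making $\mA\mS$ symmetric only makes it an orthogonal involution, whose $-1$-eigenspace can have any dimension. You would still have to show that this eigenspace is a line before your first paragraph applies.

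The missing step follows from your own identity. Suppose the nonzero diagonal entries of $\mD^2-\mI=\vv\vw^\top+\vw\vv^\top$ sit exactly at $i,j$. The diagonal then gives $v_iw_i\neq0$ and $v_\ell w_\ell=0$ for $\ell\notin\{i,j\}$. If $v_\ell\neq0$, then $w_\ell=0$, and the off-diagonal condition $v_\ell w_i+w_\ell v_i=0$ forces $w_i=0$, a contradiction. Hence $v_\ell=0$, and then $w_\ell=0$. The same argument applied to $\mA^\top=\mD+\vv\vu^\top$ shows that $\vu$ is also supported on $\{i,j\}$. So $\mA$ is block diagonal: a $2\times2$ orthogonal block on coordinates $i,j$ and signs elsewhere. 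That block is either a reflection $\mI-2\vk\vk^\top$ or a rotation, and every rotation is $(\mI-2\vk\vk^\top)\Diag(-1,1)$ as in Section~\ref{sec:free}. Embedding $\vk$ in coordinates $i,j$ finishes the proof.

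Completed this way, your argument is genuinely different from the paper's. The paper needs no case split on $\mD$. Comparing the norm of row $\ell$ with that of column $\ell$ gives $\|\vu\|^2v_\ell^2=\|\vv\|^2u_\ell^2$. A sign matrix with $\mS\vv=-c\vu$ then makes $\mQ=\mA\mS=\mD\mS-c\vu\vu^\top$ a symmetric involution. Invertibility of the diagonal matrix $\mI+\mD\mS$ forces every displacement $\vx-\mQ\vx$ to be parallel to $(\mI+\mD\mS)^{-1}\vu$, so $\mQ$ is a reflection. Your route costs a case analysis but yields more structure. The diagonal already consists of signs outside at most two coordinates, and in the exceptional case the entire rank-one term lives on those two coordinates.
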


\begin{proof}
If $\mA$ is diagonal, orthogonality makes it a diagonal sign matrix. Take $\vk=\ve_1$ and let $\mH=\mI-2\ve_1\ve_1^\top$, which flips the first coordinate. Then $\mS:=\mH\mA$ is also a diagonal sign matrix, and $\mH^2=\mI$ gives $\mA=\mH\mS$. Its eigenvalues are already $\pm1$. 

If instead $\mA$ is not diagonal, so $\vu,\vv\neq0$,
we seek column sign flips $\mS$ that turn $\mA\mS$ into a reflection. Since reflections are symmetric, we first ask whether these flips can make the rank-one term symmetric. Orthogonality gives $\mA^\top\mA=\mA\mA^\top=\mI$, so the $i$th column and row both have squared norm one:
\[
\begin{aligned}
    1&=\|d_i\ve_i+v_i\vu\|^2=d_i^2+2d_i u_i v_i+\|\vu\|^2v_i^2,\\
    1&=\|d_i\ve_i+u_i\vv\|^2=d_i^2+2d_i u_i v_i+\|\vv\|^2u_i^2.
\end{aligned}
\]
Subtracting the two equations yields $\|\vu\|^2v_i^2=\|\vv\|^2u_i^2$, hence $|v_i|=c|u_i|$ with $c:=\|\vv\|/\|\vu\|>0$. Thus $u_i=0$ exactly when $v_i=0$; on these coordinates, the preceding equations also give $d_i^2=1$.

We can therefore choose signs so that $\mS\vv=-c\vu$, making the rank-one term a negative multiple of $\vu\vu^\top$, as in a Householder reflection. Explicitly, take $\mS=\Diag(s_1,\ldots,s_n)$ with
\[
    s_i=\begin{cases}
        -cu_i/v_i, & u_i\neq0,\\
        d_i, & u_i=0.
    \end{cases}
\]
Each $s_i$ is $\pm1$. Consequently, $\mQ:=\mA\mS=\mD\mS+\vu(\mS\vv)^\top=\mD\mS-c\vu\vu^\top$ is symmetric and orthogonal, hence $\mQ^2=\mI$.

A reflection moves every vector along its normal direction. To find that direction, consider the displacement $\vr=\vx-\mQ\vx$ of any vector $\vx$. Since $\mQ\vr=\mQ\vx-\mQ^2\vx=\mQ\vx-\vx=-\vr$, every displacement is reversed by $\mQ$. Substituting the formula for $\mQ$ gives $\mD\mS\vr-c\vu(\vu^\top\vr)=-\vr$, or coordinatewise, $(1+d_i s_i)r_i=cu_i(\vu^\top\vr)$.

We can solve these equations coordinatewise. Indeed, $|Q_{ii}|\le1$ because $\mQ$ is orthogonal, so for $u_i\neq0$ we have $1+d_i s_i=1+Q_{ii}+cu_i^2>0$. For $u_i=0$, the same denominator equals $1+d_i^2=2$. Thus $\vr=c(\vu^\top\vr)\vw$, where $w_i:=u_i/(1+d_i s_i)$. Every displacement therefore lies along the same nonzero vector $\vw$.

Since $\mA$ is not diagonal, $\mQ\neq\mI$; otherwise $\mA=\mS$. Hence some displacement is a nonzero multiple of $\vw$, and $\mQ\vr=-\vr$ implies $\mQ\vw=-\vw$. For arbitrary $\vx$, write $\vx-\mQ\vx=t\vw$. Symmetry determines the coefficient:
\[
    t\|\vw\|^2=\vw^\top(\vx-\mQ\vx)=\vw^\top\vx-(\mQ\vw)^\top\vx=2\vw^\top\vx.
\]
Therefore $\mQ\vx=\vx-2\vw(\vw^\top\vx)/\|\vw\|^2$, so $\mQ=\mI-2\vk\vk^\top$ with $\vk:=\vw/\|\vw\|$. Since $\mS^2=\mI$, we obtain $\mA=\mQ\mS=(\mI-2\vk\vk^\top)\mS$.

Finally, $\mA\vx=\mS\vx-2\vk(\mS\vk)^\top\vx$. The correction singles out $\vk$ and $\mS\vk$, so consider $\mathcal U:=\operatorname{span}\{\vk,\mS\vk\}$. The matrix $\mS$ exchanges these two vectors and is orthogonal, so it preserves both $\mathcal U$ and $\mathcal U^\perp$. The formula for $\mA\vx$ shows that $\mA$ also preserves both spaces and agrees with $\mS$ on $\mathcal U^\perp$. Its eigenvalues there are therefore $\pm1$. All non-real eigenvalues lie in the restriction to $\mathcal U$, whose dimension is at most two, allowing at most one conjugate pair.
\end{proof}
\emph{Remark.} The characterization above can also be derived from \citep[Proposition~3.1]{Ionascu2001} which characterizes normal rank-one perturbations of normal operators, specialized to the finite-dimensional real orthogonal case; here we state it explicitly in the signed-Householder form relevant to CKDA.
\subsection{The Sign-Magnitude Decomposition of CKDA}\label{app:spectrum_kda}

\begin{proposition}[Expanded Proposition~\ref{prop:CKDA-canonical}]
\label{prop:CKDA-canonical-app}
Let \(\|\vk\|=1\), $\beta \in [0,2]$, and $\alpha_i \in [-1,1]$. Define

$$
\mA=(\mI-\beta\vk\vk^\top)\Diag(\valpha)
=\widetilde{\mA}\Diag(|\valpha|),
\qquad
\widetilde{\mA}:=(\mI-\beta\vk\vk^\top)\mS,
\qquad
\mS:=\Diag(\operatorname{sign}\valpha),
$$
with $\operatorname{sign}(0)=1$.
Let
\(E_-:=\ker(\mS+\mI)\) be the space spanned by the coordinate vectors whose signs are flipped by $\mS$, with dimension \(m_-\), and
\(E_+:=\ker(\mS-\mI)\) with dimension \(m_+\), and write

$$
\vk=\vk_-+\vk_+,\qquad
\vk_-\in E_-,\quad \vk_+\in E_+,\qquad
\|\vk_-\|=\cos\theta,\quad \|\vk_+\|=\sin\theta,
$$

where \(\theta\in[0,\pi/2]\) is the angle of \(\vk\) from \(E_-\).
When nonzero, let
$
\ve_-:=\vk_- / \|\vk_-\|, \,
\ve_+:= \vk_+ / \|\vk_+\| ,
$
and let \(\mP\) be an orthogonal matrix beginning with the relevant
vectors \(\ve_-,\ve_+\) and then completed by orthonormal bases of \(E_-\) first and
then \(E_+\).  Then

$$
\boxed{
\begin{aligned}
0<\theta<\frac{\pi}{2}:\qquad
\widetilde{\mA}
&=\mP\!\left[
\mB(\beta,\theta)\oplus(-\mI_{m_--1})\oplus\mI_{m_+-1}
\right]\!\mP^\top,\\
\theta=0:\qquad
\widetilde{\mA}
&=\mP\!\left[
(\beta-1)\oplus(-\mI_{m_--1})\oplus\mI_{m_+}
\right]\!\mP^\top,\\
\theta=\frac{\pi}{2}:\qquad
\widetilde{\mA}
&=\mP\!\left[
(1-\beta)\oplus(-\mI_{m_-})\oplus\mI_{m_+-1}
\right]\!\mP^\top .
\end{aligned}}
$$
Here \(\oplus\) denotes the direct sum of matrix blocks, which places the two operands as the blocks of a block-diagonal matrix. The only nontrivial block is

$$
\mB(\beta,\theta)=
\begin{pmatrix}
\beta\cos^2\theta-1 &-\beta\sin\theta\cos\theta\\
\beta\sin\theta\cos\theta&1-\beta\sin^2\theta
\end{pmatrix},
\qquad
\mB(2,\theta)=
\begin{pmatrix}
\cos2\theta&-\sin2\theta\\
\sin2\theta&\cos2\theta
\end{pmatrix}.
$$
More specifically, \(\widetilde{\mA}\) has one non-real conjugate pair exactly when
the eigenvalues of \(\mB(\beta,\theta)\) are non-real, i.e. when
$
\beta^2\cos^2(2\theta)<4(\beta-1),
$ which only holds when $\beta > 1$. When $\beta =2$, $\mB(\beta,\theta)$ becomes a pure
rotation by \(2\theta\).

\end{proposition}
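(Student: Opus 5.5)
The plan is to verify the factorization first, then compute the action of $\widetilde{\mA}$ on an adapted orthonormal basis, and finally analyze the $2\times2$ block. The factorization $\Diag(\valpha)=\mS\Diag(|\valpha|)$ holds entrywise, using the convention $\operatorname{sign}(0)=1$ so that every $s_i\in\{\pm1\}$. Substituting gives $\mA=\widetilde{\mA}\Diag(|\valpha|)$. Next, for any $\vx$,
\[
\widetilde{\mA}\vx=\mS\vx-\beta\vk(\vk^\top\mS\vx).
\]
Since $\mS$ acts as $\mp\mI$ on $E_\mp$, we have $\mS\vk=\vk_+-\vk_-$. Hence $\vk^\top\mS\vx=(\vk_+-\vk_-)^\top\vx$.

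\textbf{Step 1 (complement).} Split $E_-=\operatorname{span}\{\ve_-\}\oplus(E_-\cap\ve_-^\perp)$, and likewise for $E_+$. For $\vx\in E_-$ with $\vx\perp\ve_-$, we get $\vk^\top\mS\vx=-\vk_-^\top\vx=0$, so $\widetilde{\mA}\vx=-\vx$. Similarly $\widetilde{\mA}\vx=\vx$ on $E_+\cap\ve_+^\perp$. These complements have dimensions $m_--1$ and $m_+-1$, and together they span $\mathcal U^\perp$. This produces the $(-\mI_{m_--1})\oplus\mI_{m_+-1}$ blocks in the orthonormal basis $\mP$, and shows that $\widetilde{\mA}$ agrees with $\mS$ on $\mathcal U^\perp$.

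\textbf{Step 2 (the block).} Use $\vk=\cos\theta\,\ve_-+\sin\theta\,\ve_+$ and apply $\widetilde{\mA}$ to each basis vector:
\[
\widetilde{\mA}\ve_-=-\ve_-+\beta\cos\theta\,\vk,\qquad
\widetilde{\mA}\ve_+=\ve_+-\beta\sin\theta\,\vk.
\]
Expanding $\vk$ gives the columns $(\beta\cos^2\theta-1,\ \beta\sin\theta\cos\theta)^\top$ and $(-\beta\sin\theta\cos\theta,\ 1-\beta\sin^2\theta)^\top$, which is exactly $\mB(\beta,\theta)$. Setting $\beta=2$ and using the double-angle identities yields the rotation by $2\theta$. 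Because $\mathcal U$ and $\mathcal U^\perp$ are both invariant and $\mP$ is orthogonal, the block-diagonal conjugation follows.

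\textbf{Degenerate cases.} When $\theta=0$, we have $\vk=\ve_-\in E_-$. Then $\widetilde{\mA}\vk=-\vk+\beta\vk=(\beta-1)\vk$, the rest of $E_-$ is $-\mI$, and all of $E_+$ is fixed. The case $\theta=\pi/2$ is symmetric and gives $(1-\beta)$. Both are real diagonal forms in $\mP$, so there are no non-real eigenvalues.

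\textbf{Spectrum.} All other blocks are $\pm1$, so any non-real pair must come from $\mB$. The trace of $\mB$ is $\beta(\cos^2\theta-\sin^2\theta)=\beta\cos2\theta$. Its determinant, after expanding and cancelling the $\beta^2\sin^2\theta\cos^2\theta$ terms, is
\[
-(1-\beta\sin^2\theta)+\beta\cos^2\theta(1-\beta\sin^2\theta)+\beta^2\sin^2\theta\cos^2\theta=\beta-1.
\]
The discriminant is therefore $\beta^2\cos^2 2\theta-4(\beta-1)$, and the eigenvalues are non-real exactly when this is negative. The left side is nonnegative, so the condition forces $\beta>1$.

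I expect the main obstacle to be bookkeeping rather than any single computation: checking the invariance and dimension counts in the edge cases where $m_\pm=0$ or $1$, and confirming the determinant simplification.
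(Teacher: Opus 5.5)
Your proposal is correct and follows essentially the same route as the paper's proof: you show that $E_-\cap\ve_-^\perp$ and $E_+\cap\ve_+^\perp$ are the $-1$ and $+1$ eigenspaces, evaluate $\widetilde{\mA}$ on $\ve_-,\ve_+$ to read off $\mB(\beta,\theta)$, and get the endpoint cases by setting $\vk=\vk_-$ or $\vk=\vk_+$. You also spell out the trace and determinant ($\operatorname{tr}\mB=\beta\cos2\theta$, $\det\mB=\beta-1$) behind the non-real criterion, which the paper's proof leaves implicit.
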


\begin{proof}
For \(x\in E_-\cap\vk_-^\perp\) and
\(y\in E_+\cap\vk_+^\perp\),

$$
\widetilde{\mA}x=-x,\qquad
\widetilde{\mA}y=y,
$$

so only the span of the nonzero components of \(\vk\) remains. When both
are nonzero,

$$
\vk=\cos\theta\,\ve_-+\sin\theta\,\ve_+,\qquad
\mS\ve_-=-\ve_-,\qquad
\mS\ve_+=\ve_+,
$$

and evaluating \(\widetilde{\mA}\) on \(\ve_-,\ve_+\) gives
\(\mB(\beta,\theta)\). The endpoint cases follow by setting
\(\vk=\vk_-\) or \(\vk=\vk_+\), and the form of \(\mB(2,\theta)\) follows
from the double-angle identities.
\end{proof}

\begin{figure}
    \centering
    \includegraphics[width=1.0\linewidth]{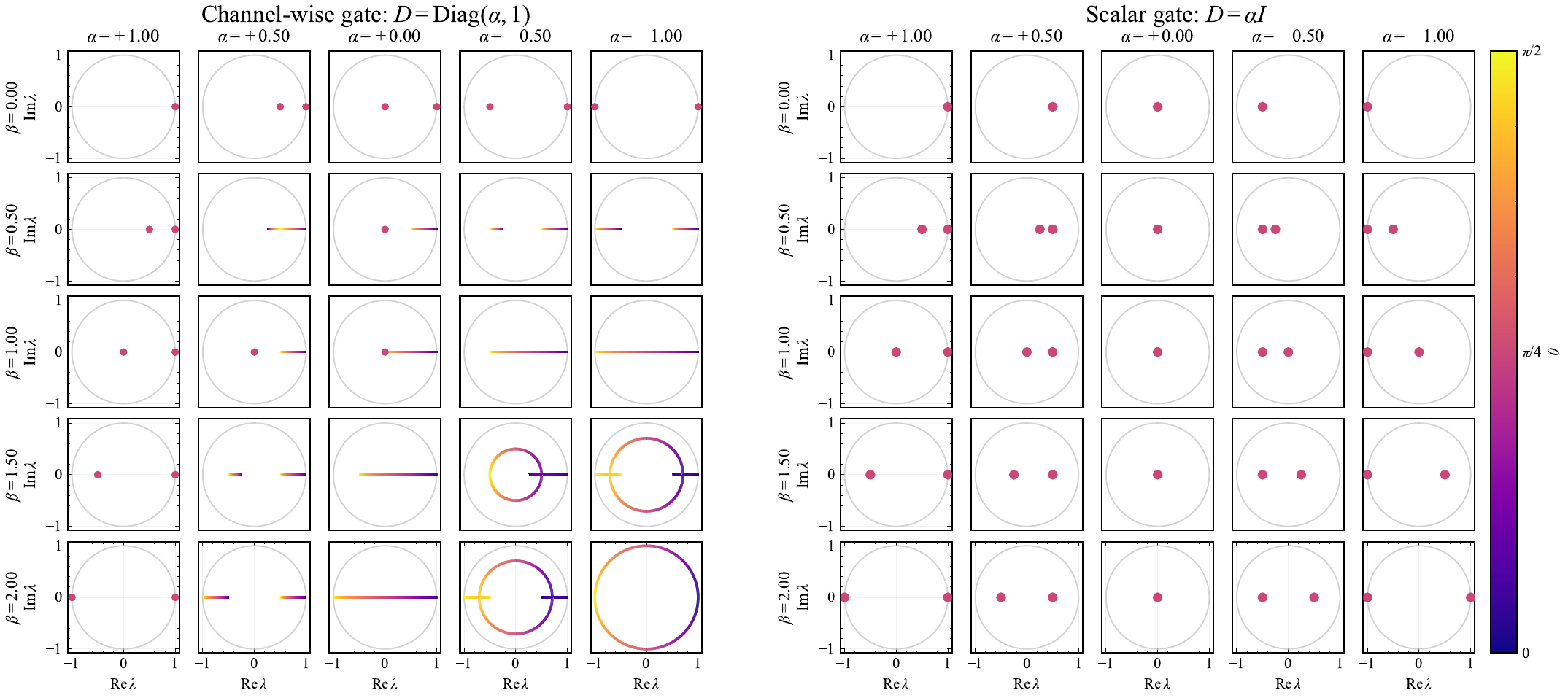}
    \caption{Extension of~\Cref{fig:alpha_sweep_2d} varying both $\beta$ and $\alpha$. The figure demonstrates that both components need to be extended from their standard ranges to obtain complex eigenvalues.}
    \label{fig:beta_alpha_sweep}
\end{figure}
\subsection{The Window for Complex Eigenvalues}\label{app:complex_eigenvalues}

We now consider the eigenvalues of the full CKDA transition. 

\begin{theorem}[Complex Eigenvalues in Diagonal Householder Product]
\label{thm:complex_eigenvalues}
\label{th:gate_plus_householder}
The matrix $\mA=(\mI-\beta\vk\vk^\top)\Diag(\valpha)$ has at most one pair $\lambda,\lambda^*$ of non-real eigenvalues, and such a pair can occur only if $\beta>1$ and $\alpha_i<0$ for at least one $i$.
\end{theorem}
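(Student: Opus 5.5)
Write $\mD:=\Diag(\valpha)$ and $\mH:=\mI-\beta\vk\vk^\top$ with $\|\vk\|=1$, so $\mA=\mH\mD$. The whole argument runs through similarity to a rank-one perturbation of a diagonal matrix, followed by an interlacing/secular-equation count.

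First I discard the zero coordinates. If $\alpha_i=0$, the $i$th column of $\mA$ vanishes. Permuting that coordinate last shows that $\mA$ is block triangular with a zero eigenvalue. The remaining block is $(\mI-\beta\vk'\vk'^\top)\Diag(\valpha')$ restricted to the nonzero coordinates. If I keep $\vk'$ unnormalised, this is of the same form with a rank-one term $\beta\vk'\vk'^\top$ and $\|\vk'\|\le1$, i.e.\ effective $\beta'=\beta\|\vk'\|^2\le\beta$. So I may assume every $\alpha_i\neq0$, provided the argument below uses only $\beta'\le\beta$.

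Next I symmetrise. Let $\mD^{1/2}:=\Diag(\sqrt{|\alpha_i|})$ and $\mS:=\Diag(\operatorname{sign}\alpha_i)$. Then $\mA$ is similar to
\[
\mD^{1/2}\mH\mD^{1/2}\mS=(\mD_{||}-\beta\vw\vw^\top)\mS,\qquad \vw:=\mD^{1/2}\vk,
\]
where $\mD_{||}=\Diag(|\valpha|)$. Multiplying on the left by $\mS$ does not change the eigenvalue problem, which becomes the generalised pencil $\mM\vx=\lambda\mS\vx$ with the symmetric matrix $\mM:=\mD_{||}-\beta\vw\vw^\top$.

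If all $\alpha_i>0$, then $\mS=\mI$ and $\mM$ is symmetric, so the spectrum is real; this already matches the earlier motivation. In general, a symmetric pencil $(\mM,\mS)$ with indefinite $\mS$ has its non-real eigenvalues controlled by inertia. Each non-real pair needs an $\mS$-neutral eigenvector, and the number of non-real pairs is bounded by $\min(\text{positive index},\text{negative index})$ of $\mM$.

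I then count that index directly. $\mD_{||}\succ0$ and $\mM$ is a rank-one downdate of it, so $\mM$ has at most one non-positive eigenvalue. Hence the number of non-real pairs is at most one. Moreover, a pair can exist only if $\mM$ is indefinite, i.e.\ $\beta\,\vw^\top\mD_{||}^{-1}\vw=\beta\|\vk\|^2>1$; after the reduction this becomes $\beta'>1$, which forces $\beta>1$. The pair also requires $\mS\neq\mI$, i.e.\ some $\alpha_i<0$.

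The main obstacle is the pencil-inertia fact: for real symmetric $\mM,\mS$ with $\mS$ invertible and $\mM$ having exactly one negative eigenvalue, at most one non-real pair occurs, and none when $\mM\succeq0$. I would prove it directly. A non-real eigenvector $\vx$ satisfies $\vx^*\mM\vx=\lambda\,\vx^*\mS\vx$. Both quadratic forms are real while $\lambda\notin\R$, so $\vx^*\mS\vx=0$ and therefore $\vx^*\mM\vx=0$.

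Now suppose there were two independent non-real pairs, or one pair together with $\mM\succeq0$. The eigenvectors for $\lambda,\bar\lambda,\mu,\bar\mu$ are pairwise $\mS$-orthogonal unless their eigenvalues are conjugate to each other. This gives a real subspace of dimension $\ge2$ on which $\mM$ is non-positive (in fact $\mM$-neutral), contradicting the fact that $\mM$ has only one non-positive direction. In the case $\mM\succeq0$, neutrality gives $\mM\vx=0$, which forces $\lambda=0$, again a contradiction.

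Should this become messy, I would fall back on continuity. Deform $\mS$ from $\mI$; eigenvalues can only leave the real axis by colliding in pairs. The secular equation $1=\beta\sum_i w_i^2 s_i/(|\alpha_i|-\lambda s_i)$, a rational function, can then be used to count collisions.
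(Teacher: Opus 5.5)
Your argument is correct in substance. It rests on the same core mechanism as the paper: root vectors of non-real eigenvalues are isotropic for an indefinite form with a single negative square. What differs is where you locate that form.

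The paper diagonalizes the Householder factor, $\mQ^\top\mA\mQ=\mLambda\mB$ with $\mLambda=\Diag(1-\beta,1,\ldots,1)$ and $\mB=\mQ^\top\Diag(\valpha)\mQ$. For $\beta>1$ it rewrites this as $\mJ$ times a symmetric matrix, with $\mJ=\Diag(-1,1,\ldots,1)$, so the one negative square is the metric $\mJ$ itself. The cases $\beta<1$, $\beta=1$, and nonnegative gates each need a separate short argument.

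You instead symmetrize with the gate magnitudes. The metric becomes the sign matrix $\mS$, which may have many negative squares, and you count with the $\mM$-form. Here $\mM=\mD_{||}-\beta\vw\vw^\top$ is congruent to the restricted Householder matrix, so it carries the single negative direction. This costs you the zero-gate deflation, which you handle correctly. In return, both necessary conditions come out of one pencil with no case split in $\beta$, and any real $\beta$ is covered. You also get a sharper condition, $\beta\|\vk_N\|^2>1$, where $\vk_N$ is the key restricted to the coordinates with nonzero gate.

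Two steps need tightening.

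First, your count uses only eigenvectors. It therefore bounds the number of distinct non-real eigenvalues in the upper half-plane and their geometric multiplicity. It does not exclude a defective non-real eigenvalue with a Jordan block of size two, and the paper proves the bound with algebraic multiplicity. The fix is to show that the whole root subspace is $\mM$-neutral, exactly as in the paper's Sylvester step:
let the columns of $\mX$ span the root subspace of $\mS\mM$ for the upper-half-plane eigenvalues, so $\mM\mX=\mS\mX\mR$, and set $\mK:=\mX^*\mS\mX$. Then $\mX^*\mM\mX=\mK\mR$ is Hermitian, which gives $\mK\mR=\mR^*\mK$. Since $\mR$ and $\mR^*$ have disjoint spectra, the unique solution is $\mK=0$, hence $\mX^*\mM\mX=0$.

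Second, the neutral subspace you obtain is complex, not real. The real and imaginary parts of an eigenvector need not span an $\mM$-neutral real plane, because $\vx^\top\mM\vx$ need not vanish. Phrase the contradiction with the Hermitian form $\vz^*\mM\vz$, which has the same inertia as $\mM$. A two-dimensional complex subspace on which this form vanishes would have to meet the complex span of the positive eigenvectors of $\mM$ nontrivially, since that span has codimension at most one. That is impossible.
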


This means it is not sufficient to extend $\beta\in[0,2]$ alone within KDA as in \citet{grazzi2025}, nor is it sufficient to extend only the range of $\alpha_i$ while keeping $\beta\leq1$. This is similar to the requirement shown in~\citep[Prop.~1.3]{siems2025}, where both $\beta$s in a product of two generalized Householders need to be greater than $1$ for the product to obtain complex eigenvalues.

\begin{proof}
Let $\mH^\beta_{\vk}:=\mI-\beta\vk\vk^\top$ and $\mD:=\Diag(\valpha)$. The matrix $\mH^\beta_{\vk}$ has eigenvalues $(1-\beta,1,\ldots,1)$, with $\vk$ as an eigenvector for $1-\beta$ and $\vk^\perp$ as the eigenspace for $1$. Thus we can choose an orthogonal matrix $\mQ$ with $\vk$ as its first column such that
\[
\mQ^\top\mH^\beta_{\vk}\mQ=\mLambda:=\Diag(1-\beta,1,\ldots,1).
\]
Defining $\mB:=\mQ^\top\mD\mQ$, which is symmetric, we obtain
\begin{equation}
\label{eq:char_polynomial_eigenvalues}
\mQ^\top\mA\mQ=\mLambda\mB.
\end{equation}

First suppose $\beta<1$. Then $\mLambda\succ0$, and
\[
\mLambda^{-\sfrac12}(\mLambda\mB)\mLambda^{\sfrac12}=\mLambda^{\sfrac12}\mB\mLambda^{\sfrac12},
\]
which is symmetric. Hence all eigenvalues are real.

For $\beta=1$, write
\[
\mB=\begin{pmatrix} b_{11} & \vb^\top\\ \vb & \mB_{22} \end{pmatrix}.
\]
Then
\[
\mLambda\mB=\begin{pmatrix} 0 & 0\\ \vb & \mB_{22} \end{pmatrix}.
\]
This matrix has eigenvalues $0$ together with the eigenvalues of the symmetric matrix $\mB_{22}$, so its spectrum is again real. Therefore non-real eigenvalues require $\beta>1$.

Now suppose $\beta>1$. Define
\[
\mSigma:=\Diag(\beta-1,1,\ldots,1),\qquad \mJ:=\Diag(-1,1,\ldots,1),
\]
so that $\mLambda=\mJ\mSigma$. Applying the similarity transformation with $\mSigma^{\sfrac12}$ gives
\begin{equation}
\mSigma^{-\sfrac12}(\mLambda\mB)\mSigma^{\sfrac12}
=
\mJ\underbrace{\mSigma^{\sfrac12}\mB\mSigma^{\sfrac12}}_{\mS}
=:\mT,
\end{equation}
where $\mS$ is symmetric. Hence $\mT^*\mJ=\mJ\mT$.

Let $\mT\vx=\lambda\vx$ with $\lambda\notin\R$. Then
\begin{equation}
\vx^*\mJ\mT\vx=\lambda\vx^*\mJ\vx=\vx^*\mS\vx\in\R.
\end{equation}
Since $\vx^*\mJ\vx\in\R$ and $\lambda\notin\R$, this implies $\vx^*\mJ\vx=0$. Similarly, for two eigenvectors $\mT\vx=\lambda\vx$ and $\mT\vy=\mu\vy$,
\begin{equation}
(\mu-\overline{\lambda})\vx^*\mJ\vy=0.
\end{equation}
Thus eigenvectors corresponding to eigenvalues in the upper half-plane are mutually $\mJ$-orthogonal and $\mJ$-isotropic.

The same statement holds for the corresponding generalized eigenspace. Let $\mX$ span the generalized eigenspace associated with all eigenvalues in the upper half-plane, and write $\mT\mX=\mX\mR$. Setting $\mG:=\mX^*\mJ\mX$, the identity $\mT^*\mJ=\mJ\mT$ gives
\[
\mR^*\mG=\mG\mR.
\]
The spectrum of $\mR$ lies in the upper half-plane, while that of $\mR^*$ lies in the lower half-plane. Hence the corresponding Sylvester equation has the unique solution $\mG=0$, so the generalized eigenspace is totally $\mJ$-isotropic.

Since $\mJ=\Diag(-1,1,\ldots,1)$ has only one negative direction, every totally $\mJ$-isotropic subspace has dimension at most one. Indeed, if a vector in such a subspace has first coordinate zero, then
\[
\vx^*\mJ\vx=\sum_{i=2}^n|x_i|^2=0,
\]
and hence $\vx=0$. Projection onto the first coordinate is therefore injective. Thus there is at most one eigenvalue in the upper half-plane, counting algebraic multiplicity, and since $\mA$ is real, at most one corresponding complex-conjugate pair.

Finally, if $\alpha_i\geq0$ for all $i$, then $\mD\succeq0$. The matrices $\mH^\beta_{\vk}\mD$ and
\[
\mD^{\sfrac12}\mH^\beta_{\vk}\mD^{\sfrac12}
\]
have the same characteristic polynomial, while the latter is symmetric. Hence all eigenvalues of $\mA$ are real. Therefore non-real eigenvalues require $\alpha_i<0$ for at least one $i$.
\end{proof}

\subsection{Non-expansive DPLR matrices}\label{app:non_expansive_dplr}

\begin{theorem}[Persistent rotations in non-expansive DPLR]
\label{thm:dplr-persistent-pairs-app}
Let $\mA=\mD+\mR\in\mathbb{R}^{n\times n}$, where $\mD=\mD^\top$ is diagonal and $\operatorname{rank}(\mR)\le r$, and suppose $\norm{\mA}_2\le1$. Then $\mA$ has at most $r$ non-real conjugate pairs of eigenvalues on the unit circle. In particular, a non-expansive rank-one DPLR transition has at most one such pair.
\end{theorem}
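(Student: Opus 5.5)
The plan is to use two facts: a contraction behaves unitarily on its unit-circle eigenvectors, and the antisymmetric part $\mA-\mA^\top=\mR-\mR^\top$ has small rank because the symmetric diagonal $\mD$ cancels.

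\textbf{Step 1 (unit-modulus eigenvectors are shared with $\mA^\top$).} Let $\mA\vx=\lambda\vx$ with $|\lambda|=1$ and $\vx\in\C^n$. Then $\|\mA\vx\|=\|\vx\|$. Because $\mA^*\mA\preceq\mI$, the matrix $\mI-\mA^*\mA$ is positive semidefinite, and $\vx^*(\mI-\mA^*\mA)\vx=0$. Hence $(\mI-\mA^*\mA)\vx=0$, that is, $\mA^*\mA\vx=\vx$. Applying $\mA^*$ to $\mA\vx=\lambda\vx$ gives $\lambda\mA^*\vx=\vx$, so $\mA^*\vx=\bar\lambda\vx$. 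Since $\mA$ is real, $\mA^*=\mA^\top$, and therefore $\mA^\top\vx=\bar\lambda\vx$.

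\textbf{Step 2 (no Jordan blocks on the unit circle).} The powers $\mA^m$ are uniformly bounded by $1$ in norm. A Jordan block of size at least two at an eigenvalue with $|\lambda|=1$ would make $\|\mA^m\|$ grow linearly in $m$. So every unit-modulus eigenvalue is semisimple, and its algebraic multiplicity equals the dimension of its eigenspace. Let $p$ be the number of non-real conjugate pairs on the unit circle, counted with algebraic multiplicity. Let $V\subseteq\C^n$ be the sum of the eigenspaces of all non-real unit-modulus eigenvalues. Then $\dim_{\C}V=2p$.

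\textbf{Step 3 (rank bound through the antisymmetric part).} Since $\mD$ is diagonal, $\mD=\mD^\top$, and so
\[
\mA-\mA^\top=(\mD+\mR)-(\mD+\mR^\top)=\mR-\mR^\top,
\]
which has rank at most $2r$. By Step 1, each eigenvector $\vx$ in $V$ with eigenvalue $\lambda$ satisfies
\[
(\mA-\mA^\top)\vx=(\lambda-\bar\lambda)\vx=2i\,\mathrm{Im}(\lambda)\,\vx,
\]
and $\mathrm{Im}(\lambda)\neq0$. Thus $\mA-\mA^\top$ maps $V$ into itself and is diagonalizable there with nonzero eigenvalues, so it is injective on $V$. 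It follows that
\[
2p=\dim V\le\rank(\mR-\mR^\top)\le 2r,
\]
so $p\le r$. The rank-one case $r=1$ gives at most one pair.

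\textbf{Main obstacle.} The delicate point is Step 1 together with the multiplicity bookkeeping. The argument needs the eigenvectors of $\mA$ for unit-modulus eigenvalues to be simultaneously eigenvectors of $\mA^\top$, and it needs semisimplicity (Step 2) so that the eigenspace dimension of $V$ really counts algebraic multiplicity. Both follow from non-expansiveness. Once these are in place, the cancellation of the symmetric $\mD$ in $\mA-\mA^\top$ is immediate.
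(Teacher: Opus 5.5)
Your proof is correct and follows essentially the same route as the paper. Both arguments show that a unit-modulus eigenvector of the contraction $\mA$ is also an eigenvector of $\mA^\top$ with eigenvalue $\bar\lambda$, hence lies in the range of $\mA-\mA^\top=\mR-\mR^\top$, whose rank is at most $2r$; the only difference is the multiplicity bookkeeping, where you use semisimplicity of unit-circle eigenvalues (from bounded powers) and the paper instead splits off the invariant complement $\vx^\perp$ and recurses, and both are valid.
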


\begin{proof}
Since $\mD$ is symmetric, the difference between $\mA$ and its transpose comes entirely from the low-rank term:
\[
    \operatorname{rank}(\mA-\mA^\top)
    =\operatorname{rank}(\mR-\mR^\top)
    \le\operatorname{rank}(\mR)+\operatorname{rank}(\mR^\top)
    \le2r.
\]
We will show that each non-real unit-circle pair contributes two independent vectors to this range.

Let $\mA\vx=\lambda\vx$ with $\vx\in\mathbb C^n\setminus\{0\}$ and $|\lambda|=1$, and write $\vx^*$ for conjugate transpose. This eigenvector loses no norm under $\mA$; we first show that $\mA^\top$ reverses its action. Since $\|\mA^\top\|_2=\|\mA\|_2\le1$,
\[
\begin{aligned}
    \|\mA^\top\vx-\bar\lambda\vx\|^2
    &=\|\mA^\top\vx\|^2+\|\vx\|^2-2\operatorname{Re}(\bar\lambda\vx^*\mA\vx)\\
    &=\|\mA^\top\vx\|^2+\|\vx\|^2-2\operatorname{Re}(\bar\lambda\lambda\|\vx\|^2)\\
    &=\|\mA^\top\vx\|^2-\|\vx\|^2
    \le0.
\end{aligned}
\]
A squared norm cannot be negative, so $\mA^\top\vx=\bar\lambda\vx$. Consequently, $(\mA-\mA^\top)\vx=(\lambda-\bar\lambda)\vx$. If $\lambda$ is non-real, the coefficient is nonzero, and $\vx=(\lambda-\bar\lambda)^{-1}(\mA-\mA^\top)\vx$ belongs to the range of $\mA-\mA^\top$.

To count these vectors with multiplicity, observe that $\mA$ also preserves $\vx^\perp$: whenever $\vx^*\vy=0$, we have $\vx^*\mA\vy=(\mA^\top\vx)^*\vy=\lambda\vx^*\vy=0$. Thus $\operatorname{span}\{\vx\}$ and its orthogonal complement are both invariant, and the restriction to the complement remains a contraction. Splitting off this one-dimensional block and repeating produces independent eigenvectors for every unit-circle eigenvalue, counted with algebraic multiplicity.

If there are $c$ non-real conjugate pairs on the unit circle, we therefore obtain $2c$ independent vectors in the range of $\mA-\mA^\top$. A real matrix has the same rank over $\mathbb R$ and $\mathbb C$, so $2c\le2r$, giving $c\le r$.
\end{proof}

\subsection{Expressivity of products of CKDA transitions}\label{app:CKDA-products}

The diagonal gate reduces the number of factors needed to represent a
non-expansive matrix compared with the $3n$ generalized-Householder construction
of \citet[Prop.~1, item~2]{grazzi2025}.

\begin{proposition}[Expressivity of CKDA products]\label{prop:CKDA-factorization}
Let $n\geq1$ and $\mA\in\mathbb R^{n\times n}$ satisfy $\|\mA\|_2\leq1$.
Then $\mA$ is a product of at most $\max\{1,2n-2\}$ CKDA
transitions, each with a unit key, $\beta\in[0,2]$, and diagonal gate
entries in $[-1,1]$. If $\mA$ is orthogonal, at most
$\max\{1,n-1\}$ factors
suffice, and this orthogonal bound is sharp. In particular, the permutation
matrix of an $n$-cycle cannot be expressed using fewer than $n-1$
CKDA factors.
\end{proposition}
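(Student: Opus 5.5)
The plan is to prove the orthogonal bound first and then obtain the non-expansive bound from the singular value decomposition. Two observations drive both parts. First, a product of a signed-Householder factor $(\mI-2\vk\vk^\top)\mS$ with any diagonal $\Diag(\vsigma)$, $\sigma_i\in[-1,1]$, is again a CKDA transition with gate $\mS\vsigma\in[-1,1]^n$ and $\beta=2$. Second, the case $n=1$ is trivial: $\mA=a\in[-1,1]$ is a single CKDA transition with $\beta=0$ and gate $a$. So I assume $n\ge2$ throughout.

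\textbf{Orthogonal upper bound.} I would run Householder QR on the orthogonal $\mA$. At step $j=1,\dots,n-1$, choose a reflection $\mH_j=\mI-2\vk_j\vk_j^\top$ that acts only on coordinates $j,\dots,n$. It sends the $j$th column of the current matrix to $\pm\ve_j$. If that column is already $\pm\ve_j$, any unit $\vk_j$ supported on coordinates $j+1,\dots,n$ works and leaves column $j$ fixed. After $n-1$ steps, $\mH_{n-1}\cdots\mH_1\mA$ is orthogonal and upper triangular, hence a diagonal sign matrix $\mS$. The point is that the last column needs no reflection, since it is forced up to sign by orthogonality. Using $\mH_j^2=\mI$, this gives $\mA=\mH_1\cdots\mH_{n-2}(\mH_{n-1}\mS)$. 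The first $n-2$ factors are CKDA transitions with gate $\mathbf1$ and $\beta=2$, and the last factor is a signed Householder, which is also a CKDA transition. That is $n-1$ factors in total.

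\textbf{Non-expansive bound.} Write the SVD $\mA=\mU\mSigma\mV^\top$ with $\mSigma=\Diag(\sigma_i)$ and $\sigma_i\in[0,1]$, since $\|\mA\|_2\le1$. By the orthogonal case, $\mU=\mF_1\cdots\mF_{n-2}\,(\mI-2\vk\vk^\top)\mS$ and $\mV^\top$ is a product of $n-1$ CKDA factors. Merging $\mSigma$ into the last factor of $\mU$ yields $(\mI-2\vk\vk^\top)\Diag(\mS\vsigma)$, which is a CKDA transition by the first observation. The total is $(n-1)+(n-1)=2n-2$ factors.

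\textbf{Sharpness.} This is the step with real content. I would prove a structural lemma: a product of $m$ CKDA transitions $(\mD_i+\vu_i\vv_i^\top)$, with $\mD_i=\Diag(\valpha_i)$, $\vu_i=\vk_i$ and $\vv_i=-\beta_i\mD_i\vk_i$, equals a diagonal matrix plus a matrix of rank at most $m$. To see this, expand the product and group every non-all-diagonal term by the position $j$ of its leftmost rank-one factor. Each group then has the form $\mD_1\cdots\mD_{j-1}\vu_j(\cdots)$, so its image lies in $\operatorname{span}\{\mD_1\cdots\mD_{j-1}\vu_j\}$. Summing over $j=1,\dots,m$ bounds the rank of the non-diagonal part by $m$.

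For the cyclic permutation matrix $\mP$ and any diagonal $\mE$, the matrix $\mP-\mE$ contains the $n-1$ off-diagonal unit entries of the cycle. After deleting one row and one column, these unit entries form a generalized diagonal of an $(n-1)\times(n-1)$ submatrix that is triangular up to permutation, with nonzero determinant. Hence $\operatorname{rank}(\mP-\mE)\ge n-1$, and the lemma forces $m\ge n-1$.

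The main obstacle is stating this minor argument cleanly. I would first pick the cycle orientation so that $\mP$ has ones at positions $(i+1,i)$ and $(1,n)$. Then deleting row $1$ and column $n$ leaves a lower-triangular block whose diagonal consists of the ones at $(i+1,i)$. The entries of $\mE$ lie only on the main diagonal of $\mP-\mE$, which in this submatrix sits above that block's diagonal, so they do not affect its determinant. The determinant therefore equals $1$.
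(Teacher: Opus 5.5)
Your proposal is correct and follows essentially the same route as the paper. You use Householder QR with the sign matrix absorbed into the last reflection for the orthogonal bound, and an SVD with the singular values merged into an adjacent signed-Householder factor for contractions. For sharpness you combine the ``diagonal plus rank at most $m$'' lemma with $\rank(\mC_n-\mD)\geq n-1$; you certify this rank with a unit-determinant $(n-1)\times(n-1)$ minor, where the paper instead shows $\dim\ker(\mC_n-\mD)\leq 1$.

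One small slip: after deleting row $1$ and column $n$, the entries of your diagonal $\mE$ land on the superdiagonal of the minor. The minor is therefore upper, not lower, triangular, and its determinant is still $1$.
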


\begin{proof}
The zero matrix is a single zero-gate transition, and for $n=1$ any
scalar $A\in[-1,1]$ is a single diagonal gate with $\beta=0$.
The scalar $A=-1$ also shows sharpness of the orthogonal bound in dimension one.
Henceforth let $n\geq2$ and $\mA\ne0$.

\textit{Orthogonal matrices.}
By Householder QR decomposition, any orthogonal matrix can be written as
$\mQ=\mH_1\cdots\mH_{n-1}\mS$, where $\mS$ is a diagonal sign matrix
and each $\mH_i$ is a reflection or an identity padding factor.
Absorbing $\mS$ into the last factor gives $n-1$ CKDA transitions.

\textit{General contractions.}
Take a singular value decomposition $\mA=\mU\mSigma\mV^\top$, with
$\mSigma=\Diag(\sigma_1,\ldots,\sigma_n)$ and $\sigma_i\in[0,1]$.
Apply Householder QR to both orthogonal factors:
\[
  \mU=\mH_1\cdots\mH_{n-1}\mS_U,
  \qquad
  \mV=\mG_1\cdots\mG_{n-1}\mS_V.
\]
Each $\mH_i$ and $\mG_i$ is a reflection or an identity padding factor,
and $\mS_U,\mS_V$ are diagonal sign matrices. Thus, with
$\mD:=\mS_U\mSigma\mS_V$, whose entries lie in $[-1,1]$,
\[
  \mA=\mH_1\cdots\mH_{n-2}
  (\mH_{n-1}\mD)\mG_{n-1}\cdots\mG_1.
\]
Every reflection is a CKDA transition with $\beta=2$ and identity
 gate, and an identity factor uses $\beta=0$. Absorbing $\mD$ into
$\mH_{n-1}$ gives $(n-2)+1+(n-1)=2n-2$ CKDA factors.

\textit{Sharpness for orthogonal matrices.}
Each CKDA factor is diagonal plus a matrix of rank at most one.
Inductively, a product of $m$ such factors is diagonal plus a matrix of
rank at most $m$: multiplying $\mD+\mR$, with $\rank(\mR)\leq m$, by
$\mE+\vu\vv^\top$ gives the diagonal part $\mE\mD$ and correction
$\mE\mR+\vu\vv^\top(\mD+\mR)$, of rank at most $m+1$.

Now let $\mC_n$ be the cyclic permutation matrix defined by
$\mC_n\ve_i=\ve_{i+1}$ for $i<n$ and $\mC_n\ve_n=\ve_1$.
For any diagonal $\mD=\Diag(d_1,\ldots,d_n)$, the equation
$(\mC_n-\mD)\vx=0$ reads
\[
  x_{i-1}=d_i x_i\quad (i=2,\ldots,n),
  \qquad x_n=d_1x_1.
\]
Starting from $x_n$, the first $n-1$ equations determine all other
coordinates; the last equation can only impose an additional constraint.
Thus $\dim\ker(\mC_n-\mD)\leq1$, so
$\rank(\mC_n-\mD)\geq n-1$ for every diagonal $\mD$.
Recall that a product of $m$ CKDA factors differs from some diagonal
matrix by a correction of rank at most $m$. Thus any such factorization
of $\mC_n$ would give a diagonal $\mD$ satisfying
$n-1\leq\rank(\mC_n-\mD)\leq m$, forcing $m\geq n-1$.
Since $\mC_n$ is orthogonal, the upper bound is attained.
\end{proof}

\emph{Remark.} This bound is also related to the Hermitian-plus-low-rank characterization of~\citet[Theorem~12]{del2019matrix}; the proof above gives a direct contraction-specific rank argument.

\section{State-tracking preliminaries and arithmetic conventions}
\label{app:state-tracking-preliminaries}
This section fixes the tasks, recurrent model, and arithmetic conventions
used in the single-layer results of \Cref{app:finite-group-expressivity}
and the multilayer results of \Cref{app:wfamin}. We follow the
state-tracking setup of \citet{grazzi2025,siems2025}, making explicit the
distinction between tracking a state, recognizing a language, and computing
a weighted-automaton output.

\subsection{State-transition systems and tracking}\label{app:tracking-tasks}

\begin{definition}[State tracking]\label{def:state-tracking}
A state-transition system consists of a finite input alphabet $\Sigma$,
a state space $\mathcal S$, an initial state $s_0\in\mathcal S$, and a
map $T_a:\mathcal S\to\mathcal S$ for each $a\in\Sigma$.
For a word $w=a_1\cdots a_T\in\Sigma^*$, its states satisfy
$s_t=T_{a_t}(s_{t-1})$. For a recurrent model with hidden updates $F_a$
and initial state $h_0$, a hidden state is \emph{reachable} if it is
obtained from $h_0$ by processing some finite input word. This includes
$h_0$ itself, reached by the empty word. The model \emph{tracks} the
system if a fixed decoder $d$ satisfies
\[
 d(h_0)=s_0,\qquad d(F_a(h))=T_a(d(h))
 \quad\text{for every reachable }h\text{ and every }a\in\Sigma.
\]
Thus $d(h_t)=s_t$ at every prefix of every word, including the empty
word. The decoder need not be injective, and $\mathcal S$ need not be finite.
\end{definition}

For each target system, the network dimensions, parameters, initial state,
and decoder are fixed independently of input length. Under the
polynomial-precision convention of \Cref{app:tracking-precision}, each
scalar is represented exactly using a number of bits bounded by a fixed
polynomial in the input length.

\paragraph{Groups and prefix products.}
We write $e$ for a group identity and $|G|$ for the number of elements of
$G$. The group $S_n$ consists of permutations of $n$ objects, and $A_n$
is its subgroup of even permutations, those expressible as an even
number of swaps of two objects. We write $\mathbb Z_n$ for the cyclic
group of $n$ elements and $D_n$ for the dihedral group of $2n$ elements.
The \emph{order} of an element $h$ is the smallest positive integer $m$
with $h^m=e$, when such an integer exists.

We use \emph{cycle notation} for permutations: $(12)$ swaps $1$ and $2$,
while $(1245)$ sends $1\mapsto2\mapsto4\mapsto5\mapsto1$, leaving
unlisted objects fixed. A swap is called a \emph{transposition}.
Disjoint cycles act on separate objects; for example, $(1245)(36)$
also swaps $3$ and $6$. Products are composed from right to left, so
$(12)(23)=(123)$ applies $(23)$ first. Disjoint cycles commute, but
overlapping cycles need not: reversing this product gives
$(23)(12)=(132)$.

For a finite group $G$ with identity $e$, take $\Sigma=\mathcal S=G$,
$s_0=e$, and $T_g(s)=gs$. The desired state is
$s_t=g_t\cdots g_1$. Throughout our group results, inputs range over
\emph{all} of $G$. Restricting inputs to a chosen set of generators gives
a different task. For example, tracking $\mathbb Z_n$ means computing
the running sum modulo $n$. We use ``solving the group-word problem of
$G$'' to mean tracking this system in the sense of
\Cref{def:state-tracking}, that is, recovering every prefix product.

\paragraph{Deterministic finite automata.}
A DFA is a tuple $(Q,\Sigma,\delta,q_0,F)$, where $Q$ is a finite state
set, $q_0\in Q$ is the initial state, $F\subseteq Q$ is the accepting
set, and $\delta:Q\times\Sigma\to Q$ is the transition function.
Indexing coordinates by $Q$, define the column-state transition matrix by
\[
 \mM_a\ve_q=\ve_{\delta(q,a)},\qquad
 \bm p_0=\ve_{q_0},\qquad \bm p_t=\mM_{a_t}\bm p_{t-1}.
\]
Each column of $\mM_a$ is one-hot. Tracking recovers the current state
$q_t$; recognition only asks whether $q_t\in F$. For example, a
two-state parity automaton swaps its states on input $1$, preserves them
on input $0$, and accepts in its even state. The functions $Q\to Q$
induced by words form a finite monoid under composition: composition is
associative and the empty word acts as the identity. If all symbol maps
are bijective, this monoid is a group.

\paragraph{Rational weighted finite automata.}
A WFA over $\mathbb Q$ has a finite alphabet $\Sigma$, an initial vector
$\bm\alpha_0\in\mathbb Q^n$, a matrix $\mM_a\in\mathbb Q^{n\times n}$
for each symbol, and a final vector $\bm\omega\in\mathbb Q^n$.
Its weighted state and scalar output are
\begin{equation}\label{eq:wfa-state-output}
 \bm p_0=\bm\alpha_0,\qquad
 \bm p_t=\mM_{a_t}\bm p_{t-1},\qquad
 y_t=\bm\omega^\top\bm p_t
     =\bm\omega^\top\mM_{a_t}\cdots\mM_{a_1}\bm\alpha_0.
\end{equation}
Unlike a DFA state, $\bm p_t$ can range over an infinite set.
Tracking $\bm p_t$ suffices to compute $y_t$ by applying the final vector
$\bm\omega$. The converse need not hold: distinct weighted states can
have the same scalar output, so computing $y_t$ need not recover
$\bm p_t$.

\subsection{Recurrent model, initialization, and readout}
\label{app:tracking-model}
All matrices are real, and $\mI_d$ denotes the $d\times d$ identity matrix.
A linear RNN head has a matrix-valued state and an affine update
\begin{equation}\label{eq:tracking-affine-model}
 \mS_t=\mA(\vx_t)\mS_{t-1}+\mB(\vx_t),\qquad
 \mS_t\in\mathbb R^{n\times d_v}.
\end{equation}
The functions are position-independent: time dependence comes through
the layer input $\vx_t$. In CKDA,
\[
 \mA(\vx_t)=(\mI-\beta_t\vk_t\vk_t^\top)\Diag(\valpha_t),
 \qquad \mB(\vx_t)=\beta_t\vk_t\vv_t^\top,
 \qquad \|\vk_t\|_2=1.
\]
Unless stated otherwise, $\beta_t\in[0,2]$ and
$\valpha_t\in[-1,1]^n$. These theoretical ranges include their endpoints
and zero gates. The practical gate parameterization and its magnitude
floor are described separately in \Cref{app:gauge}. The regular-language
and WFA simulation in \Cref{app:wfamin} explicitly allows $\beta_t>2$.
The $S_5$ lower bound allows arbitrary additive matrices $\mB(\vx_t)$,
and therefore also covers the CKDA additive term.

\paragraph{Heads and layers.}
Independent heads have separate recurrent states and receive the same
layer input; their updates do not depend on the previous states of other
heads. A joint decoder may use their outputs. Stacked layers may apply
nonlinear feed-forward maps between recurrent layers, and may pass the
current input along with the recurrent readout.

\paragraph{Initial state and decoder.}
We permit a prescribed initial state; in particular, the
group constructions use $\mS_0=\mI_d$ and $\mB(\vx_t)=0$ at every step.
A head is usually read through $\mS_t^\top\vq_t$, followed by a
feed-forward decoder, with $\vq_t$ determined by the current layer input.
For finite orthogonal state sets, \Cref{lem:finite-query-readout} shows
that one fixed query retains all information required by a state decoder.
We allow sufficiently expressive feed-forward maps to implement finite
lookups, as in \citet{grazzi2025,siems2025,merrill_why_2026}.
In the WFA construction, the readout is a linear functional of the
accumulator with coefficients selected from a finite lookup.

\subsection{Realization and decoded tracking}
\label{app:tracking-realization}
\paragraph{Group representations.}
The groups $O(d)=\{\mQ:\mQ^\top\mQ=\mI_d\}$ and
$SO(d)=\{\mQ\in O(d):\det\mQ=1\}$ contain the orthogonal matrices and
the orientation-preserving ones, respectively. The notation
$\langle\mA_h:h\in G\rangle$ denotes the group generated by the indicated
matrices: all finite products of them and their inverses, including $\mI_d$.

\begin{definition}[Homomorphism, isomorphism, and faithful representation]
For groups $G$ and $H$, a map $\phi:G\to H$ is a \emph{homomorphism} if
$\phi(ab)=\phi(a)\phi(b)$ for every $a,b\in G$. It is an
\emph{isomorphism} if it is also bijective; we write $G\cong H$ when such
a map exists. An \emph{orthogonal representation} is a homomorphism
$\rho:G\to O(d)$, and it is \emph{faithful} if it is injective.
\end{definition}

A faithful representation identifies $G$ with the matrix group $\rho(G)$.
For a homomorphism $\phi:G\to H$, its \emph{kernel} is
$\ker\phi=\{a\in G:\phi(a)=e_H\}$, and the \emph{fiber}
$\phi^{-1}(b)=\{a\in G:\phi(a)=b\}$ collects elements with the same image.
The kernel is trivial, $\ker\phi=\{e_G\}$, exactly when $\phi$ is injective.

For a subgroup $N\subset G$, a \emph{left coset} $hN=\{hn:n\in N\}$
is a copy of $N$ obtained by multiplying each element on the left by $h$.
Distinct cosets partition $G$. Their number is the \emph{index} $[G:N]$,
which equals $|G|/|N|$ when $G$ is finite. In particular, index two means
that the two cosets each contain half the elements of $G$.
Every nonempty fiber of $\phi$ is a coset of its kernel:
if $\phi(a_0)=b$, then $\phi^{-1}(b)=a_0\ker\phi$.
Thus each attained value has exactly $|\ker\phi|$ preimages when the
kernel is finite.

A subgroup $N\subset G$ is \emph{normal} if $hNh^{-1}=N$ for every
$h\in G$. Kernels are normal: for $a\in\ker\phi$,
$\phi(hah^{-1})=\phi(h)e_H\phi(h)^{-1}=e_H$.
A surjective homomorphism also maps normal subgroups to normal subgroups
of its codomain. A nontrivial group is \emph{simple} if its only normal
subgroups are $\{e\}$ and the whole group. We use that $A_5$ is a
noncyclic simple group of order $60$~\citep{conrad_ansimple}.

\paragraph{Permitted transitions, realization, and tracking.}
For the orthogonal group constructions we specialize to
\[
 \mS_0=\mI_d,\qquad \mS_t=\mA_{g_t}\mS_{t-1}.
\]
For a unit vector $\vu\in\R^d$, the Householder matrix
$\mH_{\vu}=\mI_d-2\vu\vu^\top$ reflects across the hyperplane
$\vu^\perp$. A \emph{sign diagonal} is a diagonal matrix with entries
in $\{-1,+1\}$. We consider all orthogonal transition matrices of a CKDA layer:
\[
  \SH_d:=\{\mH_{\vu}\mD:\|\vu\|=1,\ \mD\text{ a sign diagonal}\}.
\]
Its elements are \emph{signed-Householder matrices}. This family contains
$\mI_d=\mH_{\ve_1}\mH_{\ve_1}$, since the first coordinate vector
$\ve_1$ gives the sign diagonal $\mH_{\ve_1}=\Diag(-1,1,\ldots,1)$.
The family is not generally closed under multiplication.

A realization represents the accumulated group element directly by a
matrix. Tracking allows several hidden matrices to represent the same group
element, provided a fixed decoder recovers the correct product.

\begin{definition}[Signed-Householder realization and tracking]
\label{def:sh-realization}\label{def:sh-tracking}
For a finite group $G$, choose transitions $\mA_h\in\SH_d$, $h\in G$,
and let $\mathcal R$ be the states reachable from $\mI_d$ under these
updates. They \emph{realize $G$ in dimension $d$} if
$h\mapsto\mA_h$ is a faithful representation, and \emph{track $G$}
if a decoder $p:\mathcal R\to G$ satisfies
\begin{equation}\label{eq:signed-householder-tracking}
  p(\mI_d)=e,\qquad p(\mA_h\mS)=h\,p(\mS)
  \quad(h\in G,\ \mS\in\mathcal R).
\end{equation}
The tracker has \emph{finite reachability}, or is \emph{finite-state},
if $\mathcal R$ is finite (\Cref{def:finite-reachability}).
\end{definition}

For finite-state tracking, the update rule itself forces the decoder to
preserve multiplication. This lets us use group structure even when the
decoder was initially allowed to be an arbitrary function.

\begin{lemma}[Finite tracking induces a group homomorphism]
\label{lem:finite-tracking-homomorphism}
For a finite-state signed-Householder tracker,
$\mathcal R=\Gamma:=\langle\mA_h:h\in G\rangle$, and its decoder
$p:\Gamma\to G$ is a surjective homomorphism. Such a tracker has a
bijective decoder exactly when its transitions form a realization.
\end{lemma}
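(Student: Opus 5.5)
We need to prove that if $\mathcal R$ is finite then $\mathcal R=\Gamma$, that $p$ is a surjective homomorphism, and that $p$ is bijective exactly when $h\mapsto\mA_h$ is a realization.

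\emph{Step 1: $\mathcal R=\Gamma$.} By definition $\mathcal R$ is the set of all finite products $\mA_{h_t}\cdots\mA_{h_1}$, including the empty product $\mI_d$. It is therefore a submonoid of $O(d)$ contained in $\Gamma$. Fix a generator $\mA_h$. Since $\mathcal R$ is finite, the powers $\mA_h^m$ cannot all be distinct, so $\mA_h^i=\mA_h^j$ for some $i<j$. Invertibility then gives $\mA_h^{j-i}=\mI_d$, and hence $\mA_h^{-1}=\mA_h^{j-i-1}\in\mathcal R$. So $\mathcal R$ is closed under products and contains the inverses of all generators. It is therefore a subgroup containing every $\mA_h$, which gives $\Gamma\subseteq\mathcal R$. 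Together with $\mathcal R\subseteq\Gamma$, this yields $\mathcal R=\Gamma$.

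\emph{Step 2: $p$ is a homomorphism.} Apply the tracking identity \eqref{eq:signed-householder-tracking} repeatedly along a word, starting from $p(\mI_d)=e$. This gives $p(\mA_{h_t}\cdots\mA_{h_1}\mS)=h_t\cdots h_1\,p(\mS)$ for every $\mS\in\mathcal R$. Now take $\mX,\mY\in\Gamma=\mathcal R$. Write $\mX=\mA_{h_t}\cdots\mA_{h_1}$; such a word exists by Step 1, since $\Gamma$ equals the monoid generated by the $\mA_h$. The formula with $\mS=\mY$ gives $p(\mX\mY)=(h_t\cdots h_1)\,p(\mY)$, and with $\mS=\mI_d$ it gives $p(\mX)=h_t\cdots h_1$. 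Hence $p(\mX\mY)=p(\mX)p(\mY)$.

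This argument does not depend on which word represents $\mX$: $p(\mX)$ is a fixed value, and the identity we derived expresses $p(\mX\mY)$ in terms of $p(\mX)$ for any chosen word. This is the subtle point of the proof, since $p$ must be shown consistent even though many words represent the same matrix.

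\emph{Step 3: surjectivity and the bijectivity criterion.} Surjectivity follows from $p(\mA_h)=h\,p(\mI_d)=h$.

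Suppose the transitions form a realization, so $\rho:h\mapsto\mA_h$ is a faithful representation. Then $\Gamma=\rho(G)$. Moreover $p\circ\rho=\mathrm{id}_G$, because $p(\mA_h)=h$. Since $\rho$ maps $G$ onto $\Gamma$ and has a left inverse, $p$ is the inverse of $\rho$ on $\Gamma$ and is therefore bijective.

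Conversely, suppose $p$ is bijective. Then $p$ is an isomorphism, and so is its inverse $p^{-1}:G\to\Gamma$. Since $p(\mA_h)=h$, we have $p^{-1}(h)=\mA_h$. Thus $h\mapsto\mA_h$ is an injective homomorphism, that is, a realization.

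The only real obstacle is Step 1, where finiteness is used to show that $\mathcal R$ is closed under inverses. Everything after that is formal.
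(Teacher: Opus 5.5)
Your proof is correct and follows essentially the same route as the paper: finiteness plus invertibility gives closure under inverses, iterating the tracking rule yields $p(\mQ_1\mQ_2)=p(\mQ_1)p(\mQ_2)$, and $p(\mA_h)=h$ gives both surjectivity and the bijectivity criterion. You argue the inverse closure only for the generators $\mA_h$ where the paper argues it for every $\mQ\in\mathcal R$, which is equivalent; your use of $p\circ\rho=\mathrm{id}_G$ to show that the given decoder must equal $\rho^{-1}$ is a slightly more careful version of the paper's one-line converse.
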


\begin{proof}
The finite set $\mathcal R$ contains the identity and is closed under
multiplication. For each $\mQ\in\mathcal R$, two powers coincide;
invertibility gives $\mQ^m=\mI_d$ for some $m\geq1$. Hence
$\mQ^{-1}=\mQ^{m-1}\in\mathcal R$, so $\mathcal R=\Gamma$.
Writing $\mQ_1=\mA_{h_t}\cdots\mA_{h_1}$ and iterating the tracking
rule gives $p(\mQ_1\mQ_2)=h_t\cdots h_1p(\mQ_2)=p(\mQ_1)p(\mQ_2)$.
Also $p(\mA_h)=h$, proving surjectivity. If $p$ is bijective, its inverse
is a faithful representation with $p^{-1}(h)=\mA_h$. Conversely, a
realization $\rho$ has reachable set $\rho(G)$ and decoder $\rho^{-1}$.
\end{proof}

\paragraph{Query projection.}
The following lemma relates these matrix-state definitions to the
query readout in \Cref{app:tracking-model}.

\begin{lemma}[A fixed query distinguishes finite orthogonal states]
\label{lem:finite-query-readout}
For every finite subgroup $\Gamma\subset O(d)$, there is a unit query
$\vq\in\R^d$ such that $\mS\mapsto\mS^\top\vq$ is injective on
$\Gamma$. For a finite-state tracker with decoder $p$, the output decoder
$\pi(\mS^\top\vq):=p(\mS)$ is therefore well defined and satisfies
\begin{equation}\label{eq:tracking-quotient}
  \pi(\vq)=e,\qquad
  \pi((\mA_h\mS)^\top\vq)=h\,\pi(\mS^\top\vq).
\end{equation}
Distinct reachable query outputs have positive separation, so the precision
needed to distinguish them is independent of sequence length.
\end{lemma}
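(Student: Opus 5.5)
The map $\mS\mapsto\mS^\top\vq$ fails to be injective on $\Gamma$ exactly when $\mS_1^\top\vq=\mS_2^\top\vq$ for two distinct elements $\mS_1\neq\mS_2$ of $\Gamma$. Since these matrices are orthogonal, this is equivalent to $(\mS_1^\top-\mS_2^\top)\vq=0$. The plan is therefore a finite-union-of-proper-subspaces argument.

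For each ordered pair $\mS_1\neq\mS_2$ in $\Gamma$, set $K_{12}:=\ker(\mS_1^\top-\mS_2^\top)$. Because $\mS_1^\top\neq\mS_2^\top$, the matrix $\mS_1^\top-\mS_2^\top$ is nonzero, so $K_{12}$ is a proper linear subspace of $\R^d$. Since $\Gamma$ is finite, there are finitely many such subspaces. A finite union of proper subspaces has Lebesgue measure zero, and therefore cannot cover $\R^d$. It also cannot cover the unit sphere, because each $K_{12}\cap\mathbb S^{d-1}$ is a lower-dimensional great subsphere and a finite union of these has empty interior in the sphere. Pick any unit $\vq$ outside the union; then $\mS\mapsto\mS^\top\vq$ is injective on $\Gamma$.

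Next comes the decoder. By \Cref{lem:finite-tracking-homomorphism}, a finite-state tracker has reachable set $\mathcal R=\Gamma$, which is a finite subgroup of $O(d)$. Injectivity on $\Gamma$ makes $\pi(\mS^\top\vq):=p(\mS)$ well defined on the finite set $\{\mS^\top\vq:\mS\in\Gamma\}$. The two identities in \eqref{eq:tracking-quotient} then follow directly from \eqref{eq:signed-householder-tracking}. First, $\pi(\vq)=\pi(\mI_d^\top\vq)=p(\mI_d)=e$. Second, since $\mA_h\mS\in\Gamma$,
\[
\pi\bigl((\mA_h\mS)^\top\vq\bigr)=p(\mA_h\mS)=h\,p(\mS)=h\,\pi(\mS^\top\vq).
\]

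Finally, for the separation claim, the set of query outputs is finite, with at most $|\Gamma|$ distinct points. The minimum distance $\delta:=\min_{\mS_1\neq\mS_2}\|\mS_1^\top\vq-\mS_2^\top\vq\|$ is a minimum over finitely many strictly positive numbers, so $\delta>0$. It depends only on $\Gamma$ and $\vq$, not on the sequence length, because every reachable state at every time lies in $\Gamma$. Hence rounding to precision below $\delta/2$ suffices uniformly in the sequence length.

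There is no real obstacle here. The only point needing care is the avoidance step: it must be stated over $\R$ and for unit vectors. The cleanest route is to take a nonzero vector outside the finite union of proper subspaces and normalize it, since each $K_{12}$ is a linear subspace and is therefore invariant under scaling.
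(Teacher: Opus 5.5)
Your proof is correct and essentially the paper's: each kernel $\ker(\mS_1^\top-\mS_2^\top)$ you avoid is exactly the fixed-point subspace of the nonidentity element $\mS_2\mS_1^\top\in\Gamma$, which is the same family of proper subspaces the paper excludes before normalizing. Your decoder identities and positive-separation argument also match the paper, with one small caveat: $\delta$ bounds only the precision needed at the readout of exact states, not the effect of rounding inside the recurrence.
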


\begin{proof}
Choose a nonzero vector $\vq$ outside the fixed-point subspaces of all
nonidentity matrices in $\Gamma$, and normalize it. Each fixed-point subspace
of $\mS\in\Gamma\setminus\{\mI_d\}$ is proper, and a finite union of
proper linear subspaces cannot cover $\R^d$. Let $\mS_1, \mS_2 \in \Gamma$. If
$\mS_1^\top\vq=\mS_2^\top\vq$, then
$\mS_2\mS_1^\top \vq = \vq$. By the choice of $\vq$, the
only matrix in $\Gamma$ that fixes $\vq$ is $\mI_d$. Hence
$\mS_2\mS_1^\top=\mI_d$, which gives $\mS_1=\mS_2$. The output tracking
rule follows from that of $p$. The distinct outputs form a finite set,
which has a positive minimum pairwise distance whenever it has at least
two elements; a singleton needs no distinction.
\end{proof}

The query can be chosen with algebraic coordinates: first choose a
rational vector outside the finitely many excluded proper subspaces,
then normalize it. Positive separation concerns the readout of exact
reachable states; it does not by itself bound errors accumulated in the
recurrence.

\subsection{Precision and finite reachability}
\label{app:tracking-precision}
In practice, floating-point recurrent implementations round arithmetic
results within each update and pass the resulting state to the next step.
A simplified model rounds once per update:
\[
 \widehat{\mS}_t=\operatorname{round}\!\left(
 \mA(\vx_t)\widehat{\mS}_{t-1}+\mB(\vx_t)\right).
\]
Actual implementations may round several times, depending on the
arithmetic and fused operations. This generally makes the update
nonlinear, and need not give the same result as rounding the exact
unrolled state at the end. We therefore need to specify both the
datatype and how arithmetic is evaluated.

\paragraph{Simplifying the arithmetic.}
\citet[Section~2.1 and Appendix~A]{merrill2024} evaluate iterated sums
and products in the unrolled computation exactly before casting to
$O(\log T)$-bit floats. Following this approach,
\citet[Appendices~A.4 and~B]{grazzi2025} use a fixed finite datatype for
their lower bounds. \citet[Sections~2.1 and~2.4]{merrill_why_2026} instead use exact
rational arithmetic with bounds on representation size, preserving
associativity and distributivity.

\paragraph{Our arithmetic convention.}
Our constructive results use \emph{polynomial precision over a fixed
real algebraic number field} $K\subset\mathbb R$. This extends the
rational setting of \citet{merrill_why_2026} to include irrational
constants, such as $\sqrt3/2$ in a rotation through $2\pi/3$ or
$1/\sqrt2$ in a normalized transposition key. Neither fixed-width floats
nor rational numbers with polynomially many bits represent these exactly.
Fix a basis $\eta_1,\ldots,\eta_r$ of $K$ over $\mathbb Q$ and store
\[
 x=\sum_{j=1}^r c_j\eta_j,\qquad c_j\in\mathbb Q.
\]
The numerators and denominators of the $c_j$ have polynomial bit length
in the input length $T$. The field, basis, and network parameters are
fixed independently of $T$. Arithmetic is exact: each $\eta_i\eta_j$
is a fixed rational combination of basis elements, so only the rational
coefficients need updating. Floating-point approximations require a
separate error analysis; a small rotation-angle error can accumulate
over repeated updates. Our exact constructions do not establish
numerical robustness at arbitrary lengths.

\paragraph{Finite reachability.}
A finite target state space does not force the model to visit finitely
many hidden states: different words with the same group product may
lead to different states that decode to that product. Even the DFA
encoding $\vh_t=2^{-t}\ve_{q_t}$ retains the current state while visiting
infinitely many hidden states. We distinguish this from the following
property of the exact recurrence.

\begin{definition}[Finite reachability]\label{def:finite-reachability}
The reachable set of a recurrent model is
$\mathcal R=\{h_0\}\cup\{F_{a_t}\circ\cdots\circ F_{a_1}(h_0):
t\geq1,\ a_i\in\Sigma\}$.
The model has \emph{finite reachability}, or is \emph{finite-state}, if
$\mathcal R$ is finite under its exact updates.
\end{definition}

Our finite-group and DFA constructions use only finitely many exact
values, including intermediate computations. They admit a fixed finite
datatype $\mathbb D\subset K$, possibly containing irrationals, as in
\citet[Appendix~A.4]{grazzi2025} and \citet[Appendix~B]{siems2025}.
It contains all values needed by the construction, without needing to
be closed under arbitrary arithmetic. For the block simulation of
\citet[Appendix~D.2]{peng2025rwkv7} and our adaptation, finiteness follows
from the finite clocks, buffers, DFA states, and partial factor products.
Thus these constructions have finite reachability without rounding.

\paragraph{What the lower bounds assume.}
We assume finite reachability in the $S_5$ obstruction to simplify the
analysis: together with non-expansion, it lets us remove the additive
term and restrict to orthogonal transitions
(\Cref{prop:affine-orthogonal-reduction,prop:independent-head-reduction}).
Finitely many rounded states do not imply finitely many exact states,
so choosing a finite datatype alone does not justify this reduction.

\citet[Thms.~1--2 and Appendix~B.2]{grazzi2025} do not require finite
exact reachability. Under their finite-datatype casting convention,
they rule out tracking $S_2\cong\mathbb Z_2$ (parity) with any fixed
number of layers when every transition has only nonnegative real
eigenvalues. Their single-layer argument also rules out tracking
$\mathbb Z_n$ for every $n>2$ when all transition eigenvalues are real.
On a repeated input, the cast state of one layer eventually becomes
constant in the first case, or alternates between at most two values in
the second, preventing the required counting. Our $S_5$ result allows a
non-real conjugate pair per head,
so the real-spectrum argument does not apply directly. Replacing finite
reachability by a suitable datatype and casting assumption remains open.

\section{Single-layer finite-group expressivity}
\label{app:finite-group-expressivity}
We use the tracking and realization definitions of
\Cref{app:tracking-realization} and the exact arithmetic convention of
\Cref{app:tracking-precision}. Inputs range over every element of the
target group. We first construct planar and cube realizations, then prove
the three-dimensional obstruction and four-dimensional tracker for $A_5$.
Finally, we extend the lower-bound setup to affine updates and independent
heads before proving the spectral obstruction for $S_5$.

\subsection{Axis criterion for three-dimensional rotations}\label{app:axis-test}

A rotation in $\R^3$ fixes its axis and rotates the perpendicular plane.
The following proposition characterizes exactly when such a rotation is a
signed-Householder matrix.  Its proof uses only the elementary fact that
the product of reflections in two planes is a rotation about their line of
intersection.

\begin{proposition}[Axis test]\label{prop:axis-test}\label{prop:axis-test-app}
Let $\mR\in SO(3)$ be a rotation about the axis spanned by a unit vector $\va\in\R^3$.
\begin{enumerate}
  \item The identity and every half-turn belong to $\SH_3$.
  \item If its angle is different from $0$ and $\pi$, then
  $\mR=\mH_{\vu}\mD$ for some sign diagonal $\mD$ if and only if its rotation axis $\va$
  is contained in a coordinate plane.  Explicitly, this means that
  $\va\perp\ve_i$ for some coordinate axis $\R\ve_i$, or equivalently that
  $\va$ has at least one zero coordinate.
\end{enumerate}
\end{proposition}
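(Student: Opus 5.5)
Part 1 is direct. The identity is $\mH_{\ve_1}\mH_{\ve_1}$, which is already noted in the definition of $\SH_d$. For a half-turn about a unit axis $\va$, write $\mR=2\va\va^\top-\mI=-(\mI-2\va\va^\top)=-\mH_{\va}$, where $-\mI$ is a sign diagonal. So $\mR=\mH_{\va}(-\mI)\in\SH_3$.

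For part 2 I would study an arbitrary $\mR=\mH_{\vu}\mD\in SO(3)$ through the decomposition used in the proof of \Cref{prop:orthogonal-dplr-app} and in \Cref{prop:CKDA-canonical-app}, taking $\beta=2$ and $\mS=\mD$.

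\textbf{Necessity.} First, $\det\mH_{\vu}=-1$ and $\det\mR=1$ force $\det\mD=-1$. Up to the overall factor $-\mI$, which is handled the same way, this leaves two possibilities: $\mD$ has one $-1$ entry, or $\mD=-\mI$.

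If $\mD=-\mI$, then $\mR=-\mH_{\vu}$ is a half-turn, which is excluded by the angle assumption.

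Otherwise $\mD=\mH_{\ve_i}$ for the unique flipped coordinate $i$, and $\mR=\mH_{\vu}\mH_{\ve_i}$ is a product of reflections in the planes $\vu^\perp$ and $\ve_i^\perp$. This product fixes the line $\vu^\perp\cap\ve_i^\perp$ pointwise; if $\vu=\pm\ve_i$ the product is the identity, again excluded. So the rotation axis is spanned by $\vu\times\ve_i$, which lies in $\ve_i^\perp$, i.e.\ $a_i=0$.

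I can also read this off \Cref{prop:CKDA-canonical-app}. Outside $\mathcal U=\operatorname{span}\{\vu_-,\vu_+\}$ the map $\widetilde{\mA}$ acts as $\mD$. A nontrivial, non-half-turn rotation has exactly a one-dimensional $+1$ eigenspace and no $-1$ eigenvalue. Hence $\mathcal U$ is two-dimensional and its orthogonal complement is a line lying inside a sign eigenspace of $\mD$; that line is the axis. Since $\dim\mathcal U=2$, both $\vu_-$ and $\vu_+$ are nonzero, so $\dim E_-\ge1$ and $\dim E_+\ge1$. The axis lies in $E_+$ (it must, since $-1$ is not an eigenvalue), and $E_+\ne\R^3$, so the axis is orthogonal to some $\ve_i$ with $i$ in the flipped set.

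\textbf{Sufficiency.} Suppose $\va\perp\ve_i$ and $\mR$ rotates by angle $\phi$ about $\va$. Work in the plane $\va^\perp$, which contains $\ve_i$. By the motivation computation (equivalently $\mB(2,\theta)$), the product $\mH_{\vu}\mH_{\ve_i}$ with $\vu\in\va^\perp$ at angle $\theta$ from $\ve_i$ is a rotation by $2\theta$ about $\va$. I choose $\theta=\phi/2$, with the sign of $\theta$ matching the orientation of $\va$. Then $\mR=\mH_{\vu}\mH_{\ve_i}$, and $\mH_{\ve_i}$ is a sign diagonal.

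The step I expect to be most delicate is the sign and orientation bookkeeping. I need to check that the classical fact (reflections in planes at angle $\theta$ compose to a rotation by $2\theta$ about their intersection line) gives every angle in both orientations. Replacing $\theta$ by $-\theta$ handles this. I also need to make sure the $\mD=-\mI$ case and the degenerate $\vu=\pm\ve_i$ case are excluded cleanly. Finally, the equivalence "$\va\perp\ve_i$ iff $a_i=0$" is immediate.
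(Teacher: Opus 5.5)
Your proof is correct and follows the paper's argument. In both, the determinant forces $\mD$ to be either $-\mI$ (a half-turn, excluded) or a single coordinate reflection $\mH_{\ve_i}$. The product $\mH_{\vu}\mH_{\ve_i}$ then rotates about $\vu^\perp\cap\ve_i^\perp\subset\ve_i^\perp$. Conversely, $\mR$ is written as a product of reflections in two planes through its axis separated by half the angle, with one plane taken to be $\ve_i^\perp$.

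Your second derivation via \Cref{prop:CKDA-canonical-app} and your explicit handling of the degenerate case $\vu=\pm\ve_i$ and of orientation are sound extras. The phrase ``up to the overall factor $-\mI$'' is superfluous, since $\det\mD=-1$ in three dimensions already means exactly one or three negative entries.
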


\begin{proof}
For part 1, the matrix $2\va\va^\top-\mI$ fixes $\va$ and sends every
$\vx\perp\va$ to $-\vx$, so it is the half-turn about $\R\va$.  Moreover,
\[
  2\va\va^\top-\mI=(\mI-2\va\va^\top)(-\mI)=\mH_{\va}(-\mI),
\]
which has the required form. The identity case was established when
introducing the permitted family.

For part 2, suppose $\mR=\mH_{\vu}\mD$.  Since
$\det\mH_{\vu}=-1$ and $\det\mR=1$, the diagonal $\mD$ has either one or
three negative entries.  In the latter case $\mD=-\mI$ and
$\mR=2\vu\vu^\top-\mI$, the half-turn from part 1, contradicting the
assumption on the angle.  Hence $\mD$ has exactly one negative entry.
If that entry is in position $i$, then
$\mD=\mI-2\ve_i\ve_i^\top$, which is precisely the reflection across the
coordinate plane $\ve_i^\perp$.

Both factors are now reflections in planes: $\mD$ reflects in
$\ve_i^\perp$ and $\mH_{\vu}$ reflects in $\vu^\perp$.  Their product
rotates about the line $\ve_i^\perp\cap\vu^\perp$.  This is the axis
$\R\va$, so $\va\perp\ve_i$; in other words, $\va$ lies in the coordinate
plane $\ve_i^\perp$.

Conversely, every rotation in $SO(3)$ is a product of reflections in two
planes through its axis, separated by half the rotation angle.  If
$\va\perp\ve_i$, we may choose one plane to be the coordinate plane
$\ve_i^\perp$.  Its reflection is
$\mD_i=\mI-2\ve_i\ve_i^\top$; the other reflection is therefore
$\mH_{\vu}:=\mR\mD_i$, and $\mR=\mH_{\vu}\mD_i$.
\end{proof}

\subsection{Cyclic and dihedral groups}\label{app:planar-groups}

We write $\mathbb Z_n$ for the cyclic group of $n$ elements and $D_n$ for
its dihedral extension of $2n$ elements, represented by planar rotations and
reflections. The following realizations also give finite-state tracking.

\begin{proposition}[planar groups]\label{prop:planar-groups}
Every cyclic group $\mathbb{Z}_n$ and every dihedral group $D_n$ has a
signed-Householder realization in two dimensions.
\end{proposition}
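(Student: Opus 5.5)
The plan is to use the standard orthogonal representation of $D_n$ on $\R^2$ and verify that every matrix in it already lies in $\SH_2$. The cyclic case then follows by restricting to the rotation subgroup.

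\textbf{Representation.} Let $\mR_\phi$ denote the rotation by $\phi$. I define
\[
\rho(r^j)=\mR_{2\pi j/n},\qquad \rho(r^j s)=\mR_{2\pi j/n}\,\Diag(1,-1),
\]
where $r$ is the rotation generator and $s$ a reflection. This is the usual faithful representation of $D_n$ (for $n\geq 3$), since the $2n$ matrices are pairwise distinct. Restricting to $\{r^j\}$ gives a faithful representation of $\mathbb Z_n$.

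\textbf{Rotations lie in $\SH_2$.} This is the computation of \Cref{eq:kda_alpha_rotation} at $\alpha=-1$, equivalently \Cref{prop:CKDA-canonical} with $\beta=2$. For $\vk=(\cos\theta,\sin\theta)^\top$ we have $\mH_{\vk}\Diag(-1,1)=\mR_{2\theta}$. Choosing $\theta=\pi j/n$ therefore yields every $\rho(r^j)$. The identity case is $j=0$, where $\vk=\ve_1$; it is also in $\SH_2$ by the remark on $\mH_{\ve_1}\mH_{\ve_1}$.

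\textbf{Reflections lie in $\SH_2$.} Every element $\rho(r^js)$ is an orthogonal matrix with determinant $-1$, i.e.\ a reflection $\mH_{\vu}$ across some line. Taking the sign diagonal $\mD=\mI$ places it in $\SH_2$. Alternatively, this follows immediately from \Cref{prop:orthogonal-dplr}, since every $2\times2$ orthogonal matrix is trivially DPR1.

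\textbf{Small-$n$ cases.} The only step needing any care is faithfulness for small $n$. For $n=1$, $D_1\cong\mathbb Z_2$ is realized by $\{\mI,\Diag(1,-1)\}$. For $n=2$, the Klein group is realized by $\{\pm\mI,\pm\Diag(1,-1)\}$. All of these are sign diagonals, which lie in $\SH_2$ by the two cases above. For $\mathbb Z_1$ and $\mathbb Z_2$ we use $\{\mI\}$ and $\{\mI,-\mI\}$, the latter being the half-turn $\mR_\pi$.

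Since each $\rho$ is an injective homomorphism with values in $\SH_2$, \Cref{def:sh-realization} is satisfied. Finite-state tracking then follows from \Cref{lem:finite-tracking-homomorphism}, with $\rho^{-1}$ as the decoder.
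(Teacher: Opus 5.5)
Your proof is correct and follows the paper's argument. Rotations by $2\pi j/n$ arise as a Householder reflection composed with a fixed coordinate reflection: the paper uses $\Diag(1,-1)$ with the mirror line at angle $\pi k/n$, you use $\Diag(-1,1)$ with the key at angle $\pi j/n$. The dihedral reflections are Householder matrices with $\mD=\mI$. Your separate small-$n$ treatment is harmless but unnecessary: the $2n$ matrices of the standard representation are pairwise distinct for every $n\geq1$, so it is faithful in all cases, not only for $n\geq3$.
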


\begin{proof}
In the plane, take $\mD_0=\Diag(1,-1)$, reflection in the horizontal axis.
If $\mS_\alpha$ denotes reflection in the line making angle $\alpha$ with
the horizontal axis, then $\mS_\alpha\mD_0$ is a rotation through $2\alpha$.
Choosing $\alpha=\pi k/n$ realizes every rotation through $2\pi k/n$, so
every element in the standard representation of $\mathbb{Z}_n$ has the required
form.

The dihedral group adds reflections of the regular $n$-gon.  Each is already
a Householder matrix, so choose $\mD=\mI$.  Thus the same is true for the
standard planar representation of $D_n$.
\end{proof}

\subsection{Realizing \texorpdfstring{$S_4$}{S4} via the cube rotational symmetries}\label{app:cube}

The group $S_4$ has a familiar geometric realization: it is isomorphic to
the group of rotational symmetries of a cube.  We now show that rotating the
cube to a suitable orientation makes all of these rotations admissible at
once. Since restricting a faithful representation to a subgroup
preserves both faithfulness and the permitted matrix form, the same
construction will also realize $A_4\subset S_4$.

Figure~\ref{fig:cube-axis-types} shows the face and body-diagonal axes;
edge axes give only half-turns.

\begin{figure}[!ht]
\centering
\begin{tikzpicture}[>=Latex,font=\small,scale=.86,transform shape]
\fill[paperblue,rounded corners=5pt] (-7.2,-2.0) rectangle (7.2,2.15);

\begin{scope}[shift={(-4.6,.12)},
  x={(.640679cm,.207183cm)},
  y={(-.427119cm,.310775cm)},
  z={(0cm,.673345cm)}]
  \coordinate (ca) at (-1,-1,-1); \coordinate (cb) at (1,-1,-1);
  \coordinate (cc) at (1,1,-1);   \coordinate (cd) at (-1,1,-1);
  \coordinate (ce) at (-1,-1,1);  \coordinate (cf) at (1,-1,1);
  \coordinate (cg) at (1,1,1);    \coordinate (ch) at (-1,1,1);
  \foreach \a/\b in {cb/cc,cc/cd,cc/cg}
    \draw[muted,densely dashed,line width=.55pt] (\a)--(\b);
  \foreach \a/\b in {ca/cb,cd/ca,ce/cf,cf/cg,cg/ch,ch/ce,
                       ca/ce,cb/cf,cd/ch}
    \draw[ink,line width=.9pt] (\a)--(\b);
\end{scope}
\node[text=ink,font=\bfseries\scriptsize] at (-4.6,-1.52) {Cube};

\begin{scope}[shift={(0,.1)}]
  \draw[ink,line width=.9pt] (0,-1.088944)--(1.088944,0)--
    (0,1.088944)--(-1.088944,0)--cycle;
  \fill[bad] (0,0) circle (2.4pt);
  \draw[bad,very thick,->] (0:1.54)
    arc[start angle=0,end angle=90,radius=1.54];
  \node[text=bad,font=\bfseries\footnotesize]
    at (45:2.09) {$90^\circ$};
\end{scope}
\node[text=ink,font=\bfseries\scriptsize] at (0,-1.52) {Face axis: order $4$};

\begin{scope}[shift={(4.6,.1)},x={(1.257405cm,0cm)},y={(0cm,1.257405cm)}]
  \coordinate (cv2) at (0,-1);          \coordinate (cv3) at (-.866025,.5);
  \coordinate (cv4) at (-.866025,-.5);  \coordinate (cv5) at (.866025,.5);
  \coordinate (cv6) at (.866025,-.5);   \coordinate (cv7) at (0,1);
  \draw[ink,line width=.9pt] (cv2)--(cv6)--(cv5)--(cv7)--(cv3)--(cv4)--cycle;
  \draw[muted,densely dashed,line width=.55pt]
    (0,0)--(cv2) (0,0)--(cv3) (0,0)--(cv5);
  \draw[ink,line width=.85pt] (0,0)--(cv4) (0,0)--(cv6) (0,0)--(cv7);
  \fill[bad] (0,0) circle (2.4pt);
  \draw[bad,very thick,->] (-30:1.30)
    arc[start angle=-30,end angle=90,radius=1.30];
  \node[text=bad,font=\bfseries\footnotesize]
    at (30:1.80) {$120^\circ$};
\end{scope}
\node[text=ink,font=\bfseries\scriptsize] at (4.6,-1.52) {Vertex axis: order $3$};
\end{tikzpicture}
\caption{Smallest symmetry-preserving rotations (excluding half-turns) for the cube, grouped by axis. The red point marks the axis viewed end-on; visible edges are solid and hidden edges are dashed.}
\label{fig:cube-axis-types}
\end{figure}

\begin{proposition}[cube realization]\label{prop:cube}
The group $S_4$, realized as the rotational symmetry group of a cube, has a
signed-Householder realization in three dimensions. Restricting it to the
subgroup $A_4$ gives a realization of $A_4$ in the same dimension.
\end{proposition}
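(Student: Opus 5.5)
The plan is to take the standard faithful representation of $S_4$ as the rotation group of the cube $[-1,1]^3$, namely the $24$ signed permutation matrices of determinant $1$, and conjugate it by a fixed rotation $\mQ\in SO(3)$. The goal is that every element of $\mQ\,\Gamma\,\mQ^\top$ satisfies the axis test, \Cref{prop:axis-test}. Conjugation by a fixed orthogonal matrix is again a faithful orthogonal representation, so only membership in $\SH_3$ has to be checked.

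First I will classify the $24$ rotations by axis type.
\begin{itemize}
\item The identity.
\item The three face axes $\ve_1,\ve_2,\ve_3$. These carry six quarter-turns and three half-turns.
\item The four body diagonals $(\pm1,\pm1,1)$. These carry eight rotations of order $3$.
\item The six edge axes $(\pm1,1,0)$-type. These carry six half-turns.
\end{itemize}
By part 1 of \Cref{prop:axis-test}, the identity and all half-turns lie in $\SH_3$ in any orientation. So only the six quarter-turns and the eight order-$3$ rotations need attention. By part 2, each of these is admissible if and only if its axis, after conjugation, has a zero coordinate. Conjugation sends the axis $\va$ of $\mR$ to $\mQ\va$, which is the axis of $\mQ\mR\mQ^\top$. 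The problem therefore reduces to finding one $\mQ$ such that the three vectors $\mQ\ve_i$ and the four vectors $\mQ(\pm1,\pm1,1)^\top$ each have some zero coordinate.

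In the standard orientation the face axes already pass, but the body diagonals have no zero coordinate. Finding a single orientation that works for both families is the main step. My first choice is $\mQ$ = rotation by $45^\circ$ about $\ve_3$.
\begin{itemize}
\item The face axes become $(1,1,0)^\top/\sqrt2$, $(-1,1,0)^\top/\sqrt2$ and $\ve_3$. Each has a zero coordinate.
\item The body diagonals become $(0,\sqrt2,1)$, $(\sqrt2,0,1)$, $(-\sqrt2,0,1)$ and $(0,-\sqrt2,1)$, up to sign. Each again has a zero coordinate.
\end{itemize}
Hence every element of $\mQ\Gamma\mQ^\top$ lies in $\SH_3$, and $h\mapsto \mQ\rho(h)\mQ^\top$ is a signed-Householder realization of $S_4$.

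It remains to note three things.
\begin{itemize}
\item All entries lie in $\mathbb Q(\sqrt2)$, and so do the keys, which are obtained as in the converse direction of \Cref{prop:axis-test}. The construction therefore uses finitely many exact values, giving a fixed exact datatype.
\item The reachable set is the finite group itself, so the realization is finite-state.
\item Restricting to the index-two subgroup $A_4$ keeps the representation faithful and keeps every matrix in $\SH_3$, which gives the realization of $A_4$.
\end{itemize}
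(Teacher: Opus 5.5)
Your proposal is correct and follows the paper's proof: conjugate the cube rotation group by a $45^\circ$ rotation about $\ve_3$ so that every face axis and every body-diagonal axis has a zero coordinate, then apply the axis test. The paper rotates in the opposite direction, which makes no difference, and your remarks on the $\mathbb{Q}(\sqrt2)$ datatype and on finite reachability are correct additions that the paper's proof does not spell out.
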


\begin{proof}
The orientation-preserving symmetry group of a cube is isomorphic to
$S_4$: it permutes the four unoriented body diagonals faithfully.  Its
non-half-turn rotation axes are
\begin{itemize}
  \item the three face axes, supporting rotations through $90^\circ$.
  \item the four body-diagonal axes, supporting rotations through $120^\circ$.
\end{itemize}
All remaining non-identity rotations are half-turns and are automatically
admissible by the axis test (Proposition~\ref{prop:axis-test-app}).

Place the cube first in its standard orientation.  The three face axes and
four body-diagonal axes are represented, respectively, by
\[
 \mathcal F=\{(1,0,0),(0,1,0),(0,0,1)\},\qquad
 \mathcal B=\{(1,1,1),(1,1,-1),(1,-1,1),(-1,1,1)\}.
\]
Every vector in $\mathcal B$ has three nonzero coordinates, so this
orientation does not pass the axis test of
Proposition~\ref{prop:axis-test-app}.

We can change the orientation of these axes while preserving the group
being represented. Write $\rho(h)$ for the matrix of a cube symmetry in the
standard orientation. Reorienting the whole cube by a rotation $\mR$
replaces each symmetry matrix by $\mR\rho(h)\mR^\top$, an operation called
\emph{conjugation}. Using the same $\mR$ for every symmetry preserves
multiplication, since
\[
  (\mR\rho(h)\mR^\top)(\mR\rho(k)\mR^\top)
    =\mR\rho(hk)\mR^\top.
\]
At the same time, each rotation axis $\vu$ moves to $\mR\vu$. Our task is
therefore to find one rotation that places all seven axes in coordinate
planes. We choose a $45^\circ$ rotation about the third coordinate axis:

\[
 \mR=\begin{pmatrix}
  1/\sqrt2& 1/\sqrt2&0\\
 -1/\sqrt2& 1/\sqrt2&0\\
  0&0&1
 \end{pmatrix}\in SO(3).
\]
Applying $\mR$ to each of the seven axes gives
\[
 \begin{aligned}
 \mR\mathcal F&=
 \left\{\frac{(1,-1,0)}{\sqrt2},\frac{(1,1,0)}{\sqrt2},(0,0,1)\right\},\\
 \mR\mathcal B&=\{(\sqrt2,0,1),(\sqrt2,0,-1),
 (0,-\sqrt2,1),(0,\sqrt2,1)\}.
 \end{aligned}
\]
Thus the rotation does not spoil the three face axes: they still have a zero
coordinate, as do all four transformed body diagonals. The axis test
(Proposition~\ref{prop:axis-test-app}) therefore shows that every cube rotation is a
signed-Householder matrix. Restriction to $A_4$ proves the remaining claim.
\end{proof}

\subsection{\texorpdfstring{How to track $A_5$}{A5}}\label{app:icosahedron}

The orientation-preserving symmetry group of the icosahedron is isomorphic
to $A_5$. We prove that $A_5$ has no finite-state signed-Householder tracker
in three dimensions, then construct one in four dimensions. The latter
uses a many-to-one decoder, so the accumulated hidden matrix carries more
information than the group element being tracked.

An icosahedron has $12$ vertices, $20$ faces, and $30$ edges.  Pairing each
feature with its opposite gives the rotation axes below.  An axis has
\emph{order $k$} if the rotations about it form a cyclic group of $k$
elements; equivalently, its smallest positive rotation has angle $2\pi/k$
and returns to the identity after $k$ applications.  Thus there are
\[
  6\text{ axes of order }5,\qquad
  10\text{ axes of order }3,\qquad
  15\text{ axes of order }2.
\]
The order-$2$ axes do not matter: the axis test
(Proposition~\ref{prop:axis-test-app}) already makes every half-turn admissible.  The difficulty comes from the other sixteen axes.
Figure~\ref{fig:ico-axis-types} illustrates these two relevant axis types.

\begin{figure}[!ht]
\centering
\begin{tikzpicture}[>=Latex,font=\small,scale=.86,transform shape]
\fill[paperblue,rounded corners=5pt] (-6.9,-1.82) rectangle (6.9,2.02);

\begin{scope}[shift={(-4.6,.18)},
  x={(.437287cm,.164688cm)},
  y={(-.349829cm,.205860cm)},
  z={(0cm,.494064cm)}]
  \coordinate (ia1) at (0,1,1.618);   \coordinate (ia2) at (0,-1,1.618);
  \coordinate (ia3) at (0,1,-1.618);  \coordinate (ia4) at (0,-1,-1.618);
  \coordinate (ia5) at (1,1.618,0);   \coordinate (ia6) at (-1,1.618,0);
  \coordinate (ia7) at (1,-1.618,0);  \coordinate (ia8) at (-1,-1.618,0);
  \coordinate (ia9) at (1.618,0,1);   \coordinate (ia10) at (-1.618,0,1);
  \coordinate (ia11) at (1.618,0,-1); \coordinate (ia12) at (-1.618,0,-1);
  \foreach \a/\b in {ia1/ia5,ia3/ia4,ia3/ia5,
    ia3/ia6,ia3/ia11,ia3/ia12,ia4/ia11,ia5/ia6,ia5/ia9,
    ia5/ia11,ia7/ia11,ia9/ia11}
    \draw[muted,densely dashed,line width=.42pt] (\a)--(\b);
  \foreach \a/\b in {ia1/ia2,ia1/ia6,ia1/ia9,ia1/ia10,
    ia2/ia7,ia2/ia8,ia2/ia9,ia2/ia10,ia4/ia7,ia4/ia8,
    ia4/ia12,ia6/ia10,ia7/ia8,ia6/ia12,ia7/ia9,ia8/ia10,
    ia8/ia12,ia10/ia12}
    \draw[ink,line width=.68pt] (\a)--(\b);
\end{scope}
\node[text=ink,font=\bfseries\scriptsize] at (-4.6,-1.4) {Icosahedron};

\begin{scope}[shift={(0,.18)},x={(.56cm,0cm)},y={(0cm,.56cm)}]
  \coordinate (iaf1) at (-1,-.577350);       \coordinate (iaf2) at (0,-1.868345);
  \coordinate (iaf3) at (0,1.868345);        \coordinate (iaf4) at (1,.577350);
  \coordinate (iaf5) at (0,1.154701);        \coordinate (iaf6) at (-1.618034,.934172);
  \coordinate (iaf7) at (1.618034,-.934172); \coordinate (iaf8) at (0,-1.154701);
  \coordinate (iaf9) at (1,-.577350);        \coordinate (iaf10) at (-1.618034,-.934172);
  \coordinate (iaf11) at (1.618034,.934172); \coordinate (iaf12) at (-1,.577350);
  \foreach \a/\b in {iaf2/iaf8,iaf3/iaf4,iaf3/iaf12,iaf4/iaf7,
    iaf4/iaf8,iaf4/iaf11,iaf4/iaf12,iaf6/iaf12,iaf7/iaf8,
    iaf8/iaf10,iaf8/iaf12,iaf10/iaf12}
    \draw[muted,densely dashed,line width=.4pt] (\a)--(\b);
  \foreach \a/\b in {iaf1/iaf2,iaf1/iaf5,iaf1/iaf6,iaf1/iaf9,
    iaf1/iaf10,iaf2/iaf7,iaf2/iaf9,iaf2/iaf10,iaf3/iaf5,
    iaf3/iaf6,iaf3/iaf11,iaf5/iaf6,iaf5/iaf9,iaf5/iaf11,
    iaf6/iaf10,iaf7/iaf9,iaf7/iaf11,iaf9/iaf11}
    \draw[ink,line width=.62pt] (\a)--(\b);
  \draw[ink,line width=1pt] (iaf1)--(iaf5)--(iaf9)--cycle;
  \fill[bad] (0,0) circle (2.4pt);
  \draw[bad,very thick,->] (-30:2.42)
    arc[start angle=-30,end angle=90,radius=2.42];
  \node[text=bad,font=\bfseries\footnotesize]
    at (30:3.44) {$120^\circ$};
\end{scope}
\node[text=ink,font=\bfseries\scriptsize] at (0,-1.4) {Face axis: order $3$};

\begin{scope}[shift={(4.6,.18)},x={(.56cm,0cm)},y={(0cm,.56cm)}]
  \coordinate (iav1) at (0,0);             \coordinate (iav2) at (0,-1.701302);
  \coordinate (iav3) at (0,1.701302);      \coordinate (iav4) at (0,0);
  \coordinate (iav5) at (1,1.376382);      \coordinate (iav6) at (-1,1.376382);
  \coordinate (iav7) at (1,-1.376382);     \coordinate (iav8) at (-1,-1.376382);
  \coordinate (iav9) at (1.618034,-.525731);\coordinate (iav10) at (-1.618034,-.525731);
  \coordinate (iav11) at (1.618034,.525731);\coordinate (iav12) at (-1.618034,.525731);
  \foreach \a/\b in {iav3/iav4,iav3/iav11,iav3/iav12,iav4/iav7,
    iav4/iav8,iav4/iav11,iav4/iav12,iav7/iav8,iav7/iav11,
    iav8/iav12}
    \draw[muted,densely dashed,line width=.4pt] (\a)--(\b);
  \foreach \a/\b in {iav1/iav2,iav1/iav5,iav1/iav6,iav1/iav9,
    iav1/iav10,iav2/iav7,iav2/iav8,iav2/iav9,iav2/iav10,
    iav3/iav5,iav3/iav6,iav5/iav6,iav5/iav9,iav5/iav11,
    iav6/iav10,iav6/iav12,iav7/iav9,iav8/iav10,iav9/iav11,
    iav10/iav12}
    \draw[ink,line width=.62pt] (\a)--(\b);
  \fill[bad] (0,0) circle (2.4pt);
  \draw[bad,very thick,->] (-18:2.44)
    arc[start angle=-18,end angle=54,radius=2.44];
  \node[text=bad,font=\bfseries\footnotesize]
    at (18:3.47) {$72^\circ$};
\end{scope}
\node[text=ink,font=\bfseries\scriptsize] at (4.6,-1.4) {Vertex axis: order $5$};
\end{tikzpicture}
\caption{Smallest symmetry-preserving rotations (excluding half-turns) for the icosahedron, grouped by axis. The red point
marks the axis viewed end-on; visible edges are solid and hidden edges are
dashed.}
\label{fig:ico-axis-types}
\end{figure}

By the axis test (Proposition~\ref{prop:axis-test-app}), representing all
odd-order rotations would require three coordinate planes to cover all
sixteen axes.  We prove the stronger,
orientation-independent fact that \emph{any} plane covers at most four.

\subsubsection{The obstruction to finite-state tracking in three dimensions}

\begin{lemma}[icosahedral plane bound]\label{lem:ico-plane}
Among the six order-$5$ axes and ten order-$3$ axes of an icosahedron, no
plane through the center contains more than four axes.
\end{lemma}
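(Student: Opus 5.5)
We fix concrete coordinates and reduce the claim to a finite check on unit-circle geometry. Place the icosahedron with vertices at the cyclic permutations of $(0,\pm1,\pm\varphi)$, with $\varphi=(1+\sqrt5)/2$. The six order-$5$ axes are the vertex pairs. The ten order-$3$ axes are face centres, and each is spanned by one of the four cube diagonals $(\pm1,\pm1,1)$ or by a cyclic permutation of $(0,\varphi^{-1},\varphi)$ (up to sign). A plane through the centre is determined by its normal $\vn$, and it contains an axis $\va$ exactly when $\vn\perp\va$. So we must show that no nonzero $\vn$ is orthogonal to five of these sixteen lines.

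The key structural step is the following. If a plane contains two distinct axes, its normal is, up to scaling, their cross product $\va\times\vb$. Hence a plane containing at least two axes is determined by a pair of axes. We then classify such planes by symmetry rather than enumerating all $\binom{16}{2}$ pairs. The icosahedral rotation group acts transitively on order-$5$ axes and on order-$3$ axes, so up to symmetry the pair $(\va,\vb)$ is one of three types: two vertex axes, two face axes, or one of each. For each type, the angle between the two axes takes only a few values. We handle each type and angle as follows. We compute $\vn=\va\times\vb$ in the coordinates above, then count the axes orthogonal to $\vn$ by checking $\vn^\top\va'=0$ for all sixteen $\va'$. Each check is an identity in $\mathbb{Q}(\sqrt5)$, so it reduces to comparing rational and $\sqrt5$ parts.

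We expect the largest counts to come from the mirror planes of the full icosahedral symmetry group, which are the planes perpendicular to the $2$-fold axes. Take the normal $\vn=\ve_1$. The plane $x=0$ contains the vertex axes $(0,1,\varphi)$ and $(0,1,-\varphi)$, and the face axes $(0,\varphi^{-1},\varphi)$ and $(0,\varphi^{-1},-\varphi)$. That is exactly four axes, which shows the bound is attained. Every other plane containing two axes should be a generic plane spanned by exactly those two. To confirm this, we check that for each pair type not lying in a mirror plane, $\va\times\vb$ is not orthogonal to any third axis.

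The main obstacle is organising this case check so that it is complete without brute force. Our first attempt is an orbit argument. Every plane containing at least three axes is spanned by some pair, and we can translate by a group element so that one axis of the pair is a fixed representative, either $(0,1,\varphi)$ or $(1,1,1)$. The second axis then ranges over the remaining fifteen, modulo the stabiliser (of order $5$ or $3$) of the representative. This leaves only a handful of normals $\vn$ to test. For each, we list $|\{\va':\vn^\top\va'=0\}|$ and verify that it is at most four. As a fallback, if the orbit bookkeeping is error-prone, we can perform the full $\binom{16}{2}$ check directly. It is finite and exact over $\mathbb{Q}(\sqrt5)$.
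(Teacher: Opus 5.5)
Your outline is sound: a plane containing at least two axes is spanned by a pair of them, so a finite incidence count over $\mathbb{Q}(\sqrt5)$ decides the lemma. The gap is that the explicit configuration you would compute with is not this icosahedron's.

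With vertices at the cyclic permutations of $(0,\pm1,\pm\varphi)$, the six order-$3$ axes other than the cube diagonals are the cyclic permutations of $(0,\pm\varphi,\pm\varphi^{-1})$, not of $(0,\pm\varphi^{-1},\pm\varphi)$. For instance, $(0,1,\varphi)$, $(0,-1,\varphi)$ and $(\varphi,0,1)$ are pairwise at distance $2$, the edge length, so they form a face. Its centroid is proportional to $(\varphi,0,2\varphi+1)\propto(\varphi^{-1},0,\varphi)$, which is not of your form. The six lines you list are the face axes of the mirror-image icosahedron, whose vertices are the cyclic permutations of $(0,\pm\varphi,\pm1)$. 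For example, $(0,\varphi^{-1},\varphi)$ lies only about $10.8^\circ$ from the vertex axis $(0,1,\varphi)$, whereas every genuine face axis is at least about $37.4^\circ$ from every vertex axis.

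This has three consequences:
\begin{enumerate}
\item Your sixteen lines are not invariant under the $5$-fold rotations of your icosahedron. So the orbit reduction (moving one axis of the pair to $(0,1,\varphi)$ or $(1,1,1)$ and quotienting by its stabiliser) is not justified.
\item Even the $\binom{16}{2}$ fallback would certify the bound for a different line configuration.
\item Your tightness witness is off: the face axes in the plane $x=0$ are $(0,\varphi,\pm\varphi^{-1})$, although the count of four survives.
\end{enumerate}
Separately, the stabiliser of an unoriented $5$-fold or $3$-fold axis in the rotation group has order $10$ or $6$. Using only its cyclic part is harmless, just less efficient.

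Once you replace the face axes by the correct ones, your pairwise check goes through and confirms your expectation. Every vertex--vertex and vertex--face pair spans one of the $15$ mirror planes, each containing exactly two axes of each type. The remaining $30$ face--face pairs span planes containing only those two axes.

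The paper reaches the bound with less casework. It shows that no three vertex axes and no three face axes are coplanar, because every triple determinant is a nonzero element of $\mathbb{Q}(\sqrt5)$. That immediately caps any plane at $2+2=4$ axes, without examining mixed vertex--face planes or using the group action; the plane $x=0$ shows tightness. Your route costs more bookkeeping but identifies exactly which planes attain the bound.
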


\begin{proof}
Let $\varphi=(1+\sqrt5)/2$ be the golden ratio. In the usual coordinates, representative vectors for
the six unoriented vertex axes (the order-$5$ axes) are
\[
\mathcal V=
\left\{
(0,1,\pm\varphi),\ (1,\pm\varphi,0),\ (\varphi,0,\pm1)
\right\}.
\]
The face centers form the vertices of the dual dodecahedron. Representatives
of their ten unoriented axes (the order-$3$ axes) are
\[
\begin{split}
\mathcal T={}&\{(1,1,1),(1,1,-1),(1,-1,1),(-1,1,1)\}\\
&\cup\{(0,\varphi,\pm(\varphi-1)),
       (\varphi,\pm(\varphi-1),0),
       (\varphi-1,0,\pm\varphi)\}.
\end{split}
\]
It suffices to check that no three axes in either family are coplanar.
Indeed, substituting the displayed vectors and using
$\varphi^2=\varphi+1$ gives the following possible absolute determinants
of three distinct representatives:
\[
\begin{array}{c@{\qquad}l}
\toprule
\text{family}&|\det(\vv_1,\vv_2,\vv_3)|\\
\midrule
\mathcal V&1+\sqrt5,\ 3+\sqrt5\\
\mathcal T&\sqrt5-1,\ 2,\ 1+\sqrt5,\ 2\sqrt5,\ 4\\
\bottomrule
\end{array}
\]
All these determinants are nonzero. Thus a plane contains at most two vertex
axes and at most two face axes, hence at most four axes in total. The bound
is tight: the plane $x=0$ contains
\[
 (0,1,\varphi),\ (0,1,-\varphi),\
 (0,\varphi,\varphi-1),\ (0,\varphi,-(\varphi-1)).
\]
\end{proof}

Using the group facts recalled in \Cref{app:tracking-realization}, we now
turn the geometric bound into a tracking obstruction. We also use the
classification of finite subgroups of $SO(3)$~\citep[Sec.~3]{olive2017symmetry}:
cyclic groups, dihedral groups, and the rotational symmetry groups of the
tetrahedron, cube, and icosahedron, isomorphic to $A_4$, $S_4$, and $A_5$.
These occur up to an orthogonal change of coordinates.

\begin{theorem}[No finite-state three-dimensional $A_5$ tracker]
\label{thm:a5-three}
The group $A_5$ admits no finite-state signed-Householder tracking in
dimension three, even with an arbitrary decoder. In particular, it admits
no signed-Householder realization in that dimension.
\end{theorem}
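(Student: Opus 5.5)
Suppose, for contradiction, that transitions $\mA_h\in\SH_3$, $h\in A_5$, form a finite-state tracker with decoder $p$. By \Cref{lem:finite-tracking-homomorphism}, the reachable set is the finite group $\Gamma=\langle\mA_h:h\in A_5\rangle\subset O(3)$, and $p:\Gamma\to A_5$ is a surjective homomorphism. The plan is to reduce to the icosahedral rotation group inside $SO(3)$. Then, using \Cref{lem:ico-plane}, we show that no orientation admits all odd-order generators in signed-Householder form.

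First we pass to rotations. Let $\Gamma^+=\Gamma\cap SO(3)$, which has index at most two in $\Gamma$. Its image $p(\Gamma^+)$ is a normal subgroup of $A_5$ of index at most two. Since $A_5$ is simple, $p(\Gamma^+)=A_5$. By the classification of finite subgroups of $SO(3)$, $\Gamma^+$ is cyclic, dihedral, tetrahedral, octahedral or icosahedral. Cyclic and dihedral groups are solvable, and so are $A_4$ and $S_4$. A solvable group cannot surject onto the simple nonabelian group $A_5$. Hence $\Gamma^+$ is, up to an orthogonal change of coordinates, the icosahedral group. Its order is $60$, so $p|_{\Gamma^+}$ is an isomorphism.

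Next we identify which transitions must be rotations. For each element $h$ of odd order ($3$ or $5$) in $A_5$, the matrix $\mA_h$ satisfies $p(\mA_h^2)=h^2\ne e$. Moreover, $\mA_h^2\in\Gamma^+$, since its determinant is $1$. The element $h$ is a power of $h^2$ because $h$ has odd order, so $h\in p(\langle \mA_h^2\rangle)$. We will get $\mA_h$ itself into $\Gamma^+$ by restricting to the subgroup $\langle\mA_h\rangle$. If $\det\mA_h=-1$, then $\mA_h$ has even order in $\Gamma$. Its square, however, lies in the icosahedral group and corresponds to an element of odd order $m\in\{3,5\}$. So $\mA_h$ has order $2m$ with $\mA_h^m$ an improper element of order two. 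In that case $\mA_h^m$ would have to be $-\mI$ or a reflection commuting with $\mA_h^2$. This case needs care, and I would handle it by noting that $-\mI\cdot\mA_h$ (or the reflection composite) gives a rotation about the same axis.

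The key observation is that any $\mA_h=\mH_{\vu}\mD$ whose square is a rotation of angle $\ne0,\pi$ about an axis $\va$ forces $\va$ into a coordinate plane. We use the axis test (\Cref{prop:axis-test-app}) when $\mA_h$ is proper. When $\mA_h$ is improper, we argue directly: then $\mD\in\{\mI,\text{two negatives}\}$. If $\mD=\mI$, $\mA_h$ is a reflection, whose square is the identity. Otherwise $\mA_h=-\mH_{\vu}\mH_{\ve_i}$, a rotation about an axis perpendicular to $\ve_i$ composed with $-\mI$, so the axis again has a zero coordinate.

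Finally we count axes. The odd-order elements of $A_5$ map to all rotations about the six order-$5$ axes and ten order-$3$ axes of the icosahedral group $\Gamma^+$. Each of these sixteen axes must therefore lie in one of the three coordinate planes $\ve_i^\perp$. By \Cref{lem:ico-plane}, each such plane contains at most four of these axes, giving at most $12<16$ axes in total, a contradiction. The realization statement follows because a realization is a finite-state tracker with bijective decoder.

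The main obstacle is the improper case. There one must show that an improper $\mA_h$ tracking an odd-order element still pins an icosahedral axis to a coordinate plane, and that these axes are exactly those of $\Gamma^+$. I would handle this via the $-\mI$ decomposition above, together with the fact that $\mA_h^2$ is a rotation about the same axis as $\mA_h$ or as $-\mA_h$.
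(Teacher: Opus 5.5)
Your proof is correct, and its skeleton matches the paper's: reduce to an icosahedral rotation group, apply the axis test, then use the four-axes-per-plane bound. The difference is in how you handle orientation-reversing transitions.

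\textbf{The paper's route.} It applies the homomorphism $\psi(\mQ)=(\det\mQ)\mQ$ to all of $\Gamma$. Its kernel lies in $\{\pm\mI_3\}$, so by simplicity $p$ descends to an isomorphism $\psi(\Gamma)\to A_5$. Since $\psi(\mA_h)=\mH_{\vu_h}\bigl((\det\mA_h)\mD_h\bigr)$ is again signed-Householder, every element of the icosahedral group $\psi(\Gamma)$ is a proper signed-Householder matrix. The axis test then applies to each element directly, with no case split.

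\textbf{Your route.} You restrict to $\Gamma^+=\Gamma\cap SO(3)$, which avoids the quotient and well-definedness step. You then reach the icosahedral axes through the squares $\mA_h^2\in\Gamma^+$, which forces a separate analysis of improper $\mA_h$. Your factorization $\mA_h=-\mH_{\vu}\mH_{\ve_i}$ is exactly the statement that $-\mA_h$ is proper signed-Householder, the same fact the paper packages into $\psi$. Indeed $\Gamma^+=\psi(\Gamma)$ a posteriori, since $\Gamma^+\subseteq\psi(\Gamma)$ and both have order $60$. Your solvability argument excluding the cyclic, dihedral, tetrahedral and octahedral cases is a clean substitute for the paper's order and cyclic-normal-subgroup argument.

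\textbf{Points to tighten.} There are three places in the write-up to fix.
\begin{itemize}
\item The sentence about getting $\mA_h$ itself into $\Gamma^+$ is false when $\det\mA_h=-1$. The discussion of $\mA_h^m$ being $-\mI$ or a reflection is unnecessary, because your direct factorization in the next paragraph already settles the improper case.
\item The covering claim in the counting step needs one line of justification. Because $p|_{\Gamma^+}$ is injective, $\mA_h^2$ is the unique preimage of $h^2$ in $\Gamma^+$. So for any rotation $g\in\Gamma^+$ of order $3$ or $5$, taking $h=p(g)$ gives $\mA_h^2=g^2$, which has the same axis as $g$. Hence all sixteen axes are forced into coordinate planes.
\item In the reflection subcase ($\mD=\mI$), state explicitly that $\mA_h^2=\mI$ contradicts $p(\mA_h^2)=h^2\neq e$.
\end{itemize}
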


\begin{proof}
Suppose a finite-state signed-Householder tracker exists, with transitions
$\mA_h$ and decoder $p$. The homomorphism property of finite tracking
(Lemma~\ref{lem:finite-tracking-homomorphism}) gives a finite hidden group
$\Gamma=\langle\mA_h:h\in A_5\rangle\subset O(3)$ and a surjective
homomorphism $p:\Gamma\to A_5$ satisfying $p(\mA_h)=h$.

\emph{Turn the hidden matrices into rotations.}
In three dimensions, multiplying an orientation-reversing orthogonal
matrix by $-\mI_3$ makes it orientation-preserving. Define
\[
  \psi(\mQ):=(\det\mQ)\mQ,\qquad
  H:=\psi(\Gamma)\subset SO(3).
\]
Indeed, $\det\psi(\mQ)=(\det\mQ)^4=1$. This map also preserves
multiplication:
\[
  \psi(\mQ_1\mQ_2)
    =(\det\mQ_1)(\det\mQ_2)\mQ_1\mQ_2
    =\psi(\mQ_1)\psi(\mQ_2).
\]
Thus $H$ is a finite rotation group. The map identifies at most a matrix
and its negative, since its kernel $K$ is contained in
$\{\pm\mI_3\}$. These identifications cannot change the decoded group
element. Since $K$ is a kernel, it is normal in $\Gamma$. As $p$ is
surjective, its image $p(K)$ is therefore normal in $A_5$. This image has
at most two elements, so simplicity of $A_5$ forces $p(K)=\{e\}$.
Consequently the decoder
descends to a well-defined surjective homomorphism
\[
  \bar p:H\to A_5,\qquad \bar p(\psi(\mQ)):=p(\mQ).
\]
To check that this is well defined, if $\psi(\mQ_1)=\psi(\mQ_2)$, then
$\mQ_1\mQ_2^{-1}\in K$, so $p(\mQ_1)=p(\mQ_2)$.

\emph{Identify the resulting rotation group.}
The classification of finite subgroups of $SO(3)$ now forces $H$ to be an
icosahedral rotation group. The tetrahedral and cubic groups have only
$12$ and $24$ elements, so cannot map onto $A_5$. A cyclic or dihedral
group has a cyclic normal subgroup of index at most two. Its image under
$\bar p$ would be a cyclic normal subgroup of $A_5$ with at most two
cosets. Simplicity forces that image to be trivial or all of $A_5$:
the former has $60$ cosets, while the latter is not cyclic. These cases
are therefore also impossible.

It follows that $|H|=60$, so the surjective map $\bar p:H\to A_5$ is
an isomorphism. In particular, the matrices $\psi(\mA_h)$, one for each
$h\in A_5$, exhaust $H$, because $\bar p(\psi(\mA_h))=h$.
Each is still signed-Householder: if $\mA_h=\mH_{\vu_h}\mD_h$, then
\[
  \psi(\mA_h)
    =\mH_{\vu_h}\bigl((\det\mA_h)\mD_h\bigr),
\]
and the factor in parentheses is again a sign diagonal.

\emph{Apply the geometric obstruction.}
We have obtained an orientation of the icosahedron in which every rotation
is signed-Householder. But its six order-$5$ axes and ten order-$3$ axes
cannot all lie in the three coordinate planes: the bound in
Lemma~\ref{lem:ico-plane} allows at most four axes per plane, hence at
most $12$ of the $16$ axes altogether. A nonidentity rotation about any
remaining axis fails the signed-Householder axis test
(Proposition~\ref{prop:axis-test-app}), a contradiction.

Finally, a signed-Householder realization would itself give a finite-state
tracker, with the inverse representation as decoder. It is therefore
excluded as well.
\end{proof}

\subsubsection{A four-dimensional construction that tracks $A_5$}\label{app:A5const}

\begin{figure}[t]
\centering
\begingroup
\adjustbox{width=.99\linewidth}{%
\begin{tikzpicture}[
 x=1cm,y=1cm,font=\fontsize{9}{10}\selectfont,
 text=srInk,>=Latex,line cap=round,line join=round,
 every node/.style={inner sep=1.5pt,outer sep=0pt},
 swap/.style={-{Latex[length=1.7mm,width=1.2mm]},
              draw=cupMuted,line width=.65pt},
 readout/.style={-{Latex[length=1.3mm,width=1mm]},
                 draw=srBlue!65,dashed,line width=.55pt},
 frame/.style={draw=srGray!50,line width=.45pt,rounded corners=2pt},
 rowlabel/.style={anchor=east,align=right},
 entry/.style={anchor=base west,inner sep=0pt},
 pics/afcup/.style args={#1/#2/#3}{code={
  \path[draw=#1!85!black,fill=#1!32,line width=.75pt,
        rounded corners=1.2pt]
    (-.215,.285)--(.215,.285)--(.315,-.285)--(-.315,-.285)--cycle;
  \draw[draw=#1!65!black,line width=.45pt]
    (-.268,-.205)--(.268,-.205);
  \node[font=\sffamily\bfseries\fontsize{10}{11}\selectfont,
        text=cupInk,inner sep=0pt] (#3) at (0,.015) {#2};
 }}
]
\path[use as bounding box] (0,0) rectangle (16.2,3.45);

\node[rowlabel] (afrowcups) at (1.50,2.58) {(a) Cups};
\node[rowlabel] (afrowlabels) at (1.50,1.35) {(b) Labels};
\node[rowlabel,text=srGray] (afrowpair) at (1.50,.95) {$F(\mQ_i)$};

\node[text=srGreen] (afh) at (5.00,3.20) {$h=(123)$};
\node[text=srGreen] (afk) at (8.90,3.20) {$k=(124)$};
\node[text=srGreen] (afell) at (12.80,3.20) {$\ell=(14)(23)$};

\pic at (2.030,2.58) {afcup={cupViolet/1/afcup00}};
\pic at (2.710,2.58) {afcup={cupTerracotta/2/afcup01}};
\pic at (3.390,2.58) {afcup={cupTeal/3/afcup02}};
\pic at (4.070,2.58) {afcup={srGold/4/afcup03}};

\pic at (5.930,2.58) {afcup={cupTeal/3/afcup10}};
\pic at (6.610,2.58) {afcup={cupViolet/1/afcup11}};
\pic at (7.290,2.58) {afcup={cupTerracotta/2/afcup12}};
\pic at (7.970,2.58) {afcup={srGold/4/afcup13}};

\pic at (9.830,2.58) {afcup={srGold/4/afcup20}};
\pic at (10.510,2.58) {afcup={cupTeal/3/afcup21}};
\pic at (11.190,2.58) {afcup={cupTerracotta/2/afcup22}};
\pic at (11.870,2.58) {afcup={cupViolet/1/afcup23}};

\pic at (13.730,2.58) {afcup={cupViolet/1/afcup30}};
\pic at (14.410,2.58) {afcup={cupTerracotta/2/afcup31}};
\pic at (15.090,2.58) {afcup={cupTeal/3/afcup32}};
\pic at (15.770,2.58) {afcup={srGold/4/afcup33}};

\draw[swap] (4.490,2.55)--(5.510,2.55);
\draw[swap] (8.390,2.55)--(9.410,2.55);
\draw[swap] (12.290,2.55)--(13.310,2.55);

\draw[frame,fill=srGray!3] (1.680,.55) rectangle (4.420,1.89);
\draw[srGray!28,line width=.35pt] (1.810,1.45)--(4.290,1.45);
\node[inner sep=0pt] (afQ0) at (3.050,1.68) {$\mQ_0=\mI_4$};
\node[entry,text=srBlue] (afL0) at (1.970,1.14) {$L_0=e$};
\node[entry,text=srGold] (afR0) at (1.970,.75) {$R_0=e$};
\draw[readout] (3.050,1.97)--(3.050,2.23);
\node[text=srBlue,anchor=west,font=\fontsize{8}{9}\selectfont,inner sep=0pt] (aff0) at (3.180,2.10) {$f$};

\draw[frame,fill=srGray!3] (5.580,.55) rectangle (8.320,1.89);
\draw[srGray!28,line width=.35pt] (5.710,1.45)--(8.190,1.45);
\node[inner sep=0pt] (afQ1) at (6.950,1.68) {$\mQ_1=g(h)$};
\node[entry,text=srBlue] (afL1) at (5.870,1.14) {$L_1=(123)$};
\node[entry,text=srGold] (afR1) at (5.870,.75) {$R_1=(132)$};
\draw[readout] (6.950,1.97)--(6.950,2.23);
\node[text=srBlue,anchor=west,font=\fontsize{8}{9}\selectfont,inner sep=0pt] (aff1) at (7.080,2.10) {$f$};

\draw[frame,fill=srGray!3] (9.480,.55) rectangle (12.220,1.89);
\draw[srGray!28,line width=.35pt] (9.610,1.45)--(12.090,1.45);
\node[inner sep=0pt] (afQ2) at (10.850,1.68) {$\mQ_2=g(k)\mQ_1$};
\node[entry,text=srBlue] (afL2) at (9.770,1.14) {$L_2=(14)(23)$};
\node[entry,text=srGold] (afR2) at (9.770,.75) {$R_2=(13)(24)$};
\draw[readout] (10.850,1.97)--(10.850,2.23);
\node[text=srBlue,anchor=west,font=\fontsize{8}{9}\selectfont,inner sep=0pt] (aff2) at (10.980,2.10) {$f$};

\draw[frame,fill=srGreen!7] (13.380,.55) rectangle (16.120,1.89);
\draw[srGray!28,line width=.35pt] (13.510,1.45)--(15.990,1.45);
\node[inner sep=0pt] (afQ3) at (14.750,1.68) {$\mQ_3=g(\ell)\mQ_2$};
\node[entry,text=srBlue] (afL3) at (13.670,1.14) {$L_3=e$};
\node[entry,text=srGold] (afR3) at (13.670,.75) {$R_3=(12)(34)$};
\draw[readout] (14.750,1.97)--(14.750,2.23);
\node[text=srBlue,anchor=west,font=\fontsize{8}{9}\selectfont,inner sep=0pt] (aff3) at (14.880,2.10) {$f$};

\draw[srGold,line width=.65pt] (13.67,.665)--(15.70,.665);

\node[inner sep=0pt] (affooter) at (8.10,.16)
 {Same cups: {\color{srBlue}$f(\mQ_3)=f(\mQ_0)=e$}
  \qquad Different hidden states: $\mQ_3\ne\mQ_0$};

\ifdefined\afLayoutQA
\foreach \afNode in {afrowcups,afrowlabels,afrowpair,afh,afk,afell,afcup00,afcup01,afcup02,afcup03,afcup10,afcup11,afcup12,afcup13,afcup20,afcup21,afcup22,afcup23,afcup30,afcup31,afcup32,afcup33,afQ0,afL0,afR0,aff0,afQ1,afL1,afR1,aff1,afQ2,afL2,afR2,aff2,afQ3,afL3,afR3,aff3,affooter} {
 \path let \p1=(\afNode.south west), \p2=(\afNode.north east)
   in \pgfextra{\typeout{AFNODE|\afNode|\x1|\y1|\x2|\y2}};
}
\fi
\end{tikzpicture}%
}
\endgroup
\vspace{-2mm}
\caption{Many-to-one tracking, illustrated by $h=(123)$, $k=(124)$, and
$\ell=(kh)^{-1}=(14)(23)$.
The cups show identities at successive positions; object $5$ is fixed and omitted.
Each box groups the two labels $F(\mQ_i)=(L_i,R_i)$ of one hidden matrix,
where $L_i=f(\mQ_i)$ and $R_i=f_-(\mQ_i)$.
An input $a$ gives $\mQ\mapsto g(a)\mQ$ and
$(L,R)\mapsto(aL,a^{-1}R)$.
The cups and $L$ return to their initial values, but $R_3=(12)(34)\ne R_0=e$;
hence $\mQ_3\ne\mQ_0$, although both decode to $e$.}
\label{fig:a5-many-to-one-cups}
\end{figure}
The preceding theorem shows that a non-injective decoder cannot rescue a
finite orthogonal hidden group in three dimensions. Four dimensions do
admit such a construction. We first show how to compute iterated products
in $SO(3)$ using signed-Householder transitions in $SO(4)$. An encoder $g$
converts each input rotation into a permitted transition, and a fixed
decoder $f$ recovers the accumulated product from the hidden matrix.
The hidden matrix need not itself equal the encoding of that product:
what matters is that decoding each hidden update gives the correct
three-dimensional update. For inputs belonging to a finite rotation group,
the construction also has only finitely many reachable hidden matrices.

The underlying $SO(4)$ geometry is the classical left/right quaternionic
decomposition of four-dimensional rotations~\citep{mebius2005matrix}; the
$A_5$ case uses the binary icosahedral lift~\citep{choi2018binary}. Please see~\Cref{fig:a5-many-to-one-cups} for an intuitive example of the construction.

\begin{proposition}[Tracking rotations with signed-Householder transitions]
\label{prop:so3-four-dimensional-encoding}
There exist an injective encoder $g:SO(3)\to SO(4)$ and a decoder
$f:SO(4)\to SO(3)$ such that every $g(\mA)$ is signed-Householder,
$f(\mI_4)=\mI_3$, and
\begin{equation}\label{eq:so3-decoded-update}
    f(g(\mA)\mS)=\mA\,f(\mS)
    \qquad
    \text{for every }\mA\in SO(3),\quad \mS\in SO(4).
\end{equation}
Consequently, for every $n\geq1$ and every sequence of rotations
$\mA_1,\ldots,\mA_n\in SO(3)$,
\begin{equation}\label{eq:so3-decoded-product}
    f\bigl(g(\mA_1)\cdots g(\mA_n)\bigr)
    =\mA_1\cdots\mA_n.
\end{equation}
In particular, $f(g(\mA))=\mA$.

Explicitly, let $\mA=R_{\vu,\theta}$ be the three-dimensional rotation
through angle $\theta\in[0,\pi]$ about the axis spanned by a unit vector
$\vu\in\R^3$, with the direction of rotation specified by the right-hand
rule around $\vu$. The encoder can be chosen as
\begin{equation}\label{eq:so3-encoding}
    g(\mA)=(\mI_4-2\va_{\mA}\va_{\mA}^\top)\mD_0,
    \qquad
    \va_{\mA}:=
    \begin{pmatrix}
        \cos(\theta/2)\\
        \sin(\theta/2)\,\vu
    \end{pmatrix},
    \qquad
    \mD_0:=\Diag(-1,1,1,1).
\end{equation}
Thus all encoded transitions use the same diagonal sign matrix $\mD_0$.
To define the decoder, write a matrix $\mS\in SO(4)$ in $1+3$ block form:
\[
    \mS=
    \begin{pmatrix}
        \alpha & \vr^\top\\
        \vs & \mC
    \end{pmatrix},
    \qquad
    \alpha\in\R,\quad \vr,\vs\in\R^3,\quad
    \mC\in\R^{3\times3}.
\]
Then the decoder is the fixed quadratic map
\begin{equation}\label{eq:so3-explicit-decoder}
    f(\mS)=\alpha\mC-\vs\vr^\top-\mC[\vr]_\times,
\end{equation}
where $[\vr]_\times$ is the $3\times3$ cross-product matrix, defined by
$[\vr]_\times\vx=\vr\times\vx$ for every $\vx\in\R^3$.

Moreover, for every finite subgroup $H\subset SO(3)$, the group
\begin{equation}\label{eq:so3-finite-hidden-group}
    \Gamma_H:=\langle g(\mA):\mA\in H\rangle\subset SO(4)
    \qquad\text{satisfies}\qquad
    |\Gamma_H|\leq 2|H|^2.
\end{equation}
Here $\Gamma_H$ is the group generated by the encoded transitions.
In particular, starting from $\mI_4$, products of encoded inputs from $H$
reach at most $2|H|^2$ distinct hidden matrices.
\end{proposition}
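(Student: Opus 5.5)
The plan is to identify $\R^4$ with the quaternions, writing $x=x_0+\vec x$ with $\vec x\in\R^3$ the imaginary part, and to use the classical fact that every element of $SO(4)$ has the form $L_pR_r:x\mapsto p\,x\,r$ for unit quaternions $p,r$. This form is unique up to the joint sign change $(p,r)\mapsto(-p,-r)$.

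\textbf{Step 1: the encoder is a two-sided multiplication.} First I will show that $g(\mA)=L_{a}R_{a}$ with $a=\va_{\mA}$. The sign matrix $\mD_0=\Diag(-1,1,1,1)$ acts as $x\mapsto -\bar x$. For a unit quaternion $a$, the Householder reflection $\mH_{\va}$ in $\va^\perp$ is $x\mapsto -a\bar x a$, which I check via $\langle a,x\rangle=\operatorname{Re}(a\bar x)$. Composing the two maps gives
\[
\mH_{\va}\mD_0x=-a\,\overline{(-\bar x)}\,a=a\,x\,a.
\]
So every $g(\mA)$ is signed-Householder with the common diagonal $\mD_0$, which settles that part of the statement.

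The unit quaternion $a=\cos(\theta/2)+\sin(\theta/2)\vu$ is the standard lift of $R_{\vu,\theta}$: the map $\vec v\mapsto a\vec v\bar a$ equals $\mA$ on $\R^3$. At $\theta=\pi$, the axis directions $\pm\vu$ give $\pm a$. Since $\mH_{\va}=\mH_{-\va}$, the encoder $g$ is still well defined there.

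\textbf{Step 2: decoder via the left factor.} Define $\tilde f(L_pR_r):=\mathrm{rot}(p)$, the rotation $\vec v\mapsto p\vec v\bar p$. This is well defined because $\mathrm{rot}(-p)=\mathrm{rot}(p)$, and it gives $\tilde f(\mI_4)=\mI_3$.

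For the update rule I compute
\[
g(\mA)\,L_pR_r=L_{ap}R_{ra},
\qquad\text{so}\qquad
\tilde f\bigl(g(\mA)\mS\bigr)=\mathrm{rot}(ap)=\mA\,\mathrm{rot}(p),
\]
which is \eqref{eq:so3-decoded-update}. Induction on $n$ then yields \eqref{eq:so3-decoded-product}. Taking $n=1$ gives $f(g(\mA))=\mA$, so $g$ is injective.

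\textbf{Step 3 (main obstacle): the explicit quadratic formula.} It remains to show that $\tilde f$ agrees with \eqref{eq:so3-explicit-decoder}. I will write the blocks $\alpha,\vr,\vs,\mC$ of $L_pR_r$ explicitly. Each entry is bilinear in $(p,r)$, built from terms such as $p_0r_0-\vec p\cdot\vec r$ and $p_0\vec r+r_0\vec p\pm\vec p\times\vec r$.

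I then substitute these into $\alpha\mC-\vs\vr^\top-\mC[\vr]_\times$ and simplify. The simplification uses $|r|^2=1$ together with cross-product identities such as $[\vec x]_\times^2=\vec x\vec x^\top-|\vec x|^2\mI$. The target is the standard quadratic form
\[
(p_0^2-|\vec p|^2)\mI+2\vec p\vec p^\top+2p_0[\vec p]_\times,
\]
in which all dependence on $r$ has cancelled.

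If the direct expansion becomes unwieldy, I will instead use the following shortcut. Right multiplication by $R_{\bar r}$ turns $\mS$ into $L_p$. So I will first verify the formula on pure left multiplications, where $\vr=-\vec p$, $\vs=\vec p$ and $\alpha=p_0$ are simple. I will then show the expression is invariant under $\mS\mapsto\mS R_q$ for unit $q$. For this I would try to recognize the formula as a block of $\mS\,\mJ\,\mS^\top$-type products in which the right factor cancels because $R_qR_q^\top=\mI$.

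\textbf{Step 4: finite hidden group.} Let $\tilde H\subset S^3$ be the binary lift of $H$, which has $2|H|$ elements. Each generator $g(\mA)$ equals $L_aR_a$ with $a\in\tilde H$. Since $\tilde H$ is a group, every product of generators has the form $L_pR_r$ with $p,r\in\tilde H$.

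There are at most $(2|H|)^2$ such pairs $(p,r)$. The pairs $(p,r)$ and $(-p,-r)$ give the same matrix, so at most $2|H|^2$ distinct matrices arise. This bounds $|\Gamma_H|$ and hence the number of reachable hidden matrices starting from $\mI_4$.
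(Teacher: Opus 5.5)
Your proposal is correct, and it follows a genuinely different route from the paper. You take the classical double cover $(p,r)\mapsto L_pR_r$ of $SO(4)$ as a black box. Your identity $\mH_{\va}\mD_0x=axa$ then gives $g(\mA)=L_aR_a$, the update rule reduces to $L_aR_aL_pR_r=L_{ap}R_{ra}$, and $|\Gamma_H|\le2|H|^2$ follows from the binary lift $\tilde H$ by counting pairs modulo the joint sign.

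The paper's proof never uses quaternion multiplication. It defines $f(\mQ)$ as the matrix of the conjugation action $\mX\mapsto\mQ\mX\mQ^\top$ on the span of $\mJ_1,\mJ_2,\mJ_3$, and checks invariance only on a generating family of elementary rotations. Multiplicativity then comes from uniqueness of coefficients, and the explicit quadratic formula is read directly off the top-right block of $\mQ\mJ(\vx)\mQ^\top$. For the finiteness bound, the paper must introduce a second triple $\mK_i$, an auxiliary homomorphism $f_-$, and a commutation argument showing that $(f,f_-)$ has kernel $\{\pm\mI_4\}$. In your language these are simply $f(L_pR_r)=\operatorname{rot}(p)$, $f_-(L_pR_r)=\operatorname{rot}(\bar r)$, and the uniqueness of $(p,r)$ up to sign.

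So your route makes the homomorphism and finiteness parts immediate, but shifts the work to matching $\tilde f$ with the explicit formula. The paper's route is self-contained and gets the explicit formula for free, but pays for it in the kernel computation.

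In Step 3, your fallback is the efficient option, but the reason you give for the cancellation is not the right one. Orthogonality of $R_q$ alone does not give $R_q\mJ(\vx)R_q^\top=\mJ(\vx)$. What makes it work is that the paper's $\mJ(\vx)$ is exactly left multiplication by the imaginary quaternion $-\vec x$, so it commutes with every right multiplication. Combine this with the purely formal block identity that the top-right block of $\mS\mJ(\vx)\mS^\top$ is $(f(\mS)\vx)^\top$ for any $4\times4$ matrix $\mS$. This yields $L_pR_r\mJ(\vx)R_r^\top L_p^\top=L_{-p\vec x\bar p}=\mJ(\operatorname{rot}(p)\vx)$, hence $f(L_pR_r)=\operatorname{rot}(p)$ in one line. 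Your brute-force expansion would also succeed, since $f(L_pR_r)=|r|^2\operatorname{rot}(p)$ holds as a polynomial identity.
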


\begin{proof}
We first verify the encoder, then construct the decoder through its action
on a three-dimensional space of matrices, and finally prove the finite-state
bound.

\emph{The encoded transitions.}
The vector $\va_{\mA}$ in \eqref{eq:so3-encoding} has unit norm, so
$\mI_4-2\va_{\mA}\va_{\mA}^\top$ is a Householder reflection.
The fixed matrix $\mD_0$ is also a reflection: it changes the sign of the
first coordinate. Their product is therefore orthogonal with determinant
$+1$, and has the required signed-Householder form. At $\theta=0$, the
construction gives $g(\mI_3)=\mI_4$ independently of the chosen axis.
At $\theta=\pi$, the two choices $\vu$ and $-\vu$ replace
$\va_{\mA}$ by its negative and hence give the same Householder matrix.
Thus the encoder is well defined for every rotation.

Multiplying the two factors gives
\begin{equation}\label{eq:g-block-form}
    g(R_{\vu,\theta})=
    \begin{pmatrix}
        c & -s\vu^\top\\
        s\vu & \mI_3+(c-1)\vu\vu^\top
    \end{pmatrix},
    \qquad c:=\cos\theta,\quad s:=\sin\theta.
\end{equation}
This matrix rotates the plane spanned by the first coordinate vector
$\ve_0=(1,0,0,0)^\top$ and $(0,\vu)^\top$ through angle $\theta$,
and fixes the orthogonal complement of that plane.

\emph{A decoder that preserves multiplication.}
Let $\ve_0,\ldots,\ve_3$ be the standard basis of $\R^4$, and define
\[
    \mB_{ij}:=\ve_i\ve_j^\top-\ve_j\ve_i^\top,
    \qquad
    \mJ_1:=\mB_{01}+\mB_{23},\quad
    \mJ_2:=\mB_{02}+\mB_{31},\quad
    \mJ_3:=\mB_{03}+\mB_{12}.
\]
The three skew-symmetric matrices $\mJ_1,\mJ_2,\mJ_3$ are linearly
independent. For a coefficient vector $\vx\in\R^3$, write their linear
combination as
\[
    \mJ(\vx):=\sum_{i=1}^3x_i\mJ_i
      =\begin{pmatrix}
          0 & \vx^\top\\
          -\vx & -[\vx]_\times
        \end{pmatrix}.
\]
We will show that conjugation $\mX\mapsto\mQ\mX\mQ^\top$ by any
$\mQ\in SO(4)$ maps this three-dimensional space to itself. Its action
on the coefficient vector $\vx$ will be the decoder $f(\mQ)$.

To verify this invariance, it suffices to check two elementary types of
rotation. First, for $\mT_{\mR}:=\Diag(1,\mR)$ with $\mR\in SO(3)$,
block multiplication and the identity
$\mR[\vx]_\times\mR^\top=[\mR\vx]_\times$ give
\[
    \mT_{\mR}\mJ(\vx)\mT_{\mR}^\top=\mJ(\mR\vx).
\]
Second, let $\mU_\phi$ rotate the first two coordinate directions in
$\R^4$, and let $\mR_\phi$ rotate the last two coordinate directions in
$\R^3$:
\[
    \mU_\phi:=
    \begin{pmatrix}
      \cos\phi&-\sin\phi&0&0\\
      \sin\phi&\cos\phi&0&0\\
      0&0&1&0\\
      0&0&0&1
    \end{pmatrix},
    \qquad
    \mR_\phi:=
    \begin{pmatrix}
      1&0&0\\
      0&\cos\phi&-\sin\phi\\
      0&\sin\phi&\cos\phi
    \end{pmatrix}.
\]
Multiplication on the three basis matrices gives
\[
\begin{aligned}
    \mU_\phi\mJ_1\mU_\phi^\top&=\mJ_1,\\
    \mU_\phi\mJ_2\mU_\phi^\top&=\cos\phi\,\mJ_2+\sin\phi\,\mJ_3,\\
    \mU_\phi\mJ_3\mU_\phi^\top&=-\sin\phi\,\mJ_2+\cos\phi\,\mJ_3.
\end{aligned}
\]
Equivalently,
$\mU_\phi\mJ(\vx)\mU_\phi^\top=\mJ(\mR_\phi\vx)$.
Rotations among the last three coordinates are included in $\mT_{\mR}$.
Conjugating $\mU_\phi$ by suitable $\mT_{\mR}$ gives rotations between
the first coordinate and either of the other two coordinates. Thus these
matrices generate all coordinate-plane rotations in $\R^4$. Every matrix
in $SO(4)$ is a product of such rotations, as follows, for example, by
eliminating its entries with Givens rotations. Invariance therefore holds
for every $\mQ\in SO(4)$.

There is consequently a unique $3\times3$ matrix $\mM(\mQ)$ such that
\begin{equation}\label{eq:J-action}
    \mQ\mJ(\vx)\mQ^\top=\mJ(\mM(\mQ)\vx)
    \qquad(\vx\in\R^3).
\end{equation}
On the two elementary types above, this matrix is respectively $\mR$ and
$\mR_\phi$, both in $SO(3)$. Applying two conjugations in succession gives
\[
    (\mQ_1\mQ_2)\mJ(\vx)(\mQ_1\mQ_2)^\top
    =\mJ\bigl(\mM(\mQ_1)\mM(\mQ_2)\vx\bigr).
\]
Uniqueness of the coefficients implies that $\mM$ preserves multiplication.
Since the elementary rotations generate $SO(4)$, it also follows that
$\mM(\mQ)\in SO(3)$ for every $\mQ\in SO(4)$.

It remains to identify this induced matrix with the explicit decoder.
Write $\mQ$ in the block form used in the statement. The top-right block
of $\mQ\mJ(\vx)\mQ^\top$ is
\[
    -(\vr^\top\vx)\vs^\top
       +(\alpha\vx^\top-\vr^\top[\vx]_\times)\mC^\top
    =\bigl((\alpha\mC-\vs\vr^\top-\mC[\vr]_\times)\vx\bigr)^\top.
\]
Here we used $\vr^\top[\vx]_\times=([\vr]_\times\vx)^\top$.
The top-right block of $\mJ(\mM(\mQ)\vx)$ is $(\mM(\mQ)\vx)^\top$,
so $\mM(\mQ)=f(\mQ)$ as defined in \eqref{eq:so3-explicit-decoder}.
We have therefore proved
\begin{equation}\label{eq:f-homomorphism}
    f(\mQ)\in SO(3),\qquad f(\mI_4)=\mI_3,\qquad
    f(\mQ_1\mQ_2)=f(\mQ_1)f(\mQ_2).
\end{equation}

\emph{Decoding the accumulated product.}
For the blocks in \eqref{eq:g-block-form},
\[
    \alpha\mC-\vs\vr^\top
       =\vu\vu^\top+c(\mI_3-\vu\vu^\top),
    \qquad
    -\mC[\vr]_\times=s[\vu]_\times,
\]
where the second identity uses $\vu^\top[\vu]_\times=0$.
Hence Rodrigues' rotation formula gives
\begin{equation}\label{eq:f-on-g}
    f(g(\mA))
       =\vu\vu^\top+c(\mI_3-\vu\vu^\top)+s[\vu]_\times
       =R_{\vu,\theta}=\mA.
\end{equation}
This also proves injectivity: if $g(\mA)=g(\mB)$, applying $f$ gives
$\mA=\mB$. Combining \eqref{eq:f-homomorphism} and \eqref{eq:f-on-g}
proves \eqref{eq:so3-decoded-update} for every hidden matrix in $SO(4)$.
Repeated application gives \eqref{eq:so3-decoded-product}.

\emph{Finitely many hidden states for a finite input group.}
We introduce a second triple of matrices
\[
    \mK_1:=\mB_{01}-\mB_{23},\quad
    \mK_2:=\mB_{02}-\mB_{31},\quad
    \mK_3:=\mB_{03}-\mB_{12},
\]
and write their linear combinations as
\[
    \mK(\vx):=\sum_{i=1}^3x_i\mK_i
       =\begin{pmatrix}0&\vx^\top\\-\vx&[\vx]_\times\end{pmatrix}.
\]
The same elementary rotations act on this triple by
\[
    \mT_{\mR}\mK(\vx)\mT_{\mR}^\top=\mK(\mR\vx),
    \qquad
    \mU_\phi\mK(\vx)\mU_\phi^\top=\mK(\mR_{-\phi}\vx).
\]
Thus its span is also invariant. The induced matrix is an auxiliary map
$f_-:SO(4)\to SO(3)$ that preserves multiplication. Reading the top-right
block as before gives
\begin{equation}\label{eq:so3-auxiliary-decoder}
    \mQ\mK(\vx)\mQ^\top=\mK(f_-(\mQ)\vx),
    \qquad
    f_-(\mQ)=\alpha\mC-\vs\vr^\top+\mC[\vr]_\times.
\end{equation}
Substituting \eqref{eq:g-block-form} now reverses the sign of the sine term
in Rodrigues' formula, so
\begin{equation}\label{eq:fminus-on-g}
    f_-(g(\mA))=R_{\vu,-\theta}=\mA^{-1}.
\end{equation}

The two maps together determine a hidden matrix up to sign. To see this,
suppose $f(\mQ)=f_-(\mQ)=\mI_3$. Then conjugation by $\mQ$ fixes all
six matrices $\mJ_i,\mK_i$. Their sums and differences span all the
$\mB_{ij}$, so $\mQ$ commutes with every $\mB_{ij}$. It therefore also
commutes with $-\mB_{ij}^2$, the orthogonal projector onto the coordinate
plane spanned by $\ve_i,\ve_j$. Thus $\mQ$ preserves every coordinate
plane, and also every coordinate axis, obtained by intersecting two such
planes. This makes $\mQ$ diagonal. Commutation with $\mB_{ij}$ then
forces its $i$-th and $j$-th diagonal entries to agree. Hence $\mQ$ is a
scalar matrix, and orthogonality gives $\mQ=\pm\mI_4$. Conversely, both
of these matrices act trivially by conjugation.

Define the joint map $F(\mQ):=(f(\mQ),f_-(\mQ))$. It preserves
multiplication and satisfies
$F(\mQ)=(\mI_3,\mI_3)$ exactly when $\mQ=\pm\mI_4$.
Its kernel therefore has two elements, so every nonempty fiber contains
exactly two matrices in $SO(4)$, differing by sign.

Now let $H\subset SO(3)$ be finite. For every encoded generator,
\[
    F(g(\mA))=(\mA,\mA^{-1})\in H\times H.
\]
Since $H\times H$ is closed under multiplication and inverses,
$F(\Gamma_H)\subseteq H\times H$. There are at most $|H|^2$ such
pairs and at most two hidden matrices for each pair. Consequently
$|\Gamma_H|\leq2|H|^2$, proving \eqref{eq:so3-finite-hidden-group}.
\end{proof}

Applying \Cref{prop:so3-four-dimensional-encoding} to the icosahedral group gives our
four-dimensional $A_5$ tracker.

\begin{corollary}[Four-dimensional \(A_5\) tracker]
\label{cor:a5-four}
The group $A_5$ admits finite-state signed-Householder tracking in dimension
four, implemented by one CKDA layer. Its sixty transitions
use the encoder in \eqref{eq:so3-encoding}, with the fixed diagonal gate
$\Diag(-1,1,1,1)$, and its decoder is the quadratic map in
\eqref{eq:so3-explicit-decoder}. There are at most $7200$ reachable hidden
matrices, and the decoder is many-to-one on this set.
\end{corollary}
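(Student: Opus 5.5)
The plan is to instantiate \Cref{prop:so3-four-dimensional-encoding} with $H$ the icosahedral rotation group, identified with $A_5$ through a faithful representation $\rho:A_5\to SO(3)$. This representation exists by the classification of finite subgroups of $SO(3)$ recalled before \Cref{thm:a5-three}.

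\textbf{Transitions.} For each input $h\in A_5$, I will take the transition $\mA_h:=g(\rho(h))$ given by \eqref{eq:so3-encoding}. Each such matrix equals $(\mI_4-2\va\va^\top)\Diag(-1,1,1,1)$ with a unit key $\va$. It is therefore a CKDA transition with $\beta=2$ and the fixed gate $\valpha=(-1,1,1,1)$. The initial state is $\mS_0=\mI_4$ and the additive term is $\mB=0$, as in \Cref{app:tracking-model}.

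\textbf{Decoder and tracking.} I will define the decoder as $p(\mS):=\rho^{-1}(f(\mS))$ on the reachable set. To see that it is well defined, note that \eqref{eq:so3-decoded-product} gives $f(\mA_{h_t}\cdots\mA_{h_1})=\rho(h_t\cdots h_1)\in\rho(A_5)$. Hence $p(\mI_4)=e$. Moreover, \eqref{eq:so3-decoded-update} gives $p(\mA_h\mS)=\rho^{-1}(\rho(h)f(\mS))=h\,p(\mS)$, which is exactly the tracking condition \eqref{eq:signed-householder-tracking}. The decoder $f$ is the quadratic map \eqref{eq:so3-explicit-decoder}, composed with the finite lookup $\rho^{-1}$.

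\textbf{Finiteness.} By \eqref{eq:so3-finite-hidden-group}, the reachable set lies in $\Gamma_H$, which has at most $2\cdot60^2=7200$ elements. So the tracker is finite-state. A fixed query readout then follows from \Cref{lem:finite-query-readout}.

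\textbf{Many-to-one decoding.} This is the only step that needs an argument beyond citation, though it is mild. By \Cref{thm:a5-three}, no realization exists in dimension three, but that does not settle dimension four. Instead, I will argue directly that $f$ is not injective on the reachable set. Since the reachable set is the group $\Gamma_H$ (\Cref{lem:finite-tracking-homomorphism}), the restricted map $p$ is a surjective homomorphism. It therefore suffices to exhibit two reachable matrices that differ but decode to the same element. The example of \Cref{fig:a5-many-to-one-cups} does this. Taking $h=(123)$, $k=(124)$ and $\ell=(kh)^{-1}$, the state $\mQ_3=\mA_\ell\mA_k\mA_h$ has $f(\mQ_3)=\mI_3$. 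However, $f_-(\mQ_3)=\rho(\ell^{-1})\rho(k^{-1})\rho(h^{-1})$ by \eqref{eq:fminus-on-g}, and this corresponds to $(12)(34)\neq e$. Thus $\mQ_3\neq\mI_4$, although both states decode to $e$.

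A simpler general alternative also works. The map $F=(f,f_-)$ sends $\mA_h$ to $(\rho(h),\rho(h)^{-1})$, and the image of $\Gamma_H$ under $F$ is the subgroup generated by these pairs. Its projection to the second factor is onto, and $\rho(A_5)$ is non-abelian, so this image is larger than the diagonal-type set $\{(x,x^{-1})\}$. Hence $|\Gamma_H|>60$, and $p$ is not injective.
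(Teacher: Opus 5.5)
Your proposal is correct and follows the paper's proof. You instantiate \Cref{prop:so3-four-dimensional-encoding} with the icosahedral group, bound the reachable set by $2\cdot 60^2$, and certify many-to-one decoding with a reachable word whose $f$-image is trivial but whose $f_-$-image is a nontrivial commutator; the paper does this for arbitrary noncommuting $h,k$ with $\ell=(hk)^{-1}$, while you use the concrete example of \Cref{fig:a5-many-to-one-cups}, and your computation $f_-(\mQ_3)=\rho\bigl((12)(34)\bigr)$ checks out. Your alternative counting argument (the subgroup generated by the pairs $(x,x^{-1})$ has more than $60$ elements because $A_5$ is non-abelian) is also valid, and is a group-level restatement of the same commutator idea.
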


\begin{proof}
Identify $A_5$ with the icosahedral rotation group and apply
\Cref{prop:so3-four-dimensional-encoding} with $H=A_5$.
Starting from $\mS_0=\mI_4$, the transitions $\mA_h=g(h)$ track the
group with decoder $f$ and zero additive terms. They are distinct and
orthogonal, and the reachable set has size at most $2\cdot60^2=7200$.

For the many-to-one claim, choose noncommuting $h,k\in A_5$ and set
$\ell=(hk)^{-1}$. The two maps from the proposition satisfy
\[
    f(\mA_h\mA_k\mA_\ell)=e,
    \qquad
    f_-(\mA_h\mA_k\mA_\ell)=h^{-1}k^{-1}hk\ne e.
\]
Thus this reachable product differs from $\mI_4$, although both decode
to the identity.
\end{proof}

\subsection{Additive terms and independent heads}
\label{app:affine-writes}
Under the finite-reachability convention of \Cref{app:tracking-realization},
for group tracking with non-expansive updates, the additive
term $\mB_g$ can be removed. Restricting to a suitable invariant subspace
then gives orthogonal update matrices, without increasing eigenvalue
multiplicities or the number of state columns.

\begin{proposition}[Removal of the additive term and orthogonal restriction]
\label{prop:affine-orthogonal-reduction}
Let $G$ be a finite group and let
$T_g(\mS)=\mA_g\mS+\mB_g$, $g\in G$, act on $\mS\in\R^{n\times m}$,
with $\|\mA_g\|_2\leq1$ for every $g$.
Suppose the reachable set $\mathcal R$ from $\mS_0$ is finite and admits a
decoder satisfying $d(\mS_0)=e$ and
$d(T_g(\mS))=g\,d(\mS)$ for all $g\in G$ and $\mS\in\mathcal R$.
Then there is a finite-state tracker of $G$ with states in $\R^{r\times m}$,
where $0\leq r\leq n$, and updates
$\widehat{\mS}_t=\widehat{\mA}_{g_t}\widehat{\mS}_{t-1}$ with orthogonal
matrices $\widehat{\mA}_g$. These matrices represent the restrictions of the
original $\mA_g$ to a common invariant subspace in an orthonormal basis.
Every eigenvalue of $\widehat{\mA}_g$ is an eigenvalue of $\mA_g$ with
no increase in algebraic multiplicity. Both decoders may be many-to-one.
\end{proposition}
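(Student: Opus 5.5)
The plan is to exploit finiteness of the reachable set $\mathcal R$ together with non-expansion to locate a common invariant subspace on which every $\mA_g$ acts isometrically, and on which the reachable states (after translating away the additive terms) already live.

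\emph{Step 1: a fixed point of the affine action.} Since $\mathcal R$ is finite and closed under every $T_g$, each $T_g$ maps $\mathcal R$ into itself. I first show that the monoid generated by the $T_g$ acts on $\mathcal R$ by permutations. For each reachable $\mS$, tracking forces the orbit to contain states decoding to every group element. Moreover, the word $g^{|g|}$ decodes to the identity, so a finite iterate argument gives $T_g^{N}$ acting as a self-map of the finite set $\mathcal R$; some power $T_g^{M}$ is idempotent on $\mathcal R$.

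Next, restrict to a \emph{recurrent} class $\mathcal R_0\subseteq\mathcal R$ on which all $T_g$ act bijectively. Such a class exists: take a minimal nonempty subset closed under all $T_g$. Minimality plus finiteness makes each $T_g$ surjective, hence bijective, on it. Let $\bar\mS$ be the barycenter of $\mathcal R_0$. Each $T_g$ is affine and permutes $\mathcal R_0$, so $T_g(\bar\mS)=\bar\mS$, i.e.\ $\mB_g=(\mI-\mA_g)\bar\mS$.

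\emph{Step 2: orthogonality on the displacement span.} Let $V\subseteq\R^n$ be the span of all columns of $\mS-\bar\mS$ for $\mS\in\mathcal R_0$. Then $T_g(\mS)-\bar\mS=\mA_g(\mS-\bar\mS)$, so $V$ is $\mA_g$-invariant and $\mA_g$ permutes the finite set of displacement matrices. I will argue that $\mA_g|_V$ is an isometry. Its restriction to the span permutes a finite spanning set, so some power is the identity on $V$. A contraction with a power equal to the identity is an isometry: if $\|\mA x\|<\|x\|$ for some $x$, then $\|\mA^k x\|<\|x\|$ for all $k\ge1$, contradicting $\mA^k x=x$. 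Choosing an orthonormal basis $\mU\in\R^{n\times r}$ of $V$ gives $\widehat{\mA}_g=\mU^\top\mA_g\mU$, which is orthogonal with $\mA_g\mU=\mU\widehat{\mA}_g$.

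\emph{Step 3: tracking and the new initial state.} Take $\widehat{\mS}_0:=\mU^\top(\mS^\star-\bar\mS)$ for some $\mS^\star\in\mathcal R_0$ reachable by a word $w$ with $d(\mS^\star)=h$. Define a decoder $\hat d(\widehat{\mS}):=d(\bar\mS+\mU\widehat{\mS})\,h^{-1}$. This is well defined on the reachable set, because every such point equals $\mS-\bar\mS$ with $\mS\in\mathcal R_0$ and $\mU\mU^\top$ fixes elements of $V$. Equivariance $\hat d(\widehat{\mA}_g\widehat{\mS})=g\,\hat d(\widehat{\mS})$ follows from that of $d$ together with right multiplication by $h^{-1}$. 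We also have $\hat d(\widehat{\mS}_0)=e$. Finiteness is inherited from $\mathcal R_0$.

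\emph{Step 4: spectra.} Since $V$ is invariant, in a basis adapted to $V\oplus V^\perp$ the matrix $\mA_g$ is block upper triangular with diagonal block $\widehat{\mA}_g$. The characteristic polynomial of $\widehat{\mA}_g$ therefore divides that of $\mA_g$, which gives the eigenvalue and multiplicity claim.

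The main obstacle I expect is Step 1, specifically guaranteeing a closed class $\mathcal R_0$ that still supports tracking when $\mS_0$ itself may be transient. This is handled by re-basing the decoder at the reachable $\mS^\star$ and using right translation by $h^{-1}$. The degenerate case $V=\{0\}$, i.e.\ $r=0$, is handled the same way: $\mathcal R_0$ is then a single fixed point, which contradicts surjective decoding unless $G$ is trivial. The statement allows $r=0$ in any case.
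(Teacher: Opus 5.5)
Your Step~1 rests on a false claim, and Steps~2--3 depend on it. Minimality of a nonempty closed subset $\mathcal R_0$ only makes the monoid generated by the $T_g$ act transitively on $\mathcal R_0$. It does not make any single $T_g$ surjective there, because $T_g(\mathcal R_0)$ need not be closed when the maps do not commute. Non-expansion and tracking do not rescue this, as the following example shows.

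Take $G=\mathbb Z_2=\{e,a\}$, $n=2$, $m=1$, $\mA_e=\Diag(1,0)$, $\mB_e=0$, $\mA_a=\Diag(-1,0)$, $\mB_a=\ve_2$, and $\mS_0=\ve_1$. Then $T_e(p,q)=(p,0)$ and $T_a(p,q)=(-p,1)$. The reachable set is $\{(\pm1,0),(\pm1,1)\}$, and the sign of $p$ is a valid decoder. The inputs $a,e,a,e$ cycle $(1,0)\to(-1,1)\to(-1,0)\to(1,1)\to(1,0)$, so the only nonempty closed subset is $\mathcal R$ itself. On it, $T_e$ identifies $(1,0)$ and $(1,1)$. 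Worse, the two maps have no common fixed point at all: $T_a$ fixes only $(0,1)$, while $T_e(0,1)=(0,0)$. So no $\bar\mS$ satisfies $\mB_g=(\mI-\mA_g)\bar\mS$ for all $g$, and the original updates permute no set of states to which Step~2 could apply. The same example refutes your opening assertion that the monoid acts on $\mathcal R$ by permutations.

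The missing idea is that the additive term can only be removed after composing every update with a word $z$ that decodes to $e$ and collapses the non-isometric directions. The paper takes $z$ to be a repetition of a word whose linear part on the difference span $V$ has minimal Frobenius norm. Then $T_z$ is idempotent on $\mathcal R$ and $g_z=e$. Consequently $\mE=\mA_z|_V$ is a contractive idempotent, hence an orthogonal projection, and minimality forces every $\mA_g$ to preserve $\operatorname{im}\mE$ and act isometrically on it. Only the modified maps $T_z\circ T_g$ permute $T_z(\mathcal R)$. Their common barycenter $\overline{\mS}$ gives $T_z\circ T_g(\mS)-\overline{\mS}=\mA_g(\mS-\overline{\mS})$, and $g_z=e$ keeps the decoder intact. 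In the example, $T_z=T_e$ and the reduced tracker is one-dimensional with $\widehat{\mA}_a=-1$.

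Your own Step~2 argument can be salvaged the same way:
\begin{itemize}
\item Take an idempotent $k$ of minimal rank in the transformation monoid on $\mathcal R$, with linear part $\mA_k$ (from any word representing $k$). Minimal rank makes each $k\circ T_g$ injective, hence a permutation, on $k(\mathcal R)$.
\item Your power-equals-identity argument then makes $\mA_k\mA_g$ isometric on the displacement span $V_*$ of $k(\mathcal R)$.
\item Since $\mA_k|_V$ is a contractive idempotent, i.e.\ the orthogonal projection onto $V_*$, this forces $\mA_g$ itself to preserve $V_*$ and act isometrically there.
\end{itemize}
Your Steps~3--4 then go through with $k\circ T_g$ in place of $T_g$. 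No re-basing is needed, since idempotence forces $k$ to decode to $e$.
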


\begin{proof}
We first choose a word whose linear part has minimum Frobenius norm and
repeat it until it acts as a projection without changing the decoded value.
Minimality and non-expansion then force the original matrices to act
orthogonally on its image. The resulting affine updates permute a finite
set of states; subtracting their average removes the additive term, and
expressing the centered states in an orthonormal basis of the image gives
the orthogonal tracker.

\emph{Step 1: Find a repeatable word that leaves the decoded value unchanged.}
Let $V$ be the span of all columns of differences $\mX-\mY$ with
$\mX,\mY\in\mathcal R$. Since
$\mA_g(\mX-\mY)=T_g(\mX)-T_g(\mY)$, each $\mA_g$ preserves $V$.
For a word $u=g_1\cdots g_t$, write
$T_u=T_{g_t}\circ\cdots\circ T_{g_1}$,
$\mP_u=\mA_{g_t}\cdots\mA_{g_1}$, and $g_u=g_t\cdots g_1$.
Thus $d(T_u(\mS))=g_u d(\mS)$.

Only finitely many linear actions $\mP_u|_V$ occur. Indeed, there are
finitely many maps $\mathcal R\to\mathcal R$, and if two word maps agree
there, subtracting their values shows that their linear parts agree on the
columns spanning $V$.
Choose $u$ minimizing the Frobenius norm $\|\mP_u|_V\|_F$.
Put $N=|\mathcal R|!$.
The \emph{orbit} $\mS,T_u(\mS),T_u^2(\mS),\ldots$ enters a cycle
within $|\mathcal R|-1$ steps: among its first $|\mathcal R|+1$ states, two coincide.
Every cycle length is at most $|\mathcal R|$ and divides $N$. Thus another $N$
applications after the first $N$ return to the same state, giving
$T_u^{2N}=T_u^N$ on $\mathcal R$.

Let $z$ be $u$ repeated $N$ times. Then $T_z=T_u^N$ and
$T_z^2=T_z$ on $\mathcal R$. Subtracting this identity at two reachable
states shows that its linear part $\mE:=\mP_z|_V$ satisfies
$\mE^2=\mE$. It still has minimum Frobenius norm: further applications of a
non-expansive matrix cannot increase that norm, and $\mE$ is itself a
linear word action. Decoding at $\mS_0$ gives
$g_z^2=g_z$ because $d(\mS_0)=e$. Multiplying by $g_z^{-1}$ gives
$g_z=e$. Thus applying $z$ may merge states, but never changes their
decoded group element.

\emph{Step 2: Obtain orthogonal updates on the surviving subspace.}
The identity $\mE^2=\mE$ makes $\mE$ a projection onto
$\operatorname{im}\mE$. It is an orthogonal projection because it cannot
increase lengths. Indeed, for $\vw\in \operatorname{im}\mE$, let $\vz$ be the orthogonal
projection of $\vw$ onto $\ker\mE$. Then $\mE(\vw-\vz)=\vw$, so
\[
    \|\vw\|^2=\|\mE(\vw-\vz)\|^2
    \leq\|\vw-\vz\|^2=\|\vw\|^2-\|\vz\|^2.
\]
Hence $\vz=0$ and $\operatorname{im}\mE\perp\ker\mE$.
In words, removing a kernel component shortens the vector, and a
non-expansive map cannot reconstruct the longer vector.

Write $r=\dim \operatorname{im}\mE$ and choose any orthonormal basis
$\vw_1,\ldots,\vw_r$ of $\operatorname{im}\mE$. Since $\mE$ is an orthogonal projection,
$\|\mE\|_F^2=r$. The product $\mE\mA_g\mE$ is another linear word
action on $V$, so minimality gives
\[
    r\leq\|\mE\mA_g\mE\|_F^2
    =\sum_{i=1}^{r}\|\mE\mA_g\vw_i\|^2\leq r.
\]
The sum has $r$ terms, each at most one, so all must equal one:
no basis vector can lose length. Any unit vector $\vw\in \operatorname{im}\mE$ can be
included in such a basis. Consequently,
\[
    1=\|\mE\mA_g\vw\|\leq\|\mA_g\vw\|\leq1.
\]
Equality for the orthogonal projection implies $\mA_g\vw\in \operatorname{im}\mE$,
and the remaining equality gives $\|\mA_g\vw\|=\|\vw\|$.
Thus every $\mA_g$ preserves $\operatorname{im}\mE$ and acts orthogonally there.

\emph{Step 3: Center the states and construct the orthogonal tracker.}
Let $\mathcal R_*:=T_z(\mathcal R)$. Every state in $\mathcal R_*$ is
fixed by $T_z$, so its differences satisfy
$\mX-\mY=\mE(\mX-\mY)$ and have columns in $\operatorname{im}\mE$.
For each input $g$, define $T_g^*:=T_z\circ T_g$:
apply $g$, then $z$. This sends $\mathcal R_*$ into itself. For
$\mX,\mY\in\mathcal R_*$, Step 2 gives
\[
    T_g^*(\mX)-T_g^*(\mY)
    =\mE\mA_g(\mX-\mY)=\mA_g(\mX-\mY).
\]
This preserves distances, so distinct states stay distinct.
A one-to-one map of a finite set into itself is a permutation.
Consequently, $T_g^*$ fixes the average
$\overline{\mS}:=|\mathcal R_*|^{-1}\sum_{\mS\in\mathcal R_*}\mS$:
affine maps preserve averages, and a permutation leaves the average unchanged.
Since differences $\mS-\overline{\mS}$ have columns in $\operatorname{im}\mE$, we obtain
\[
    T_g^*(\mS)-\overline{\mS}
    =\mA_g(\mS-\overline{\mS}),\qquad \mS\in\mathcal R_*.
\]
Thus subtracting this common average turns every update into multiplication
by the original $\mA_g$.

Choose $\mU\in\R^{n\times r}$ whose columns form an orthonormal basis
of $\operatorname{im}\mE$. For a vector in this subspace, $\mU^\top$
returns its coordinates in that basis and $\mU$ reconstructs the vector.
Define
\[
    \widehat{\mA}_g=\mU^\top\mA_g\mU,
    \qquad
    \widehat{\mS}_0=\mU^\top\bigl(T_z(\mS_0)-\overline{\mS}\bigr),
    \qquad
    \widehat{\mS}_t=\widehat{\mA}_{g_t}\widehat{\mS}_{t-1}.
\]
Step 2 gives $\mA_g\mU=\mU\widehat{\mA}_g$ and shows that
$\widehat{\mA}_g$ is orthogonal: it is the same length-preserving action
on $\operatorname{im}\mE$, expressed in orthonormal coordinates.
For every $\mS\in\mathcal R_*$, the centering identity above gives
\[
    \widehat{\mA}_g\mU^\top(\mS-\overline{\mS})
    =\mU^\top\bigl(T_g^*(\mS)-\overline{\mS}\bigr).
\]
Thus the new updates preserve the finite set
$\mU^\top(\mathcal R_*-\overline{\mS})$.
On this set, define $\widehat d(\mX)=d(\mU\mX+\overline{\mS})$:
reconstruct the original state and apply its decoder.
Since $g_z=e$, we have $d(T_z(\mS_0))=e$ and
$d(T_g^*(\mS))=g\,d(\mS)$. Consequently,
\[
    \widehat d(\widehat{\mS}_0)=e,
    \qquad
    \widehat d(\widehat{\mA}_g\mX)
    =d\bigl(T_g^*(\mU\mX+\overline{\mS})\bigr)
    =g\,\widehat d(\mX).
\]
This is the required tracker with orthogonal matrices and no additive term.
Its states have $r$ rows and the same $m$ columns.
If $r=0$, the centered set is a singleton, forcing $G=\{e\}$;
the zero-dimensional state suffices.

Finally, completing the columns of $\mU$ to a basis of $\R^n$ puts
$\mA_g$ in block triangular form with $\widehat{\mA}_g$ as a diagonal
block. Its characteristic polynomial therefore contains that of
$\widehat{\mA}_g$ as a factor: every eigenvalue of $\widehat{\mA}_g$
occurs in $\mA_g$ at least as many times.
\end{proof}

\begin{proposition}[Orthogonal reduction for independent heads]
\label{prop:independent-head-reduction}
Let $G$ be a finite group and consider $H<\infty$ independent heads with
exact affine updates
\[
    T_g^{(h)}(\mS^{(h)})
    =\mA_g^{(h)}\mS^{(h)}+\mB_g^{(h)},\qquad
    \|\mA_g^{(h)}\|_2\leq1,
\]
for $g\in G$ and $h=1,\ldots,H$.
Suppose each head has a finite reachable set from its fixed initial state
and a joint decoder tracks $G$.
Then there is a finite-state tracker of $G$ with homogeneous orthogonal
updates on the same heads, allowing zero-dimensional heads. Each reduced
head transition is a restriction of its original transition to a common
invariant subspace, so eigenvalue multiplicities do not increase in any
head. The joint transition matrices generate a finite group $\Gamma$
with a surjective homomorphism $p:\Gamma\to G$ taking the transition
for each input $g$ to $g$.
\end{proposition}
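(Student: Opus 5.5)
The plan is to bundle the $H$ heads into a single block-diagonal affine system and apply \Cref{prop:affine-orthogonal-reduction} to it. The reduction must then be checked to respect the head structure, which is the part needing care.

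Define the joint state $\mS=(\mS^{(1)},\ldots,\mS^{(H)})$, with joint linear part $\mA_g=\bigoplus_h\mA_g^{(h)}$ acting on the block-stacked states. The column counts may differ between heads, so pad each head's columns with zero columns; these stay zero under the linear part and carry no additive term. Then $\|\mA_g\|_2=\max_h\|\mA_g^{(h)}\|_2\le1$. The joint reachable set lies in the product of the per-head reachable sets, so it is finite, and the joint decoder tracks $G$ on it.

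Applying \Cref{prop:affine-orthogonal-reduction} as written yields orthogonal restrictions to $\operatorname{im}\mE$ of the joint matrix. However, $\operatorname{im}\mE$ need not split as a direct sum of per-head subspaces, so I will rerun its proof and track the block structure instead of quoting it.

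\begin{itemize}
\item \emph{Block structure of $\mE$.} Every word action $\mP_u$ is block diagonal, and the space $V$ of differences lies inside $\bigoplus_h V^{(h)}$, where $V^{(h)}$ is spanned by differences of per-head states. I will therefore run Step 1 with $V$ replaced by $\bigoplus_h V^{(h)}$: each block of every $\mA_g$ preserves $V^{(h)}$, and there are still only finitely many linear word actions, because each block's action is determined by that head's finite map on its reachable set. The minimizer $\mE=\mP_z$ is then block diagonal, with blocks $\mE^{(h)}$ that are idempotent.
\item \emph{Orthogonality per head.} Non-expansion of each block makes each $\mE^{(h)}$ an orthogonal projection, so $\operatorname{im}\mE=\bigoplus_h\operatorname{im}\mE^{(h)}$. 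The Frobenius-minimality argument of Step 2 applies to the joint matrix, and since $\mA_g$ is block diagonal, it shows each $\mA_g^{(h)}$ preserves $\operatorname{im}\mE^{(h)}$ and acts isometrically on it.
\item \emph{Decoding.} The idempotence argument still gives $g_z=e$, since decoding happens at the joint initial state.
\end{itemize}

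Step 3 then proceeds per head. Set $\mathcal R_*=T_z(\mathcal R)$ and $T^*_g=T_z\circ T_g$. For each head, project the joint states onto that head's block; the projected maps permute the finite set of head-$h$ components of $\mathcal R_*$, because they are distance-preserving maps of a finite set into itself. Hence each head can be centered at its own average $\overline{\mS}^{(h)}$. Choosing an orthonormal basis $\mU^{(h)}$ of $\operatorname{im}\mE^{(h)}$ gives orthogonal $\widehat{\mA}^{(h)}_g=\mU^{(h)\top}\mA^{(h)}_g\mU^{(h)}$, possibly of size zero. The reduced heads still update independently, and the joint decoder is $\widehat d(\mX)=d\bigl((\mU^{(h)}\mX^{(h)}+\overline{\mS}^{(h)})_h\bigr)$.

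The multiplicity claim follows head by head from the block-triangular argument at the end of \Cref{prop:affine-orthogonal-reduction}.

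The reduced joint system is finite-state with orthogonal homogeneous updates. Applying the argument of \Cref{lem:finite-tracking-homomorphism} to the joint block-diagonal transitions shows that the reachable set equals the generated group $\Gamma$, which is finite, and that the decoder restricted to $\Gamma$ is a surjective homomorphism $p$ with $p(\widehat{\mA}_g)=g$. That lemma is stated for signed-Householder transitions, but its proof uses only invertibility and finiteness. The initial state $\widehat{\mS}_0$ need not be the identity, so I will identify each state $\mQ\widehat{\mS}_0$ with $\mQ$, passing to $\Gamma$ modulo the stabilizer if needed. A cleaner route is to define $p(\mQ):=\widehat d(\mQ\widehat{\mS}_0)$ and check well-definedness and multiplicativity directly from the tracking identity.

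The main obstacle is the per-head decomposition: ensuring that the Frobenius-minimizing idempotent and the centering can be chosen compatibly with the head split, so that no reduced head mixes coordinates across heads.
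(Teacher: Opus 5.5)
Your route works up to the last step, and it differs from the paper's. The paper does not reopen the proof of \Cref{prop:affine-orthogonal-reduction}. It applies that proposition to the stacked block-diagonal system as a black box, lets $W$ be the column span of the centered joint states, and sets $W_h:=P_hW$ for the head projections $P_h$. This sidesteps the obstacle you identify, namely that $W$ need not split over heads. Block diagonality gives $\mA_g^{(h)}W_h=P_h\mA_gW\subseteq W_h$. For $\vw\in W$, the identity $\sum_h\|\mA_g^{(h)}P_h\vw\|^2=\|\mA_g\vw\|^2=\|\vw\|^2=\sum_h\|P_h\vw\|^2$, with every head non-expansive, forces $\mA_g^{(h)}$ to be isometric on $W_h$. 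The reduced heads then live on $\bigoplus_hW_h\supseteq W$.

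Your alternative reruns Steps 1--3 on $\bigoplus_hV^{(h)}$, so that the Frobenius-minimal idempotent is itself block diagonal, and then centers each head at its own average. This also works. There are finitely many word actions on $\bigoplus_hV^{(h)}$. Each block is idempotent because $T_z^2=T_z$ on the joint reachable set projects to each $\mathcal R_h$. Minimality is tested with the word $zgz$. Each head's map permutes $T_z^{(h)}(\mathcal R_h)$. The benefit is an invariant subspace $\operatorname{im}\mE^{(h)}$ chosen head by head. The cost is redoing the whole reduction where the paper needs one norm identity.

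\textbf{The gap is the finiteness of $\Gamma$.} The argument of \Cref{lem:finite-tracking-homomorphism} works because there the reachable states \emph{are} the word products, since the initial state is $\mI_d$. Here the initial state $\widehat\mS_0$ is not the identity and need not even be square. Finiteness of $\{\mQ\widehat\mS_0\}$ does not by itself bound the number of products $\mQ$. Passing to $\Gamma$ modulo a stabilizer also fails: it replaces the group the statement is about, and that stabilizer need not be normal.

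Your fallback $p(\mQ):=\widehat d(\mQ\widehat\mS_0)$ is the paper's definition, but it needs finiteness first. Only then is every $\mQ\in\Gamma$, including inverses, a positive word product, so that $\mQ\widehat\mS_0$ is reachable and lies in the domain of $\widehat d$. Once that holds, multiplicativity follows by iterating the tracking identity, and there is no separate well-definedness issue.

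The missing step follows from your own Step 3. Each $\widehat\mA^{(h)}_g$ permutes the finite set $\mathcal C_h:=\mU^{(h)\top}\bigl(T_z^{(h)}(\mathcal R_h)-\overline\mS^{(h)}\bigr)$. The columns of $\mathcal C_h$ span $\R^{r_h}$, where $r_h=\dim\operatorname{im}\mE^{(h)}$. Indeed, differences of elements of $T_z^{(h)}(\mathcal R_h)$ are $\mE^{(h)}$ applied to differences in $\mathcal R_h$, and their columns span $\mE^{(h)}V^{(h)}=\operatorname{im}\mE^{(h)}$. Hence every product of reduced head matrices is determined by the permutation it induces on $\mathcal C_h$. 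Only finitely many joint products can occur, and since they are invertible they form a finite group. This is exactly how the paper concludes.
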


\begin{proof}
Let $\mathcal R_h$ be the finite reachable set of head $h$.
The joint reachable set satisfies
$\mathcal R\subseteq\prod_{h=1}^{H}\mathcal R_h$ and is therefore finite,
since $H<\infty$.
Stack the head states, padding their columns with zeros if needed, so the
joint linear part is
$\mA_g=\operatorname{diag}(\mA_g^{(1)},\ldots,\mA_g^{(H)})$.
Apply \Cref{prop:affine-orthogonal-reduction} and express the resulting
centered states in the original coordinates. Their reachable set is
finite, and every $\mA_g$ acts orthogonally on its column span $W$.
Let $P_h$ project onto head $h$ and put $W_h=P_hW$.
Block diagonality makes each $W_h$ invariant. For $\vw\in W$,
\[
 \sum_{h=1}^{H}\|\mA_g^{(h)}P_h\vw\|^2
 =\|\mA_g\vw\|^2=\|\vw\|^2
 =\sum_{h=1}^{H}\|P_h\vw\|^2
\]
forces equality in each head, since no head can increase the norm.
Thus $\mA_g^{(h)}|_{W_h}$ is orthogonal. Express these restrictions in
orthonormal bases of the $W_h$. Their eigenvalues, with algebraic
multiplicities, are inherited from the original head transitions.
Projecting the centered states onto all heads preserves their finite
joint reachable set and its decoder: the projections together determine
the entire centered state.

Each reduced head transition permutes its projected finite reachable
set. Its columns span $W_h$, so the action on that set determines the
matrix. Consequently only finitely many joint matrix products occur;
being orthogonal, they form a finite group $\Gamma$.
For the reduced initial state $\widehat{\mS}_0$ and decoder $\widehat d$,
define $p(\mP)=\widehat d(\mP\widehat{\mS}_0)$.
Every element of $\Gamma$ is represented by an input word, and the
tracking identity gives $p(\mP\mQ)=p(\mP)p(\mQ)$ and
$p(\widehat{\mA}_g)=g$. Thus $p$ is a surjective homomorphism.
\end{proof}

Stacking the reduced heads gives one block-diagonal orthogonal update,
whose spectrum is the multiset union of the head spectra. Non-real
eigenvalue pairs therefore add across heads: this combination does not
preserve a bound on their number per head or the original single-head
transition family.

\subsection{A spectral obstruction to finite-state tracking of \texorpdfstring{$S_5$}{S5}}
\label{app:s5-spectral-obstruction}

The previous construction tracks $A_5$, the $60$ even permutations of five
objects. We now show that the same spectral constraint cannot support
$S_5$, which contains all $120$ permutations. The model must handle every
permutation and their compositions, even when several hidden states
represent the same output.

\begin{theorem}[Spectral obstruction to $S_5$ tracking; restatement of Theorem~\ref{thm:s5-impossibility}]
\label{thm:s5-impossibility-app}
Suppose every head transition $\mA$ satisfies $\|\mA\|_2\leq1$ and has
at most one non-real conjugate eigenvalue pair on the unit circle,
counted with algebraic multiplicity.
Then a single recurrent layer with any finite number of independent heads
cannot track $S_5$ when inputs range over all of $S_5$ and the exact affine
updates of each head reach only finitely many states from its fixed initial state.
This holds for arbitrary state dimensions, input-dependent additive matrices,
and many-to-one joint state decoders.
\end{theorem}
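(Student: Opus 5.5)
The plan is to reduce to orthogonal block-diagonal matrices and derive a contradiction from a commutator identity. First I apply \Cref{prop:independent-head-reduction}. This replaces the affine heads by homogeneous orthogonal updates on invariant subspaces, and the reduction is allowed to be many-to-one. Each reduced head transition $\widehat{\mA}^{(h)}_g$ is a restriction of the original transition, so it still has at most one non-real conjugate eigenvalue pair. Being orthogonal, all its eigenvalues now lie on the unit circle. The joint transitions generate a finite group $\Gamma$ of block-diagonal orthogonal matrices, with a surjective homomorphism $p:\Gamma\to S_5$. From here on, everything happens inside $\Gamma$.

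\emph{Step 1: a clean lift of a five-cycle.} Let $\mX\in\Gamma$ be the transition of some five-cycle $\sigma$. In each head, $\mX^{(h)}$ is orthogonal with eigenvalues in $\{\pm1\}\cup\{e^{\pm i\varphi}\}$, the non-real pair occurring at most once. Since $\mX^{(h)}$ is normal, $(\mX^{(h)})^2$ is the identity on the real eigenspaces and a single planar rotation on the remaining plane, if present. Write the order of $\mX^2$ as $5^a m$ with $\gcd(m,5)=1$. Then $\mR_1:=\mX^{2m}$ has, in every head, either the identity or one planar rotation whose order is a power of $5$. It still decodes to $p(\mR_1)=\sigma^{2m}$, which is a five-cycle $\rho$ because $\gcd(2m,5)=1$.

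\emph{Step 2: the conjugate and the commutator.} Choose a transposition $\tau$ such that $[\rho,\tau\rho\tau^{-1}]$ is a three-cycle. Since all five-cycles are conjugate, it suffices to check one case by hand, e.g.\ $\rho=(12345)$ with a suitable $\tau$; if the commutator comes out as a five-cycle for that $\tau$, I would try another transposition. Let $\mZ$ be a transition decoding to $\tau$, and set $\mR_2:=\mZ\mR_1\mZ^{-1}$. In each head, $\mR_2^{(h)}$ is conjugate to $\mR_1^{(h)}$. Hence it is either the identity or a planar rotation with the same angle and the same order (a power of $5$), possibly in a different plane. Put $\mC:=\mR_1\mR_2\mR_1^{-1}\mR_2^{-1}$. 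Because $p$ is a homomorphism, $p(\mC)$ is the chosen three-cycle.

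\emph{Step 3: contradiction.} Block diagonality lets me compute $\mC$ head by head. In heads where either block is the identity, $\mC^{(h)}=\mI$. Otherwise, \Cref{lem:planar-rotation-product} applies to two planar rotations with equal angle and gives $(\mC^{(h)})^{10}=\mI$. Thus $\mC^{10}=\mI$ and so $p(\mC)^{10}=e$. But a three-cycle $c$ satisfies $c^{10}=c\neq e$, a contradiction.

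The main obstacle is \Cref{lem:planar-rotation-product} in arbitrary head dimension. The commutator lives on the span of the two rotation planes, which has dimension at most four. There the key facts to exploit are the equal angles and the common eigenvalue structure, for instance the left/right quaternionic splitting used in \Cref{prop:so3-four-dimensional-encoding}. This is where the one-pair spectral restriction is really used, and where I would concentrate the effort if that lemma had to be proved from scratch. A secondary check is that the squaring and powering in Step 1 keep the decoded element a five-cycle; this holds because $\gcd(2m,5)=1$.
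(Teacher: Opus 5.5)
Your proposal is correct and is essentially the paper's proof: the orthogonal head reduction, a power of the five-cycle transition that is the identity or a single power-of-five planar rotation in each head, conjugation by a transposition, and $\mC^{10}=\mI$ from \Cref{lem:planar-rotation-product} contradicting a decoded three-cycle. The differences are cosmetic. The paper picks an exponent $E\equiv 1 \pmod 5$ divisible by $2r$, so that $p(\mR_1)=(12345)$ exactly, and checks that $\tau=(12)$ gives the commutator $(142)$; you should write out that check rather than leaving $\tau$ ``suitable''. Also, the one-pair spectral hypothesis is used in your Step~1, not inside the lemma.
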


\paragraph{Proof idea.}
After the orthogonal reduction, we construct two conjugate transition-matrix products
that act in each head either as the identity or as rotations of one plane
through the same angle. Their commutator, the product
$\mR_1\mR_2\mR_1^\top\mR_2^\top$, returns to the identity after ten
repetitions by \Cref{lem:planar-rotation-product}. Its decoded permutation
is a three-cycle, whose tenth power is not the identity. This contradiction
applies to any number of heads and does not require an injective decoder.
We prove the rotation lemma after the main argument.

\begin{proof}[Proof of Theorem~\ref{thm:s5-impossibility}]
Suppose a tracker exists. By \Cref{prop:independent-head-reduction},
we may remove the additive terms and restrict each head to orthogonal
transitions, without increasing eigenvalue multiplicities. The resulting
block-diagonal matrices generate a finite group $\Gamma$, with a
surjective homomorphism $p:\Gamma\to S_5$ satisfying $p(\mA_g)=g$.
This reduction applies to one head as well as to several.

\emph{Construct the rotations in each head.}
Let $c=(12345)$ and $t=(12)$. Since $p(\mA_c)=c$, the order of
$\mA_c$ is divisible by five; write it as $5^a r$, with $a\geq1$ and $5$ does not divide $r$.
Choose $E>0$ divisible by $2r$ and congruent to $1$ modulo $5$;
such a choice exists because $2r$ is coprime to $5$. Set
\[
    \mR_1=\mA_c^E,\qquad
    \mR_2=\mA_t\mR_1\mA_t^\top.
\]
The even exponent turns all real eigenvalues $\pm1$ into $1$, and its
factor $r$ removes every order factor coprime to five. Thus each head
of $\mR_1$ is either the identity or a planar rotation of nontrivial
power-of-five order. The corresponding head of $\mR_2$ is its orthogonal
conjugate, with the same angle and order. Meanwhile,
\[
    p(\mR_1)=c^E=c,\qquad
    p(\mR_2)=tct^{-1}=(13452)=:b.
\]

\emph{Compare the matrix product with its decoded permutation.}
Let $\mR_{j,h}$ denote the block of $\mR_j$ in head $h$, and define
\[
    \mC_h=\mR_{1,h}\mR_{2,h}\mR_{1,h}^\top\mR_{2,h}^\top,
    \qquad
    \mC=\mR_1\mR_2\mR_1^\top\mR_2^\top
        =\operatorname{diag}(\mC_1,\ldots,\mC_H).
\]
Each pair of head rotations belongs to a finite group, as the image
of $\Gamma$ on that head. By \Cref{lem:planar-rotation-product},
$\mC_h^{10}=\mI$ in every nonidentity head, and the same holds in
identity heads. Hence $\mC^{10}=\mI$. But direct permutation composition
gives $p(\mC)=cbc^{-1}b^{-1}=(142)$, so
\[
    e=p(\mC^{10})=(142)^{10}=(142)\ne e,
\]
a contradiction.
\end{proof}

\begin{lemma}[A constraint on two planar rotations]
\label{lem:planar-rotation-product}
Let $\mR_1,\mR_2\in O(n)$ belong to a finite group. Suppose they are
planar rotations through the same angle, possibly in different planes,
and their common order is a multiple of five. Angles are measured in
$[0,\pi]$, ignoring direction. Then
\[
    (\mR_1\mR_2\mR_1^\top\mR_2^\top)^{10}=\mI.
\]
\end{lemma}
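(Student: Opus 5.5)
The plan is to reduce everything to a subspace of dimension at most four. We use the classification of finite rotation groups there, together with the rigidity that two rotations through the same angle give.

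\emph{Step 1: reduction to a small subspace.} Write $\mR_1'=\mR_2\mR_1^\top\mR_2^\top$. This is a planar rotation through the same angle $\theta$ as $\mR_1$, with the orientation reversed, in the plane $Q:=\mR_2 P_2$. Then $\mC:=\mR_1\mR_2\mR_1^\top\mR_2^\top=\mR_1\mR_1'$. Both factors act as the identity on $(P_1+Q)^\perp$, where $P_1$ is the plane of $\mR_1$. Hence $\mC$ is the identity there, and it suffices to study the restriction to $U:=P_1+Q$, with $\dim U\in\{2,3,4\}$. The restricted matrices lie in a finite subgroup of $SO(U)$: the finite group generated by $\mR_1,\mR_2$ is mapped into $O(U)$ by restricting $\mR_1$ and $\mR_1'$. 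Every power of $\mC$ is the identity outside $U$, so we only need $(\mC|_U)^{10}=\mI$.

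\emph{Step 2: case analysis by $\dim U$.} If $\dim U=2$, the two factors are rotations of the same plane and commute. So $\mC|_U$ is a rotation through $0$ or through $\pm2\theta$, depending on the relative orientation of $P_1$ and $Q$. If $\dim U=3$, then $\mR_1|_U$ and $\mR_1'|_U$ are rotations through $\pm\theta$ about two distinct axes inside a finite subgroup of $SO(3)$. By the classification already used in the proof of \Cref{thm:a5-three}, a finite subgroup of $SO(3)$ containing rotations of order divisible by five about two distinct axes must be icosahedral. In a cyclic group all rotations share one axis. In a dihedral group the only rotations of order greater than two share the principal axis. In the icosahedral group every element has order in $\{1,2,3,5\}$, so $\mC|_U$ has order dividing $30$. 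We then compute the order of $\mC$ from the trace: $\operatorname{tr}\mC=1+2\cos\phi$, where $\phi$ is its angle. Using the explicit angle $2\pi/5$ (or $4\pi/5$) and the angle between two five-fold axes ($\cos=\pm1/\sqrt5$), we aim to show that order $3$ cannot occur for two rotations through the \emph{same} angle with opposite sense. If $\dim U=4$, we use the quaternionic description from \Cref{prop:so3-four-dimensional-encoding}. A simple rotation through $\theta$ corresponds to a pair $(l,r)$ of unit quaternions, both with real part $\cos(\theta/2)$. Then $\mC$ corresponds to $(l_1l_1'^{-1},\,r_1r_1'^{-1})$. The left and right factors lie in finite subgroups of the unit quaternions, that is, binary polyhedral groups, and contain elements of order divisible by five. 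So they are binary cyclic, binary dihedral, or binary icosahedral. We would then bound the order of $l_1l_1'^{-1}$ and $r_1r_1'^{-1}$ by the same trace computation as in the three-dimensional case, applied to each quaternion component.

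\emph{Main obstacle.} The delicate point is the coplanar or coaxial situation, for example $\dim U=2$, or a cyclic left/right factor. There $\mC$ can equal $\mR_1^{\pm2}$, whose order is the full power-of-five order of $\mR_1$. This is only killed by the tenth power if that order is exactly $5$. I would first try to show that in these degenerate configurations the orientations force $\mC=\mI$ or $\mC^{5}=\mI$. If that fails, I would strengthen the hypotheses using the setting of \Cref{thm:s5-impossibility}: pass to the element $\mR_1^{5^{a-1}}$ of order exactly five, at the cost of changing the exponent $E$. For the generic icosahedral cases, the work is a finite trace computation.
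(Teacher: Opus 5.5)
There is a genuine gap at the one step that carries all the content: excluding order $3$ for $\mC|_U$ when $\dim U\geq3$. Your ingredients (the classification, and quaternion pairs in 4D) match the paper's. The trouble is that you rewrite $\mC=\mR_1\mR_1'$ as a product of two rotations and then plan to rule out order $3$ from the trace, using only the common angle and $\cos=\pm1/\sqrt5$ between five-fold axes. That data does not settle the question. For unit quaternions $(\cos\tfrac{\pi}{5},\sin\tfrac{\pi}{5}\,\vu)$ and $(\cos\tfrac{\pi}{5},\sin\tfrac{\pi}{5}\,\vw)$, the product has real part $\cos^2\tfrac{\pi}{5}-\sin^2\tfrac{\pi}{5}\,\vu^\top\vw$. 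This equals $\tfrac12$ (a $120^\circ$ rotation, order $3$) when $\vu^\top\vw=1/\sqrt5$, and $\cos\tfrac{\pi}{5}$ (order $5$) when $\vu^\top\vw=-1/\sqrt5$. Concretely, $72^\circ$ turns about two adjacent icosahedron vertices compose to a face rotation. For different planes, ``opposite sense'' only has meaning relative to how $\mR_2$ carries $P_1$ onto $Q$. (Note that $Q$ should be $\mR_2P_1$; $\mR_2P_2=P_2$.) On $U=P_1+\mR_2P_1$, however, $\mR_2$ need not act at all. If $\dim U=3$ but $\dim(P_1+P_2)=4$, then any $3$-dimensional $\mR_2$-invariant subspace must contain $P_2$, so $\mR_2$ cannot preserve $U$. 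Such configurations do occur inside finite subgroups of $SO(4)$ built from the binary icosahedral group. Your $\dim U=4$ sketch via $l_1l_1'^{-1}$ has the same ambiguity unless you rewrite it as a genuine commutator.

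The missing idea is to keep the commutator itself and to work on a space that both $\mR_1$ and $\mR_2$ preserve. The paper does this in four moves:
\begin{enumerate}
\item Take $P_1+P_2$ and pad it to $\R^4$.
\item Push $\mR_1,\mR_2$ through the two homomorphisms $f,f_-:SO(4)\to SO(3)$ of \Cref{prop:so3-four-dimensional-encoding}. These send a planar rotation to a rotation of $\R^3$ through the same angle, and their joint kernel is $\{\pm\mI_4\}$.
\item In $SO(3)$, the images either share an axis, in which case they commute and the commutator is trivial, or they generate an icosahedral group with distinct five-fold axes $\vu,\vv$.
\item For two rotations through the same angle $\theta$, with $x=\cos\theta$, the commutator has $\operatorname{tr}(\mX\mY\mX^\top\mY^\top)=[2-(1-x)^2(1-(\vu^\top\vv)^2)]^2-1$. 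This depends only on $(\vu^\top\vv)^2=\tfrac15$, so no sign ambiguity arises, and it equals $1+2x$. Hence both image commutators have order dividing $5$, which gives $\mC^5=\pm\mI$ and $\mC^{10}=\mI$.
\end{enumerate}

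The same viewpoint removes your ``main obstacle'', since commuting components give a trivial commutator rather than $\mR_1^{\pm2}$. In your $\dim U=2$ case, $\mR_2P_1=P_1$ together with $\theta\notin\{0,\pi\}$ forces $P_1=P_2$ or $P_1\perp P_2$. So $\mR_2|_{P_1}$ is a rotation and $\mC=\mI$; the $\pm2\theta$ branch never occurs. Your fallback of passing to $\mR_1^{5^{a-1}}$ would change the statement rather than prove the lemma.
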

\begin{proof}
\emph{First prove a three-dimensional version.}
Let $\mX,\mY\in SO(3)$ satisfy the lemma's assumptions, and write
$\theta$ for their common angle. In particular, their traces are equal:
\[
    \operatorname{tr}\mX=\operatorname{tr}\mY=1+2\cos\theta.
\]
We claim that $\mD:=\mX\mY\mX^\top\mY^\top$ satisfies $\mD^5=\mI_3$.
If their axes coincide, the rotations commute and $\mD=\mI_3$.
If their axes differ, we use the finite rotation-group classification
recalled before \Cref{thm:a5-three}. In a cyclic or dihedral rotation
group, all rotations of order greater than two share an axis.
Tetrahedral and octahedral rotations have orders at most four.
Therefore only the rotations of an icosahedron remain possible.

In this case, $\mX,\mY$ rotate about distinct axes through opposite vertices,
and $\theta$ is $2\pi/5$ or $4\pi/5$. We now choose these axes explicitly.
Let $\varphi=(1+\sqrt5)/2$. Place the icosahedron at the origin with vertices
\[
    (0,\pm1,\pm\varphi),\qquad
    (\pm1,\pm\varphi,0),\qquad
    (\pm\varphi,0,\pm1),
\]
where the signs are independent. Each pair of opposite vertices defines
one of the six vertex axes. By symmetry, we may take the axis of $\mX$
through $(0,1,\varphi)$. Rotation by $2\pi/5$ about this axis cycles
the other five vertex axes, so we may take the axis of $\mY$ through
$(1,\varphi,0)$. Therefore we can choose unit axis vectors
\[
    \vu=\pm\frac{(0,1,\varphi)^\top}{\sqrt{1+\varphi^2}},\qquad
    \vv=\pm\frac{(1,\varphi,0)^\top}{\sqrt{1+\varphi^2}}.
\]
Choose their signs so that $\mX,\mY$ both rotate through $\theta$
with the right-hand rule. The signs do not affect the squared inner product:
\[
    (\vu^\top\vv)^2
    =\frac{\varphi^2}{(1+\varphi^2)^2}=\frac15,
\]
where the last equality follows from $\varphi^2=\varphi+1$ and
$(1+\varphi^2)^2=5\varphi^2$.

It remains to show that $\operatorname{tr}\mD=1+2\cos\theta$.
Write $x=\cos\theta$.
Rodrigues' formula expresses the two rotations as
\[
    \mX=x\mI_3+(1-x)\vu\vu^\top+\sin\theta[\vu]_\times,
    \qquad
    \mY=x\mI_3+(1-x)\vv\vv^\top+\sin\theta[\vv]_\times,
\]
where $[\vw]_\times\vz=\vw\times\vz$.
Substituting into $\mD=\mX\mY\mX^\top\mY^\top$ and taking the trace gives
\[
\begin{aligned}
    \operatorname{tr}\mD
    =\left[2-(1-x)^2\bigl(1-(\vu^\top\vv)^2\bigr)\right]^2-1
    =\left[2-\tfrac45(1-x)^2\right]^2-1.
\end{aligned}
\]
Thus the factor $4/5$ comes from the angle between the two vertex axes.
It remains to substitute the rotation angle: since
$\theta\in\{2\pi/5,4\pi/5\}$, we have $x=(-1\pm\sqrt5)/4$.
Both values satisfy $4x^2+2x-1=0$, which implies
$\tfrac45(1-x)^2=1-2x$. Therefore
\[
    \operatorname{tr}\mD
    =(1+2x)^2-1=1+2x
    =\operatorname{tr}\mX=\operatorname{tr}\mY,
\]
where the second equality again uses $4x^2+2x-1=0$.
For a three-dimensional rotation, the trace $1+2\cos\theta$ uniquely
determines the angle in $[0,\pi]$. Hence $\mD$ also rotates through
$2\pi/5$ or $4\pi/5$, so $\mD^5=\mI_3$ as claimed.

\emph{Apply this calculation to the original planar rotations.}
Their two rotating planes span a space $U$ of dimension at most four.
Both matrices preserve $U$ and leave every vector in $U^\perp$ unchanged. We can
therefore restrict to $U$ and, if needed, add coordinates on which both
matrices act as the identity. This gives matrices in $SO(4)$ with the
same angles and orders, still generating a finite group.

The proof of \Cref{prop:so3-four-dimensional-encoding} constructs two
maps $f,f_-:SO(4)\to SO(3)$ that preserve multiplication. Each sends
a planar rotation to a three-dimensional rotation with the same
angle: in a basis aligned with the rotating plane, the images rotate
through $\theta$ and $-\theta$, respectively. Moreover, if both maps
send a matrix to the identity, that matrix must be $\mI_4$ or $-\mI_4$:
\[
    f(\mP)=f_-(\mP)=\mI_3\quad\Longrightarrow\quad
    \mP\in\{\mI_4,-\mI_4\}.
\]
For either map, the images of $\mR_1,\mR_2$ thus satisfy the
three-dimensional claim. Writing
$\mC=\mR_1\mR_2\mR_1^\top\mR_2^\top$, we obtain
\[
    f(\mC^5)=f_-(\mC^5)=\mI_3.
\]
Therefore $\mC^5=\pm\mI_4$, and squaring gives $\mC^{10}=\mI_4$.
Since the original matrices fix $U^\perp$, the same identity holds
in the original dimension.
\end{proof}

\paragraph{Consequences for CKDA and Gated DeltaProduct.}
With unit keys, CKDA transitions with $\beta\in[0,2]$ and
$\alpha_i\in[-1,1]$ are non-expansive, and
\Cref{thm:dplr-persistent-pairs-app} bounds their non-real unit-circle pairs
by one. Gated DeltaProduct transitions with $\beta_j\in[0,2]$ and scalar
$\alpha_g\in[0,1]$ are also non-expansive. Expanding the product of
$k$ Householder transformations gives
\[
    \rank(\mA_g-\alpha_g\mI)\leq k.
\]
Since $\alpha_g$ is real, at most $k$ eigenvalues can be non-real,
counted with algebraic multiplicity. They occur in conjugate pairs,
so $k\leq3$ allows at most one such pair per head.
\Cref{thm:s5-impossibility} therefore rules out both architectures under
finite reachability in each head, including input-dependent additive matrices and arbitrary joint
decoders.

For sufficiency at $k=4$, use the permutation-matrix construction
of \citet[Thm.~1]{siems2025}. Every permutation of five objects is a
product of at most four transpositions. Each transposition $(ij)$ has
matrix $\mI_5-2\vk_{ij}\vk_{ij}^\top$, where
$\vk_{ij}=(\ve_i-\ve_j)/\sqrt2$ is a unit key. Thus four Householder
transformations suffice, padding shorter products with $\beta_j=0$ and choosing
$\beta_j=2$ for the transpositions. Set the scalar gate to $\alpha=1$,
the additive term to zero at every step, and the initial state to $\mI_5$.
The state is then exactly the permutation matrix of the running group
product, so a single head tracks $S_5$ with precisely $120$ reachable
states and orthogonal transitions. Any $k>4$ is also sufficient by
padding with identities. Together with the lower bound, this proves that
four is the minimum number of Householder transformations needed for
single-layer Gated DeltaProduct tracking of $S_5$ under these assumptions.

\section{Three-layer simulation of finite and weighted automata}\label{app:wfamin}
\providecommand{\vc}{{\bm{c}}}
We prove \Cref{thm:multilayer} using the tasks and arithmetic conventions
of \Cref{app:tracking-tasks,app:tracking-precision}. The construction has
three recurrent layers: a clock, a buffer, and an accumulator. CKDA
implements the clock in one layer, saving one layer relative to the cited
four-layer DeltaNet construction.

\subsection{The clock, buffer, and accumulator}\label{app:automata-mechanism}
The idea is to give the recurrence several steps to apply a complicated
transition. Choose $m$ large enough that any block transitions  (the product of all transition matrices can be
factored into at most $m$ permitted matrices. Following
\citet[Thm.~3 and its proof]{siems2025}, collect $m$ consecutive inputs
and factor their combined transition, padding with identities when
necessary. Apply these
factors one at a time while reading the next block:
\begin{itemize}
 \item The \emph{clock} records the position modulo $2m$, telling the
 later layers which buffer slot and matrix factor to use.
 \item The \emph{buffer} retains the last $2m$ tokens. Its feed-forward
 lookup selects the factors for the previous block while retaining the
 inputs arriving in the current block.
 \item The \emph{accumulator} applies the selected factor at each step,
 carrying forward the product from earlier blocks.
\end{itemize}
For example, with $m=3$, inputs $4,5,6$ provide three steps to apply the
three factors representing inputs $1,2,3$. The readout corrects this
delay: it completes the unapplied factors and then applies the current
block's prefix. The first block is handled directly by lookup.

The same organization applies to the column-state matrix products in
\Cref{app:tracking-tasks}. The buffer and completion readout are those in
\citet[Appendix~D]{peng2025rwkv7} and
\citet[Thm.~3, Lems.~2--3]{siems2025}, extended to rational WFAs by
\citet[Thms.~6 and~11]{merrill_why_2026}; we refer to these works for
their constructions. The buffer uses DeltaNet updates, which are also
CKDA updates with identity diagonal gate. Below we give the
CKDA clock, the factor counts determining $m$, and the resulting
simulation argument.

\subsection{A one-layer clock}\label{app:automata-clock}
DeltaNet obtains the clock rotation from two alternating reflections,
using a parity layer to select between them
\citep[Lem.~7]{merrill_why_2026}. CKDA combines the two reflections
within one transition: its signed diagonal supplies one of them.

\begin{lemma}[One-step modular counter]\label{lem:wfamincount}
For every integer $m\geq1$ there is a \emph{one}-layer CKDA head whose readout determines
$i\bmod 2m$ at every position $i$.
\end{lemma}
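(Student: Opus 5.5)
We build the counter from a single planar rotation realized by one input-independent CKDA transition, following the motivation section and \Cref{prop:planar-groups}. Work in dimension $n=2$ (padding any further coordinates with gate $1$ and zero key component, so they are left untouched). Take the constant key $\vk=(\cos\theta,\sin\theta)^\top$ with $\theta=\pi/(2m)$, $\beta=2$, and the constant gate $\valpha=(-1,1)$. By \Eqref{eq:kda_alpha_rotation} with $\alpha=-1$, or equivalently by \Cref{prop:CKDA-canonical} with $\mB(2,\theta)$, the transition $\mA=(\mI-2\vk\vk^\top)\Diag(-1,1)$ is the rotation by $2\theta=\pi/m$. Since the transition is the same at every position, the clock needs no input information, and any input symbol produces the same $\mA$.

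Next we choose the state and additive term. Set the additive term $\mB(\vx_t)=0$, which in the CKDA form $\beta_t\vk_t\vv_t^\top$ means taking a zero value vector. Use a single value column and initial state $\vh_0=\ve_1$. (If a prescribed initial state is not allowed, we would instead write $\ve_1$ at the first step using the additive term, but the conventions of \Cref{app:tracking-model} permit a prescribed initial state.) Then $\vh_i=\mA^i\ve_1=(\cos(i\pi/m),\sin(i\pi/m))^\top$. The map $i\mapsto i\pi/m \bmod 2\pi$ is injective on residues modulo $2m$, so the $2m$ reachable states are distinct points on the unit circle. Reachability is therefore finite, and the states take values in the fixed field $\mathbb Q(\cos(\pi/m),\sin(\pi/m))$. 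The key entries $\cos(\pi/(2m))$ and $\sin(\pi/(2m))$ lie in a fixed algebraic extension, which the arithmetic convention of \Cref{app:tracking-precision} allows.

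For the readout, query with $\vq=\mI$ (reading both coordinates, or two queries), or with a single generic query as in \Cref{lem:finite-query-readout}. The reachable outputs are $2m$ points with positive minimum separation. A feed-forward lookup then maps each point to $i\bmod 2m$, and this decoder works correctly at every position, including $i=0$.

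The only delicate point is fitting the construction to the model conventions. These are the zero-additive-term or initial-state convention, the exact-datatype requirement for irrational cosines, and the fact that the parameterized gate $2\sigma(a)-1$ never reaches exactly $\pm1$. We handle these by using the theoretical closed ranges, as the rest of the section does. For $m=1$, $\theta=\pi/2$ and $\vk=\ve_2$, so $\mA=\Diag(-1,-1)$; the counter then alternates between $\pm\ve_1$, as required.
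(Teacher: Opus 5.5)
Your proof is correct and is essentially the paper's. Both use a single input-independent transition with $\beta=2$ and one gate entry equal to $-1$ that realizes a rotation by $\pi/m$; the paper takes $\valpha=(1,-1)$ and $\vk=(-\sin\tfrac{\varphi}{2},\cos\tfrac{\varphi}{2})^\top$ with $\varphi=\pi/m$, which is your construction up to relabeling. Both then start from $\ve_1$ and apply a fixed query that separates the $2m$ circle points, followed by a finite lookup. One small point: \Cref{lem:finite-query-readout} is stated for matrix states forming a finite orthogonal group, so for your single-column state the cleaner justification is the direct one used in the paper: choose $\vq$ outside the finitely many hyperplanes $(\vc_i-\vc_j)^\perp$.
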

\begin{proof}
With $\beta=2$, $\valpha=(1,-1)$ and
$\vk=(-\sin\tfrac{\varphi}{2},\cos\tfrac{\varphi}{2})^{\!\top}$ the transition
$(\mI-2\vk\vk^{\!\top})\Diag(\valpha)$ is a rotation by $\varphi$. Set
$\varphi=\pi/m$ and $\vc_0=\ve_1$. Then
$\vc_i=(\cos(\pi i/m),\sin(\pi i/m))^{\!\top}$ visits $2m$ distinct points.
Choose a fixed unit query $\vq$ such that
$\vq^\top(\vc_i-\vc_j)\ne0$ for $0\leq i<j<2m$. Such a query
exists because only finitely many lines are excluded. The single scalar
readout $\vq^\top\vc_i$ then distinguishes all clock states, and a
finite lookup returns the residue.
\end{proof}

This is similar to the  DeltaProduct$_k$ $k \geq 2$ counter
in \citet[Lem.~2(ii)]{siems2025}. Its rotation parameters are algebraic;
the fixed query can also be chosen algebraic by the same argument as in
\Cref{app:tracking-realization}. The clock uses the fixed exact datatype
of \Cref{app:tracking-precision}.

\subsection{Accumulator factorizations}\label{app:automata-factors}
The accumulator applies one matrix factor per recurrent step. We therefore
bound the number of factors required for permutation, deterministic, and
rational transitions. These bounds determine the block length $m\geq1$
in \Cref{app:automata-mechanism}: any combined block transition must admit
a factorization with at most $m$ factors.

\begin{proposition}[Factor-count upper bounds for the accumulator]\label{prop:automata-factor-counts}
Let $n\geq1$ be the automaton-state dimension. A deterministic transition
is an $n\times n$ matrix with one entry equal to $1$ in each column and
all other entries zero. The following numbers of recurrent matrix factors
are sufficient; these construction-dependent bounds need not be optimal:
\begingroup
\setlength{\arraycolsep}{3pt}
\[
\begin{array}{lccc}
\toprule
\text{matrix family or primitive} & \text{RWKV-7} & \text{DeltaNet/GDN} & \text{CKDA}\\
\midrule
\text{one transposition} & 1 & 1 & 1\\
\text{arbitrary permutation} & n-1 & n-1 & n-1\\
\text{one DFA column-copy primitive} & 1 & 4 & 2\\
\text{deterministic transition, explicit construction} & n & 4n & 2n-2\\
\text{deterministic transition, up to positive scale} & n & 3n & \max\{1,2n-2\}\\
\text{arbitrary rational }n\times n\text{ matrix} & 2n & (8n^2+5n+1)^{\dagger} & 7n^2+3n\\
\bottomrule
\end{array}
\]
\endgroup
The GDN entries $4$ and $4n$ retain a conservative allowance of four
factors per DFA column-copy primitive, defined below.
The positive-scale row uses an exact-real singular value decomposition
(SVD) for GDN and CKDA: it factors $\rho\mM$ for some $\rho>0$, where
$\mM$ is the transition, and gives no fixed rounded-datatype guarantee.
The rational-matrix row permits scratch coordinates (auxiliary storage):
RWKV-7 operates in dimension $2n$, and DeltaNet/GDN and CKDA in dimension
$2n+1$. The dagger marks the cited DeltaNet/GDN budget with unrestricted
unit-key learning rate $\beta\in\mathbb R$; the CKDA budget uses
$\beta\geq0$ and diagonal entries in $[-1,1]$.
The explicit CKDA deterministic construction needs only $\beta\in[0,3]$;
permutations use $\beta=2$ reflections. A product with no factors is the
identity; the swap and column-copy rows require $n\geq2$.
\end{proposition}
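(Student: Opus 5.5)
The CKDA column reduces to two ingredients: an explicit factorization of each row's primitive into CKDA transitions, and Proposition~\ref{prop:CKDA-factorization}. The RWKV-7 and DeltaNet/GDN columns are taken from the cited constructions of \citet{peng2025rwkv7,siems2025,merrill_why_2026}, and I only re-derive the CKDA entries.

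\emph{Transpositions and permutations.} A transposition $(ij)$ is the reflection $\mI-2\vk\vk^\top$ with $\vk=(\ve_i-\ve_j)/\sqrt2$, i.e.\ one factor with $\beta=2$ and identity gate. An arbitrary permutation is orthogonal, so the orthogonal bound $\max\{1,n-1\}$ of Proposition~\ref{prop:CKDA-factorization} applies. A more elementary route is to write the permutation as a product of at most $n-1$ transpositions, padding with identities ($\beta=0$).

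\emph{DFA column-copy primitive.} The key new ingredient is a two-factor CKDA realization of the column-copy primitive $\mM=\mI-\ve_j\ve_j^\top+\ve_i\ve_j^\top$, which merges state $j$ into state $i$. Since $\mM$ acts as the identity outside $\operatorname{span}\{\ve_i,\ve_j\}$, it suffices to work in two dimensions. There the primitive is the rank-one matrix with columns $\ve_i,\ve_i$, and I would factor it as $(\mI-\beta\vk\vk^\top)\Diag(\alpha_i,\alpha_j)$ composed with a projection-type factor. A concrete candidate is to first apply $\mI-\vk_1\vk_1^\top$ with $\vk_1=(\ve_i-\ve_j)/\sqrt2$, which maps both basis vectors to multiples of $(\ve_i+\ve_j)/2$. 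A second factor with a signed gate and some $\beta\le3$ then sends $(\ve_i+\ve_j)/2$ to $\ve_i$ while fixing the complement. The expansion this requires is what forces $\beta$ into $[0,3]$ rather than $[0,2]$.

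\emph{Deterministic transitions.} For the explicit construction, decompose $\mM$ as a permutation times a sequence of column-copy primitives. I would count the factors so that the total is $2n-2$, absorbing the permutation's reflections into the gates of the merge steps, which likely requires a careful ordering. The positive-scale row follows directly from the contraction part of Proposition~\ref{prop:CKDA-factorization} applied to $\rho\mM$ with $\rho=1/\|\mM\|_2$, giving $\max\{1,2n-2\}$ factors via an exact SVD.

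\emph{Rational matrices.} For the rational row, follow \citet{merrill_why_2026}: embed in dimension $2n+1$ with scratch coordinates and realize $\mM$ as a product of elementary updates. These are the scalings $x_i\leftarrow c x_i$ and the additions $x_i\leftarrow x_i+c x_j$, routed through scratch registers. Scalings of magnitude at most one are single diagonal gates. Larger scalings and additions need expansive rank-one factors with $\beta\ge0$, and each elementary operation should cost $O(1)$ CKDA factors. The diagonal gate lets each scratch-move absorb one sign or scale step, reducing the per-entry cost from roughly $8$ to $7$. Summing over the $n^2$ entries plus the $O(n)$ normalization and clearing steps gives $7n^2+3n$.

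I expect the main obstacle to be this last bookkeeping: showing that each addition $x_i\leftarrow x_i+cx_j$ costs exactly seven factors with $\beta\ge0$ and gates in $[-1,1]$, since negative-$\beta$ tricks are unavailable and must be replaced by signed gates. I would first try writing $\mI+c\ve_i\ve_j^\top$ as a product of two generalized Householders in the plane $\operatorname{span}\{\ve_i,\ve_j,\ve_{\mathrm{scratch}}\}$, and then check whether the signs can be supplied by the gate.
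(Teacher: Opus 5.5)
There is a genuine gap: the proposal gets the easy rows right but does not establish the two rows that carry the content, $2n-2$ and $7n^2+3n$.

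\textbf{Rows that are fine.} Your transposition, permutation and positive-scale rows match the paper. Your two-factor column copy is a valid alternative to the paper's $[\mathcal H_3(\ve_s)\mD_d]\,\mathcal H_{1/6}(3\ve_s+\ve_d)$, where $\mathcal H_\gamma(\vu):=\mI-\gamma\vu\vu^\top$, although you never exhibit the second factor. It does exist. After your $\beta=1$ projection, take gate $-1$ on coordinate $i$ and $+1$ elsewhere, unit key $(\ve_j-3\ve_i)/\sqrt{10}$, and $\beta=5/2$; this sends $(\ve_i+\ve_j)/2$ to $\ve_i$. With this first factor and $\alpha_i\geq0$, the second factor would need $\beta\geq5$. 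So your route genuinely relies on the signed gate, whereas the paper's needs only a zero gate.

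\textbf{Deterministic row ($2n-2$).} You give no count. The mechanism you suggest, absorbing the permutation's reflections into the gates of the merges, cannot work, because a transposition is not diagonal. Merges followed by an arbitrary permutation cost up to $2(n-r)+(n-1)$ for image size $r$. This exceeds $2n-2$ whenever $r<(n+1)/2$.

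The missing idea is that the permutation is prescribed only on the $r$ surviving states. Completing this partial bijection by closing each path into a cycle needs at most $r$ transpositions, and none when $r=1$ if the survivor is chosen to be the target. That gives at most $2n-2$.

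The paper organizes the same saving through the functional graph of $f$. It right-multiplies by $\mM_{i\to f(i)}$ for each noncycle vertex, in decreasing distance from its cycle, so column $f(i)$ is still $\ve_{f(i)}$ when copied. It then uses $\ell-1$ swaps per cycle of length $\ell$, for a total of $2(n-k)+(k-c)\leq 2n-2$.

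\textbf{Rational row ($7n^2+3n$).} This count is asserted rather than derived. The basic ingredient, a unit transvection with nonnegative learning rates, is missing. Your plan to use two generalized Householders cannot work without gates. On the plane of the keys, such a product would need determinant $1$ and trace $2$, which forces it to be the identity. With gates, you have not shown that a factorization exists. The paper uses the three-factor identity $\mI+\ve_d\ve_s^\top=\mathcal H_{1/3}(\ve_s+2\ve_d)\,\mathcal H_{1/2}(\ve_s)\,\mathcal H_2(\ve_s+\ve_d)$, with effective rates $5/3$, $1/2$ and $4$.

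Your explanation of the saving from $8$ to $7$ per entry is also not the operative one. A gate in $[-1,1]$ cannot absorb $t\leftarrow M_{ij}t$ when $|M_{ij}|>1$. In the paper, each entry costs:
\begin{itemize}
\item three factors for $t\leftarrow t+x_j$;
\item one factor for $t\leftarrow M_{ij}t$: $\mathcal H_{1+a}(\ve_t)$ after a gate $-1$ on $t$ if $a>0$, or $\mathcal H_{1+|a|}(\ve_t)$ if $a\leq0$;
\item three factors for $s_i\leftarrow s_i+t$.
\end{itemize}
The eighth step, $t\leftarrow0$, becomes a zero gate entry in the next factor.

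The $3n$ term is the copy of $\vs$ back into the main coordinates by $n$ unit transvections. The final clears of $t$ and of the main coordinates are absorbed into the gate of the first of these transvections. The initial clear of $\vs$ and $t$ is absorbed into the very first factor.
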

\begin{proof}
Write $\ve_j$ for the $j$th standard basis vector and $\mI$ for the
identity in the current state dimension. For a scalar $\gamma$ and vector
$\vu$, write $\mathcal H_\gamma(\vu)=\mI-\gamma\vu\vu^{\!\top}$.
A CKDA factor $\mathcal H_\gamma(\vu)\mD$ applies a diagonal gate $\mD$
first and a generalized Householder transformation second. For $\vu\ne0$,
its unit key is $\vk=\vu/\|\vu\|_2$ and its learning rate is
$\beta=\gamma\|\vu\|_2^2$. Rational
transition entries do not imply rational normalized keys.

\textit{Transpositions and permutations.}
A transposition (a swap of coordinates $i\ne j$) is
$\mathcal H_1(\ve_i-\ve_j)$, so a permutation costs at most $n-1$ factors
\citep[Thm.~3, proof]{siems2025}. RWKV-7 also implements each swap in one factor
\citep[Lem.~4]{peng2025rwkv7}. For CKDA, this worst-case count is sharp by the
$n$-cycle example in \Cref{prop:CKDA-factorization}.

\textit{Deterministic transitions: explicit construction.}
For distinct coordinates $s,d$, the DFA \emph{column-copy} primitive
merges state $d$ into state $s$: on a column state $\vx$, it sends
$(x_s,x_d)$ to $(x_s+x_d,0)$
and leaves other coordinates unchanged. Set
$\mD_d=\mI-\ve_d\ve_d^\top$, the diagonal gate that clears coordinate $d$.
The merge has the factorization
\[
 \mM_{d\to s}:=\mI-\ve_d\ve_d^\top+\ve_s\ve_d^\top
 =\bigl[\mathcal H_3(\ve_s)\mD_d\bigr]
   \mathcal H_{1/6}(3\ve_s+\ve_d).
\]
In the $(s,d)$ coordinates, this identity is simply
\[
 \begin{pmatrix}-2&0\\0&0\end{pmatrix}
 \begin{pmatrix}-1/2&-1/2\\-1/2&5/6\end{pmatrix}
 =\begin{pmatrix}1&1\\0&0\end{pmatrix}.
\]
The effective learning rates are $3$ and $5/3$, so each column copy uses
two CKDA factors. Separating $\mD_d=\mathcal H_1(\ve_d)$ gives three GDN
factors. RWKV-7 uses one factor per identity, swap, or column copy and at
most $n$ such primitives per deterministic transition
\citep[Lems.~3--4]{peng2025rwkv7}, yielding the RWKV-7 and GDN bounds.

For CKDA, use one merge per noncycle vertex and $\ell-1$ swaps per cycle
of length $\ell$. To see this, let $f:\{1,\ldots,n\}\to\{1,\ldots,n\}$
be the deterministic state map of the underlying deterministic automaton, so
$\mM\ve_i=\ve_{f(i)}$. In the graph with arrows
$i\to f(i)$, every component is a cycle with trees feeding into it.
Let $k$ count the vertices on cycles and $c$ count the cycles, including
fixed points. Right-multiplying by $\mM_{d\to s}$ replaces column $d$
by column $s$. Thus, starting from $\mI$, right-multiply by
$\mM_{i\to f(i)}$ for each noncycle vertex $i$, processing vertices in
decreasing distance from their cycle. This replaces column $i$ by column
$f(i)$, which is still the original $\ve_{f(i)}$. The remaining cycle
columns are then arranged by swaps within each cycle, leaving completed
noncycle columns unchanged. The column-state recurrence applies the
resulting factors from right to left.
Since a merge costs two CKDA factors and a swap costs one, the total is
\[
  \underbrace{2(n-k)}_{\text{noncycle merges}}
  +\underbrace{(k-c)}_{\text{cycle swaps}}
  =2n-k-c\leq 2n-2,
\]
because $k\geq c\geq1$. For $n=1$, the transition is the identity and
needs no factors.

\textit{Deterministic transitions: exact-real SVD.}
Choose $0<\rho\leq1/\max(1,\|\mM\|_2)$, where $\|\cdot\|_2$ is the
spectral norm, and write the SVD $\rho\mM=\mU\mSigma\mV^{\!\top}$.
Here $\mU,\mV$ are orthogonal and $\mSigma$ is diagonal with entries in
$[0,1]$. Positive scaling preserves the position of the nonzero entry in
a one-hot DFA state, so that state remains exactly decodable. For GDN,
the two orthogonal factors cost at most $n$ reflections each, and the
diagonal costs another $n$ generalized Householders, giving $3n$ factors
\citep[Prop.~1, item~2]{grazzi2025}.
For CKDA, \Cref{prop:CKDA-factorization} applies to the
contraction $\rho\mM$ and gives at most $\max\{1,2n-2\}$ factors by absorbing
the signed diagonal contraction into an adjacent Householder factor.

\textit{Arbitrary rational matrices.}
Let $\mM\in\mathbb Q^{n\times n}$ be the target matrix and
$\vx\in\mathbb R^n$ the input state. The RWKV-7 entry follows from the
$2n$-factor construction in \citet[Lems.~5--6]{merrill_why_2026}.
In their row-state convention, the
factor product is
$\left(\begin{smallmatrix}\mM&\mM\\0&0\end{smallmatrix}\right)$.
Applying these same factors in reverse execution order to a column state
maps $(\vx,0)$ to $(\mM\vx,0)$, so scratch coordinates remain zero at
block boundaries.

For CKDA, adapt the $8n^2+5n+1$-factor DeltaNet program of
\citet[Lems.~11--12]{merrill_why_2026}, using $n$ main coordinates $\vx$,
$n$ scratch coordinates $\vs$, and one temporary coordinate $t$.
First clear $\vs$ and $t$.
A \emph{unit coordinate addition} (transvection) adds coordinate $s$ to
coordinate $d\ne s$ of this augmented state, leaving all others unchanged.
It uses three factors:
\[
 \mI+\ve_d\ve_s^\top
 =\mathcal H_{1/3}(\ve_s+2\ve_d)
  \mathcal H_{1/2}(\ve_s)\mathcal H_2(\ve_s+\ve_d).
\]
This is the column-state transpose of \citet[Lem.~10]{merrill_why_2026}.
Each scaled addition $s_i\leftarrow s_i+M_{ij}x_j$, for $1\leq i,j\leq n$,
uses $t\leftarrow t+x_j$, $t\leftarrow M_{ij}t$, $s_i\leftarrow s_i+t$,
then $t\leftarrow0$, where $M_{ij}$ is the $(i,j)$ entry of $\mM$.
CKDA absorbs this clear into the diagonal gate of the
next Householder factor, reducing the cost from eight to seven factors.

For a rational scaling coefficient $a>0$, define the coordinate sign flip
$\mS_j=\mI-2\ve_j\ve_j^\top$ and use
$\mathcal H_{1+a}(\ve_j)\mS_j$ to scale coordinate $j$ by $a$.
For $a<0$, use $\mathcal H_{1+|a|}(\ve_j)$, and for $a=0$ use
$\mathcal H_1(\ve_j)$. Each is one admissible CKDA factor with
nonnegative learning rate. The cited GDN scaling
$\mathcal H_{1-a}(\ve_j)$ instead needs a negative learning rate for $a>1$
\citep[Lem.~9]{merrill_why_2026}.

The initial clear of scratch and temporary coordinates is absorbed into
the first Householder factor. At the end, the final temporary clear and
all main-coordinate clears are absorbed into the first of $n$ three-factor
unit additions that copy scratch back to main. The total is therefore
\[
 \underbrace{n^2(3+1+3)}_{\text{scaled additions; clears absorbed}}
 +\underbrace{3n}_{\text{copy back}}
 =7n^2+3n.
\]
The program maps $(\vx,\vs,t)$ to $(\mM\vx,\mM\vx,0)$ for arbitrary
initial scratch and temporary contents.
\end{proof}

The DFA simulation below uses the explicit CKDA construction and the
fixed exact datatype of \Cref{app:tracking-precision}.

\subsection{Three-layer expressivity}\label{app:automata-simulation}
We now combine the clock with the cited buffer and completion readout.
The exact finite-datatype and polynomial-precision conventions below
are those defined in \Cref{app:tracking-precision}.

\begin{theorem}[Three layers suffice; restatement of Theorem~\ref{thm:multilayer}]\label{prop:wfamin}
Three CKDA layers solve every finite group-word problem under the
exact finite-datatype convention above. Allowing $\beta>2$ also gives
every regular language under that convention and every WFA over
$\mathbb Q$ in polynomial precision, using exact arithmetic over the
fixed algebraic number field specified above.
\end{theorem}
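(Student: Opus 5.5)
The plan is to instantiate the clock--buffer--accumulator template of \Cref{app:automata-mechanism} with CKDA in every layer, and to check that each of the three recurrences is a CKDA update under the stated parameter ranges and arithmetic conventions.

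\textbf{Layer 1 (clock).} By \Cref{lem:wfamincount}, one CKDA head with $\beta=2$ and $\valpha=(1,-1)$ realizes a rotation by $\pi/m$. Its readout gives $i \bmod 2m$ at every position. The block length $m$ is chosen from \Cref{prop:automata-factor-counts} and depends on the task:
\begin{itemize}
\item $m=n-1$ for permutations of a faithful permutation representation of the group;
\item $m=2n-2$ for DFA transitions, using the explicit construction;
\item $m=7n^2+3n$ for rational WFA matrices, in dimension $2n+1$.
\end{itemize}
The rotation entries lie in a fixed algebraic field. The reachable set is the $2m$ clock states, so finite reachability and a fixed exact datatype hold.

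\textbf{Layer 2 (buffer).} This layer takes the clock residue and the current token, as permitted by the multilayer conventions of \Cref{app:tracking-model}. I would reuse the buffer of \citet[Appendix~D]{peng2025rwkv7} and \citet[Thm.~3, Lems.~2--3]{siems2025} essentially unchanged. Those updates are DeltaNet updates $\mI-\beta\vk\vk^\top$ with $\beta\in\{0,1\}$, hence CKDA updates with identity gate. Inputs are written into the slot indexed by the clock, so the feed-forward map after this layer sees the last $2m$ tokens together with the phase. A finite lookup then outputs the key, $\beta$, and gate of the next factor of the previous block's combined transition.

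\textbf{Layer 3 (accumulator).} The accumulator applies one selected factor per step, with initial state the one-hot start state or $\bm\alpha_0$ padded with zero scratch. The completion readout multiplies the accumulator state by the remaining factors and the current-block prefix; these are finitely many matrices chosen by lookup. This corrects the delay exactly as in the cited constructions. The first block is read out directly.

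The three cases then separate as follows.
\begin{itemize}
\item \textbf{Groups.} Transpositions use $\beta=2$, so everything stays within $\beta\in[0,2]$, reachability is finite, and the datatype is fixed.
\item \textbf{DFAs.} The column-copy factors need $\beta$ up to $3$, which is the only place $\beta>2$ enters. Finiteness follows because the accumulator states are partial products of finitely many block factorizations applied to one-hot vectors, and the factors have rational entries and fixed algebraic keys.
\item \textbf{WFAs.} Entries grow, but each step multiplies by a fixed matrix over $K$, so coefficient bit length grows linearly in $T$. Together with the scratch-clearing invariant of \Cref{prop:automata-factor-counts}, this gives polynomial precision and $y_t=\bm\omega^\top\bm p_t$ via a linear readout.
\end{itemize}

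\textbf{Main obstacle.} I expect the hardest part to be the interface between layers. The factor lookup and completion readout must be functions only of the buffer readout and the clock phase. In the WFA case, the readout must be \emph{linear} in the accumulator with lookup-selected coefficients, and the intermediate accumulator state mid-block must be correctly completed. I would handle this by writing the completion explicitly. After $j$ steps of block $b+1$, the accumulator holds $F_j\cdots F_1 P_b$, where $P_b$ is the product of the first $b$ block transitions and $F_j\cdots F_1$ are the first $j$ factors of the block-$(b+1)$ combined transition. Left-multiplying by $F_m\cdots F_{j+1}$ and then by the current partial block product gives the true state; both products are finite lookups over the buffer contents. The remaining step is to verify that scratch coordinates are zero at every block boundary, which the factor program guarantees.
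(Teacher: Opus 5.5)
Your proposal follows the paper's proof. That proof instantiates the clock--buffer--accumulator construction of \citet[Thm.~11]{merrill_why_2026}, replaces its two clock layers by the one-layer CKDA clock of \Cref{lem:wfamincount}, and streams the factors of \Cref{prop:automata-factor-counts}. Your account of the layer interfaces and of the linear WFA completion readout is a faithful expansion of it.

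Three small corrections, none of which breaks the argument.
\begin{enumerate}
\item The CKDA rational program sends $(\vx,\vs,t)\mapsto(\mM\vx,\mM\vx,0)$, so scratch is not zero at block boundaries. Nothing requires it to be, because the program clears $\vs$ and $t$ in its first factor and works for arbitrary scratch contents. The invariant you need is this independence, not vanishing scratch.
\item Mid-block, the accumulator holds factors of the previous block. The completion is therefore (prefix of the block being read)$\cdot$(remaining factors of the previous block), not factors of the same block.
\item Take $m=\max(1,n-1)$ and $m=\max(1,2n-2)$, so that the clock rotation $\pi/m$ is defined when $n=1$.
\end{enumerate}
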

\begin{proof}
Use the clock--buffer--accumulator construction of
\citet[Thm.~11]{merrill_why_2026}, summarized in
\Cref{app:automata-mechanism}, replacing its two clock layers by
\Cref{lem:wfamincount}. The
remaining layers stream the permutation, DFA, or rational WFA factors
from \Cref{prop:automata-factor-counts}, with
$m=\max(1,n-1)$, $m=\max(1,2n-2)$, or $m=7n^2+3n$, respectively.
The exact finite-datatype and polynomial-storage guarantees follow from
\Cref{app:tracking-precision}.
\end{proof}

\subsection{Comparison with other transition families}\label{app:table-comparison}
\Cref{tab:theory-summary} compares transition families under the expressive
decoder and precision conventions used in the cited constructions. Its
group rows allow every group element as an input token. In particular,
the one-layer swap-tracking result of \citet[Lem.~2]{peng2025rwkv7} does
not give one-layer tracking of arbitrary $S_5$ inputs. The displayed
depths are upper bounds from constructions; a one-layer obstruction and
a three-layer construction leave the two-layer case undecided.

\paragraph{Spectral restrictions.}
For scalar-gated DeltaProduct, write
$\mA=\alpha\prod_{j=1}^{k}(\mI-\beta_j\vk_j\vk_j^\top)$.
Every vector orthogonal to all keys is an eigenvector with real eigenvalue
$\alpha$. Therefore at most $\min(d,k)$ eigenvalues can be non-real,
giving at most $\lfloor\min(d,k)/2\rfloor$ conjugate pairs.
For RWKV-7 and GDN-2 with nonnegative decay and erase gates, the transition
has the form $\mD-\vu(\vv\odot\vu)^\top$ with $\vv\geq0$.
When $\vv>0$, conjugation by $\Diag(\sqrt{\vv})$ makes this matrix
symmetric; zero entries follow by continuity. Hence its spectrum is real,
and the one-layer impossibility result of
\citet[Thm.~2 and App.~B.2]{grazzi2025}, under their finite-precision
assumptions, applies to every group row containing an element of order
greater than two.
The unrestricted WFA parameterizations are not subject to this
nonnegative-gate assumption.

\paragraph{RWKV-7 and GDN-2.}
The shared RWKV-7/GDN-2 entries are transfers between transition families,
not additional theorems claimed by the GDN-2 paper. For the finite-state
constructions, the table uses unit keys, decay entries in $[0,1]$, and
erase entries in $[0,2]$, including the endpoints. In RWKV-7 notation,
this absorbs the factor $c=2$ into the erase gate. The $c=1$ adaptation
in \citet[Appendix~D.3]{peng2025rwkv7} instead rescales the state and
uses normalization and growing exponent storage; our fixed-datatype
entries refer to the $c=2$ construction.
Both contain DeltaNet
when the decay is one and the erase gate is constant. This transfers the
group constructions of \citet[Thms.~1 and~7]{siems2025}.
For GDN-2, use its update in \citet[Eq.~(10)]{hatamizadeh2026gdn2} with
the erase range extended to $[0,2]$. It also implements the DFA column-copy primitive that sends state $j$ to
state $i$: take $\mD=\mI$, $\vk=(\ve_i-\ve_j)/\sqrt2$, and
erase gate $2\ve_j$, obtaining $\mI+(\ve_i-\ve_j)\ve_j^\top$.
Thus it supports the identity, swap, and copy primitives in the four-layer
regular-language construction of \citet[Thm.~3]{peng2025rwkv7}.
The WFA entry instead uses unrestricted parameters, as in
\citet[Thm.~6]{merrill_why_2026}; GDN-2 inherits their DeltaNet construction
by allowing an unrestricted constant erase gate.

\paragraph{DeltaProduct and automata bounds.}
The DeltaProduct automata entries combine existing constructions.
\citet[Lem.~2(ii)]{siems2025} provide its one-layer clock, which replaces
the two clock layers in the four-layer DeltaNet simulation of
\citet[Thm.~11]{merrill_why_2026}. This gives three layers when $k\geq2$;
when $k=1$, the original four-layer bound applies.
For the one-layer entries, \citet[Lem.~3]{peng2025rwkv7} factor a
$q$-state deterministic transition into $q$ identity, swap, or copy
operations. The DFA primitive here copies a column of the identity, not a coordinate of
a column-state vector. The identity in \Cref{prop:automata-factor-counts}
implements it using three Householder factors, so the existing $4q$
factor allowance remains sufficient. The largest effective unit-key learning
rate in that identity is $3$; in particular, $\beta\in[0,4]$ suffices for
the stated regular-language bounds.
\citet[Lem.~12]{merrill_why_2026} gives
$B_q=8q^2+5q+1$ factors for a rational $q\times q$ matrix with scratch
coordinates and unrestricted, possibly negative, learning rates. A DeltaProduct transition with enough factors can apply
these entire products in one step. Thus these table entries are
consequences of the cited results, not new factorization or simulation
claims. In particular, \citet[Thm.~6]{siems2025} concerns products of
RWKV-7 matrices, so it is not a direct source for the Householder-product
regular-language bound.
When all learning
rates remain in $[0,2]$, the regular-language result of
\citet[Thm.~2]{siems2025} instead gives a finite depth that depends on the
language; the scalar gate supplies resets. All finite-precision positive
entries use the exact finite-datatype convention of \Cref{app:tracking-precision}, while
our WFA clock adaptations use exact arithmetic over a fixed algebraic
number field, with polynomial bit length as explained in \Cref{app:tracking-precision}. The WFA weights
and outputs are rational; this does not require all network parameters to be
rational.

\paragraph{The clock obstruction.}
Under the finite-precision assumptions of
\citet[Thm.~2 and Appendix~B.2]{grazzi2025}, a one-layer recurrence
with only real transition eigenvalues cannot count modulo $q>2$.
This applies to GDN with $\mA=\alpha(\mI-\beta\vk\vk^\top)$,
including $\alpha\in[-1,1]$, and to DeltaNet at $\alpha=1$.
In the cited
GDN construction, a parity layer followed by alternating reflections
supplies the clock. CKDA instead composes the two reflections within
one transition. This saves one layer relative to that construction; it
does not prove that every GDN automata simulation requires four layers.

\section{Efficient Implementation of the Signed Gate}\label{app:gauge}\label{app:kernels}
KDA stores decay magnitudes in log-space. To support signed gates, we absorb cumulative signs into the keys and queries, allowing the reuse of existing kernels in flash-linear-attention~\citep{yang2024fla}. Fusing these sign flips into key and query normalization then avoids separate transformation passes.

Kimi K3's safe sigmoid~\citep{team2026kimi} uses $g=-5\sigma(a)$ and $\alpha=\exp(g)$ for the scaled, biased gate preactivation $a$, ensuring $\alpha\geq\epsilon:=e^{-5}$ (indices suppressed). Our signed modification sets $r=\tanh(a/2)=2\sigma(a)-1$ and $\alpha=s[\epsilon+(1-\epsilon)\abs{r}]$, with $s=+1$ for $r\geq0$ and $s=-1$ otherwise, and computes $g=\log\abs{\alpha}\in[-5,0]$ in FP32 without additional clamping. At $a=0$, $\alpha=\epsilon$ and the signed gate is discontinuous; backpropagation detaches $s$ and uses PyTorch's zero subgradient for $\abs{r}$ at zero.

Write $\Diag(\valpha_i)=\mS_i\mD_i$, where $\mS_i=\Diag(\vsigma_i)$ with $\vsigma_i\in\{\pm1\}^n$ and $\mD_i=\Diag(\abs{\valpha_i})$; the sign of a zero gate entry can be chosen arbitrarily. To cancel the signs accumulated by the state, let $\mP_i:=\mS_1\cdots\mS_i$ with $\mP_0=\mI$. These diagonal sign matrices satisfy $\mP_i^\top=\mP_i=\mP_i^{-1}$.

\begin{lemma}[Sign absorption]\label{lem:gauge}
Let $\mH_i=(\mI-\beta_i\vk_i\vk_i^\top)\mS_i\mD_i\mH_{i-1}+\beta_i\vk_i\vv_i^\top$ with readout $\vo_i=\mH_i^\top\vq_i$. In the transformed coordinates $\tilde\mH_i:=\mP_i\mH_i$, $\tilde\vk_i:=\mP_i\vk_i$ and $\tilde\vq_i:=\mP_i\vq_i$, the same computation becomes
\[
\begin{aligned}
    \tilde\mH_i&=(\mI-\beta_i\tilde\vk_i\tilde\vk_i^\top)\mD_i\tilde\mH_{i-1}+\beta_i\tilde\vk_i\vv_i^\top,\\
    \vo_i&=\tilde\mH_i^\top\tilde\vq_i.
\end{aligned}
\]
Moreover, $\tilde\mH_0=\mH_0$, $\|\tilde\vk_i\|=\|\vk_i\|$, and $\mH_i=\mP_i\tilde\mH_i$.
\end{lemma}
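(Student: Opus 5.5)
The plan is a direct induction on $i$, using only that each $\mS_i$ and $\mP_i$ is a diagonal sign matrix, so it is symmetric, involutory, and orthogonal. The key commutation facts are that diagonal matrices commute, so $\mS_i\mD_i=\mD_i\mS_i$, and that $\mP_i=\mS_i\mP_{i-1}$ together with $\mS_i^2=\mI$ gives $\mP_i\mS_i=\mP_{i-1}$.

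For the base case, take $\mP_0=\mI$, which gives $\tilde\mH_0=\mH_0$ immediately. For the step, multiply the original recurrence on the left by $\mP_i$ and insert $\mI=\mP_i\mP_i$ around the Householder factor:
\[
\mP_i(\mI-\beta_i\vk_i\vk_i^\top)=(\mI-\beta_i\mP_i\vk_i\vk_i^\top\mP_i)\mP_i=(\mI-\beta_i\tilde\vk_i\tilde\vk_i^\top)\mP_i .
\]
Next, rewrite $\mP_i\mS_i\mD_i\mH_{i-1}=\mP_{i-1}\mD_i\mH_{i-1}=\mD_i\mP_{i-1}\mH_{i-1}=\mD_i\tilde\mH_{i-1}$, using that $\mP_{i-1}$ and $\mD_i$ are both diagonal and therefore commute. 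The additive term transforms as $\mP_i\beta_i\vk_i\vv_i^\top=\beta_i\tilde\vk_i\vv_i^\top$. Combining these three pieces yields the stated transformed recurrence.

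For the readout, compute $\vo_i=\mH_i^\top\vq_i=(\mP_i\tilde\mH_i)^\top\vq_i=\tilde\mH_i^\top\mP_i\vq_i=\tilde\mH_i^\top\tilde\vq_i$, using $\mH_i=\mP_i\tilde\mH_i$ (since $\mP_i^{-1}=\mP_i$) and $\mP_i^\top=\mP_i$. Norm preservation $\|\tilde\vk_i\|=\|\vk_i\|$ holds because $\mP_i$ is orthogonal.

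I do not expect a real obstacle. The only point needing care is the index bookkeeping: the sign $\mS_i$ must be absorbed into the cumulative $\mP_i$ and not into $\mP_{i-1}$, which is exactly the identity $\mP_i\mS_i=\mP_{i-1}$. Zero gate entries cause no difficulty, since any sign choice for them leaves $\mS_i\mD_i=\Diag(\valpha_i)$ unchanged.
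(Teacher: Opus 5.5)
Your proposal is correct and follows essentially the same route as the paper: you left-multiply by $\mP_i$, conjugate the Householder factor by the symmetric involution $\mP_i$, and use commutativity of diagonal matrices together with $\mP_i\mS_i=\mP_{i-1}$ to turn $\mP_i\mS_i\mD_i\mH_{i-1}$ into $\mD_i\tilde\mH_{i-1}$; the paper writes this same step as $\mS_i\mD_i\mP_{i-1}=\mP_i\mD_i$. The readout, norm, and inverse-transformation claims then follow, as in the paper, from $\mP_i^\top=\mP_i=\mP_i^{-1}$.
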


\begin{proof}
Left-multiply the recurrence by $\mP_i$ and substitute $\mH_{i-1}=\mP_{i-1}\tilde\mH_{i-1}$. Since diagonal matrices commute, $\mS_i\mD_i\mP_{i-1}=\mP_i\mD_i$, so the transition becomes
\[
    \mP_i(\mI-\beta_i\vk_i\vk_i^\top)\mP_i\mD_i
    =(\mI-\beta_i(\mP_i\vk_i)(\mP_i\vk_i)^\top)\mD_i
    =(\mI-\beta_i\tilde\vk_i\tilde\vk_i^\top)\mD_i.
\]
The additive term becomes $\beta_i\mP_i\vk_i\vv_i^\top=\beta_i\tilde\vk_i\vv_i^\top$, and the readout satisfies $\mH_i^\top\vq_i=\tilde\mH_i^\top\mP_i\vq_i=\tilde\mH_i^\top\tilde\vq_i$. Orthogonality gives the norm identity and the inverse transformation; $\mP_0=\mI$ gives the initial condition.
\end{proof}

\paragraph{Relation to rotary embeddings.}
Absorbing cumulative transformations into keys and queries is familiar from RoPE and RetNet~\citep{su2024rope,sun2023retentive}, and from input-dependent rotary formulations of SSMs and Gated DeltaNet~\citep{lahoti2026mamba,movahedi2025srope}. Here, diagonal signs commute with arbitrary channel-wise decay, preserving all $n$ independent decay magnitudes. The transformation introduces no parameters or additional recurrent state.

\paragraph{Three implementations of the sign transformation.}
The implementation labeled \emph{gauge} uses compiled PyTorch operations, with no kernel changes. It computes cumulative signs and materializes $\tilde\vq_i=\mP_i\vq_i$ and $\tilde\vk_i=\mP_i\vk_i$ using \texttt{torch.compile}, then calls existing KDA kernels on $\abs{\valpha_i}$. It requires no kernel modification; the benchmark uses a TileLang-backed recurrence for this path. The Triton implementation computes signs by an integer parity scan and jointly normalizes and transforms keys and queries. This fusion uses $(\mP_i\vx)/\|\mP_i\vx\|=\mP_i(\vx/\|\vx\|)$; backward normalization and sign application are fused into the final query/key gradient writes. The TileLang-backed hybrid combines these Triton components with TileLang kernels for the WY backward computation; it is not an all-TileLang implementation. The algebraic recurrence is identical in all three cases. The final state is recovered as $\mH_i=\mP_i\tilde\mH_i$, and its incoming gradient is transformed by the same signs. Our models use equal key and value head counts; grouped value attention with value-head-specific signs requires expanding keys and queries.

\paragraph{Recurrence-kernel benchmark.}
\Cref{fig:kernel-throughput_recurrence} measures forward and backward on an H100 with BF16 inputs, 16 heads, and $d_k=d_v=128$, holding the number of logical tokens per step at $32{,}768$. Projections, optimizer updates, and final-state output are excluded. Each implementation is wrapped in \texttt{torch.compile}; kernel compilation and autotuning precede 20 warm-up iterations per shape and 50 timed iterations per round. Bars report mean throughput over four rounds. The normal and signed kernel paths both use normalization fusion, and DeltaProduct$_2$ uses its tuned configuration without a forget gate at the same head dimensions; its two updates per token are not counted as extra logical tokens. The timed recurrence processes two separately projected value inputs per token, one for each delta-rule update, so the throughput includes the recurrence work associated with this additional value input relative to CKDA. Computing the projections is outside the timed region. Triton and hybrid measurements use separate H100 allocations with the same default dot-precision policy, without explicitly setting \texttt{TRITON\_F32\_DEFAULT}. The PyTorch-gauge series comes from an earlier allocation without normalization fusion and is therefore an implementation reference rather than a fully matched fusion ablation. Note that we do not benchmark FlashKDA~\citep{flashkda2026} because it contains only inference kernels, while we benchmark the forward and backward pass.

\section{How the key activation affects learning group-word problems}\label{app:silu}

The key path of DeltaNet~\citep{yang2024deltanet}, inherited by KDA, is $\vx\to\mW_k\to\mathrm{SiLU}\to\ell_2$. Since $\mathrm{SiLU}\ge-0.2785$, normalization turns that floor into an angular prior: a unit key with a negative component of size $c$ requires $\norm{\mathrm{SiLU}(\vz)}\le0.2785/c$. Every unit-key direction remains reachable, so this is a possible optimization bias rather than a representational restriction. At $\beta=2$, $S_3$ needs no negative component at all: $\vk\in\{(\tfrac{\sqrt3}{2},0,\tfrac12),(\tfrac12,0,\tfrac{\sqrt3}{2}),\ve_3\}$ realizes all five non-identity elements, the two of order $3$ reusing the first two keys. The cube realization of $S_4$ in Appendix~\ref{app:cube} uses mixed-sign keys; this does not establish that every higher-dimensional realization or decoded tracker must do so.

\citet{dubinin2026} report that BD-LRU solves $S_4$ with much higher sample efficiency compared to DeltaProduct. But DeltaProduct shares the same key path as DeltaNet, while BD-LRU's entrywise gating has none to constrain. We hypothesize that the SiLU key activation contributes to optimization differences. The ablation in~\Cref{fig:group_word_problems_baselines} is group-dependent: removing SiLU improves the displayed $A_5$ result, whereas the displayed $S_4$ result is stronger with SiLU. It therefore does not establish SiLU as the cause of the cited $S_4$ gap.

\section{Experimental Details}\label{app:exp}
\subsection{Initialization}\label{app:initialization}
\begin{figure}[htbp]
    \centering
    \includegraphics[width=0.6\linewidth]{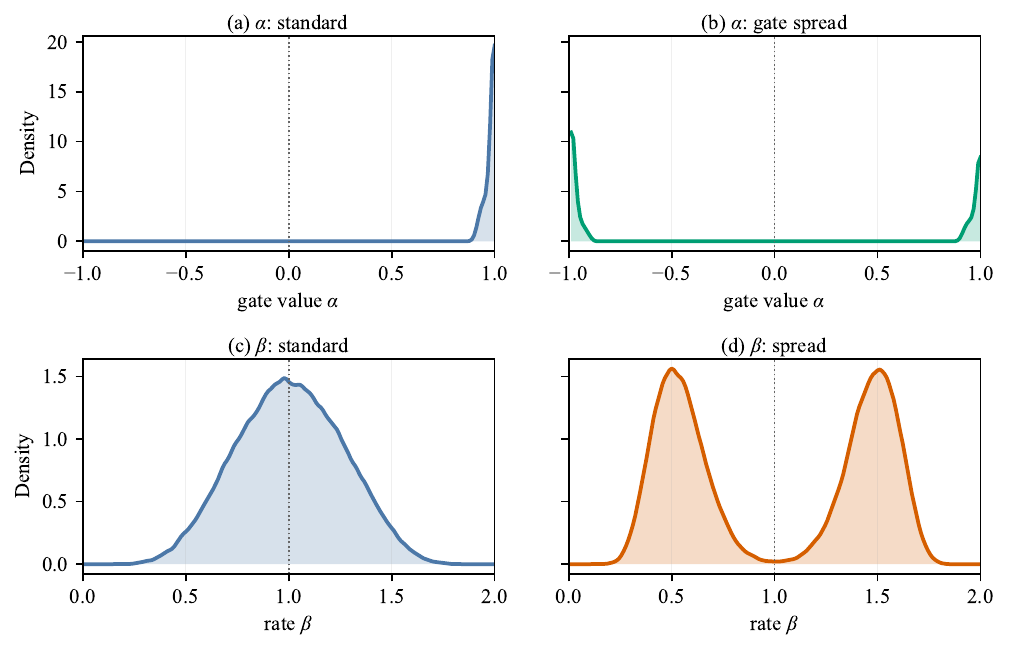}
    \vspace{-5mm}
    \caption{Standard and spread initializations for the signed extended-range layer.}
    \label{fig:initialization}
\end{figure}
The gate starts from a log-uniform decay parameter $d_t$ and applies the corresponding inverse parametrization, so that the initial magnitude is $|\alpha|=\exp(-d_t)$. For a signed gate, standard initialization uses $\alpha=+\exp(-d_t)$, whereas gate spread independently flips the sign of approximately half of the channels, $\alpha=s\exp(-d_t)$ with $s\in\{-1,+1\}$. The rate uses $\beta=c\,\sigma(b)$, where $\sigma$ is the sigmoid, $b$ is the rate preactivation, $c=1$ for the standard range, and $c=2$ for the extended range. Beta spread rescales the projection and adds opposite biases to different heads, $\beta=2\sigma(\gamma z\pm b_0)$, where $z$ is the unshifted projection output, producing two modes near $0.5$ and $1.5$. The resulting empirical distributions are shown in~\Cref{fig:initialization}.

\subsection{State-Tracking}\label{app:exp_state_tracking}
Standard models use one layer with 12 heads of dimension 16, trained for 60k steps with batch size 1024 and Muon~\citep{jordan2024muon} at learning rate $5\times10^{-3}$. We use the length curriculum $4,6,8,16,32$ and report the best of three seeds. The curriculum aids learning, consistent with~\citet{beck2024xlstm,siems2026learning}, and Muon improves it further. For a group $G$, scaled accuracy $(a-1/|G|)/(1-1/|G|)$ maps chance to 0 and perfection to 1; lengths above 32 test extrapolation.  

For the theory-initialized $A_5$ experiment, we use $A_5\cong2I/{\pm1}$ to assign each element a unit-quaternion lift $q_g\in\mathbb{R}^4$ following Appendix~\ref{app:A5const}. Two heads are initialized with $k_g=q_g$, $\valpha=(-1,1,1,1)$, and $\beta\approx2$, yielding $T_g\approx(I-2q_gq_g^\top)\operatorname{diag}(-1,1,1,1)$. One head suffices theoretically; two improve empirical robustness. The model has four heads of dimension $4$, hidden dimension $32$, and a standard MLP readout. All parameters, including initialized embeddings and KDA projections, remain trainable. We train for $3$k steps on all 60 elements using next-state cross-entropy and lengths $4,6,7,8,10,12,16,32$, without auxiliary representation loss, restricted-generator curriculum, or maxout readout.

The robustness limitation discussed in \Cref{sec:exp} does not preclude exact infinite-horizon tracking in exact arithmetic~\citep{chung2026rethinking}. Complementarily, \citet{dankowiakowski2026metric} show that linear-recurrence SSMs cannot robustly recognize all star-free languages, including FLIP-FLOP, whereas nonlinear xLSTM can. State-dependent nonlinear RNNs can implement error-correcting dynamics unavailable to affine recurrences~\citep{beck2024xlstm,poppel_flashrnn_2025,danieli2026pararnn,mishra2026m}.

\subsection{Periodic waveform continuation}
\label{app:waveform_continuation_details}
\paragraph{Data and task.} We synthesize a fixed two-bar groove at \(124\) BPM with \(32\) sixteenth-note frames, render stereo audio at \(4096\) Hz, convert it to mono, and resample each frame to \(64\) waveform values. The resulting periodic sequence \(\mathbf{Y}\in\mathbb{R}^{32\times64}\) is normalized independently along each waveform coordinate. Each training example is circularly shifted by a phase \(s\sim\operatorname{Uniform}\{0,\ldots,31\}\), with targets \(\mathbf{y}_t=\mathbf{Y}_{(s+t)\bmod32}\) and inputs
\[
    \mathbf{x}_t =
    \begin{cases}
        \mathbf{y}_t, & t<8,\\
        \mathbf{0}, & t\geq8.
    \end{cases}
\]
The model thus observes eight frames (one half-bar, approximately \(0.97\) seconds); inputs and targets have shape \(B\times L\times64\), and mean-squared error is computed only after the cue (\(t\geq8\)). Using shifts of one fixed groove isolates phase inference and periodic extrapolation rather than general-purpose audio generation.

\paragraph{Architectures.} All models use one sequence-mixing layer with hidden size \(128\), a bias-free \(64\rightarrow127\) input projection followed by an appended constant coordinate, and a shared \(128\rightarrow512\rightarrow64\) GELU readout. KDA models use eight heads with key and value dimensions \(16\), no short convolution, and no post-recurrent SiLU activation. We compare all four combinations of \(\alpha\in[0,1]\) or \([-1,1]\) and \(\beta\in[0,1]\) or \([0,2]\), using the corresponding spread initialization for extended ranges and default initialization for standard ranges. KDA models have approximately \(182\)k parameters; the GRU baseline has one \(128\)-dimensional recurrent layer and \(206\)k parameters. The causal Transformer has one eight-head pre-norm self-attention block, causal masking, sinusoidal positional encodings, and a \(128\)-dimensional feed-forward block, totaling \(207\)k parameters.

\paragraph{Training.} All models are trained from scratch for \(2500\) updates with batch size \(64\), seed \(0\), and the sequence-length curriculum \(12,16,24,40,72,136\). The curriculum occupies the first \(40\%\) of training, reaching length \(136\) at update \(835\) and retaining it thereafter. Muon optimizes two-dimensional matrices inside the sequence layer with learning rate \(0.02\), momentum \(0.95\), Nesterov momentum, and five Newton--Schulz iterations; AdamW optimizes the input projection, readout, biases, and other parameters with learning rate \(0.003\). Both learning rates use \(250\) warm-up updates followed by cosine decay to \(10\%\) of their initial values. Weight decay is \(10^{-12}\), and the global gradient norm is clipped to \(1\).

\paragraph{Evaluation.} After undoing training normalization, we average error over all \(32\) cue phases, continuation steps, and waveform coordinates at sequence lengths \(16,24,32,40,56,72,88,104,120,136,152,168,184,200,216,232,248,264\). Waveform signal-to-noise ratio is \(\operatorname{SNR}=10\log_{10}\bigl(\mathbb{E}[\mathbf{y}^{2}]/\mathbb{E}[(\hat{\mathbf{y}}-\mathbf{y})^{2}]\bigr)\), with both expectations taken over the continuation region. Qualitative waveforms in \Cref{fig:waveform_continuation} show phase \(0\) over steps \(0\)--\(24\) and the extrapolation window \(232\)--\(264\).

\subsection{Language modeling.}
\paragraph{Training.}
For the 1.3B parameter language modeling experiments, we follow the recipe of \citet{yang2025gdn, hatamizadeh2026gdn2}, training on 100B tokens of FineWeb-edu \citep{lozhkov2024fineweb-edu}, using the Llama-2 tokenizer~\citep{touvron_llama_2023}, a global batch size of $0.5M$ tokens, a linear warmup of $1B$ tokens to peak learning rate $4e-4$, followed by a cosine decay to $10\%$ of the peak learning rate all optimized with the AdamW  optimizer~\citep{loshchilov_decoupled_2019}. For our CKDA variants we use the \textit{standard} init with no SiLU on keys.
Note that our hybrid variants are using full attention at a 3:1 recurrent-to-attention ratio compared to the 1:1 recurrent + sliding window comparison.

For additional ablations on the impact of $\valpha$ and $\beta$ ranges as well as the impact of SiLU activation for keys, we train on Nemotron-CC~\citep{su2025nemotron} and a smaller version of FineWeb-edu. Also here, see Table~\ref{tab:lm_res}, CKDA outperforms an attention baseline as well as DeltaProduct~\citep{siems2025}, GDN~\citep{yang2025gdn} and vanilla KDA~\citep{kimi2025} on downstream evaluations. The small-model experiments use the GPT-2 tokenizer, with the model vocabulary padded to $65{,}536=2^{16}$ entries.

\begin{table}[h!]
\caption{Configurations used in the small-scale Nemotron-CC language modeling experiments. The intermediate size (the latent dimension of the SwiGLU MLP block) is reduced for GDN, DeltaProduct, and CKDA. We use AdamW with a global batch size of 64 and learning rate $5\times10^{-4}$, following a warmup--stable--decay (WSD) schedule: 2000 warmup steps and cooldown over the last $20\%$ of the token budget. Weight decay is set to $0.1$ and $(\beta_1, \beta_2)=(0.9, 0.95)$.}
\label{tab:arch_tab}\label{tab:model-configs}
\centering
\resizebox{\linewidth}{!}{%
\begin{tabular}{lccccc}
\toprule
\textbf{Config} & \textbf{Dense Attention} & \textbf{GDN} & \textbf{DeltaProduct} & \textbf{KDA} & \textbf{CKDA} \\
\midrule
Hidden size                 & 768   & 768   & 768   & 768   & 768   \\
Num.\ hidden layers         & 28    & 28    & 28    & 28    & 28    \\
Num.\ attention heads       & 12    & 12    & 12    & 12    & 12    \\
Intermediate size           & 3072  & 2816  & 2816  & 2816  & 2816  \\
Hidden activation           & SwiGLU & SwiGLU & SwiGLU & SwiGLU & SwiGLU \\
RoPE $\theta$                & 10{,}000 & N/A & N/A & N/A & N/A \\
Max.\ position embeddings   & 2048  & 2048  & 2048  & 2048  & 2048  \\
Vocabulary size             & $65{,}536$ & $65{,}536$ & $65{,}536$ & $65{,}536$ & $65{,}536$ \\
\midrule
Total parameters (millions)            & 341.55    & 342.33    & 342.07    & 344.87    & 344.87    \\
\bottomrule
\end{tabular}%
}
\end{table}

\begin{table}[h!]
\caption{Architectures of the scaling ladder, with the 1.3B/100BT FineWeb-Edu replication beside it. Every arm is parameter-matched to its column's target: the intermediate size (the latent dimension of the SwiGLU MLP) absorbs the difference between each mixer's own parameter budget, so the MLP width differs between arms and the totals are very close. The ladder trains on Nemotron-CC under AdamW with $(\beta_1,\beta_2)=(0.9,0.95)$ and weight decay $0.1$; multiple global batch sizes / learning rates are possible for different total token budget. The FineWeb-Edu column follows the published Gated DeltaNet recipe instead, which is why its vocabulary, head dimension, tying and schedule differ. }
\label{tab:ladder-configs}
\centering
\resizebox{\linewidth}{!}{%
\begin{tabular}{lccccccc}
\toprule
\textbf{Config} & \textbf{47M} & \textbf{124M} & \textbf{302M} & \textbf{588M} & \textbf{983M} & \textbf{1.7B} & \textbf{1.3B (FineWeb-Edu)} \\
\midrule
Hidden size & 384 & 576 & 896 & 1280 & 1536 & 2048 & 2048 \\
Num.\ hidden layers & 12 & 18 & 20 & 20 & 24 & 24 & 24 \\
Num.\ attention heads & 6 & 9 & 14 & 20 & 24 & 32 & 16 \\
Head dimension & 64 & 64 & 64 & 64 & 64 & 64 & 128 \\
Vocabulary size & 50{,}304 & 50{,}304 & 50{,}304 & 50{,}304 & 50{,}304 & 50{,}304 & 32{,}000 \\
Tied embeddings & yes & yes & yes & yes & yes & yes & no \\
Training corpus & Nemotron-CC & Nemotron-CC & Nemotron-CC & Nemotron-CC & Nemotron-CC & Nemotron-CC & FineWeb-Edu \\
LR schedule & WSD & WSD & WSD & WSD & WSD & WSD & cosine \\
Peak learning rate & 0.002 & 0.001 & 0.001 & 0.0005 / 0.001 & 0.0005 & 0.0005 & 0.0004 \\
Global batch size (sequences) & 32 / 64 & 64 & 64 / 128 & 64 / 128 / 256 & 64 / 128 / 256 / 512 & 64 / 128 / 256 / 512 & 128 \\
Warmup steps & 2000 & 2000 & 2000 & 2000 & 2000 & 2000 & 1908 \\
Cooldown fraction of budget & 20\% & 20\% & 20\% & 20\% & 20\% & 20\% & -- \\
\addlinespace
\multicolumn{8}{l}{\textit{Intermediate size}} \\
\quad Transformer++ & 1536 & 2304 & 3584 & 5120 & 6144 & 8192 & 5632 \\
\quad KDA (safe sigmoid) & 1472 & 2240 & 3520 & 4992 & 6016 & 8064 & 5440 \\
\quad CKDA & 1472 & 2240 & 3520 & 4992 & 6016 & 8064 & 5440 \\
\quad KDA + attn 3:1 & 1408 & 2176 & 3456 & 4928 & 5952 & 7936 & 5312 \\
\quad CKDA + attn 3:1 & 1408 & 2176 & 3456 & 4928 & 5952 & 7936 & 5312 \\
\addlinespace
\multicolumn{8}{l}{\textit{Total parameters (millions)}} \\
\quad Transformer++ & 47.64 & 124.55 & 302.01 & 588.73 & 983.31 & 1713.74 & 1364.30 \\
\quad KDA (safe sigmoid) & 48.03 & 125.45 & 303.66 & 586.32 & 980.00 & 1709.71 & 1362.63 \\
\quad CKDA & 48.03 & 125.45 & 303.66 & 586.32 & 980.00 & 1709.71 & 1362.63 \\
\quad KDA + attn 3:1 & 47.27 & 124.14 & 302.96 & 587.75 & 984.36 & 1712.29 & 1362.26 \\
\quad CKDA + attn 3:1 & 47.27 & 124.14 & 302.96 & 587.75 & 984.36 & 1712.29 & 1362.26 \\
\midrule
\multicolumn{8}{l}{\textit{Constant across every column}} \\
Hidden activation & \multicolumn{7}{c}{SwiGLU} \\
Context length & \multicolumn{7}{c}{4{,}096} \\
\bottomrule
\end{tabular}%
}
\end{table}

\paragraph{Language-model evaluation.}

\begin{table}[t]
\caption{Language modeling results. The best-performing configuration for each evaluation is highlighted in \textbf{bold}, and the second best is \underline{underlined}; markings consider only the Nemotron-CC rows. Nemotron-CC runs use 15B training tokens and FineWeb runs use 45B tokens; both blocks average the same ten accuracy tasks. Validation perplexity is measured on each block's own training distribution, and differences in training data and token budget prevent the cross-block comparison from isolating architecture effects. Abbreviations are expanded in Table~\ref{tab:lm_eval_names}. }
\centering
\resizebox{\textwidth}{!}{%
\begin{tabular}{l|c|cc|cccccccccc|c}
\toprule
\multirow{2}{*}{\textbf{Architecture}} & \multicolumn{3}{c|}{\textbf{Perplexity ($\downarrow$)}} & \multicolumn{11}{c}{\textbf{Accuracy ($\uparrow$)}} \\
\cmidrule(lr){2-4} \cmidrule(lr){5-15}
& \textbf{Val.} & \textbf{Wiki.} & \textbf{Lamb.} & \textbf{Hella.} & \textbf{W.grad} & \textbf{W.grande} & \textbf{Lamb.} & \textbf{COPA} & \textbf{PIQA} & \textbf{ARC-e.} & \textbf{ARC-c.} & \textbf{BBQA-Wiki} & \textbf{OBQA} & \textbf{Avg.} \\
\midrule
Dense Attention & 17.99 & \textbf{24.15} & 24.54 & 42.36 & 63.37 & \textbf{53.75} & \textbf{39.92} & \underline{70.00} & 68.44 & 59.81 & 29.61 & 48.79 & 33.00 & 50.90 \\
\midrule
DeltaProduct $k=2$ & 18.87 & 28.07 & 31.90 & 39.50 & 62.64 & 52.09 & 32.00 & 65.00 & 68.17 & 56.61 & 27.39 & 46.79 & 34.20 & 48.44 \\
DeltaProduct $k=2$, $\beta\in[0,2]$ & 19.04 & 28.38 & 30.43 & 39.40 & 61.54 & 50.67 & 33.20 & 64.00 & 66.65 & 57.45 & 27.47 & 46.85 & 33.60 & 48.08 \\
\midrule
GDN & 18.29 & 27.18 & 25.28 & 42.30 & 66.67 & 51.14 & 36.13 & 65.00 & 68.28 & 57.87 & 28.41 & 47.47 & 34.60 & 49.79 \\
GDN $\beta\in[0,2]$ & 18.10 & 27.22 & 25.21 & 42.15 & 66.30 & 51.54 & 36.00 & 63.00 & 67.79 & 58.12 & 29.10 & 47.40 & 35.00 & 49.64 \\
\midrule
KDA $\valpha\in[0,1] \ \beta\in[0,1]$ & \textbf{17.45} & 25.29 & \underline{19.67} & 43.87 & 66.30 & 52.72 & 39.71 & 67.00 & 68.55 & 59.81 & 30.03 & \textbf{51.21} & 34.00 & 51.32 \\
KDA $\valpha\in[0,1] \ \beta\in[0,2]$ & 17.55 & 25.51 & 20.53 & 43.79 & 63.74 & 52.57 & 39.10 & 66.00 & 68.34 & 59.05 & 30.12 & 49.48 & 33.60 & 50.58 \\
\midrule
KDA $\valpha\in[-1,1] \ \beta\in[0,2]$  (\textit{with ablations})  & & & & & & & & & & & & & &  \\

\hspace{16pt}{standard} & 17.74 & 25.86 & 22.50 & {43.22} & \textbf{69.96} & 51.38 & 38.13 & 62.00 & \textbf{69.75} & {60.40} & 28.75 & \underline{51.20} & \textbf{36.20} & {51.10} \\

\hspace{16pt}{spread} & \underline{17.49} & {25.34} & {20.21} & \textbf{44.04} & \underline{68.86} & \underline{53.67} & {39.67} & {68.00} & \underline{69.53} & \underline{61.32} & \textbf{31.23} & {51.10} & 35.60 & \textbf{52.30} \\
\hspace{16pt}{spread, no SiLU on keys} & \underline{17.49} & 25.30 & \textbf{19.64} & {43.65} & 65.93 & 53.12 & \underline{39.74} & \textbf{72.00} & 68.55 & {61.07} & 30.12 & 50.48 & 34.00 & \underline{51.87} \\
\hspace{16pt}{spread, $\beta\in[0,1]$} & \textbf{17.45} & \underline{25.26} &  20.17 & \underline{43.91} & 64.10 & 52.41 & 39.28 & 67.00 & 68.82 & \textbf{61.91} & \underline{30.29} & 50.34 & \underline{35.80} & 51.39 \\
\midrule
\multicolumn{15}{l}{\textit{FineWeb, 45B tokens}} \\
KDA $\valpha\in[0,1] \ \beta\in[0,2]$ & 17.02 & 23.46 & 13.66 & 47.08 & 64.84 & 54.22 & 45.29 & 68.00 & 70.84 & 52.82 & 26.02 & 54.32 & 32.60 & 51.60 \\ %
KDA $\valpha\in[0,1] \ \beta\in[0,2]$, no SiLU on keys & 17.03 & 23.43 & 13.67 & 47.19 & 68.13 & 55.09 & 45.57 & 67.00 & 69.80 & 52.02 & 26.11 & 55.79 & 31.80 & 51.85 \\ %
CKDA $\valpha\in[-1,1]\ \beta\in[0,2]$ (\textit{with ablations}) & & & & & & & & & & & & & & \\
\hspace{16pt}{standard} & 17.00 & 23.29 & 14.18 & 46.76 & 68.86 & 54.22 & 44.42 & 66.00 & 70.08 & 52.57 & 27.30 & 56.12 & 31.80 & 51.81 \\ %
\hspace{16pt}{standard, no SiLU on keys} & 17.01 & 23.20 & 14.21 & 46.72 & 70.70 & 56.04 & 44.11 & 65.00 & 70.73 & 52.31 & 25.26 & 56.71 & 31.40 & 51.90 \\ %
\hspace{16pt}{spread} & 17.10 & 23.92 & 13.83 & 46.98 & 67.77 & 54.70 & 44.75 & 68.00 & 70.13 & 52.31 & 26.02 & 53.49 & 31.80 & 51.60 \\ %
\hspace{16pt}{spread, no SiLU on keys} & 17.12 & 23.76 & 14.34 & 46.40 & 67.77 & 52.33 & 44.58 & 71.00 & 70.84 & 53.07 & 27.39 & 54.22 & 31.80 & 51.94 \\ %
\hspace{16pt}{spread, $\beta$ spread init} & 17.16 & 23.80 & 14.33 & 46.70 & 68.86 & 54.62 & 44.23 & 70.00 & 70.95 & 51.85 & 26.88 & 54.99 & 33.00 & 52.21 \\ %
\bottomrule
\end{tabular}%
}
\vspace{-3mm}
\label{tab:lm_res}
\end{table}

\begin{table}[t]
\caption{RULER needle retrieval at 1.3B parameters / 100B FineWeb-Edu tokens, 4K training sequences; accuracy (\%) over 500 samples per cell. Our hybrid rows are evaluated only at or below 4,096 tokens, the context their attention layers trained at. It remains to be investigated why KDA without extended gates is much worse here. }
  \label{tab:fwedu-1p3b-niah}
  \centering
  \resizebox{\linewidth}{!}{%
  \small
  \setlength{\tabcolsep}{3.5pt}
  \begin{tabular}{l cccc cccc ccc ccc}
    \toprule
    Model & \multicolumn{4}{c}{S-NIAH-1} & \multicolumn{4}{c}{S-NIAH-2} & \multicolumn{3}{c}{S-NIAH-3} & \multicolumn{3}{c}{MK-NIAH-1} \\
    \cmidrule(lr){2-5} \cmidrule(lr){6-9} \cmidrule(lr){10-12} \cmidrule(lr){13-15}
     & 1K & 2K & 4K & 8K & 1K & 2K & 4K & 8K & 1K & 2K & 4K & 1K & 2K & 4K \\
    \midrule
    KDA, bounded gate (ours) & 32.2 & 65.8 & 94.0 & 68.8 & 73.6 & 100.0 & 97.6 & 21.0 & 7.4 & 46.6 & 9.8 & 48.6 & 26.2 & 26.0 \\
    CKDA (ours) & 100.0 & 100.0 & 100.0 & 87.8 & 100.0 & 99.8 & 88.6 & 33.2 & 94.4 & 88.2 & 51.0 & 56.0 & 46.6 & 37.0 \\
    \midrule
    \multicolumn{15}{l}{\footnotesize\itshape ours, hybrid - 3:1 full NoPE gated attention} \\
    KDA + attn 3:1 (ours) & 100.0 & 100.0 & 100.0 & 100.0 & 99.8 & 99.2 & 99.2 & 70.2 & 99.4 & 98.0 & 95.8 & 73.8 & 60.2 & 63.8 \\
    CKDA + attn 3:1 (ours) & 99.8 & 96.8 & 81.2 & 53.4 & 93.2 & 98.6 & 83.2 & 62.0 & 97.6 & 91.0 & 86.8 & 95.0 & 94.4 & 94.4 \\
    \midrule
    \multicolumn{15}{l}{\footnotesize\itshape \citet{hatamizadeh2026gdn2}, recurrent} \\
    Mamba-2 & 100.0 & 100.0 & 97.0 & 55.8 & 99.6 & 99.6 & 62.6 & 21.0 & 59.2 & 38.6 & 14.4 & 29.0 & 21.2 & 21.4 \\
    Gated DeltaNet & 99.8 & 100.0 & 100.0 & 97.6 & 100.0 & 100.0 & 87.2 & 32.0 & 89.8 & 54.2 & 60.6 & 58.0 & 37.0 & 27.8 \\
    KDA & 100.0 & 100.0 & 99.2 & 70.6 & 100.0 & 100.0 & 89.0 & 30.6 & 77.4 & 63.2 & 26.2 & 54.0 & 44.2 & 28.0 \\
    Mamba-3 (SISO) & 100.0 & 99.0 & 63.4 & 27.8 & 99.8 & 99.0 & 59.4 & 25.2 & 60.2 & 35.6 & 12.2 & 44.8 & 27.4 & 20.2 \\
    Mamba-3 (MIMO) & 100.0 & 99.8 & 93.0 & 35.6 & 99.8 & 98.8 & 64.2 & 27.2 & 89.2 & 72.4 & 29.2 & 49.4 & 19.2 & 18.0 \\
    Gated DeltaNet-2 & 100.0 & 100.0 & 100.0 & 97.8 & 100.0 & 100.0 & 93.0 & 39.2 & 92.0 & 89.8 & 31.8 & 72.6 & 51.4 & 37.8 \\
    \bottomrule
  \end{tabular}
  }
\end{table}

We use the Language Model Evaluation Harness~\citep{eval-harness}.
The evaluation tasks include WikiText~\citep{merity2016pointer},
LAMBADA (OpenAI version)~\citep{paperno2016lambada},
HellaSwag~\citep{zellers2019hellaswag},
the Winograd Schema Challenge~\citep{levesque2012winograd},
WinoGrande~\citep{sakaguchi2021winogrande},
COPA~\citep{roemmele2011copa}, PIQA~\citep{bisk2020piqa}, 
ARC-Easy, ARC-Challenge~\citep{clark2018think}, OpenBookQA~\citep{mihaylov2018openbookqa}, SWDE~\citep{lockard2019openceres}, FDA~\citep{arora2023language}, and SQUADv2~\citep{rajpurkar2018know}.
Table~\ref{tab:lm_eval_names} expands the benchmark abbreviations.
In Table~\ref{tab:fwedu-1p3b-niah} we also report RULER results. We use the standard methodology and prompts, but we found that slight variations of the prompt (e.g. adding a ":" to "The answer is") lead to largely different numbers and bad S-NIAH numbers can to a large degree be attributed to a lack of instruction following (no answer is given, the prompt is repeated). These should therefore be treated with caution.

\begin{table}[h!]
\caption{Full names of abbreviated evaluation benchmarks.}
\label{tab:lm_eval_names}
\centering
\begin{tabular}{l|l}
\toprule
\textbf{Abbreviation} & \textbf{Full Name} \\
\midrule
Wiki. & WikiText \\
Lamb. & LAMBADA OpenAI \\
Hella. & HellaSwag \\
W.grad & Winograd Schema Challenge \\
W.grande & WinoGrande \\
COPA & Choice of Plausible Alternatives \\
PIQA & Physical Interaction QA \\
ARC-e. & AI2 Reasoning Challenge (Easy) \\
ARC-c. & AI2 Reasoning Challenge (Challenge) \\
BBQA & BigBench QA \\
OBQA & OpenBook QA \\
Avg. & Average across accuracy tasks \\
\bottomrule
\end{tabular}
\end{table}

\section{Scaling Law results}\label{app:scaling-results}
In addition to language modeling performance at a fixed scale, we also test for scaling behavior for varying model size and training data size~\citep{kaplan_scaling_2020, hoffmann_training_2022}. We follow the recipe of~\citet{ajroldi_deriving_2026} on varying the model and data size of training with a linear warmup-stable-decay (WSD) schedule with AdamW~\citep{loshchilov_decoupled_2019}, using model sizes of $47M$, $124M$, $302M$, $588M$, $983M$ and $1.71B$ parameters and data scales of $6B$, $12B$, $20B$, $30B$, $50B$ tokens (leaving out the largest data scales here). To save compute, we rely on their found optimal batch size and learning rates - assuming invariance of these towards a change of sequence mixing backbone or hybrid model architecture. We follow the scaling model of
\citet{videau_skaling_2026,busbridge_distillation_2025} in the parametrization of the loss for a parametric fit.

\subsection{Scaling-law methodology}
\label{sec:scaling-methodology}

\paragraph{The grid.}
Each architecture is trained on a shared $(N, D)$ grid of six model sizes
(47M--1.7B non-embedding parameters, $N$ from $4.73 \times 10^{7}$ to $1.71
\times 10^{9}$) crossed with five token budgets ($D \in \{6, 12, 20, 30,
50\}$\,BT), on the high-quality subset of Nemotron-CC tokenized with the
GPT-NeoX-20B tokenizer at a sequence length of 4096. So that a difference
between arms is attributable to the mixer and not to parameter count, every
non-attention arm is parameter-matched to the attention baseline at its rung by
adjusting $d_{\mathrm{ffn}}$ alone; depth, $d_{\mathrm{model}}$, head count and
head dimension are held fixed across arms. Batch size $b^\star$ and peak
learning rate $\eta^\star$ are taken per cell from the grids of
\citet{ajroldi_deriving_2026}, so that the ladder inherits a tuned, published
schedule rather than one of our own choosing. We depart from that reference in
exactly two respects, both applied uniformly across the grid: $\beta_2 = 0.95$
at every rung, where the reference switches from $0.99$ below 300M and so would
place a hyperparameter change inside our $N$ axis; and no bias terms anywhere
outside the (C)KDA layers.

\paragraph{Endpoints.}
Every cell is trained under a warmup--stable--decay schedule, and the loss we
fit is the validation loss (over 0.838\,B tokens) of the \emph{annealed} endpoint. Within a rung, cells sharing $b^\star$ and $\eta^\star$ are run as a single constant-rate trunk with
a cooldown branched from it at $0.8D$ and annealed linearly over the remaining
$20\%$, rather than as independent runs to save compute. 

\paragraph{Functional forms.}
We fit the non-separable Skaling form of
\citet{videau_skaling_2026,busbridge_distillation_2025}, the one
\citet{ajroldi_deriving_2026} find extrapolates reliably on this corpus:
\begin{align}
  L_{\mathrm{Skaling}}(N, D)    &= E + \bigl(A N^{-\alpha} + B D^{-\beta}\bigr)^{k}.
\end{align}

\paragraph{Three treatments of the irreducible loss.}
Every fit below gives each architecture its own $A$, $\alpha$, $B$, $\beta$,
$k$ and chained-cell offset, and the three differ in how the irreducible loss $E$ is modeled. Table~\ref{tab:scaling-holdout-isolated}
fits each arm in complete isolation, so $E$ is free and separate per arm;
Table~\ref{tab:scaling-holdout-pinned} pins $E$ to the value of
\citet{ajroldi_deriving_2026}, as they have a more extensive, compute heavy grid. Since the irreducible loss should be independent of the model - assuming each can perfectly fit at infinite scales - this is a valid assumption.

\paragraph{Estimation.}
Parameters are estimated by trust-region least squares on the residuals in
nats, under a Huber loss with scale $0.01$\,nats. The scale is set at the
measured run-to-run reproducibility of the ladder, where two bit-identical
configurations differ by up to $0.002$\,nats: cells agreeing to within noise
then enter quadratically, while a single genuine outlier cannot dominate the
fit. Because both forms are strongly multi-modal in $(A, \alpha, B, \beta,
k)$, each fit is run from $600$ randomized starts, log-uniform over the
prefactors and uniform over the exponents, and the lowest-cost optimum is
retained.

\paragraph{Uncertainty.}
Intervals are obtained by bootstrap rather than from a covariance matrix. We resample the endpoints with replacement, stratified within each arm so that
every replicate has the same design as the data and no arm can lose the cells
that identify it, and refit. Each replicate is warm-started from the full-data
optimum and given a small number of additional random restarts, which keeps the
cost tractable while guarding against a resample whose optimum genuinely lies
elsewhere. We report the $2.5$th and $97.5$th percentiles of the replicates as
a subscript on each estimate. These intervals describe sampling variability of
this ladder under the fitted model; they do not carry the systematic
uncertainty in $b^\star$ and $\eta^\star$, which are held fixed at the values from~\citet{ajroldi_deriving_2026}.

\pagebreak
\subsection{Scaling Law Results}
All trained model architectures can be well fitted to the chosen parametric form~\citep{busbridge_distillation_2025,videau_skaling_2026}, see Tables~\ref{tab:scaling-holdout-isolated} and~\ref{tab:scaling-holdout-pinned}. A held-out point on the largest scale can be predicted with low error for each model, see Table~\ref{tab:scaling-holdout}. Re-using the irreducible loss found in~\citet{ajroldi_deriving_2026} (which is smaller than our fitted values) leads to a slight shift of fitted parameters, namely the data scaling exponent and coupling exponent are smaller. This means that larger scale experiments could lead to a non-negligible change still in the fitted parameters. While the validation loss slightly favors KDA over CKDA, the downstream evaluations show a mixed picture here, see Table~\ref{tab:downstream-avg9}. Given the confidence bounds, CKDA and KDA perform on par in vanilla language modeling.
All measured validation losses are shown in Table~\ref{tab:scaling-data}.

The derived compute-optimal scaling for the model architectures is showing in Figure~\ref{fig:scaling_vs_compute}.
The observed decline of the advantage of the recurrent / hybrid models over the Transformer going to large over-training regimes at small scales (see \ref{fig:scaling-advantage}) is not clearly visible in the scaling exponents, though there are differences. 
A detailed exploration of this remains to be investigated in future work. 

\begin{table}[ht]
	\centering
	\small
	\caption{Fitted parameters with 1.7B/50BT held out, all parameters fit per architecture in an isolated way. RMSE is in-sample, over that arm's remaining cells; the held-out error is in Table~\ref{tab:scaling-holdout}. Note that our data is missing the regime of about $10\times$ more compute (100B and 300B tokens) - leading to a slightly different offset compared to~\citet{ajroldi_deriving_2026}.}
	\label{tab:scaling-holdout-isolated}
	\resizebox{\linewidth}{!}{%
		\begin{tabular}{lrrrrrrr}
			\toprule
			Model Arch. & $E$ & $A$ & $\alpha$ & $B$ & $\beta$ & $k$ & RMSE \\
			\midrule
			Transformer++ & $1.1628_{[-0.1500,\,1.3895]}$ & $2.76\times10^{3}_{[6.61\times10^{2},\,1.10\times10^{5}]}$ & $0.401_{[0.326,\,0.453]}$ & $5.23\times10^{3}_{[4.97\times10^{2},\,5.78\times10^{4}]}$ & $0.382_{[0.234,\,0.475]}$ & $0.500_{[0.263,\,0.621]}$ & $0.00697$ \\
			Transformer++ (QK-norm) & $1.0869_{[0.6970,\,1.2665]}$ & $5.57\times10^{3}_{[2.06\times10^{3},\,1.57\times10^{4}]}$ & $0.429_{[0.385,\,0.461]}$ & $7.61\times10^{3}_{[1.47\times10^{3},\,5.81\times10^{4}]}$ & $0.391_{[0.297,\,0.486]}$ & $0.450_{[0.362,\,0.524]}$ & $0.00432$ \\
			KDA & $1.2091_{[1.0517,\,1.3015]}$ & $2.85\times10^{4}_{[2.06\times10^{4},\,4.96\times10^{4}]}$ & $0.528_{[0.515,\,0.545]}$ & $1.35\times10^{4}_{[6.31\times10^{3},\,2.92\times10^{4}]}$ & $0.425_{[0.380,\,0.465]}$ & $0.436_{[0.395,\,0.461]}$ & $0.00206$ \\
			CKDA & $1.2111_{[0.9258,\,1.3227]}$ & $2.57\times10^{4}_{[1.47\times10^{4},\,6.13\times10^{4}]}$ & $0.523_{[0.498,\,0.547]}$ & $1.33\times10^{4}_{[3.31\times10^{3},\,3.56\times10^{4}]}$ & $0.425_{[0.352,\,0.476]}$ & $0.440_{[0.373,\,0.484]}$ & $0.00269$ \\
			KDA + attn 3:1 & $1.1765_{[0.8513,\,1.3069]}$ & $2.49\times10^{4}_{[1.32\times10^{4},\,7.21\times10^{4}]}$ & $0.520_{[0.490,\,0.541]}$ & $1.24\times10^{4}_{[4.07\times10^{3},\,3.34\times10^{4}]}$ & $0.419_{[0.356,\,0.474]}$ & $0.434_{[0.361,\,0.480]}$ & $0.00262$ \\
			CKDA + attn 3:1 & $1.2473_{[1.0476,\,1.3609]}$ & $2.18\times10^{4}_{[1.34\times10^{4},\,5.07\times10^{4}]}$ & $0.521_{[0.502,\,0.544]}$ & $1.12\times10^{4}_{[5.22\times10^{3},\,2.63\times10^{4}]}$ & $0.421_{[0.376,\,0.463]}$ & $0.457_{[0.384,\,0.492]}$ & $0.00212$ \\
            \midrule
    \citet{ajroldi_deriving_2026} & $0.9638_{\pm0.0870}$ & $1.96\times10^{4}_{\pm1.40\times10^{4}}$ & $0.483_{\pm0.030}$ & $4.54\times10^{3}_{\pm2.90\times10^{3}}$ & $0.349_{\pm0.022}$ & $0.393_{\pm0.033}$ & $0.00473$ \\
			\bottomrule
		\end{tabular}%
	}
\end{table}

\begin{table}[ht]
  \centering
  \small
  \caption{$E$ pinned to $0.9638$, the value of \citet{ajroldi_deriving_2026}, fitted on the same corpus and held-out split over a ladder reaching 1.7B parameters and 300B tokens. }
  \label{tab:scaling-holdout-pinned}
  \resizebox{\linewidth}{!}{%
  \begin{tabular}{lrrrrrrr}
    \toprule
    Model Arch. & E$^*$ & $A$ & $\alpha$ & $B$ & $\beta$ & $k$ & RMSE \\
    \midrule
    Transformer++ & *& $3.99\times10^{3}_{[7.51\times10^{2},\,1.43\times10^{4}]}$ & $0.403_{[0.325,\,0.458]}$ & $3.17\times10^{3}_{[5.47\times10^{2},\,2.13\times10^{4}]}$ & $0.342_{[0.272,\,0.423]}$ & $0.454_{[0.384,\,0.562]}$ & $0.00697$ \\
    Transformer++ (QK-norm) & *& $7.28\times10^{3}_{[2.95\times10^{3},\,1.27\times10^{4}]}$ & $0.431_{[0.390,\,0.457]}$ & $6.38\times10^{3}_{[1.67\times10^{3},\,2.43\times10^{4}]}$ & $0.372_{[0.317,\,0.429]}$ & $0.423_{[0.384,\,0.476]}$ & $0.00439$ \\
    KDA & *& $5.07\times10^{4}_{[3.24\times10^{4},\,7.53\times10^{4}]}$ & $0.533_{[0.512,\,0.552]}$ & $7.46\times10^{3}_{[3.42\times10^{3},\,1.44\times10^{4}]}$ & $0.373_{[0.341,\,0.401]}$ & $0.383_{[0.364,\,0.406]}$ & $0.00244$ \\
    CKDA & *& $4.53\times10^{4}_{[2.35\times10^{4},\,7.21\times10^{4}]}$ & $0.527_{[0.499,\,0.547]}$ & $6.77\times10^{3}_{[2.56\times10^{3},\,1.56\times10^{4}]}$ & $0.369_{[0.326,\,0.404]}$ & $0.387_{[0.362,\,0.417]}$ & $0.00302$ \\
    KDA + attn 3:1 & * & $4.06\times10^{4}_{[2.16\times10^{4},\,5.91\times10^{4}]}$ & $0.523_{[0.497,\,0.540]}$ & $7.33\times10^{3}_{[3.03\times10^{3},\,1.86\times10^{4}]}$ & $0.374_{[0.337,\,0.413]}$ & $0.388_{[0.366,\,0.416]}$ & $0.00286$ \\
    CKDA + attn 3:1 & * & $4.10\times10^{4}_{[2.64\times10^{4},\,6.65\times10^{4}]}$ & $0.526_{[0.507,\,0.548]}$ & $5.11\times10^{3}_{[2.31\times10^{3},\,1.22\times10^{4}]}$ & $0.358_{[0.325,\,0.393]}$ & $0.394_{[0.367,\,0.419]}$ & $0.00263$ \\
    \midrule
    \citet{ajroldi_deriving_2026} & $0.9638_{\pm0.0870}^*$ & $1.96\times10^{4}_{\pm1.40\times10^{4}}$ & $0.483_{\pm0.030}$ & $4.54\times10^{3}_{\pm2.90\times10^{3}}$ & $0.349_{\pm0.022}$ & $0.393_{\pm0.033}$ & $0.00473$ \\
    \bottomrule
  \end{tabular}%
  }
\end{table}

\begin{table}[ht]
  \centering
  \small
  \caption{Held-out error at 1.7B/50BT against the parametric loss fit, at the largest model/data regime we are training here. }
  \label{tab:scaling-holdout}
  \resizebox{0.8\linewidth}{!}{%
  \begin{tabular}{lrrrrrr}
    \toprule
    Arm & Measured & $E$ per model  & $E$ pinned to \citet{ajroldi_deriving_2026}  \\
    \midrule
    Transformer++ & $2.1630$ & $+0.0007_{[-0.0081,\,+0.0089]}$ &  $-0.0027_{[-0.0059,\,+0.0027]}$ \\
    Transformer++ (QK-norm) & $2.1374$ & $+0.0006_{[-0.0053,\,+0.0053]}$ &  $-0.0018_{[-0.0048,\,+0.0024]}$  \\
    KDA & $2.1046$ & $-0.0007_{[-0.0047,\,+0.0022]}$ & $-0.0046_{[-0.0070,\,-0.0016]}$ \\
    CKDA & $2.1061$ & $+0.0007_{[-0.0046,\,+0.0065]}$ &  $-0.0029_{[-0.0059,\,+0.0032]}$  \\
    KDA + attn 3:1 & $2.0971$ & $-0.0031_{[-0.0078,\,+0.0006]}$ & $-0.0063_{[-0.0088,\,-0.0033]}$  \\
    CKDA + attn 3:1 & $2.1002$ & $-0.0016_{[-0.0058,\,+0.0018]}$  & $-0.0059_{[-0.0086,\,-0.0027]}$  \\
    \bottomrule
  \end{tabular}%
  }
\end{table}

\begin{table}[t]
  \centering
  \small
  \caption{Validation loss at every scaling experiment point. Best per row in bold. The final column is the best cell of the hyperparameter sweep of \citet{ajroldi_deriving_2026} at the same $(N, D)$, a dash indicates that no result was reported”—the cells contain dashes.}
  \label{tab:scaling-data}
  \resizebox{\linewidth}{!}{%
  \begin{tabular}{lrrrrrrr}
    \toprule
    Cell & Transformer++ & Transformer++ (QK-norm) & KDA & CKDA & KDA + attn 3:1 & CKDA + attn 3:1 & \citet{ajroldi_deriving_2026} \\
    \midrule
    47M/6BT & 2.9844 & 2.9463 & 2.9247 & 2.9267 & \textbf{2.9126} & 2.9167 & 2.9272 \\
    47M/12BT & 2.9109 & 2.8833 & 2.8664 & 2.8671 & \textbf{2.8522} & 2.8560 & 2.8747 \\
    47M/20BT & 2.8782 & 2.8504 & 2.8403 & 2.8397 & \textbf{2.8250} & 2.8257 & 2.8417 \\
    47M/30BT & 2.8578 & 2.8335 & 2.8215 & 2.8220 & \textbf{2.8063} & 2.8075 & 2.8198 \\
    47M/50BT & 2.8271 & 2.8045 & 2.7974 & 2.7962 & \textbf{2.7809} & 2.7814 & 2.7974 \\
    \midrule
    124M/6BT & 2.7471 & 2.7188 & 2.6869 & 2.6880 & \textbf{2.6718} & 2.6719 & 2.7129 \\
    124M/12BT & 2.6799 & 2.6541 & 2.6212 & 2.6220 & \textbf{2.6072} & 2.6083 & 2.6497 \\
    124M/20BT & 2.6389 & 2.6124 & 2.5837 & 2.5850 & \textbf{2.5705} & 2.5708 & 2.6048 \\
    124M/30BT & 2.6074 & 2.5851 & 2.5576 & 2.5581 & \textbf{2.5442} & 2.5462 & 2.5736 \\
    124M/50BT & 2.5682 & 2.5488 & 2.5286 & 2.5288 & \textbf{2.5146} & 2.5154 & 2.5409 \\
    \midrule
    302M/6BT & 2.6126 & 2.5789 & 2.5243 & 2.5258 & \textbf{2.5146} & 2.5174 & 2.5723 \\
    302M/12BT & 2.5382 & 2.5044 & 2.4505 & 2.4550 & 2.4446 & \textbf{2.4437} & 2.4939 \\
    302M/20BT & 2.4942 & 2.4560 & 2.4077 & 2.4089 & \textbf{2.3956} & 2.3967 & 2.4480 \\
    302M/30BT & 2.4598 & 2.4287 & 2.3768 & 2.3784 & \textbf{2.3651} & 2.3667 & 2.4158 \\
    302M/50BT & 2.4160 & 2.3882 & 2.3441 & 2.3453 & \textbf{2.3323} & 2.3335 & 2.3778 \\
    \midrule
    588M/6BT & 2.5209 & 2.4892 & 2.4496 & 2.4532 & 2.4430 & \textbf{2.4424} & 2.4798 \\
    588M/12BT & 2.4346 & 2.4051 & 2.3626 & 2.3673 & 2.3556 & \textbf{2.3555} & 2.3988 \\
    588M/20BT & 2.3805 & 2.3534 & 2.3143 & 2.3191 & 2.3051 & \textbf{2.3049} & 2.3476 \\
    588M/30BT & 2.3436 & 2.3172 & 2.2775 & 2.2815 & \textbf{2.2676} & 2.2683 & 2.3112 \\
    588M/50BT & 2.3032 & 2.2770 & 2.2386 & 2.2417 & \textbf{2.2284} & 2.2295 & 2.2696 \\
    \midrule
    983M/6BT & 2.4551 & 2.4282 & 2.3905 & 2.3963 & \textbf{2.3810} & 2.3850 & 2.4250 \\
    983M/12BT & 2.3682 & 2.3403 & 2.3009 & 2.3059 & \textbf{2.2920} & 2.2959 & 2.3363 \\
    983M/20BT & 2.3114 & 2.2844 & 2.2484 & 2.2532 & \textbf{2.2369} & 2.2405 & 2.2815 \\
    983M/30BT & 2.2721 & 2.2464 & 2.2098 & 2.2141 & \textbf{2.1981} & 2.2017 & 2.2430 \\
    983M/50BT & 2.2298 & 2.2070 & 2.1686 & 2.1722 & \textbf{2.1571} & 2.1608 & 2.2003 \\
    \midrule
    1.7B/6BT & 2.4001 & 2.3725 & 2.3359 & 2.3402 & \textbf{2.3284} & 2.3299 & -- \\
    1.7B/12BT & 2.3100 & 2.2829 & 2.2496 & 2.2507 & \textbf{2.2399} & 2.2433 & -- \\
    1.7B/20BT & 2.2488 & 2.2218 & 2.1882 & 2.1898 & \textbf{2.1795} & 2.1828 & -- \\
    1.7B/30BT & 2.2077 & 2.1812 & 2.1477 & 2.1494 & \textbf{2.1395} & 2.1429 & 2.1802 \\
    1.7B/50BT & 2.1630 & 2.1374 & 2.1046 & 2.1061 & \textbf{2.0971} & 2.1002 & 2.1353 \\
    \bottomrule
  \end{tabular}%
  }
\end{table}

\begin{table}[t]
  \centering
  \small
  \caption{Downstream accuracy, averaged over the nine accuracy tasks of~\Cref{tab:fwedu-1p3b-gdn2} at every scaling experiment cell. Below 302M most of the suite is at chance -- HellaSwag near 27, ARC-c near 23, OpenBookQA near 25 -- so the average at the lower rungs is largely majority-class behavior on BoolQ and PIQA. }
  \label{tab:downstream-avg9}
  \resizebox{\linewidth}{!}{%
  \begin{tabular}{lrrrrrr}
    \toprule
    Cell & Transformer++ & Transformer++ (QK-norm) & KDA & CKDA & KDA + attn 3:1 & CKDA + attn 3:1 \\
    \midrule
    47M/6BT & 37.97 & 37.75 & 39.08 & 38.49 & 37.70 & \textbf{39.12} \\
    47M/12BT & 38.14 & 37.62 & 38.04 & 38.43 & 38.11 & \textbf{38.60} \\
    47M/20BT & 38.59 & 38.22 & 39.06 & 37.84 & 37.72 & \textbf{39.42} \\
    47M/30BT & 39.13 & 37.49 & \textbf{40.16} & 39.36 & 39.22 & 39.46 \\
    47M/50BT & 38.64 & 38.94 & \textbf{39.42} & 38.96 & 38.96 & 38.49 \\
    \midrule
    124M/6BT & 40.47 & 40.06 & 41.22 & \textbf{42.25} & 41.06 & 40.69 \\
    124M/12BT & 40.57 & 41.74 & \textbf{42.41} & 41.29 & 41.96 & 41.68 \\
    124M/20BT & 42.41 & 42.16 & 43.41 & \textbf{43.67} & 43.11 & 42.88 \\
    124M/30BT & 42.01 & 41.52 & 43.53 & 43.64 & \textbf{43.68} & 43.48 \\
    124M/50BT & 42.98 & 44.01 & 44.02 & \textbf{44.23} & 43.59 & 43.51 \\
    \midrule
    302M/6BT & 42.17 & 42.39 & \textbf{44.76} & 44.56 & 44.50 & 44.22 \\
    302M/12BT & 44.10 & 44.15 & 45.89 & 45.90 & \textbf{46.77} & 45.53 \\
    302M/20BT & 44.50 & 45.24 & 46.12 & \textbf{47.19} & 47.01 & 47.02 \\
    302M/30BT & 45.31 & 45.19 & 47.73 & 47.99 & \textbf{48.20} & 47.97 \\
    302M/50BT & 46.21 & 46.21 & 47.69 & 48.25 & 48.82 & \textbf{49.14} \\
    \midrule
    588M/6BT & 44.01 & 44.15 & 45.75 & 45.35 & \textbf{45.96} & 45.46 \\
    588M/12BT & 45.84 & 46.67 & 47.82 & 47.67 & \textbf{48.53} & 47.87 \\
    588M/20BT & 47.16 & 46.79 & 48.57 & 48.66 & 48.75 & \textbf{49.54} \\
    588M/30BT & 47.53 & 47.98 & 50.18 & 49.45 & 49.50 & \textbf{50.79} \\
    588M/50BT & 48.45 & 49.83 & \textbf{51.43} & 50.44 & 51.24 & 51.30 \\
    \midrule
    983M/6BT & 45.22 & 46.19 & 46.80 & 47.14 & \textbf{47.78} & 46.86 \\
    983M/12BT & 47.99 & 48.36 & 49.29 & 49.48 & \textbf{50.40} & 49.33 \\
    983M/20BT & 48.90 & 49.12 & 50.51 & 51.22 & \textbf{51.49} & 50.95 \\
    983M/30BT & 50.16 & 50.42 & 51.51 & 51.95 & \textbf{52.60} & 52.32 \\
    983M/50BT & 51.80 & 51.57 & 53.13 & 54.10 & 54.10 & \textbf{54.25} \\
    \midrule
    1.7B/6BT & 46.66 & 47.61 & 48.25 & \textbf{48.84} & 48.35 & 48.29 \\
    1.7B/12BT & 49.02 & 49.59 & 51.10 & \textbf{51.34} & 51.18 & 50.95 \\
    1.7B/20BT & 50.13 & 51.75 & 52.80 & \textbf{53.38} & 52.60 & 52.53 \\
    1.7B/30BT & 51.56 & 53.29 & \textbf{55.13} & 54.59 & 54.91 & 54.35 \\
    1.7B/50BT & 53.08 & 55.10 & 56.87 & \textbf{57.21} & 56.47 & 55.82 \\
    \bottomrule
  \end{tabular}%
  }
\end{table}

\clearpage
\section{Additional Experimental Results}\label{appsec:exp_res}

\begin{figure}
    \centering
    \includegraphics[width=1.0\linewidth]{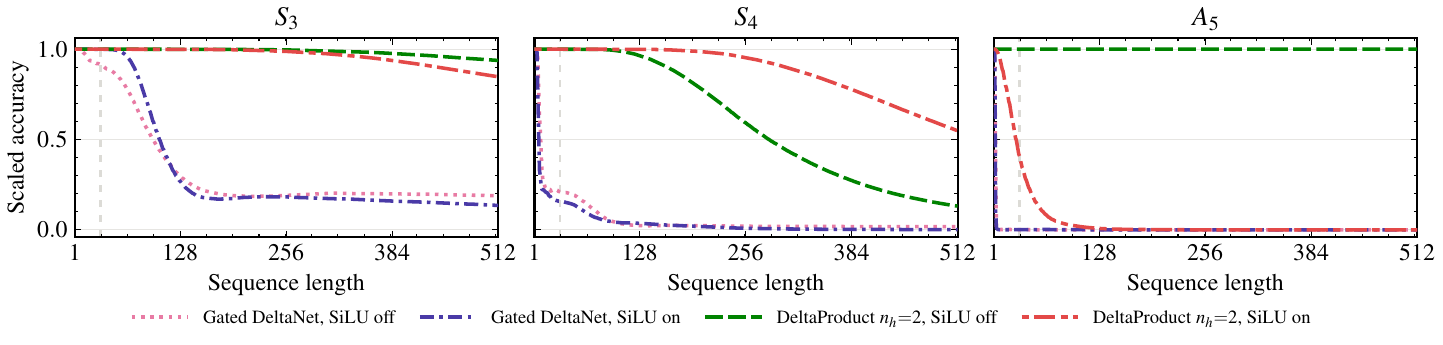}
    \vspace{-6mm}
    \caption{Baseline results for~\Cref{fig:group_word_problems}}
    \label{fig:group_word_problems_baselines}
\end{figure}

\begin{figure}
    \centering
    \includegraphics[width=0.8\linewidth]{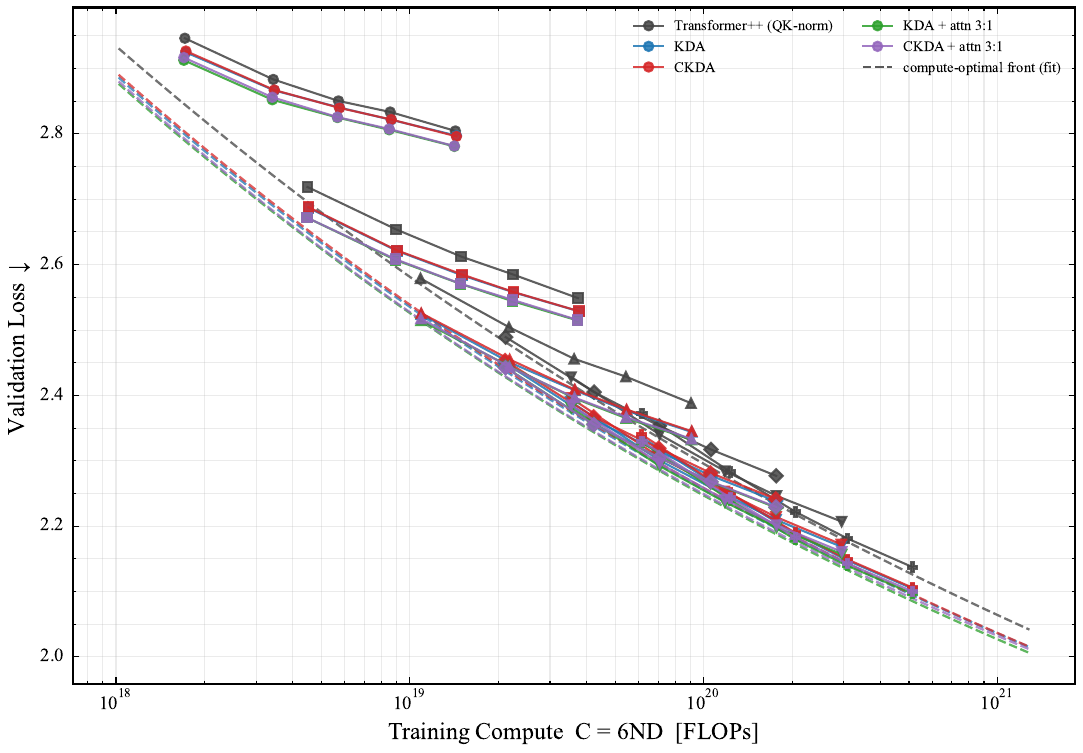}
    \vspace{-4mm}
    \caption{Validation loss vs. Compute for the language model scaling experiments on Nemotron-CC.}
    \label{fig:scaling_vs_compute}
\end{figure}

\begin{figure}
    \centering
    \includegraphics[width=0.45\linewidth]{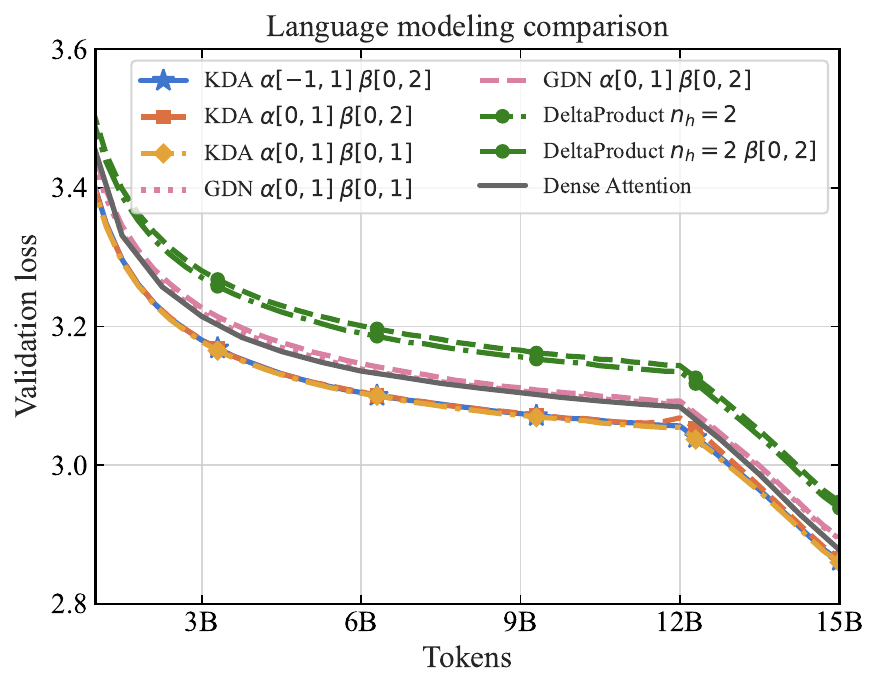}
    \includegraphics[width=0.45\linewidth]{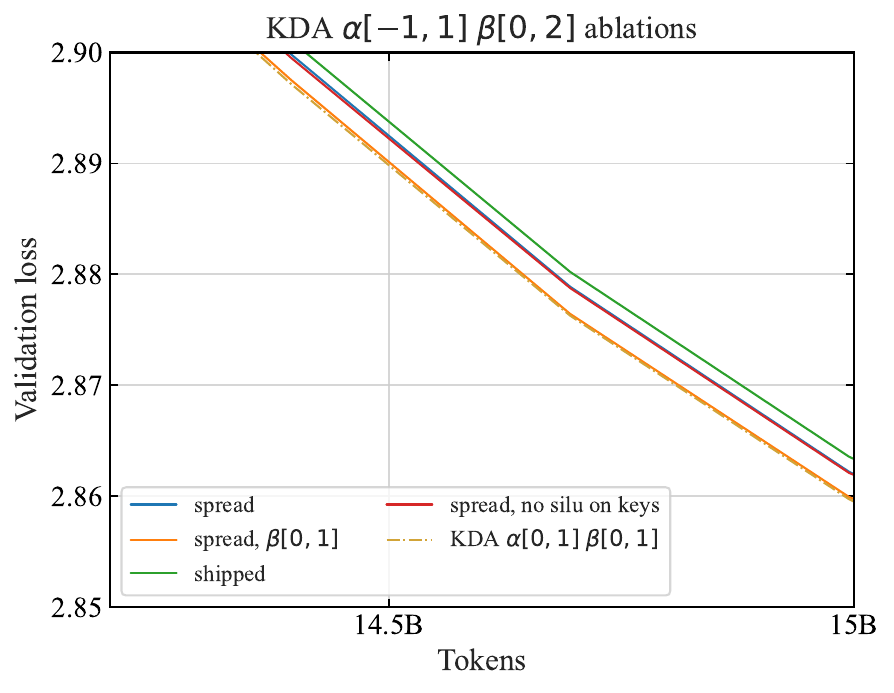}
    \caption{Language modeling results: validation loss versus tokens. (Left) Comparison across baselines. (Right) Comparison across ablations shown for the end of the cooldown phase, as loss values are quite close together.}
    \label{fig:val_curves}
\end{figure}

\begin{figure}
    \centering
    \includegraphics[width=0.8\linewidth]{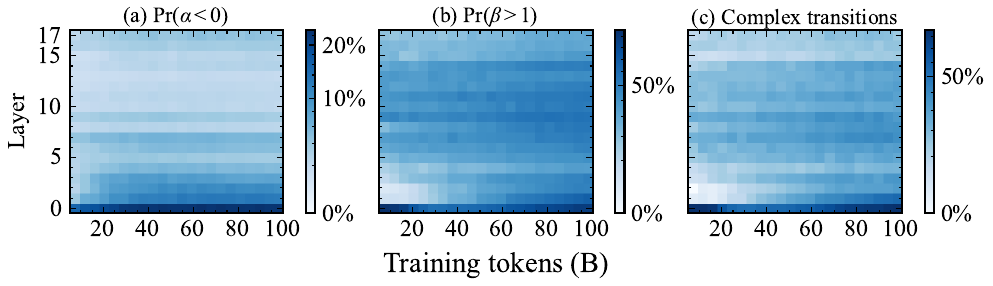}
    \caption{Hybrid CKDA 1.3B, see~\Cref{fig:signed_kda_interpretability_llm}.}
    \label{fig:signed_kda_interpretability_llm_hybrid}
\end{figure}
\begin{figure}
    \centering
    \includegraphics[width=0.8\linewidth]{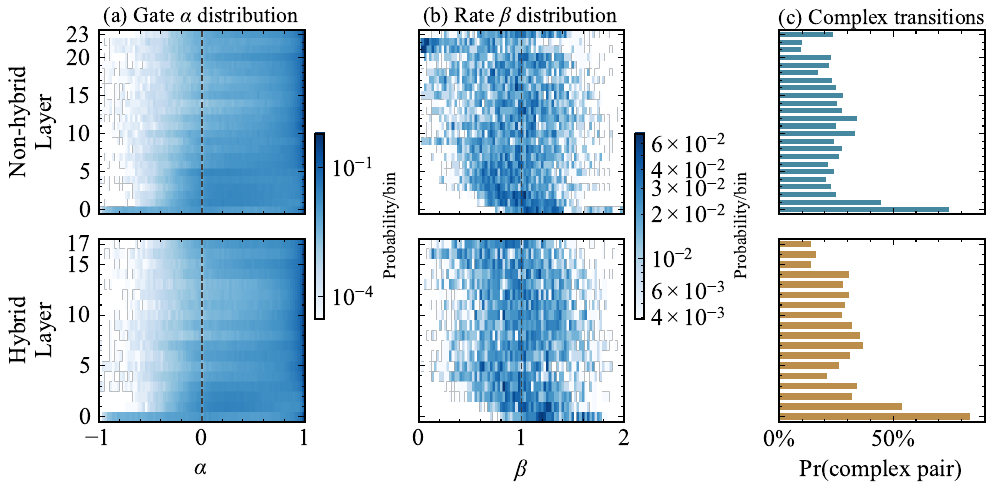}
    \caption{Final layer profiles of gate and $\beta$ for the CKDA 1.3B non-hybrid and hybrid models after training on 100B tokens.}
    \label{fig:final_layer_stats_llm}
\end{figure}

\end{document}